\def\eqref#1{equation~\ref{#1}}
\def\1{\bm{1}}
\def\vone{{\bm{1}}}
\def\vc{{\bm{c}}}
\def\vd{{\bm{d}}}
\def\ve{{\bm{e}}}
\def\vv{{\bm{v}}}
\def\vx{{\bm{x}}}
\def\vz{{\bm{z}}}
\DeclareMathAlphabet{\mathsfit}{\encodingdefault}{\sfdefault}{m}{sl}
\SetMathAlphabet{\mathsfit}{bold}{\encodingdefault}{\sfdefault}{bx}{n}
\def\gB{{\mathcal{B}}}
\def\gC{{\mathcal{C}}}
\def\gD{{\mathcal{D}}}
\def\gE{{\mathcal{E}}}
\def\gH{{\mathcal{H}}}
\def\gM{{\mathcal{M}}}
\def\gN{{\mathcal{N}}}
\def\gO{{\mathcal{O}}}
\def\gP{{\mathcal{P}}}
\def\gQ{{\mathcal{Q}}}
\def\gR{{\mathcal{R}}}
\def\gS{{\mathcal{S}}}
\def\gT{{\mathcal{T}}}
\def\gV{{\mathcal{V}}}
\def\gX{{\mathcal{X}}}
\newcommand{\R}{\mathbb{R}}
\definecolor{mydarkblue}{rgb}{0,0.08,0.45}
\newcommand{\cmark}{\ding{51}}%
\newcommand{\xmark}{\ding{55}}%
\renewcommand \thepart{}
\renewcommand \partname{}
    \let\Cref\crtCref
    \let\cref\crtcref
\newtheorem{theorem}{Theorem}[section]
\newtheorem{lemma}[theorem]{Lemma}
\newtheorem{definition}[theorem]{Definition}
\newtheorem{remark}[theorem]{Remark}
\newtheorem{proposition}[theorem]{Proposition}
\newtheorem{corollary}[theorem]{Corollary}
\newtheorem{example}[theorem]{Example}
\crefname{definition}{Definition}{Definitions}
\newcommand*{\ldblbrace}{\{\mskip-5mu\{}
\newcommand*{\rdblbrace}{\}\mskip-5mu\}}
\newcommand*{\dis}{{\operatorname{dis}}}
\newcommand*{\disR}{\operatorname{dis}^\mathrm{R}}
\newcommand*{\disC}{\operatorname{dis}^\mathrm{C}}
\newcommand*{\diag}{\operatorname{diag}}
\title{Rethinking the Expressive Power of GNNs via Graph Biconnectivity}
\author{Bohang Zhang\thanks{Equal Contribution.} \qquad Shengjie Luo$^*$ \qquad Liwei Wang \qquad Di He\\
\small{\texttt{zhangbohang@pku.edu.cn}, \quad\texttt{luosj@stu.pku.edu.cn}, \quad
\texttt{\{wanglw,dihe\}@pku.edu.cn} }\\
Peking University
}
\begin{document}

\maketitle

\doparttoc 
\faketableofcontents 

\vspace{-5pt}

\begin{abstract}
Designing expressive Graph Neural Networks (GNNs) is a central topic in learning graph-structured data. While numerous approaches have been proposed to improve GNNs in terms of the Weisfeiler-Lehman (WL) test, generally there is still a lack of deep understanding of what additional power they can \emph{systematically} and \emph{provably} gain. In this paper, we take a fundamentally different perspective to study the expressive power of GNNs beyond the WL test. Specifically, we introduce a novel class of expressivity metrics via \emph{graph biconnectivity} and highlight their importance in both theory and practice. As biconnectivity can be easily calculated using simple algorithms that have linear computational costs, it is natural to expect that popular GNNs can learn it easily as well. However, after a thorough review of prior GNN architectures, we surprisingly find that most of them are \emph{not} expressive for \emph{any} of these metrics. The only exception is the ESAN framework \citep{bevilacqua2022equivariant}, for which we give a theoretical justification of its power. We proceed to introduce a principled and more efficient approach, called the Generalized Distance Weisfeiler-Lehman (GD-WL), which is provably expressive for all biconnectivity metrics. Practically, we show GD-WL can be implemented by a Transformer-like architecture that preserves expressiveness and enjoys full parallelizability. A set of experiments on both synthetic and real datasets demonstrates that our approach can consistently outperform prior GNN architectures.
\end{abstract}

\vspace{-5pt}

\section{Introduction}
\label{sec:introduction}
Graph neural networks (GNNs) have recently become the dominant approach for graph representation learning. Among numerous architectures, message-passing neural networks (MPNNs) are arguably the most popular design paradigm and have achieved great success in various fields \citep{gilmer2017neural,hamilton2017inductive,kipf2017semisupervised,velivckovic2018graph}. However, one major drawback of MPNNs lies in the limited expressiveness: as pointed out by \citet{xu2019powerful,morris2019weisfeiler}, they can never be more powerful than the classic 1-dimensional Weisfeiler-Lehman (1-WL) test in distinguishing non-isomorphic graphs \citep{weisfeiler1968reduction}. This inspired a variety of works to design provably more powerful GNNs that go beyond the 1-WL test.


One line of subsequent works aimed to propose GNNs that match the \emph{higher-order} WL variants \citep{morris2019weisfeiler,morris2020weisfeiler,maron2019universality,maron2019provably,geerts2022expressiveness}. While being highly expressive, such an approach suffers from severe computation/memory costs. Moreover, there have been concerns about whether the achieved expressiveness is necessary for real-world tasks \citep{velivckovic2022message}. In light of this, other recent works sought to develop new GNN architectures with improved expressiveness while still keeping the message-passing framework for efficiency \citep[and see \cref{sec:related_work_expressive_gnn} for more recent advances]{bouritsas2022improving,bodnar2021topological,bodnar2021cellular,bevilacqua2022equivariant,wijesinghe2022new}. However, 
most of these works mainly justify their expressiveness by giving \emph{toy examples} where WL algorithms fail to distinguish, e.g., by focusing on regular graphs. On the theoretical side, it is quite unclear what additional power they can systematically and provably gain. More fundamentally,
to the best of our knowledge (see \cref{sec:other_metrics}),
there is still a lack of \emph{principled} and \emph{convincing} metrics beyond the WL hierarchy to formally measure the expressive power and to guide the design of provably better GNN architectures.


\begin{figure}[t]
    \small
    \vspace{-5pt}
    \centering
    \setlength\tabcolsep{12pt}
    \begin{tabular}{c c c}
        \includegraphics[height=9.5em]{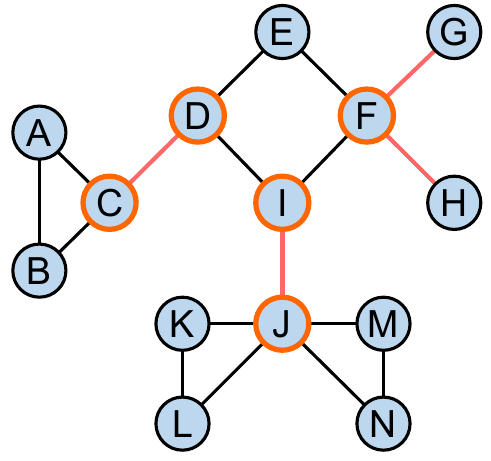} & \includegraphics[height=9.5em]{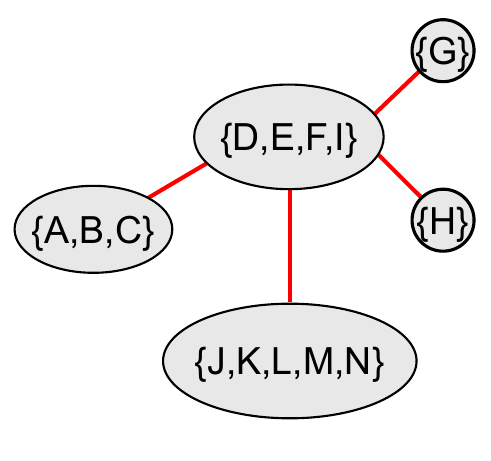} & \includegraphics[height=9.5em]{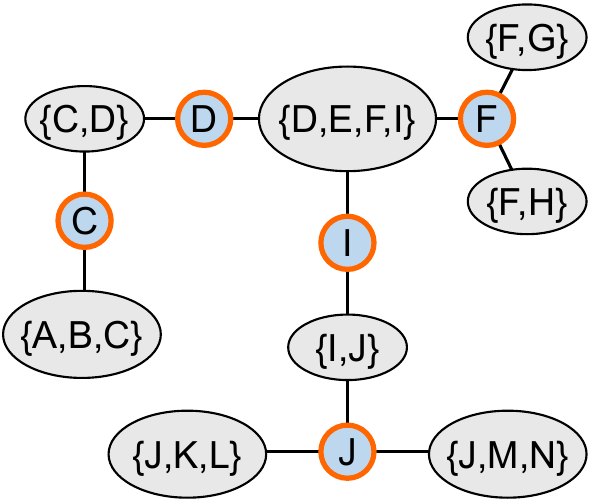}\\
        (a) Original graph & (b) Block cut-edge tree & (c) Block cut-vertex tree
    \end{tabular}
    \vspace{-7pt}
    \caption{An illustration of edge-biconnectivity and vertex-biconnectivity. Cut vertices/edges are outlined in bold red. Gray nodes in (b)/(c) are edge/vertex-biconnected components, respectively. }
    \label{fig:block_cut_tree}
    \vspace{-10pt}
\end{figure}

In this paper, we systematically study the problem of designing expressive GNNs from a novel perspective of \emph{graph biconnectivity}. Biconnectivity has long been a central topic in graph theory \citep{bollobas1998modern}. It comprises a series of important concepts such as cut vertex (articulation point), cut edge (bridge), biconnected component, and block cut tree (see \cref{sec:preliminary} for formal definitions). 
Intuitively, biconnectivity provides a structural description of a graph by decomposing it into disjoint sub-components and linking them 
via cut vertices/edges to form a \emph{tree} structure (cf. \cref{fig:block_cut_tree}(b,c)). As can be seen, biconnectivity purely captures the intrinsic structure of a graph.


The significance of graph biconnectivity can be reflected in various aspects. \emph{Firstly}, from a theoretical point of view, it is a basic graph property and is linked to many fundamental topics in graph theory, ranging from path-related problems to network flow \citep{granot1985substitutes} and spanning trees \citep{kapoor1995algorithms}, and is highly relevant to planar graph isomorphism \citep{hopcroft1972isomorphism}. 
\emph{Secondly}, from a practical point of view, cut vertices/edges have substantial values in many real applications. For example, chemical reactions are highly related to edge-biconnectivity of the molecule graph, where the breakage of molecular bonds usually occurs at the cut edges and each biconnected component often remains unchanged after the reaction. As another example, social networks are related to vertex-biconnectivity, where cut vertices play an important role in linking between different groups of people (biconnected components). 
\emph{Finally}, from a computational point of view, the problems related to biconnectivity (e.g., finding cut vertices/edges or constructing block cut trees) can all be efficiently solved using classic algorithms \citep{tarjan1972depth}, with a computation complexity \emph{equal to graph size} (which is the same as an MPNN). Therefore, one may naturally expect that popular GNNs should be able to learn all things related to biconnectivity without difficulty.


Unfortunately, we show this is not the case. After a thorough analysis of four classes of representative GNN architectures in literature (see \cref{sec:counterexamples}), we find that surprisingly, none of them could even solve the \emph{easiest} biconnectivity problem: to distinguish whether a graph has cut vertices/edges or not (corresponding to a graph-level binary classification). As a result, they obviously failed in the following harder tasks: $(\mathrm{i})$ identifying all cut vertices (a node-level task); $(\mathrm{ii})$ identifying all cut edges (an edge-level task); $(\mathrm{iii})$ the graph-level task for general biconnectivity problems, e.g., distinguishing a pair of graphs that have non-isomorphic block cut trees. This raises the following question: \emph{can we design GNNs with provable expressiveness for biconnectivity problems?}


We first give an \emph{affirmative} answer to the above question. By conducting a deep analysis of the recently proposed Equivariant Subgraph Aggregation Network (ESAN) \citep{bevilacqua2022equivariant}, we prove that the DSS-WL algorithm with \emph{node marking} policy can precisely identify both cut vertices and cut edges. This provides a new understanding as well as a strong theoretical justification for the expressive power of DSS-WL and its recent extensions \citep{frasca2022Understanding}. Furthermore, we give a fine-grained analysis of several key factors in the framework, such as the graph generation policy and the aggregation scheme, by showing that \emph{neither} $(\mathrm{i})$ the ego-network policy without marking \emph{nor} $(\mathrm{ii})$ a variant of the weaker DS-WL algorithm can identify cut vertices. 


However, GNNs designed based on DSS-WL are usually sophisticated and suffer from high computation/memory costs. The \textbf{main contribution} in this paper is then to give a \emph{principled} and \emph{efficient} way to design GNNs that are expressive for biconnectivity problems.
Targeting this question, we restart from the classic 1-WL algorithm and figure out a major weakness in distinguishing biconnectivity: the lack of \emph{distance information} between nodes. Indeed, the importance of distance information is theoretically justified in our proof for analyzing the expressive power of DSS-WL. To this end, we introduce a novel color refinement framework, formalized as Generalized Distance Weisfeiler-Lehman (GD-WL), by directly encoding a general distance metric into the WL aggregation procedure. We first prove that as a special case, the Shortest Path Distance WL (SPD-WL) is expressive for all edge-biconnectivity problems, thus providing a novel understanding of its empirical success. However, it still cannot identify cut vertices. We further suggest an alternative called the Resistance Distance WL (RD-WL) for vertex-biconnectivity. To sum up, all biconnectivity problems can be provably solved within our proposed GD-WL framework.

Finally, we give a worst-case analysis of the proposed GD-WL framework. We discuss its limitations by proving that the expressive power of both SPD-WL and RD-WL can be bounded by the standard 2-FWL test \citep{cai1992optimal}. Consequently, 2-FWL is fully expressive for all biconnectivity metrics.  Besides, since GD-WL heavily relies on distance information, we proceed to analyze its power in distinguishing the class of \emph{distance-regular graphs} \citep{brouwer1989distance}. Surprisingly, we show GD-WL \emph{matches} the power of 2-FWL in this case, which strongly justifies its high expressiveness in distinguishing hard graphs. A summary of our theoretical contributions is given in \cref{tab:summary_of_results}. 


\begin{table}[t]
    \vspace{-12pt}
    \centering
    \small
    \setlength\tabcolsep{2pt}
    \caption{Summary of theoretical results on the expressive power of different GNN models for various biconnectivity problems. We also list the time/space complexity (per WL iteration) for each WL algorithm, where $n$ and $m$ are the number of nodes and edges of a graph, respectively.}
    \label{tab:summary_of_results}
    \vspace{2pt}
    \begin{tabular}{c|cccc|cc|cc|c}
    \Xhline{0.75pt}
     & \multicolumn{4}{c|}{\cref{sec:counterexamples}} & \multicolumn{2}{c|}{\cref{sec:esan}}  & \multicolumn{3}{c}{\cref{sec:gdwl}}\\
    \cline{2-10}
    Model & MPNN & GSN & CWN & GraphSNN & \multicolumn{2}{c|}{ESAN} & \multicolumn{2}{c|}{Ours} & 3-IGN\\
    WL variant & 1-WL & SC-WL & CWL & OS-WL & DSS-WL & DS-WL & SPD-WL & GD-WL & 2-FWL\\
    \hline
    Cut vertex & \xmark & \xmark & \xmark & \xmark & \cmark & \xmark & \xmark & \cmark & \cmark \\
    Cut edge & \xmark & \xmark & \xmark & \xmark & \cmark & Unknown & \cmark & \cmark & \cmark \\
    BCVTree & \xmark & \xmark & \xmark & \xmark & \cmark & Unknown & \xmark & \cmark & \cmark \\
    BCETree & \xmark & \xmark & \xmark & \xmark & \cmark & Unknown & \cmark & \cmark & \cmark \\
    \hline
    Ref. Theorem & - & \ref{thm:scwl} & \ref{thm:swl_cwl} & \ref{thm:oswl} & \ref{thm:dsswl} & \ref{thm:dswl_adaptation} & \ref{thm:spdwl} & \ref{thm:rdwl}, \ref{thm:gdwl} & \ref{thm:2fwl_biconnectivity}  \\
    \hline
    Time & $n\!+\!m$ & $n\!+\!m$ & - & $n\!+\!m$ & $n(n\!+\!m)$ & $n(n\!+\!m)$ & $n^2$ & $n^2$ & $n^3$\\
    Space\footnote{The space complexity of WL algorithms may differ from the corresponding GNN models in training, e.g., for DS-WL and GD-WL, due to the need to store intermediate results for back-propagation.} & $n$ & $n$ & - & $n$ & $n^2$ & $n$ & $n$ & $n$ & $n^2$\\
    \Xhline{0.75pt}
    \end{tabular}
    \vspace{-8pt}
\end{table}

\textbf{Practical Implementation}.
The main advantage of GD-WL lies in its simplicity, efficiency and \emph{parallelizability}. We show it can be easily implemented using a Transformer-like architecture by injecting the distance into Multi-head Attention \citep{vaswani2017attention}, similar to \citet{ying2021transformers}. Importantly, we prove that the resulting Graph Transformer (called Graphormer-GD) is \emph{as expressive as} GD-WL. This offers strong theoretical insights into the power and limits of Graph Transformers. Empirically, we show Graphormer-GD not only achieves perfect accuracy in detecting cut vertices and cut edges, but also outperforms prior GNN achitectures on popular benchmark datasets.
\vspace{-2pt}

\section{Preliminary}
\label{sec:preliminary}
\vspace{-1pt}
\textbf{Notations}. We use $\{\ \}$ to denote sets and use $\ldblbrace\ \rdblbrace$ to denote multisets. The cardinality of (multi)set $\gS$ is denoted as $|\gS|$. The index set is denoted as $[n]:=\{1,\cdots,n\}$. Throughout this paper, we consider simple undirected graphs $G=(\gV,\gE)$ with no repeated edges or self-loops. Therefore, each edge $\{u,v\}\in\gE$ can be expressed as a set of two elements. For a node $u\in\gV$, denote its \emph{neighbors} as $\gN_G(u):=\{v\in\gV:\{u,v\}\in\gE\}$ and denote its \emph{degree} as $\deg_G(u):=|\gN_G(u)|$. A \emph{path} $P=(u_0,\cdots,u_d)$ is a tuple of nodes satisfying $\{u_{i-1},u_i\}\in\gE$ for all $i\in[d]$, and its length is denoted as $|P|:=d$. A path $P$ is said to be \emph{simple} if it does not go through a node more than once, i.e. $u_i\neq u_j$ for $i\neq j$. The shortest path distance between two nodes $u$ and $v$ is denoted to be $\dis_G(u,v):=\min\{|P|:P\text{ is a path from }u\text{ to }v\}$. The \emph{induced subgraph} with vertex subset $\gS\subset\gV$ is defined as $G[\gS]=(\gS,\gE_\gS)$ where $\gE_\gS:=\{\{u,v\}\in\gE:u,v\in\gS\}$.

We next introduce the concepts of connectivity, vertex-biconnectivity and edge-biconnectivity.

\begin{definition}
\label{connectivity}
\normalfont (\textbf{Connectivity}) A graph $G$ is \emph{connected} if for any two nodes $u,v\in\gV$, there is a path from $u$ to $v$. A vertex set $\gS\subset\gV$ is a \emph{connected component} of $G$ if $G[\gS]$ is connected and for any proper superset $\gT\supsetneq\gS$, $G[\gT]$ is disconnected. Denote $\mathrm{CC}(G)$ as the set of all connected components, then $\mathrm{CC}(G)$ forms a \emph{partition} of the vertex set $\gV$. Clearly, $G$ is connected iff $|\mathrm{CC}(G)|=1$.
\end{definition}

\begin{definition}
\label{vertex_biconnectivity}
\normalfont (\textbf{Biconnectivity}) A node $v\in\gV$ is a \emph{cut vertex} (or \emph{articulation point}) of $G$ if removing $v$ increases the number of connected components, i.e., $|\mathrm{CC}(G[\gV\backslash\{v\}])|>|\mathrm{CC}(G)|$. A graph is \emph{vertex-biconnected} if it is connected and does not have any cut vertex. A vertex set $\gS\subset\gV$ is a \emph{vertex-biconnected component} of $G$ if $G[\gS]$ is vertex-biconnected and for any proper superset $\gT\supsetneq\gS$, $G[\gT]$ is not vertex-biconnected. We can similarly define the concepts of \emph{cut edge} (or \emph{bridge}) and \emph{edge-biconnected component} (we omit them for brevity). Finally, denote $\mathrm{BCC}^\mathrm{V}(G)$ (resp. $\mathrm{BCC}^\mathrm{E}(G)$) as the set of all vertex-biconnected (resp. edge-biconnected) components.
\end{definition}

\vspace{-4pt}

Two non-adjacent nodes $u,v\in\gV$ are in the same vertex-biconnected component iff there are two paths from $u$ to $v$ that do not intersect (except at endpoints). Two nodes $u,v$ are in the same edge-biconnected component iff there are two paths from $u$ to $v$ that do not share an edge. On the other hand, if two nodes are in different vertex/edge-biconnected components, any path between them must go through some cut vertex/edge. Therefore, cut vertices/edges can be regarded as ``hubs'' in a graph that link different subgraphs into a whole. Furthermore, the link between cut vertices/edges and biconnected components forms a \emph{tree} structure, which are called the \emph{block cut tree} (cf. \cref{fig:block_cut_tree}).

\begin{definition}
\label{def:bcetree}
\normalfont (\textbf{Block cut-edge tree}) The block cut-edge tree of graph $G=(\gV,\gE)$ is defined as follows: $\operatorname{BCETree}(G):=(\mathrm{BCC}^\mathrm{E}(G),\gE^\mathrm{E})$, where
\begin{equation*}
\setlength{\abovedisplayskip}{2pt}
\setlength{\belowdisplayskip}{0pt}
    \gE^\mathrm{E}:=\left\{\{\gS_1,\gS_2\}:\gS_1,\gS_2\in\mathrm{BCC}^\mathrm{E}(G),\exists u\in\gS_1,v\in\gS_2,\text{s.t. }\{u,v\}\in\gE\right\}.
\end{equation*}
\end{definition}
\begin{definition}
\label{def:bcvtree}
\normalfont (\textbf{Block cut-vertex tree}) The block cut-vertex tree of graph $G=(\gV,\gE)$ is defined as follows: $\operatorname{BCVTree}(G):=(\mathrm{BCC}^\mathrm{V}(G)\cup\gV^\mathrm{Cut},\gE^\mathrm{V})$, where $\gV^\mathrm{Cut}\subset\gV$ is the set containing all cut vertices of $G$ and
\begin{equation*}
\setlength{\abovedisplayskip}{0pt}
\setlength{\belowdisplayskip}{0pt}
    \gE^\mathrm{V}:=\left\{\{\gS,v\}:\gS\in\mathrm{BCC}^\mathrm{V}(G),v\in\gV^\mathrm{Cut},v\in\gS\right\}.
\end{equation*}
\end{definition}
\vspace{-5pt}

The following theorem shows that all concepts related to biconnectivity can be efficiently computed.

\begin{theorem}
\label{thm:tarjan}
\citep{tarjan1972depth} The problems related to biconnectivity, including identifying all cut vertices/edges, finding all biconnected components ($\mathrm{BCC}^\mathrm{V}(G)$ and $\mathrm{BCC}^\mathrm{E}(G)$), and building block cut trees ($\operatorname{BCVTree}(G)$ and $\operatorname{BCETree}(G)$), can all be solved using the Depth-First Search algorithm, within a computation complexity linear in the graph size, i.e. $\Theta(|\gV|+|\gE|)$.
\end{theorem}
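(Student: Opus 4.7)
The plan is to follow Tarjan's classical depth-first search (DFS) approach, which unifies all four biconnectivity tasks under a single linear-time traversal. First I would fix an arbitrary starting vertex and run a DFS on $G$, producing a DFS tree $T$ together with, for each node $v$, two integer labels: a discovery time $\operatorname{disc}(v)$ assigned in the order $v$ is first visited, and a ``low-link'' value $\operatorname{low}(v)$, defined as the minimum over $\operatorname{disc}(v)$, $\operatorname{disc}(w)$ for every back edge $\{v,w\}$, and $\operatorname{low}(u)$ for every tree child $u$ of $v$. The value $\operatorname{low}(v)$ can be maintained online during the recursion in $O(1)$ extra work per edge, so the total cost through this stage is $\Theta(|\gV|+|\gE|)$.

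Next I would derive the cut vertex / cut edge characterizations from these labels. The key structural observation, to be verified by case analysis on paths in $G$ versus paths in $T$ plus back edges, is: (i) the DFS root is a cut vertex iff it has at least two children in $T$; (ii) a non-root $v$ is a cut vertex iff it has some tree child $u$ with $\operatorname{low}(u)\ge \operatorname{disc}(v)$; and (iii) a tree edge $\{u,v\}$ (with $v$ the child) is a cut edge iff $\operatorname{low}(v)>\operatorname{disc}(u)$, while no non-tree edge can be a cut edge. Each test is $O(1)$ per vertex/edge, so identifying $\gV^\mathrm{Cut}$ and all bridges takes linear time overall.

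For the biconnected components I would augment the DFS with an auxiliary stack onto which each traversed edge is pushed. Whenever the cut-vertex criterion (ii) fires at some $v$ with child $u$, I pop edges from the stack down to and including $\{u,v\}$ and output their endpoint set as one element of $\mathrm{BCC}^\mathrm{V}(G)$; an analogous stack of vertices, flushed whenever criterion (iii) fires, yields $\mathrm{BCC}^\mathrm{E}(G)$. Since each edge is pushed and popped at most once, the amortized cost stays $\Theta(|\gV|+|\gE|)$. Finally, $\operatorname{BCVTree}(G)$ and $\operatorname{BCETree}(G)$ are assembled by one extra pass: create a node for each element of $\mathrm{BCC}^\mathrm{V}(G)\cup\gV^\mathrm{Cut}$ (resp.\ $\mathrm{BCC}^\mathrm{E}(G)$) and, walking through the edge list, add the incidences prescribed by \cref{def:bcvtree} and \cref{def:bcetree}; that the resulting graph is actually a tree follows from the partition structure of biconnected components plus the observation that any cycle through two different blocks would merge them.

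The main obstacle is not algorithmic but combinatorial: verifying the correctness of the $\operatorname{low}(\cdot)$-based criteria. The subtle point is that these conditions must be both necessary and sufficient, which hinges on the fact that in an undirected DFS every non-tree edge is a back edge (no cross edges exist). Once this property is established, the cut vertex and cut edge characterizations reduce to checking whether the subtree of $u$ has any ``escape route'' to an ancestor of $v$, precisely what $\operatorname{low}$ records. After that, the linear-time claim is immediate since every subsequent post-processing step visits each vertex and edge a bounded number of times.
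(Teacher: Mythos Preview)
Your proposal is a correct outline of Tarjan's classical DFS/low-link algorithm. Note, however, that the paper does not actually prove this theorem: it is stated as a cited classical result attributed to \citet{tarjan1972depth}, with no proof given in the paper itself. Your sketch faithfully reconstructs the standard argument from that reference, so there is nothing to compare against beyond confirming that what you wrote is indeed the intended content of the citation.
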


\vspace{-4pt}

\textbf{Isomorphism and color refinement algorithms}. Two graphs $G=(\gV_G,\gE_G)$ and $H=(\gV_H,\gE_H)$ are \emph{isomorphic} (denoted as $G\simeq H$) if there is an \emph{isomorphism} (bijective mapping) $f:\gV_G\to\gV_H$ such that for any nodes $u,v\in\gV_G$, $\{u,v\}\in\gE_G$ iff $\{f(u),f(v)\}\in\gE_H$. A color refinement algorithm is an algorithm that outputs a \emph{color mapping} $\chi_G:\gV_G\to\gC$ when taking graph $G$ as input, where $\gC$ is called the \emph{color set}. A valid color refinement algorithm must preserve \emph{invariance} under isomorphism, i.e., $\chi_G(u)=\chi_H(f(u))$ for isomorphism $f$ and node $u\in\gV_G$. As a result, it can be used as a necessary test for graph isomorphism by comparing the multisets $\ldblbrace \chi_G(u):u\in\gV_G\rdblbrace$ and $\ldblbrace \chi_H(u):u\in\gV_H\rdblbrace$, which we call the \emph{graph representations}. Similarly, $\chi_G(u)$ can be seen as the \emph{node feature} of $u\in\gV_G$, and $\ldblbrace \chi_G(u),\chi_G(v)\rdblbrace$ corresponds to the {edge feature} of $\{u,v\}\in\gE_G$. All algorithms studied in this paper fit the color refinement framework, and please refer to \cref{sec:algorithms} for a precise description of several representatives (e.g., the classic 1-WL and $k$-FWL algorithms).

\textbf{Problem setup}. This paper focuses on the following three types of problems with increasing difficulties. \emph{Firstly}, we say a color refinement algorithm can distinguish whether a graph is vertex/edge-biconnected, if for any graphs $G,H$ where $G$ is vertex/edge-biconnected but $H$ is not, their graph representations are different, i.e. $\ldblbrace \chi_G(u):u\in\gV_G\rdblbrace\neq\ldblbrace \chi_H(u):u\in\gV_H\rdblbrace$. \emph{Secondly}, we say a color refinement algorithm can identify cut vertices if for any graphs $G, H$ and nodes $u\in\gV_G,v\in\gV_H$ where $u$ is a cut vertex but $v$ is not, their node features are different, i.e. $\chi_G(u)\neq \chi_H(v)$. Similarly, it can identify cut edges if for any $\{u,v\}\in\gE_G$ and $\{w,x\}\in\gE_H$ where $\{u,v\}$ is a cut edge but $\{w,x\}$ is not, their edge features are different, i.e. $\ldblbrace\chi_G(u),\chi_G(v)\rdblbrace\neq \ldblbrace\chi_H(w),\chi_H(x)\rdblbrace$. \emph{Finally}, we say a color refinement algorithm can distinguish block cut-vertex/edge trees, if for any graphs $G,H$ satisfying $\operatorname{BCVTree}(G)\not\simeq\operatorname{BCVTree}(H)$ (or $\operatorname{BCETree}(G)\not\simeq\operatorname{BCETree}(H)$), their graph representations are different, i.e. $\ldblbrace \chi_G(u):u\in\gV_G\rdblbrace\neq\ldblbrace \chi_H(u):u\in\gV_H\rdblbrace$.


\section{Investigating Known GNN Architectures via Biconnectivity}
\label{sec:biconnect}
In this section, we provide a comprehensive investigation of popular GNN variants in literature, including the classic MPNNs, Graph Substructure Networks (GSN) \citep{bouritsas2022improving} and its variant \citep{barcelo2021graph}, GNN with lifting transformations (MPSN and CWN) \citep{bodnar2021topological,bodnar2021cellular}, GraphSNN \citep{wijesinghe2022new}, and Subgraph GNNs (e.g., \citet{bevilacqua2022equivariant}). Surprisingly, we find most of these works are not expressive for \emph{any} biconnectivity problems listed above. The only exceptions are the ESAN \citep{bevilacqua2022equivariant} and several variants, where we give a rigorous justification of their expressive power for both vertex/edge-biconnectivity.

\subsection{Counterexamples}
\label{sec:counterexamples}
\textbf{1-WL/MPNNs}. We first consider the classic 1-WL. We provide two principled class of counterexamples which are formally defined in \cref{example:1,example:2}, with a few special cases illustrated in \cref{fig:counterexamples}. For each pair of graphs in \cref{fig:counterexamples}, the color of each node is drawn according to the 1-WL color mapping. It can be seen that the two graph representations are the same. Therefore,  1-WL cannot distinguish any biconnectivity problem listed in \cref{sec:preliminary}.

\textbf{Substructure Counting WL/GSN}. \citet{bouritsas2022improving} developed a principled approach to boost the expressiveness of MPNNs by incorporating \emph{substructure counts} into node features or the 1-WL aggregation procedure. The resulting algorithm, which we call the SC-WL, is detailed in \cref{sec:scwl}. However, we show no matter what sub-structures are used, the corresponding GSN still cannot solve any biconnectivity problem listed in \cref{sec:preliminary}. We give a proof in \cref{sec:counterexample_proof} for the \emph{general} case that allows arbitrary substructures, based on \cref{example:1,example:2}. We also point out that our negative result applies to the similar GNN variant in \citet{barcelo2021graph}.

\begin{theorem}
\label{thm:scwl}
Let $\gH=\{H_1,\cdots,H_k\}$, $H_i=(\gV_i,\gE_i)$ be any set of connected graphs and denote $n=\max_{i\in[k]}|\gV_i|$. Then SC-WL (\cref{sec:scwl}) using the substructure set $\gH$ cannot solve any vertex/edge-biconnectivity problem listed in \cref{sec:preliminary}. Moreover, there exist counterexample graphs whose sizes (both in terms of vertices and edges) are $O(n)$.
\end{theorem}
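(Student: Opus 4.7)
The plan is to take the 1-WL counterexample pairs from \cref{example:1,example:2} as base pairs $(G_0, H_0)$ and to inflate them via a uniform edge subdivision of length $L := n+1$, producing graphs $(G, H)$. The intuition is that every substructure $H_i \in \gH$ satisfies $|\gV_i| \leq n < L$, so any connected subgraph of $G$ (or $H$) with at most $n$ vertices spans at most $n-1 < L$ edges and is therefore trapped in a small neighborhood around at most one original vertex; it cannot traverse any full subdivided arm to reach another. Hence the substructure-count features at each vertex depend only on local radius-$n$ structure, which will match between $G$ and $H$ at corresponding vertices precisely because $G_0, H_0$ are 1-WL indistinguishable (so their iterated degree and neighborhood statistics agree).

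I would first establish a locality lemma: for every $H_i \in \gH$, every $F \in \{G, H\}$, and every $v \in \gV(F)$, the count of subgraph embeddings of $H_i$ containing $v$ is a function only of the distance from $v$ to the nearest original vertex together with the degree of that vertex (and, recursively, the local degree pattern within radius $n$ of $v$). Combined with 1-WL indistinguishability of $(G_0, H_0)$, this yields matching substructure counts at corresponding vertices of $G$ and $H$, so the initial SC-WL features coincide. I would then argue that subsequent 1-WL aggregation preserves this matching: since subdivision is a local, degree-preserving operation, the 1-WL colors on $G$ and $H$ stabilize to identical multisets at corresponding vertices, yielding indistinguishability of the graphs and of the corresponding vertices and edges under SC-WL.

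Next, I would verify that biconnectivity differences in $(G_0, H_0)$ survive the inflation. Subdivision preserves connectivity; a cut edge of $G_0$ becomes a chain of $L$ cut edges of $G$; a cut vertex of $G_0$ remains a cut vertex of $G$; and each interior subdivision node lying on a cut edge becomes an additional cut vertex. In particular, $\operatorname{BCETree}(G)$ and $\operatorname{BCVTree}(G)$ are canonical expansions of the corresponding trees for $G_0$ (subdividing each tree-edge into a path of length $L$ and attaching length-$L$ chains of single-edge blocks along cut edges), so non-isomorphism of block cut trees at the level of $(G_0, H_0)$ lifts to non-isomorphism at the level of $(G, H)$. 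Picking the appropriate base pair from \cref{example:1,example:2} then yields counterexamples for each of the four biconnectivity problems in \cref{sec:preliminary}. The size bound $|\gV(G)| + |\gE(G)| = O(L) = O(n)$ is immediate since the base graphs have constant size.

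The main obstacle is proving the locality lemma cleanly when an embedding of $H_i$ branches through a high-degree original vertex. The argument is that the subdivided graph locally resembles a tree of depth $L-1$ rooted at each original vertex, with one arm per incident original edge; any embedding of $H_i$ of size $\leq n$ decomposes canonically as (i) a choice of branching original vertex (or none), (ii) a choice of incident arms into which the embedding extends, and (iii) a planted embedding respecting the local star structure at the branching vertex. Since this star structure at each original vertex in $G$ matches that of its counterpart in $H$ (by degree preservation under subdivision together with 1-WL indistinguishability of $G_0, H_0$), the embedding counts at corresponding vertices agree, completing the argument.
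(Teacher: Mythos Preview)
Your approach is correct and genuinely different from the paper's. The paper uses \cref{example:1,example:2} directly with the parameter $m$ chosen larger than $n$; since every cycle in those graphs has length at least $m$, any $H_i$ containing a cycle has zero embeddings in both $G_1$ and $G_2$, so attention reduces to tree $H_i$, and the proof then hand-builds an explicit orbit-preserving bijection $g_{\gS}$ (via a modular wrap-around map $\operatorname{cir}$) between induced copies of each tree $H_i$ in $G_1$ and in $G_2$. You instead fix a constant-size base pair and uniformly $L$-subdivide with $L = n+1$; this forces every connected embedding of size at most $n$ to lie inside the star around a single original vertex (or on a single subdivided arm), whence the orbit-refined substructure counts at each vertex are determined by the coloring $c(w)=(d,\chi_{G_0}(u),\chi_{G_0}(u'))$, which is a stable coloring on $G\sqcup H$ with equal class sizes on the two sides. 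Your route is more modular---it applies verbatim to any 1-WL-indistinguishable base pair exhibiting the desired biconnectivity contrast, and it sidesteps the bespoke bijection---while the paper's route has the minor advantage of showing that the named examples themselves (unmodified, at large $m$) already defeat SC-WL. Two small points worth tightening: your locality statement should explicitly track \emph{both} endpoint degrees $\deg_{G_0}(u),\deg_{G_0}(u')$ for a subdivision vertex (one embedding through $w$ can reach $u$, another can reach $u'$, never both); and for the block-cut-tree claims it is cleaner, rather than arguing that your ``canonical expansion'' is injective on tree isomorphism types, simply to note that in each chosen base pair one graph is biconnected (so its block-cut tree is a single node before and after subdivision) while the other is not.
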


\vspace{-3pt}

\textbf{GNNs with lifting transformations (MPSN/CWN)}. \citet{bodnar2021topological,bodnar2021cellular} considered another approach to design powerful GNNs by using graph \emph{lifting} transformations. In a nutshell, these approaches exploit higher-order graph structures such as cliques and cycles to design new WL aggregation procedures. Unfortunately, we show the resulting algorithms, called the SWL and CWL, still cannot solve any biconnectivity problem. Please see \cref{sec:counterexample_proof} (\cref{thm:swl_cwl}) for details.



\textbf{Other GNN variants}. In \cref{sec:counterexample_proof}, we discuss other recently proposed GNNs, such as GraphSNN \citep{wijesinghe2022new}, GNN-AK \citep{zhao2022stars}, and NGNN \citep{zhang2021nested}. Due to space limit, we defer the corresponding negative results in \cref{thm:oswl,thm:gnnak,thm:dswl_adaptation}.

\begin{figure}[t]
    \vspace{-10pt}
    \centering
    \small
    \setlength\tabcolsep{12pt}
    \begin{tabular}{cccc}
        \includegraphics[height=5.5em]{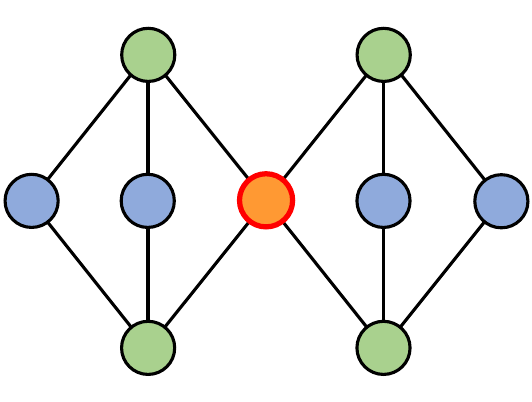} & \includegraphics[height=5.5em]{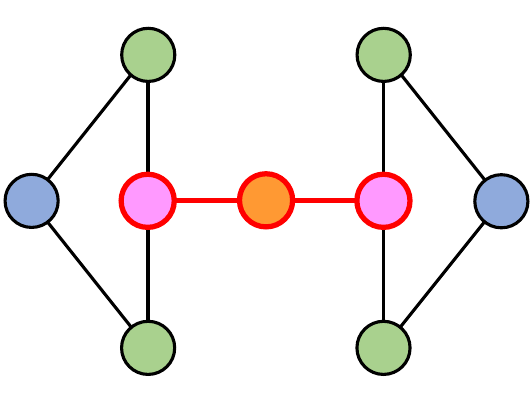} & \includegraphics[height=5.5em]{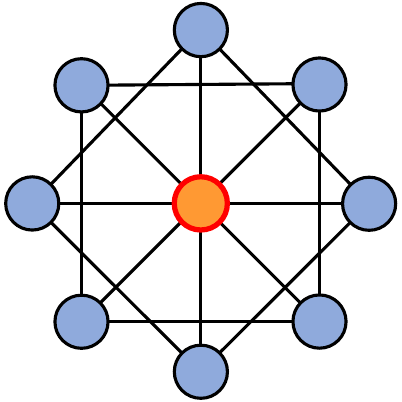} & \includegraphics[height=5.5em]{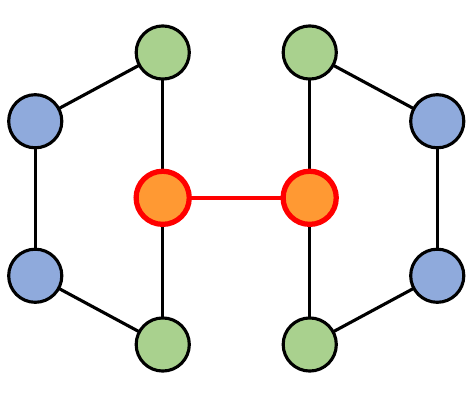}\\
        \includegraphics[height=5.5em]{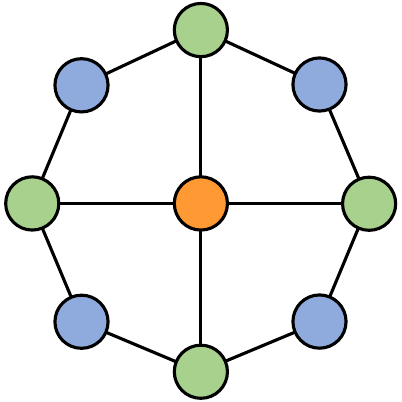} & \includegraphics[height=5.5em]{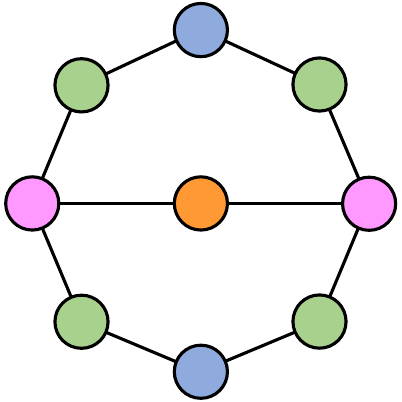} & \includegraphics[height=5.5em]{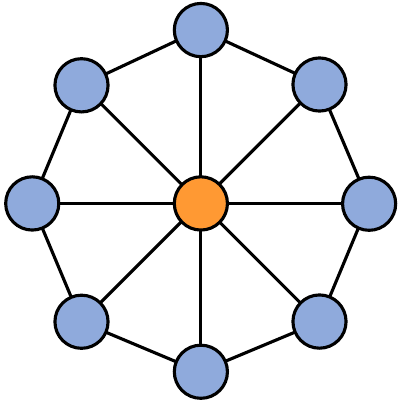} & \includegraphics[height=5.5em]{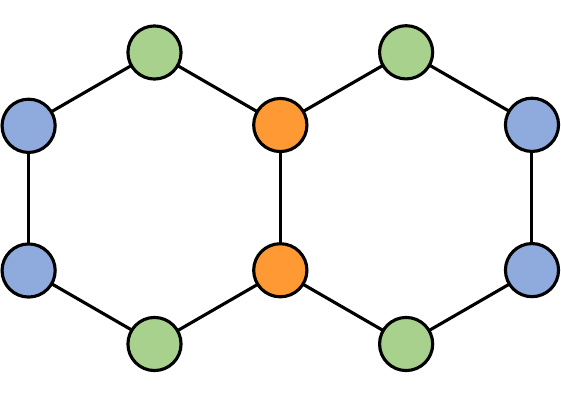}\\
        (a) & (b) & (c) & (d)
    \end{tabular}
    \vspace{-7pt}
    \caption{\looseness=-1 Illustration of four representative counterexamples (see \cref{example:1,example:2} for general definitions). Graphs in the first row have cut vertices (outlined in bold red) and some also have cut edges (denoted as red lines), while graphs in the second row do not have any cut vertex or cut edge.}
    \label{fig:counterexamples}
    \vspace{-10pt}
\end{figure}

\subsection{Provable expressiveness of ESAN and DSS-WL}
\label{sec:esan}
We next switch our attention to a new type of GNN framework proposed in \citet{bevilacqua2022equivariant}, called the Equivariant Subgraph Aggregation Networks (ESAN). The central algorithm in EASN is called the DSS-WL. Given a graph $G$, DSS-WL first generates a bag of vertex-shared (sub)graphs $\gB^\pi_G=\ldblbrace G_1,\cdots,G_m\rdblbrace$ according to a graph generation policy $\pi$. Then in each iteration $t$, the algorithm refines the color of each node $v$ in each subgraph $G_i$ by jointly aggregating its neighboring colors in the own subgraph and across all subgraphs. The aggregation formula can be written as:
\begin{align}
\label{eq:dsswl_aggregation}
    \chi_{G_i}^t(v)&:= \operatorname{hash}\left(\chi_{G_i}^{t-1}(v),\ldblbrace \chi_{G_i}^{t-1}(u):u\in \mathcal N_{G_i}(v) \rdblbrace,\chi_{G}^{t-1}(v),\ldblbrace \chi_{G}^{t-1}(u):u\in \mathcal N_G(v) \rdblbrace\right),\\
    \chi_G^t(v)&:= \operatorname{hash}\left(\ldblbrace \chi_{G_i}^{t}(v):i\in [m]\rdblbrace\right),
\end{align}
where $\operatorname{hash}$ is a perfect hash function. DSS-WL terminates when $\chi_G^t$ induces a stable vertex partition. In this paper, we consider \emph{node-based} graph generation policies, for which each subgraph is associated to a specific node, i.e. $\gB^\pi_G=\ldblbrace G_v:v\in\gV\rdblbrace$. Some popular choices are node deletion $\pi_\mathrm{ND}$, node marking $\pi_\mathrm{NM}$, $k$-ego-network $\pi_{\mathrm{EGO}(k)}$, and its node marking version $\pi_{\mathrm{EGOM}(k)}$. A full description of DSS-WL as well as different policies can be found in \cref{sec:dsswl} (\cref{alg:dsswl}).

A fundamental question regarding DSS-WL is how expressive it is. While a straightforward analysis shows that DSS-WL is strictly more powerful than 1-WL, an in-depth understanding on \emph{what additional power} DSS-WL gains over 1-WL is still limited. The only new result is the very recent work of \citet{frasca2022Understanding}, who showed a 3-WL \emph{upper bound} for the expressivity of DSS-WL. Yet, such a result actually gives a limitation of DSS-WL rather than showing its power. Moreover, there is a large gap between the highly strong 3-WL and the weak 1-WL. In the following, we take a different perspective and prove that DSS-WL is expressive for both types of biconnectivity problems.

\begin{theorem}
\label{thm:dsswl}
Let $G=(\gV_G,\gE_G)$ and $H=(\gV_H,\gE_H)$ be two graphs, and let $\chi_G$ and $\chi_H$ be the corresponding DSS-WL color mapping with node marking policy. Then the following holds:
\begin{itemize}[topsep=0pt,leftmargin=30pt]
\setlength{\itemsep}{0pt}
    \vspace{-3pt}
    \item For any two nodes $w\in\gV_G$ and $x\in\gV_H$, if $\chi_G(w)=\chi_H(x)$, then $w$ is a cut vertex if and only if $x$ is a cut vertex.
    \vspace{-3pt}
    \item For any two edges $\{w_1,w_2\}\in\gE_G$ and $\{x_1,x_2\}\in\gE_H$, if $\ldblbrace \chi_G(w_1),\chi_G(w_2)\rdblbrace=\ldblbrace \chi_H(x_1),\chi_H(x_2)\rdblbrace$, then $\{w_1,w_2\}$ is a cut edge if and only if $\{x_1,x_2\}$ is a cut edge.
\end{itemize}
\end{theorem}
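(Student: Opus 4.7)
The plan is to exploit two complementary features of DSS-WL with node marking. First, within each marked subgraph $G_v$, the uniquely identified node $v$ lets the 1-WL updates compute the shortest-path distance from $v$ to every other node and, more generally, the finer local structure relative to $v$. Second, the across-subgraph aggregation $\chi_G(u) = \operatorname{hash}\ldblbrace \chi_{G_v}(u) : v \in \gV \rdblbrace$ tags each node $u$ with its full profile under every possible choice of marked node. The equality $\chi_G(w) = \chi_H(x)$ (and likewise the edge-level equality) therefore forces these profiles to match, and the task is to extract from them a certificate that distinguishes cut from non-cut elements.

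For the cut-vertex statement, the first step is a reduction lemma: after sufficiently many DSS-WL iterations, the restriction of $\chi_{G_w}$ to $\gV_G \setminus \{w\}$ refines the 1-WL coloring of the graph $G[\gV_G \setminus \{w\}]$ initialized with labels that encode each node's adjacency pattern to $w$. This holds because the unique color of $w$ in $G_w$ separates any 1-WL message passing through $w$ from messages that stay inside $\gV_G \setminus \{w\}$. The second step uses the standard characterization that $w$ is a cut vertex iff the neighbors of $w$ split across multiple connected components of $G[\gV_G \setminus \{w\}]$. The third step handles symmetry-breaking: 1-WL alone cannot separate structurally identical components, so the across-subgraph aggregation becomes indispensable here. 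Marking each other node $v$ in turn breaks internal symmetries of each component, so the multiset of colors at every neighbor $u$ of $w$, taken over all markings, encodes which component of $G[\gV_G \setminus \{w\}]$ contains $u$. Matching $\chi_G(w) = \chi_H(x)$ then forces matching neighborhood-partition structure around $w$ and $x$, yielding the equivalence.

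For the cut-edge statement, the argument is analogous but uses the joint information in the profiles $\chi_G(w_1)$ and $\chi_G(w_2)$. The structural characterization becomes: $\{w_1, w_2\}$ is a cut edge iff no path from $w_1$ to $w_2$ avoids this edge. Within $G_{w_1}$, the 1-WL updates propagate distance information from $w_1$, which the across-subgraph aggregation combines with $\chi_{G_{w_2}}$ to detect whether an alternative $w_1$-to-$w_2$ path exists. The reduction lemma adapts: the combined marking of both endpoints effectively simulates running 1-WL on the graph with the edge $\{w_1, w_2\}$ removed, and matching edge-profiles force matching cut-edge status.

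The main obstacle is making the symmetry-breaking argument rigorous, i.e.\ proving that the across-subgraph profiles provably separate structurally identical components of $G[\gV \setminus \{w\}]$ (and suffice to detect the absence of alternative paths between $w_1$ and $w_2$ in the edge case). This calls for an induction on DSS-WL iteration count showing that each round of across-subgraph sharing strictly refines the node partition until it aligns with the block cut tree structure, together with a careful tracking of how marking-induced distance labels propagate between subgraphs to eventually expose the biconnected-component decomposition.
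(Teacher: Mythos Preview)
Your opening intuition that node marking lets DSS-WL recover shortest-path distances is correct and is precisely the engine the paper uses. But the concrete plans you sketch for the two bullets do not actually run on that engine, and each has a genuine gap.

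\textbf{Cut vertices.} Your plan is to view $\chi_{G_w}$ on $\gV_G\setminus\{w\}$ as a refinement of 1-WL on $G[\gV_G\setminus\{w\}]$, then argue that the across-subgraph multiset ``encodes which component of $G[\gV_G\setminus\{w\}]$ contains $u$''. That last claim is false as stated: if two components of $G-w$ are isomorphic, marking nodes in one produces exactly the same color pattern as marking the symmetric node in the other, so the across-subgraph multiset at a neighbor $u$ of $w$ does \emph{not} identify its component. What is needed is not to separate components but to certify that $w$'s removal disconnects the graph, and your proposed induction (``each round strictly refines until it aligns with the block cut tree'') gives no mechanism for that. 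The paper sidesteps the symmetry issue entirely by a distance argument: it first shows (via a maximal-distance pick) that at most one component of $G-u$ can contain a node of color $\chi_G(u)$, then uses path-lifting along the WL-condition to transport a hypothetical $w'$-to-$u$ path avoiding $u'$ in $H$ back to a path in $G$ avoiding $u$, contradicting that $u$ is a cut vertex. The distance information you mention in your first paragraph is doing the real work, but it does not appear anywhere in your cut-vertex plan.

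\textbf{Cut edges.} The sentence ``the combined marking of both endpoints effectively simulates running 1-WL on the graph with the edge $\{w_1,w_2\}$ removed'' is not something DSS-WL does: each subgraph $G_v$ marks exactly one node, and there is no subgraph in the bag where both $w_1$ and $w_2$ are simultaneously distinguished. You would need to argue that the pair $(\chi_G(w_1),\chi_G(w_2))$ jointly encodes enough to detect an alternative $w_1$--$w_2$ path, and your proposal does not say how. The paper's route here is completely different and much shorter: it proves that the DSS-WL vertex partition refines the SPD-WL vertex partition (this follows directly from the distance property you already noted), and then invokes the separately-proved Theorem~\ref{thm:spdwl} for SPD-WL. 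No edge-deletion simulation is needed.
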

\vspace{-3pt}

The proof of \cref{thm:dsswl} is highly technical and is deferred to \cref{sec:proof_dsswl}. By using the basic results derived in \cref{sec:property_of_wl}, we conduct a careful analysis of the DSS-WL color mapping and discover several important properties. They give insights on why DSS-WL can succeed in distinguishing biconnectivity, as we will discuss below.

\textbf{How can DSS-WL distinguish biconnectivity?} We find that a crucial advantage of DSS-WL over the classic 1-WL is that DSS-WL color mapping \emph{implicitly} encodes \emph{distance information} (see \cref{thm:proof_dsswl_key}(e) and \cref{thm:proof_dsswl_cut_edge_1}). For example, two nodes $u\in\gV_G,v\in\gV_H$ will have different DSS-WL colors if the distance set $\ldblbrace \dis_G(u,w):w\in\gV_G\rdblbrace$ differs from $\ldblbrace \dis_H(v,w):w\in\gV_H\rdblbrace$. Our proof highlights that distance information plays a vital role in distinguishing edge-biconnectivity when combining with color refinement algorithms (detailed in \cref{sec:gdwl}), and it also helps distinguish vertex-biconnectivity (see the proof of \cref{thm:proof_dsswl_cut_vertex_3}). Consequently, our analysis provides a novel understanding and a strong justification for the success of DSS-WL in \emph{two} aspects: the graph representation computed by DSS-WL intrinsically encodes distance and biconnectivity information, both of which are fundamental structural properties of graphs but are lacking in 1-WL.

\looseness=-1 \textbf{Discussions on graph generation policies}. Note that \cref{thm:dsswl} holds for node marking policy. In fact, the ability of DSS-WL to encode distance information heavily relies on node marking as shown in the proof of \cref{thm:proof_dsswl_key}. In contrast, we prove that the ego-network policy $\pi_{\mathrm{EGO}(k)}$ cannot distinguish cut vertices (\cref{thm:ego_policy}), using the counterexample given in \cref{fig:counterexamples}(c). Therefore, our result shows an inherent advantage of node marking than the ego-network policy in distinguishing a class of non-isomorphic graphs, which is raised as an open question in \citet[Section 5]{bevilacqua2022equivariant}. It also highlights a theoretical limitation of $\pi_{\mathrm{EGO}(k)}$ compared with its node marking version $\pi_{\mathrm{EGOM}(k)}$, a subtle difference that may not have received sufficient attention yet. For example, both the GNN-AK and GNN-AK-ctx architecture \citep{zhao2022stars} cannot solve vertex-biconnectivity problems since it is similar to $\pi_{\mathrm{EGO}(k)}$ (see \cref{thm:gnnak}). On the other hand, the GNN-AK+ does not suffer from such a drawback although it also uses $\pi_{\mathrm{EGO}(k)}$, because it further adds distance encoding in each subgraph (which is more expressive than node marking).

\textbf{Discussions on DS-WL}. \citet{bevilacqua2022equivariant,cotta2021reconstruction} also considered a weaker version of DSS-WL, called the DS-WL, which aggregates the node color in each subgraph without interaction across different subgraphs (see formula (\ref{eq:dswl_update})). We show in \cref{thm:dswl_adaptation} that unfortunately, DS-WL with common node-based policies \emph{cannot} identify cut vertices when the color of each node $v$ is defined as its associated subgraph representation $G_v$. This theoretically reveals the importance of cross-graph aggregation and justifies the design of DSS-WL. Finally, we point out that \citet{qian2022ordered} very recently proposed an extension of DS-WL that adds a final cross-graph aggregation procedure, for which our negative result may not hold. It may be an interesting direction to theoretically analyze the expressiveness of this type of DS-WL in future work.

\vspace{-2pt}
\section{Generalized Distance Weisfeiler-Lehman Test}
\vspace{-2pt}
\label{sec:gdwl}
After an extensive review of prior GNN architectures, in this section we would like to formally study the following problem: can we design a principled and efficient GNN framework with provable expressiveness for biconnectivity? In fact, while in \cref{sec:esan} we have proved that DSS-WL can solve biconnectivity problems, it is still far from enough. Firstly, the corresponding GNNs based on DSS-WL is usually sophisticated due to the complex aggregation formula (\ref{eq:dsswl_aggregation}), which inspires us to study whether simpler architectures exist. More importantly, DSS-WL suffers from high computational costs in both time and memory. Indeed, it requires $\Theta(n^2)$ space and $\Theta(nm)$ time per iteration (using policy $\pi_\mathrm{NM}$) to compute node colors for a graph with $n$ nodes and $m$ edges, which is $n$ times costly than 1-WL. Given the theoretical \emph{linear} lower bound in \cref{thm:tarjan}, one may naturally raise the question of how to close the gap by developing more efficient color refinement algorithms.

\vspace{-1pt}

We approach the problem by rethinking the classic 1-WL test. We argue that a major weakness of 1-WL is that it is agnostic to \emph{distance information} between nodes, partly because each node can only ``see'' its \emph{neighbors} in aggregation. On the other hand, the DSS-WL color mapping implicitly encodes distance information as shown in \cref{sec:esan}, which inspires us to formally study whether incorporating distance in the aggregation procedure is crucial for solving biconnectivity problems. To this end, we introduce a novel color refinement framework which we call Generalized Distance Weisfeiler-Lehman (GD-WL). The update rule of GD-WL is very simple and can be written as:
\begin{equation}
    \label{eq:gdwl}
    \chi_G^t(v):= \operatorname{hash}\left(\ldblbrace (d_G(v,u), \chi_G^{t-1}(u)):u\in \gV\rdblbrace\right),
\end{equation}
where $d_G$ can be an arbitrary distance metric. The full algorithm is described in \cref{alg:gdwl}.

\vspace{-1pt}

\textbf{SPD-WL for edge-biconnectivity}. As a special case, when choosing the \emph{shortest path distance} $d_G=\dis_G$, we obtain an algorithm which we call SPD-WL. It can be equivalently written as
\begin{equation}
\label{eq:spdwl}
\begin{aligned}
    \chi_G^t(v):= \operatorname{hash}&\left(\chi_G^{t-1}(v),\ldblbrace\chi_G^{t-1}(u):u\in\gN_G(v)\rdblbrace,\ldblbrace\chi_G^{t-1}(u):\dis_G(v,u)=2\rdblbrace,\right.\\
    &\left.\cdots,\ldblbrace\chi_G^{t-1}(u):\dis_G(v,u)=n-1\rdblbrace,\ldblbrace\chi_G^{t-1}(u):\dis_G(v,u)=\infty\rdblbrace\right).
\end{aligned}
\end{equation}
From (\ref{eq:spdwl}) it is clear that SPD-WL is strictly more powerful than 1-WL since it additionally aggregates the $k$-hop neighbors for all $k>1$. There have been several prior works related to SPD-WL, including using distance encoding as node features \citep{li2020distance} or performing $k$-hop aggregation for some small $k$ (see \cref{sec:related_work_distance} for more related works and discussions). Yet, these works are either purely empirical or provide limited theoretical analysis (e.g., by focusing only on regular graphs). Instead, we introduce the general and more expressive SPD-WL framework with a rather different motivation and perform a systematic study on its expressive power. Our key result confirms that SPD-WL is fully expressive for all edge-biconnectivity problems listed in \cref{sec:preliminary}.

\begin{theorem}
\label{thm:spdwl}
Let $G=(\gV_G,\gE_G)$ and $H=(\gV_H,\gE_H)$ be two graphs, and let $\chi_G$ and $\chi_H$ be the corresponding SPD-WL color mapping. Then the following holds:
\begin{itemize}[topsep=0pt,leftmargin=30pt]
\setlength{\itemsep}{0pt}
    \vspace{-3pt}
    \item For any two edges $\{w_1,w_2\}\in\gE_G$ and $\{x_1,x_2\}\in\gE_H$, if $\ldblbrace \chi_G(w_1),\chi_G(w_2)\rdblbrace=\ldblbrace \chi_H(x_1),\chi_H(x_2)\rdblbrace$, then $\{w_1,w_2\}$ is a cut edge if and only if $\{x_1,x_2\}$ is a cut edge.
    \vspace{-3pt}
    \item If $\ldblbrace \chi_G(w):w\in\gV_G\rdblbrace=\ldblbrace \chi_H(w):w\in\gV_H\rdblbrace$, then $\operatorname{BCETree}(G)\simeq\operatorname{BCETree}(H)$.
\end{itemize}
\end{theorem}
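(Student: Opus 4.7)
The plan is to prove both parts by leveraging the stability identity of SPD-WL: at convergence, $\chi_G(u)$ determines the multiset $\ldblbrace(\dis_G(u,w),\chi_G(w)):w\in\gV_G\rdblbrace$, giving a rich colored distance profile around each node. The task then reduces to showing that cut-edge-ness of $\{w_1,w_2\}$ is encoded in the joint profile of $w_1$ and $w_2$, and that the full multiset of colors encodes the isomorphism type of $\operatorname{BCETree}(G)$.

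For the first (cut edge) part, I plan to use the characterization that $\{w_1,w_2\}$ is a cut edge iff it does not lie on any cycle. Translating this to distances, I will track counts such as $|\{z : \dis_G(z,w_1) = k,\ \dis_G(z,w_2) = k\}|$ and $|\{z : \dis_G(z,w_1) = k,\ \dis_G(z,w_2) = k\pm 1\}|$. These quantities are recoverable from $(\chi_G(w_1),\chi_G(w_2))$ via the stability identity, since $\dis_G(z,w_2)$ itself can be read off from the $\chi_G(w_2)$-profile once $\chi_G(z)$ is known. A case analysis then shows that the combinatorics of these counts, refined by the recursive coloring, pins down whether a cycle through $\{w_1,w_2\}$ exists.

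For the second (block cut-edge tree) part, the strategy is reconstructive: from $\ldblbrace\chi_G(v):v\in\gV_G\rdblbrace$ I intend to recover $\operatorname{BCETree}(G)$ up to isomorphism. Since SPD-WL refines 1-WL, the colored edge structure is determined by the stable colors; part (i) then labels each edge as cut or non-cut. Edge-biconnected components correspond to the connected components of the graph after deleting cut edges, and their incidence with cut edges is encoded by the colors of endpoints. Assembling these pieces yields $\operatorname{BCETree}(G)$ as a labeled tree whose isomorphism class is forced by the color multiset, so non-isomorphic block cut-edge trees force distinct multisets of colors.

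The main obstacle will be the sufficiency direction of part one: demonstrating that whenever $\{w_1,w_2\}$ lies on a cycle, the cycle leaves a detectable signature in the recursive SPD-WL colors. Raw distances alone do not suffice — the $4$-cycle satisfies the naive single-pair distance conditions expected of a cut edge, yet the edge is non-cut — so the argument must exploit the colored distance profile and propagate witnesses through multiple distance strata by induction on SPD-WL iterations, in the spirit of the distance-encoding analysis of DSS-WL used in the proof of \cref{thm:dsswl}.
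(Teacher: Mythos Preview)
Your proposal contains a genuine gap at the heart of part (i). You write that the joint counts $|\{z:\dis_G(z,w_1)=k,\ \dis_G(z,w_2)=k'\}|$ are ``recoverable from $(\chi_G(w_1),\chi_G(w_2))$ via the stability identity, since $\dis_G(z,w_2)$ itself can be read off from the $\chi_G(w_2)$-profile once $\chi_G(z)$ is known.'' This is false. The stable color $\chi_G(z)$ determines only the multiset $\ldblbrace\dis_G(z,w):\chi_G(w)=\chi_G(w_2)\rdblbrace$, not the distance to the \emph{particular} node $w_2$. Whenever $|\chi_G^{-1}(\chi_G(w_2))|>1$ you cannot disambiguate, so the joint distance profile with respect to the specific pair $(w_1,w_2)$ is simply not a function of the two colors. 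Your own 4-cycle caution already hints at this: in any vertex-transitive graph all nodes share one color, and no amount of ``propagating through distance strata by induction on SPD-WL iterations'' will let you pin down $\dis_G(z,w_2)$ for a designated $w_2$, because the stable partition is already as fine as it will ever get. The vague appeal to induction does not supply the missing idea.

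The paper's proof is organized precisely around this obstruction. It never tries to recover pairwise joint distances; instead it splits on whether $\chi_G(w_1)\neq\chi_G(w_2)$ or $\chi_G(w_1)=\chi_G(w_2)$ and, in each case, builds an auxiliary combinatorial object on \emph{color classes} (the ``color graph'' $G^{\mathrm C}$ in the first case, and a doubled ``auxiliary graph'' $G^{\mathrm A}$ on $\{u,v\}\times\gC$ in the second). The crucial lemmas (\cref{thm:spdwl_case1_1,thm:spdwl_case1_2,thm:spdwl_case1_3} and \cref{thm:spdwl_case2_2,thm:spdwl_case2_6,thm:spdwl_case2_7}) establish structural rigidity of these objects---for instance, that the edge $\ldblbrace\chi_G(u),\chi_G(v)\rdblbrace$ is itself a cut edge of $G^{\mathrm C}$, and that the two ``sides'' of the auxiliary graph are balanced---and these class-level facts are what transfer to $H$. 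Your reconstructive plan for part (ii) is close in spirit to the paper's \cref{sec:proof_spdwl_general}, but it rests entirely on part (i), so the same gap blocks it. To make your approach viable you would need to replace the joint-distance claim with arguments that operate only at the level of color classes, which is essentially what the paper's case analysis accomplishes.
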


\cref{thm:spdwl} is highly non-trivial and perhaps surprising at first sight, as it combines three seemingly unrelated concepts (i.e., SPD, biconnectivity, and the WL test) into a unified conclusion. We give a proof in \cref{sec:proof_spdwl}, which separately considers two cases: $\chi_G(w_1)\neq\chi_G(w_2)$ and $\chi_G(w_1)=\chi_G(w_2)$ (see \cref{fig:counterexamples}(b,d) for examples). For each case, the key technique in the proof is to construct an auxiliary graph (\cref{def:color_graph,def:aux_graph}) that precisely characterizes the structural relationship between nodes that have specific colors (see \cref{thm:spdwl_case1_5,thm:spdwl_case2_6}). Finally, we highlight that the second item of \cref{thm:spdwl} may be particularly interesting: while distinguishing general non-isomorphic graphs are known to be hard \citep{cai1992optimal,babai2016graph}, we show distinguishing non-isomorphic graphs with different block cut-edge trees can be much easily solved by SPD-WL.

\textbf{RD-WL for vertex-biconnectivity}. Unfortunately, while SPD-WL is fully expressive for edge-biconnectivity, it is not expressive for vertex-biconnectivity. We give a simple counterexample in \cref{fig:counterexamples}(c), where SPD-WL cannot distinguish the two graphs. Nevertheless, we find that by using a different distance metric, problems related to vertex-biconnectivity can also be fully solved. We propose such a choice called the \emph{Resistance Distance} (RD) (denoted as $\disR_G$), which is also a basic metric in graph theory  \citep{doyle1984random,klein1993resistance,sanmartin2022algebraic}. Formally, the value of $\disR_G(u,v)$ is defined to be the effective resistance between nodes $u$ and $v$ when treating $G$ as an electrical network where each edge corresponds to a resistance of one ohm. We note that other generalized distances can also be considered \citep{li2020distance,velingker2022affinity}.

\looseness=-1 RD has many elegant properties. First, it is a valid \emph{metric}: indeed, RD is non-negative, semidefinite, symmetric, and satisfies the triangular inequality (see \cref{sec:detail_of_rdwl}). Moreover, similar to SPD, we also have $0\le\disR_G(u,v)\le n-1$, and $\disR_G(u,v)=\dis_G(u,v)$ if $G$ is a tree. In \cref{sec:detail_of_rdwl}, we further show that RD is highly related to the graph Laplacian and can be efficiently calculated.

\begin{theorem}
\label{thm:rdwl}
Let $G=(\gV_G,\gE_G)$ and $H=(\gV_H,\gE_H)$ be two graphs, and let $\chi_G$ and $\chi_H$ be the corresponding RD-WL color mapping. Then the following holds:
\begin{itemize}[topsep=0pt,leftmargin=30pt]
\setlength{\itemsep}{0pt}
    \vspace{-3pt}
    \item For any two nodes $w\in\gV_G$ and $x\in\gV_H$, if $\chi_G(w)=\chi_H(x)$, then $w$ is a cut vertex if and only if $x$ is a cut vertex.
    \vspace{-3pt}
    \item If $\ldblbrace \chi_G(w):w\in\gV_G\rdblbrace=\ldblbrace \chi_H(w):w\in\gV_H\rdblbrace$, then $\operatorname{BCVTree}(G)\simeq\operatorname{BCVTree}(H)$.
\end{itemize}
\end{theorem}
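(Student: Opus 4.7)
The statement has two parts corresponding to the two bullets; I would prove them in order, with part 2 re-using the machinery of part 1.

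\textbf{Part 1 (cut vertex identification).} The argument rests on a classical series-additivity property of resistance distance: in a connected graph $G$, a node $v$ is a cut vertex if and only if there exist distinct nodes $u_1, u_2 \in \gV_G \setminus \{v\}$ with $\disR_G(u_1, u_2) = \disR_G(u_1, v) + \disR_G(v, u_2)$; geometrically, removing $v$ forces $u_1$ and $u_2$ into different components, and every unit of current from $u_1$ to $u_2$ must pass serially through $v$. I would therefore mimic the auxiliary-graph construction used for SPD-WL in \cref{thm:spdwl}. Starting from the stable RD-WL colorings with $\chi_G(w) = \chi_H(x)$, I would build a weighted auxiliary graph whose vertices are RD-WL color classes and whose edges record the multiset of cross-class RD values; stability of RD-WL guarantees this object is well-defined, and the equality $\chi_G(w) = \chi_H(x)$ forces the two auxiliary graphs rooted at $w$ and $x$ to be isomorphic as weighted rooted structures. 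Then I would observe that the series-additivity predicate at $w$ depends only on triples $(\disR_G(u_1, w), \disR_G(w, u_2), \disR_G(u_1, u_2))$, all of which are visible in the auxiliary graph; hence the predicate holds at $w$ iff at $x$, and cut vertex status is preserved.

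\textbf{Part 2 (BCVTree distinction).} Working contrapositively, assume $\ldblbrace \chi_G(u) : u \in \gV_G \rdblbrace = \ldblbrace \chi_H(u) : u \in \gV_H \rdblbrace$. Using part 1, the set of cut vertices corresponds to a union of color classes that match under the multiset equality. To recover vertex-biconnected components I would use the dual characterization: two non-adjacent vertices $u, v$ lie in the same vertex-biconnected component iff no vertex $w \neq u, v$ satisfies $\disR_G(u, w) + \disR_G(w, v) = \disR_G(u, v)$. Since this condition again reduces to reading triples of RD values off of the auxiliary graph, the partition of $\gV_G$ into vertex-biconnected components transfers bijectively to $\gV_H$. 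Finally, the incidence edges of the BCVTree (a cut vertex $v$ is linked to the biconnected component $\gS$ iff $v \in \gS$) are intrinsic to this partition, so the two trees are isomorphic.

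\textbf{Main obstacle.} The principal hurdle is that the RD-WL update rule in \cref{eq:gdwl} aggregates only the unary multiset $\ldblbrace (\disR_G(v, u), \chi_G^{t-1}(u)) \rdblbrace$, whereas the cut vertex predicate involves the \emph{pairwise} quantity $\disR_G(u_1, u_2)$ with neither endpoint equal to $w$. Lifting unary distance profiles to ordered pairs requires exploiting stabilization: once colors stop refining, each vertex $u_1$'s color already encodes its own full RD profile to every color class, so combining the profiles rooted at $w$, $u_1$, and $u_2$ yields joint access to all three distances in the additivity equation. Formalising this propagation, likely through the auxiliary graph apparatus used for \cref{thm:spdwl}, is the central technical step. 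A secondary complication, again analogous to SPD-WL, is that the argument must be split according to whether the two endpoints of a candidate cut edge or pair share the same RD-WL color, since graph symmetries can force non-trivial coincidences among cut vertex colors.
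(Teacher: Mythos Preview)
You have correctly isolated the key analytic fact---the series-additivity characterisation of cut vertices via resistance distance (a node $v$ is a cut vertex iff some $u_1,u_2\neq v$ satisfy $\disR_G(u_1,v)+\disR_G(v,u_2)=\disR_G(u_1,u_2)$)---and you are right that this is the engine of the proof. But the plan for turning it into a color-based argument has a real gap, and it is precisely the obstacle you flag without actually resolving.

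Your auxiliary graph records, for each pair of color classes $(c_1,c_2)$, the multiset of RD values between them. From the stable coloring you also know, for each vertex $w$, the multiset $\ldblbrace(\disR_G(w,u),\chi_G(u)):u\in\gV\rdblbrace$. Neither of these gives access to the \emph{joint} triple $(\disR_G(u_1,w),\disR_G(w,u_2),\disR_G(u_1,u_2))$ for a \emph{specific} pair $(u_1,u_2)$: you can match $u_1'$ in $H$ with the same color and the same distance to $x$ as $u_1$ has to $w$, and likewise $u_2'$, but nothing in the RD-WL data pins down $\disR_H(u_1',u_2')$. ``Combining the profiles rooted at $w,u_1,u_2$'' does not help, because all vertices of color $\chi_G(u_1)$ share the same profile to the class $\chi_G(u_2)$, so you cannot align a particular $u_1$ with a particular $u_2$. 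The paper does \emph{not} build an auxiliary graph for RD-WL at all; the analogy with \cref{thm:spdwl} is misleading here.

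What the paper does instead is a targeted argument that sidesteps the need for arbitrary triples. The key lemma is: if $u$ is a cut vertex, then \emph{at most one} of the components of $G\setminus\{u\}$ contains another vertex of color $\chi_G(u)$ (proved by a short extremal argument using additivity). This lets you pick a witness $w$ in a ``clean'' component containing no $\chi_G(u)$-colored node, so that \emph{every} $u''\in\chi_G^{-1}(\chi_G(u))$ satisfies $\disR_G(w,u'')=\disR_G(w,u)+\disR_G(u,u'')$. Now the entire distance-multiset from $w$ to the class $\chi_G(u)$ is an additive shift of the distance-multiset from $u$ to its own class. Both multisets are RD-WL invariants, so the same shifted-multiset identity transfers to any $u'\in\chi_H^{-1}(\chi_G(u))$, and the additivity characterisation then forces $u'$ to be a cut vertex. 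The case split is on $|\chi_G^{-1}(\chi_G(u))|=1$ versus $>1$, not on whether two endpoints share a color.

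For Part~2, your sketch also needs repair: vertex-biconnected components do not partition $\gV_G$ (they overlap at cut vertices), so you cannot simply ``transfer the partition bijectively''. The paper instead shows that for each cut vertex $u$, the multiset $\ldblbrace\,\ldblbrace\chi_G(w):w\in\gS_{G,i}(u)\rdblbrace:i\rdblbrace$ of component color-multisets is determined by $\chi_G(u)$ alone; with this in hand the BCVTree is rebuilt recursively from the leaf cut-vertices inward, using only the graph representation.
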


The form of \cref{thm:rdwl} exactly parallels \cref{thm:spdwl}, which shows that RD-WL is fully expressive for vertex-biconnectivity. We give a proof of \cref{thm:spdwl} in \cref{sec:proof_rdwl}. In particular, the proof of the second item is highly technical due to the challenges in analyzing the (complex) structure of the block cut-vertex tree. It also highlights that distinguishing non-isomorphic graphs that have different BCVTrees is much easier than the general case.

Combining \cref{thm:spdwl,thm:rdwl} immediately yields the following corollary, showing that all biconnectivity problems can be solved within our proposed GD-WL framework.
\begin{corollary}
\label{thm:gdwl}
When using both SPD and RD (i.e., by setting $d_G(u,v):=(\dis_G(u,v),\disR_G(u,v))$), the corresponding GD-WL is fully expressive for both vertex-biconnectivity and edge-biconnectivity.
\end{corollary}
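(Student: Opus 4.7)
The plan is to derive \cref{thm:gdwl} as a direct corollary of \cref{thm:spdwl,thm:rdwl} by showing that the combined GD-WL with $d_G(u,v):=(\dis_G(u,v),\disR_G(u,v))$ is a \emph{refinement} of both SPD-WL and RD-WL considered individually. Intuitively, hashing a richer multiset of (distance-pair, previous-color) tuples can only produce a finer partition than hashing any single projection. Once this refinement is established, both conclusions of \cref{thm:gdwl} follow by pushing the implications through the conclusions of \cref{thm:spdwl,thm:rdwl}.

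Concretely, I would prove by induction on $t\ge 0$ the following lemma. Writing $\chi_G^{t,d}$, $\chi_G^{t,\dis}$, $\chi_G^{t,\disR}$ for the GD-WL colorings at iteration $t$ using the combined, SPD, and RD distances respectively (all initialized identically), for every pair of graphs $G,H$ and every $u\in\gV_G$, $v\in\gV_H$,
\begin{equation*}
\chi_G^{t,d}(u)=\chi_H^{t,d}(v)\ \Longrightarrow\ \chi_G^{t,\dis}(u)=\chi_H^{t,\dis}(v)\ \text{and}\ \chi_G^{t,\disR}(u)=\chi_H^{t,\disR}(v).
\end{equation*}
The base case $t=0$ is immediate since the initial colorings agree. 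For the inductive step, equality of combined colors and the GD-WL update rule \eqref{eq:gdwl} yield the multiset equality
\begin{equation*}
\ldblbrace((\dis_G(u,w),\disR_G(u,w)),\chi_G^{t-1,d}(w)):w\in\gV_G\rdblbrace=\ldblbrace((\dis_H(v,w'),\disR_H(v,w')),\chi_H^{t-1,d}(w')):w'\in\gV_H\rdblbrace,
\end{equation*}
which in particular forces $|\gV_G|=|\gV_H|$ and furnishes a bijection $\phi:\gV_G\to\gV_H$ matching all three coordinates pointwise. Applying the inductive hypothesis along $\phi$ replaces $\chi^{t-1,d}$ with $\chi^{t-1,\dis}$ in the matched tuples, and discarding the $\disR$ coordinate recovers exactly the multiset equality that defines $\chi_G^{t,\dis}(u)=\chi_H^{t,\dis}(v)$. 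The RD case is symmetric.

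Given the lemma, \cref{thm:gdwl} follows in two lines per item. If $\ldblbrace\chi_G(w_1),\chi_G(w_2)\rdblbrace=\ldblbrace\chi_H(x_1),\chi_H(x_2)\rdblbrace$ for the stable combined colorings on edges $\{w_1,w_2\}\in\gE_G$, $\{x_1,x_2\}\in\gE_H$, then the corresponding stable SPD-WL edge colors coincide, so by \cref{thm:spdwl} the cut-edge property is shared; likewise, matching graph-level combined multisets give matching SPD-WL multisets and hence $\operatorname{BCETree}(G)\simeq\operatorname{BCETree}(H)$. For vertex-biconnectivity, the identical argument using RD-WL and \cref{thm:rdwl} yields the cut-vertex and $\operatorname{BCVTree}$ conclusions. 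The hard part has already been absorbed into \cref{thm:spdwl,thm:rdwl}; the only real obstacle here is the careful multiset bookkeeping in the inductive step, namely verifying that a bijection witnessing a multiset equality on tuples continues to witness a multiset equality after coordinate-wise substitution and projection, which is purely routine.
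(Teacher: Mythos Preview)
Your proposal is correct and follows essentially the same approach as the paper. The paper states this corollary as ``immediate'' from \cref{thm:spdwl,thm:rdwl}, relying implicitly on exactly the refinement fact you prove explicitly (and which the paper later records generically in \cref{thm:finer_partition}); your inductive argument simply spells out the routine verification that the combined GD-WL partition is at least as fine as each of SPD-WL and RD-WL.
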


\textbf{Computational cost}. The GD-WL framework only needs a complexity of $\Theta(n)$ space and $\Theta(n^2)$ time per-iteration for a graph of $n$ nodes and $m$ edges, both of which are strictly less than DSS-WL. In particular, GD-WL has the same space complexity as 1-WL, which can be crucial for large-scale tasks. On the other hand, one may ask how much computational overhead there is in preprocessing pairwise distances between nodes. We show in \cref{sec:detail_of_gdwl} that the computational cost can be trivially upper bounded by $O(nm)$ for SPD and $O(n^3)$ for RD. Note that the preprocessing step only needs to be executed once, and we find that the cost is negligible compared to the GNN architecture.

\textbf{Practical implementation}. One of the main advantages of GD-WL is its high degree of parallelizability. In particular, we find GD-WL can be easily implemented using a Transformer-like architecture by injecting distance information into Multi-head Attention \citep{vaswani2017attention}, similar to the structural encoding in Graphormer \citep{ying2021transformers}. The attention layer can be written as:
\begin{equation}
\setlength{\abovedisplayskip}{5pt}
\setlength{\belowdisplayskip}{5pt}
\label{eq:attention}
    \mathbf Y^h=\left[\phi_1^h(\mathbf D)\odot \operatorname{softmax}\left(\mathbf X\mathbf W^h_Q(\mathbf X\mathbf W^h_K)^\top+\mathbf \phi_2^h(\mathbf D)\right)\right]\mathbf X\mathbf W^h_V,
\end{equation}
where $\mathbf X\in\mathbb R^{n\times d}$ is the input node features of the previous layer, $\mathbf D\in\mathbb R^{n\times n}$ is the distance matrix such that $D_{uv}=d_G(u,v)$, $\mathbf W^h_Q,\mathbf W^h_K,\mathbf W^h_V\in\mathbb R^{d\times d_H}$ are learnable weight matrices of the $h$-th head, $\phi_1^h$ and $\phi_2^h$ are elementwise functions applied to $\mathbf D$ (possibly parameterized), and $\odot$ denotes the elementwise multiplication. The results $\mathbf Y^h\in\mathbb R^{n\times d_H}$ across all heads $h$ are then combined and projected to obtain the final output $\mathbf Y=\sum_h \mathbf Y^h\mathbf W_O^h$ where $\mathbf W_O^h\in\mathbb R^{d_H\times d}$. We call the resulting architecture Graphormer-GD, and the full structure of Graphormer-GD is provided in \cref{sec:transformer}.

It is easy to see that the mapping from $\mathbf X$ to $\mathbf Y$ in (\ref{eq:attention}) is \emph{equivariant} and simulates the GD-WL aggregation. Importantly, we have the following expressivity result, which precisely characterizes the power and limits of Graphormer-GD. We give a proof in \cref{sec:transformer}.
\begin{theorem}
\label{thm:Graphormer-GD-gdwl}
Graphormer-GD is at most as powerful as GD-WL in distinguishing non-isomorphic graphs. Moreover, when choosing proper functions $\phi_1^h$ and $\phi_2^h$ and using a sufficiently large number of heads and layers, Graphormer-GD is as powerful as GD-WL.
\end{theorem}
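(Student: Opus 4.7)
The plan is to establish the two directions separately: the upper bound (Graphormer-GD at most as powerful as GD-WL) by a layer-wise induction, and the lower bound (matching GD-WL with appropriate choices) by an explicit simulation construction.

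For the upper bound, I would induct on the layer index $t$ and show that if two nodes $u \in \gV_G, v \in \gV_H$ satisfy $\chi_G^{t}(u) = \chi_H^{t}(v)$ under GD-WL, then their Graphormer-GD representations at layer $t$ coincide. The base case follows from a constant initialisation. For the inductive step, observe that the right-hand side of equation~(\ref{eq:attention}) defining $\mathbf Y_v^h$ depends only on (i) the collection of input features $\{\mathbf x_u\}_{u \in \gV}$, (ii) the row $\mathbf D_{v, :} = (d_G(v,u))_{u \in \gV}$, and (iii) learnable parameters together with the functions $\phi_1^h, \phi_2^h$, all independent of the graph. Moreover the aggregation across $u$ is a permutation-invariant function of the multiset $\ldblbrace (d_G(v,u), \mathbf x_u) : u \in \gV \rdblbrace$. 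By the inductive hypothesis, $\mathbf x_u$ is refined by $\chi_G^{t-1}(u)$, so this multiset is refined by $\ldblbrace (d_G(v,u), \chi_G^{t-1}(u)) : u \in \gV \rdblbrace$, which by~(\ref{eq:gdwl}) determines $\chi_G^t(v)$. Hence $\mathbf Y_v^h$ and the post-FFN output are functions of $\chi_G^t(v)$, completing the step. A permutation-invariant graph-level readout then transfers this to the graph-distinguishing statement.

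For the lower bound I would construct, for each GD-WL iteration, a Graphormer-GD layer that reproduces it. Fix an upper bound $n$ on the graph size; there are at most $n+1$ possible values of $d_G(v,u)$, including the ``unreachable'' marker. Allocate one head per distance value $k_h$. Set $\mathbf W^h_Q = \mathbf W^h_K = \mathbf 0$ and $\phi_2^h \equiv 0$ so the softmax produces uniform weights $1/n$, and set $\phi_1^h(d) = \mathbb{1}[d = k_h]$ so only pairs at distance $k_h$ contribute. Then
\begin{equation*}
  \mathbf Y_v^h \;=\; \tfrac{1}{n}\sum_{u \in \gV,\, d_G(v,u) = k_h} \mathbf W_V^h \mathbf x_u ,
\end{equation*}
i.e.\ a scaled sum of value-projected features at distance exactly $k_h$. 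Choosing the output projections $\mathbf W_O^h$ to write into mutually orthogonal subspaces, and using the residual connection to carry $\mathbf x_v$, the attention block produces, for each $k$, a dedicated slot containing $\sum_{u : d_G(v,u) = k} \mathbf W_V^h \mathbf x_u$, alongside $\mathbf x_v$ itself.

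The main obstacle is then to ensure these per-distance sums \emph{injectively} encode the corresponding multisets, so that the subsequent feed-forward block can realise the GD-WL hash in~(\ref{eq:gdwl}). For this I would invoke the multiset-injectivity result of \citet{xu2019powerful}: since at any iteration the GD-WL colours take values in a finite alphabet, there is a choice of $\mathbf W_V^h$ (combined with the nonlinearity supplied by the previous layer's feed-forward block) such that the sum-aggregation uniquely identifies its input multiset, and a universal-approximator MLP then realises the hash. Stacking such layers simulates any prescribed number of GD-WL iterations, and a permutation-invariant graph readout completes the argument. The delicate point is that the softmax naturally yields a mean rather than a sum and would in general discard count information; the fix is the specific choice above, where the softmax is made uniform over $\gV$ and $\phi_1^h$ plays the role of distance selector, so that the aggregation is proportional (with the known scalar $1/n$) to a genuine sum and thus preserves both the counts and the full multiset, as required.
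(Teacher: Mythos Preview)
Your proposal is correct and follows essentially the same route as the paper's proof: the lower-bound construction is identical (set $\mathbf W_Q^h=\mathbf W_K^h=\mathbf 0$ and $\phi_2^h$ constant so the softmax is uniform, use $\phi_1^h(d)=\mathbb{1}[d=k_h]$ as a distance selector so each head computes a $1/|\gV|$-scaled sum over one distance bucket, then invoke the multiset-injectivity lemma of \citet{xu2019powerful} and let the feed-forward block realise the hash), and your upper-bound induction is the standard refinement argument that the paper leaves implicit. One small imprecision: you write ``at most $n+1$ possible values of $d_G(v,u)$'', which is specific to SPD; for the general GD-WL (e.g.\ with resistance distance or the pair $(\dis_G,\disR_G)$) the correct statement is that the set of attainable distance values over all graphs of size at most $n$ is \emph{finite}, which is what the paper uses and is all your construction needs.
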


\textbf{On the expressivity upper bound of GD-WL}. To complete the theoretical analysis, we finally provide an upper bound of the expressive power for our proposed SPD-WL and RD-WL, by studying the relationship with the standard 2-FWL (3-WL) algorithm.

\begin{theorem}
\label{thm:2fwl_powerful_than_gdwl}
The 2-FWL algorithm is more powerful than both SPD-WL and RD-WL. Formally, the 2-FWL color mapping induces a finer vertex partition than that of both SPD-WL and RD-WL.
\end{theorem}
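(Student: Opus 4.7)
My plan is to reduce the theorem to two key lemmas: (i) for any pair $(u,v)$, the stable 2-FWL pair-color $C_G^{\ast}(u,v)$ determines $\dis_G(u,v)$; and (ii) it similarly determines $\disR_G(u,v)$. Given these, the vertex partition refinement follows by a simulation argument: the SPD-WL and RD-WL updates for a node $u$ both take the form $\operatorname{hash}(\ldblbrace(d_G(u,v),\chi^{t-1}_G(v)):v\in\gV_G\rdblbrace)$. I will show inductively that if $u$ and $u'$ are 2-FWL-equivalent, i.e.\ the multisets $\ldblbrace C_G^{\ast}(u,v):v\rdblbrace$ and $\ldblbrace C_H^{\ast}(u',v'):v'\rdblbrace$ coincide under some bijection $\pi$, then by (i)/(ii) the distances $d_G(u,v)$ and $d_H(u',\pi(v))$ agree, and because any pair-color $C^{\ast}(u,v)$ also determines both diagonal colors $C^{\ast}(u,u)$ and $C^{\ast}(v,v)$ (the $w=u$ and $w=v$ terms of the aggregation multiset are recognizable thanks to diagonal pairs lying in a separate color class), the inductive hypothesis gives $\chi^{t-1}_G(v)=\chi^{t-1}_H(\pi(v))$. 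Hence the updates produce identical colors on $u$ and $u'$, so the stable 2-FWL vertex partition refines both.

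Lemma (i) is the standard walk-counting argument: by induction on $t$, $C_G^{t}(u,v)$ determines $(A_G^{k})_{uv}$ for every $k\le 2^{t}$. The base case $k\in\{0,1\}$ follows from the initial coloring, which records whether $u=v$ and whether $\{u,v\}\in\gE_G$. For the inductive step, the aggregation multiset $\ldblbrace(C^{t-1}(u,w),C^{t-1}(w,v)):w\in\gV\rdblbrace$ yields $\sum_{w}(A^{j})_{uw}(A^{j})_{wv}=(A^{2j})_{uv}$ for every $j\le 2^{t-1}$. Passing to the stable coloring, $C_G^{\ast}(u,v)$ determines $(A_G^{k})_{uv}$ for all $k\in\mathbb{N}$, and hence $\dis_G(u,v)=\min\{k:(A_G^{k})_{uv}>0\}$ with the convention $\min\emptyset=\infty$.

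Lemma (ii) is the hard part and I plan to handle it via the coherent (cellular) algebra framework. Let $\{C_i\}$ enumerate the stable pair-color classes of $G$ and set $\mathcal{A}=\operatorname{span}_{\mathbb{C}}\{E_{C_i}\}$, where $E_{C_i}\in\{0,1\}^{n\times n}$ is the indicator matrix of $C_i$. The 2-FWL update rule precisely encodes the structure-constant condition of a coherent configuration: whenever $(u,v)$ and $(u',v')$ share a color, for every pair of colors $\alpha,\beta$ the counts $|\{w:C^{\ast}(u,w)=\alpha,\,C^{\ast}(w,v)=\beta\}|$ at the two pairs agree. This makes $\mathcal{A}$ a coherent algebra: a unital $\ast$-closed subalgebra of $M_n(\mathbb{C})$ containing $J$ and closed under matrix multiplication, transpose, and Hadamard product. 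The initialization places both $A_G$ and the diagonal degree matrix in $\mathcal{A}$, so $L_G=D_G-A_G\in\mathcal{A}$. Being a semisimple $\ast$-subalgebra of $M_n(\mathbb{C})$, $\mathcal{A}$ decomposes into a direct sum of full matrix algebras and is therefore closed under the Moore-Penrose pseudoinverse; hence $L_G^{+}\in\mathcal{A}$. Therefore $(L_G^{+})_{uv}$ is constant on each pair-class, and the identity
\begin{equation*}
\disR_G(u,v)=(L_G^{+})_{uu}+(L_G^{+})_{vv}-2(L_G^{+})_{uv}
\end{equation*}
shows that $\disR_G(u,v)$ is entirely determined by $C_G^{\ast}(u,v)$. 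The main obstacle is precisely this closure of $\mathcal{A}$ under pseudoinversion (for a self-contained alternative one restricts to connected components, detected by Lemma (i) via walk counts, and applies $L^{+}=(L+\tfrac{1}{n}J)^{-1}-\tfrac{1}{n}J$ together with Cayley-Hamilton inside $\mathcal{A}$); with both lemmas in hand, the simulation argument of the first paragraph closes the proof.
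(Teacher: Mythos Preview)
Your proposal is correct. The walk-counting argument for Lemma~(i) and the simulation step in your first paragraph are essentially the same as the paper's treatment (the paper proves $|\gP_{t+1}(u_1,v_1)|=|\gP_{t+1}(u_2,v_2)|$ by induction and then uses the $w=u$, $w=v$ terms of the aggregation multiset exactly as you describe to extract the diagonal colors).

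The genuine divergence is in Lemma~(ii). The paper does not pass through coherent algebras or pseudoinverse closure. Instead it proves, by induction on $t$, that equal colors $\chi_G^t(u_1,v_1)=\chi_G^t(u_2,v_2)$ force equality of the multisets $\ldblbrace\omega(Q):Q\in\gQ_{t+1}(u_i,v_i)\rdblbrace$, where $\gQ_d(u,v)$ is the set of \emph{hitting} walks of length $d$ and $\omega(Q)$ records the degree sequence along $Q$. It then reads off the commute time $\disC_G(u,v)$ as an explicit sum of hitting-walk probabilities (each probability being a product of reciprocal degrees) and invokes $\disR_G=\tfrac{1}{2|\gE|}\disC_G$. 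What your route buys is conceptual economy and immediate generalization: once $L\in\mathcal{A}$, any entry of any polynomial (hence any rational spectral function) of $L$ is a 2-FWL invariant, so the same argument would handle many other ``algebraic'' distances at once. What the paper's route buys is a fully elementary, self-contained proof that avoids semisimplicity and functional calculus, and it actually establishes a stronger intermediate fact (2-FWL determines the entire hitting-walk degree-profile distribution, not merely its expectation).

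One point worth tightening in your write-up: the interpolating polynomial $p$ with $p(L_G)=L_G^{+}$ depends on the spectrum of $L_G$, so to make ``$C^{\ast}(u,v)$ determines $\disR(u,v)$'' hold across graphs you should observe that equal pair-colors force equal diagonal-color multisets, hence equal traces $\operatorname{tr}(L^k)$ for all $k$, hence equal Laplacian spectra, so the same $p$ applies on both sides. Your parenthetical alternative via $(L+\tfrac{1}{n}J)^{-1}-\tfrac{1}{n}J$ and Cayley--Hamilton is the clean way to package this, since the characteristic polynomial is recoverable from those traces.
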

\vspace{-2pt}

We give a proof in \cref{sec:proof_2fwl_powerful}. Using \cref{thm:2fwl_powerful_than_gdwl}, we arrive at the important corollary:
\begin{corollary}
\label{thm:2fwl_biconnectivity}
The 2-FWL is fully expressive for both vertex-biconnectivity and edge-biconnectivity.
\end{corollary}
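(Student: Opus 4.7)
The plan is to derive Corollary~4.6 as a near-direct consequence of Theorem~\ref{thm:2fwl_powerful_than_gdwl} combined with Corollary~\ref{thm:gdwl} (which already establishes that GD-WL with $d_G(u,v):=(\dis_G(u,v),\disR_G(u,v))$ is fully expressive for every biconnectivity problem). Since Theorem~\ref{thm:2fwl_powerful_than_gdwl} shows 2-FWL refines both SPD-WL and RD-WL, it must refine their common refinement, which is precisely the combined GD-WL appearing in Corollary~\ref{thm:gdwl}; expressivity for biconnectivity should then be inherited. I would organize the argument into three levels and verify each in turn.

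First, at the \emph{node level}: for graphs $G,H$ and nodes $u\in\gV_G$, $v\in\gV_H$ whose 2-FWL colors coincide, Theorem~\ref{thm:2fwl_powerful_than_gdwl} yields equal RD-WL colors, and then the first item of Theorem~\ref{thm:rdwl} forces $u$ to be a cut vertex iff $v$ is. Cut edges are handled analogously via Theorem~\ref{thm:spdwl}, once one promotes Theorem~\ref{thm:2fwl_powerful_than_gdwl} to the level of \emph{pair} colors: 2-FWL intrinsically operates on ordered pairs, and the proof of Theorem~\ref{thm:2fwl_powerful_than_gdwl} naturally takes place at the pair level (both $\dis_G$ and $\disR_G$ being inherently pairwise quantities), so the edge feature $\ldblbrace\chi_G(w_1),\chi_G(w_2)\rdblbrace$ appearing in Theorem~\ref{thm:spdwl} is refined by the 2-FWL pair color of $(w_1,w_2)$.

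Second, at the \emph{graph level}: because the multiset of 2-FWL pair colors refines the multiset of SPD-WL (resp.\ RD-WL) node colors, any two graphs with distinct SPD-WL (resp.\ RD-WL) graph representations must also have distinct 2-FWL graph representations. Invoking the second items of Theorems~\ref{thm:spdwl} and \ref{thm:rdwl}, 2-FWL thus distinguishes any pair of graphs with non-isomorphic $\operatorname{BCETree}$s, and any pair with non-isomorphic $\operatorname{BCVTree}$s. Combining all four cases delivers Corollary~\ref{thm:2fwl_biconnectivity}.

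The main obstacle, albeit a mild one, is the bookkeeping needed to verify that Theorem~\ref{thm:2fwl_powerful_than_gdwl}, phrased as a statement about vertex partitions, lifts cleanly to edge features and graph-level multiset representations. This hinges on treating 2-FWL's pair colors as the canonical carrier of both node and edge features, and confirming that this carrier is at least as fine as the SPD-WL/RD-WL quantities appearing in Theorems~\ref{thm:spdwl} and \ref{thm:rdwl}. Once this refinement observation is in place, Corollary~\ref{thm:2fwl_biconnectivity} follows with no additional combinatorial work.
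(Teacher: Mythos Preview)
Your proposal is correct and takes essentially the same approach as the paper: derive the corollary by combining Theorem~\ref{thm:2fwl_powerful_than_gdwl} (2-FWL refines SPD-WL and RD-WL) with Theorems~\ref{thm:spdwl} and~\ref{thm:rdwl}. The paper packages your ``refinement implies inheritance of expressivity'' bookkeeping as a standalone (trivial) proposition, Proposition~\ref{thm:finer_partition}, working purely at the level of node colors---so your detour through 2-FWL pair colors for edge features, while valid, is unnecessary since the edge feature $\ldblbrace\chi_G(w_1),\chi_G(w_2)\rdblbrace$ is already determined by node colors alone.
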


\textbf{A worst-case analysis of GD-WL for distance-regular graphs}. Since GD-WL heavily relies on distance information, one may wonder about its expressiveness in the worst-case scenario where distance information may not help distinguish certain non-isomorphic graphs, in particular, the class of distance-regular graphs \citep{brouwer1989distance}. Due to space limit, we provide a comprehensive study of this question in \cref{sec:distance_regular}, where we give a precise and complete characterization of what types of distance-regular graphs SPD-WL/RD-WL/2-FWL can distinguish (with both theoretical results and counterexamples). The main result is present as follows:

\begin{figure}[t]
    \vspace{-25pt}
    \centering
    \small
    \setlength\tabcolsep{6pt}
    \begin{tabular}{cccc}
        \includegraphics[height=9.5em]{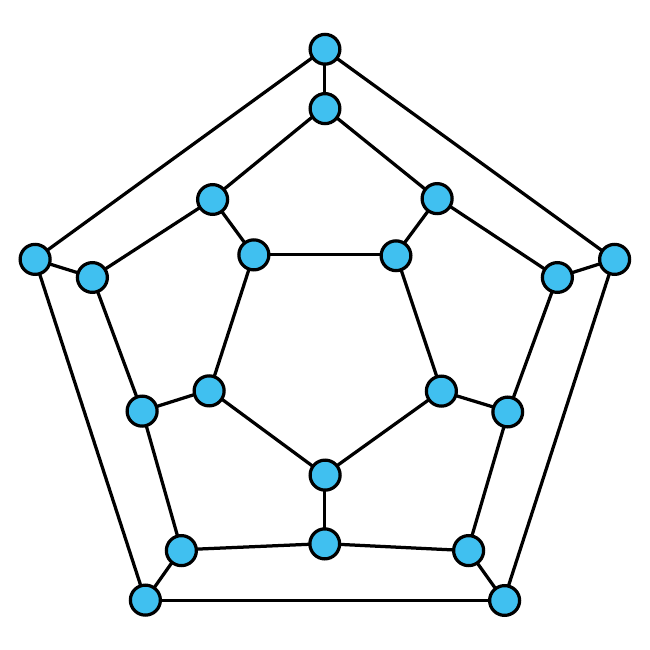} & \includegraphics[height=9.5em]{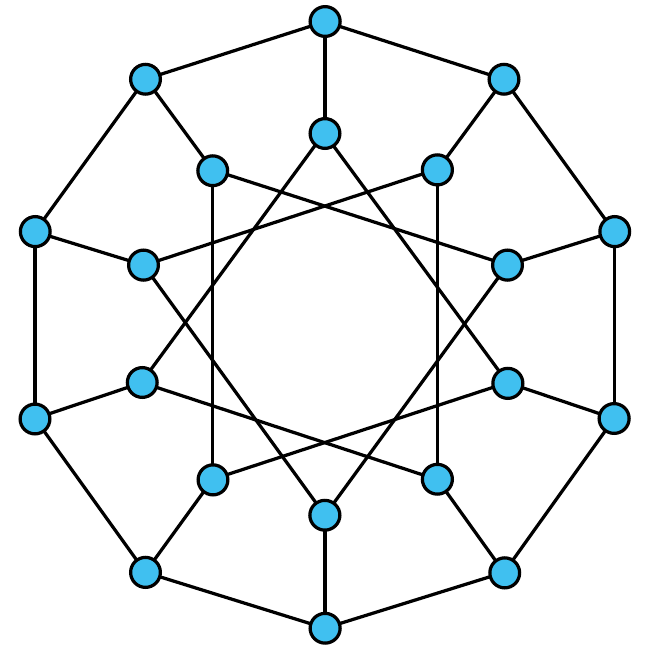} & \includegraphics[height=9.5em]{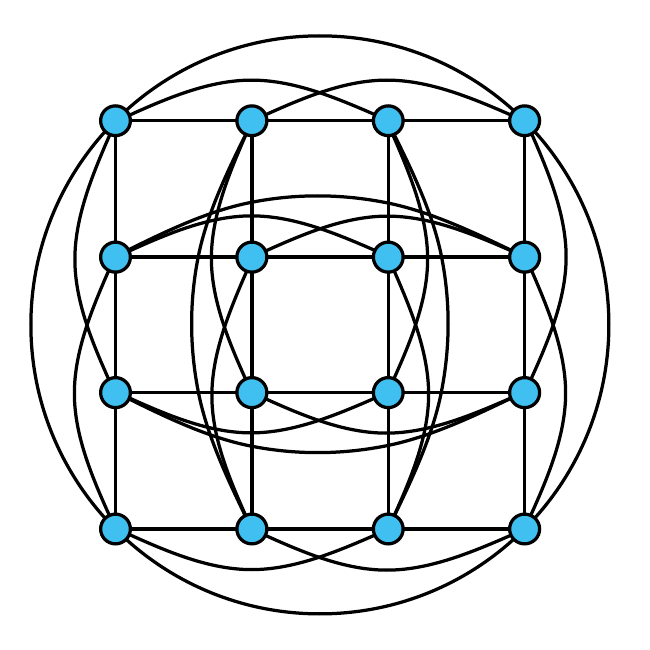} & \includegraphics[height=9.5em]{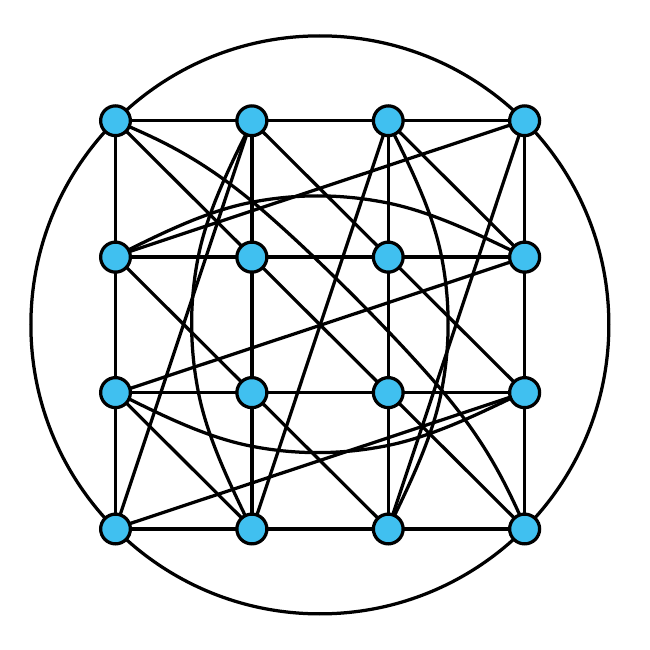}\\
        Dodecahedron & Desargues graph & 4x4 rook’s graph & Shrikhande graph\\
        \multicolumn{2}{c}{(a) SPD-WL fails while RD-WL succeeds.} & \multicolumn{2}{c}{(b) Both SPD-WL and RD-WL fail.}
    \end{tabular}
    \vspace{-7pt}
    \caption{Illustration of non-isomorphic distance-regular graphs.}
    \label{fig:distance_regular_graphs}
    \vspace{-10pt}
\end{figure}

\begin{theorem}
\label{thm:distance_regular_maintext}
    RD-WL is strictly more powerful than SPD-WL in distinguishing non-isomorphic distance-regular graphs. Moreover, RD-WL is as powerful as 2-FWL in distinguishing non-isomorphic distance-regular graphs.
\end{theorem}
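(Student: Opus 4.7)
My plan rests on the structural rigidity of distance-regular (DR) graphs. For any DR graph $G$, the multiset $\ldblbrace\dis_G(u,v):v\in\gV_G\rdblbrace$ is independent of $u$, and since the Laplacian lies in the Bose--Mesner (BM) algebra spanned by the distance matrices $A_0=I, A_1, \ldots, A_d$, the resistance distance factors as $\disR_G(u,v)=f_G(\dis_G(u,v))$ for a function $f_G$ determined by the Laplacian spectrum. As a consequence, both SPD-WL and RD-WL stabilize after a single iteration with every vertex receiving the same color: the SPD-WL graph representation reduces to the distance profile $(k_0,\ldots,k_d)$ where $k_i$ counts vertices at distance $i$ from a fixed vertex; the RD-WL graph representation reduces to the multiset $\Ldblbrace f_G(i)\text{ with multiplicity }k_i : 0\le i\le d\Rdblbrace$; and a standard argument using the coherent configuration associated with a DR graph shows 2-FWL stabilizes to a coloring encoding exactly the intersection array of $G$.

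Given this setup, I would first prove the second assertion (equivalence with 2-FWL) and then deduce the RD-WL $\ge$ SPD-WL direction of the first assertion as a corollary. The direction 2-FWL $\ge$ RD-WL is immediate from \cref{thm:2fwl_powerful_than_gdwl}. For the converse on DR graphs, the plan is to reconstruct the intersection array from the RD representation: using $\disR_G(u,v)=L^+_{uu}+L^+_{vv}-2L^+_{uv}$ together with the constancy of $L^+_{uu}$ on a DR graph, one obtains $L^+=\sum_i \eta_G(i) A_i$ in the BM basis, where $\eta_G(i)$ is an affine transform of $f_G(i)$. Thus the RD multiset determines $L^+$ as an element of the BM algebra up to vertex relabeling; pseudo-inverting inside the BM algebra recovers $L$, hence $A_1=k_1 I - L$, hence (since the BM algebra of a DR graph is generated by $A_1$) all $A_i$ and all intersection numbers $p^k_{ij}$. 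Since the intersection array also determines the distance profile, the same reconstruction yields RD-WL $\ge$ SPD-WL on DR graphs. For strictness in the first assertion I would invoke the dodecahedron--Desargues pair of \cref{fig:distance_regular_graphs}(a): both are cubic DR graphs on $20$ vertices with common distance profile $(1,3,6,6,3,1)$, so SPD-WL cannot separate them, but their Laplacian spectra differ (one involves $\sqrt 5$, the other is integral), producing distinct values $f_G(i)$ and hence distinct RD-WL multisets.

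The main obstacle I expect is justifying the reconstruction step cleanly. Two subtleties require care. First, the BM algebra must be closed under pseudoinversion; this holds because $L^+$ is a polynomial in $L$ when restricted to the image of $L$, and the BM algebra contains all polynomials in $A_1$. Second, the coefficients $\eta_G(i)$ must be uniquely recoverable from the unordered multiset $\{(f_G(i),k_i)\}_i$. The cleanest route relies on strict monotonicity of $f_G$ on $\{0,1,\ldots,d\}$, itself requiring a spectral argument via non-negativity of coefficients in the expansion of $A_i$ into Laplacian eigenprojections; if monotonicity is delicate in some corner case, I would instead exploit invertibility of the $P$-matrix of the BM algebra to reconstruct the spectrum directly from $(f_G(0),\ldots,f_G(d))$ via the linear identity relating resistance distances to reciprocals of nonzero Laplacian eigenvalues. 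A subsidiary routine step is verifying by induction on the iteration count that 2-FWL on a DR graph stabilizes at the intersection-array coloring; this follows from closure of the BM algebra under matrix multiplication and the definition of $p^k_{ij}$ as entries of the products $A_i A_j$.
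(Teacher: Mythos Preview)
Your high-level plan matches the paper's: show that on a distance-regular graph both SPD-WL and RD-WL stabilize to a single color per vertex, identify their invariants with (respectively) the distance profile and an RD multiset determined by the intersection array, show 2-FWL stabilizes to the intersection-array coloring (this is the paper's \cref{thm:distance_regular_lemma} and the induction you sketch), and use the Dodecahedron/Desargues pair for strictness. Where you differ is in the key technical step, and that difference exposes a gap.

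The paper does \emph{not} reconstruct the intersection array by pseudo-inverting $L^+$ inside the Bose--Mesner algebra. Instead it proves an explicit closed-form recursion (\cref{thm:rd_distance_regular}):
\[
r_d=r_{d-1}+\frac{2}{n\,k_{d-1}b_{d-1}}\sum_{i=d}^{D(G)}k_i,
\]
by verifying directly that $\mathbf L\widetilde{\mathbf R}=\tfrac 2 n\mathbf 1\mathbf 1^\top-2\mathbf I$ for $\widetilde{\mathbf R}=[r_{\dis_G(u,v)}]$. This formula simultaneously (i) gives strict monotonicity of $i\mapsto r_i$ since every increment is positive, and (ii) makes the reconstruction elementary: sorting the RD multiset yields $(r_0,k_0),\ldots,(r_D,k_D)$, and then $b_{d-1}$ is read off from the increment $r_d-r_{d-1}$ (using $\sum_{i\ge d}k_i=n-\sum_{i<d}k_i$), after which $c_d=k_{d-1}b_{d-1}/k_d$.

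Your route has genuine issues in precisely the two places you flag. First, your proposed monotonicity argument via ``non-negativity of coefficients in the expansion of $A_i$ into Laplacian eigenprojections'' does not work as stated: those coefficients are the eigenvalues of $A_i$ on the scheme's eigenspaces, and they are not non-negative in general (already $A_1$ has negative eigenvalues on any non-complete DR graph). Second, ``pseudo-inverting inside the BM algebra'' is circular: performing algebra in the BM basis requires the structure constants $p^k_{ij}$, which \emph{are} the intersection numbers you are trying to recover; your fallback via the $P$-matrix has the same problem. The way out is to use the identity $L\,L^+=I-\tfrac 1 n J$ together with $L=k_1A_0-A_1$ and the tridiagonal action $A_1A_i=b_{i-1}A_{i-1}+a_iA_i+c_{i+1}A_{i+1}$ to obtain a recursion in the unknowns $b_j,c_j$; solving it is exactly how one arrives at the paper's explicit formula. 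So your plan can be completed, but the missing piece is essentially the content of \cref{thm:rd_distance_regular}.
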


The above theorem strongly justifies the power of resistance distance and our proposed GD-WL. Importantly, to our knowledge, this is the first result showing that a \emph{more efficient} WL algorithm can \emph{match} the expressive power of 2-FWL in distinguishing distance-regular graphs.



\section{Experiments}
\label{sec:experiments}

In this section, we perform empirical evaluations of our proposed Graphormer-GD. We mainly consider the following two sets of experiments. \emph{Firstly}, we would like to verify whether Graphormer-GD can indeed learn biconnectivity-related metrics easily as our theory predicts. \emph{Secondly}, we would like to investigate whether GNNs with sufficient expressiveness for biconnectivity can also help real-world tasks and benefit the generalization performance as well. The code and models will be made publicly available at \url{https://github.com/lsj2408/Graphormer-GD}.

\begin{wraptable}{r}{7.8cm}
\vspace{-20pt}
\caption{Accuracy on cut vertex (articulation point) and cut edge (bridge) detection tasks.}
  \vspace{4px}
  \label{tab:syn}
  \centering
  \resizebox{0.57\textwidth}{!}{ \renewcommand{\arraystretch}{1.1}
  \small
    \begin{tabular}{lcc}
    \toprule
    \thead{Model} & \thead{Cut Vertex \\ Detection}  & \thead{Cut Edge \\ Detection}\\ \midrule
    GCN~\citep{kipf2017semisupervised} & $51.5\%$$\pm$$1.3\%$ & $62.4\%$$\pm$$1.8\%$ \\
    GAT~\citep{velivckovic2018graph} & $52.0\%$$\pm$$1.3\%$ & $62.8\%$$\pm$$1.9\%$ \\
    GIN~\citep{xu2019powerful} & $53.9\%$$\pm$$1.7\%$ &  $63.1\%$$\pm$$2.2\%$ \\
    GSN~\citep{bouritsas2022improving} & $60.1\%$$\pm$$1.9\%$ & $70.7\%$$\pm$$2.1\%$ \\
    Graphormer~\citep{ying2021transformers} & $76.4\%$$\pm$$2.8\%$ & $84.5\%$$\pm$$3.3\%$ \\
    \midrule
    Graphormer-GD (ours) & $100\%$ & $100\%$ \\
    - w/o. Resistance Distance & $83.3\%$$\pm$$2.7\%$ & $100\%$ \\
    \bottomrule
    \end{tabular}
    }
\vspace{-8pt}
\end{wraptable}

\textbf{Synthetic tasks}. To test the expressive power of GNNs for biconnectivity metrics, we separately consider two tasks: $(\mathrm{i})$~Cut Vertex Detection and $(\mathrm{ii})$~Cut Edge Detection. Given a GNN model that outputs node features, we add a learnable prediction head that takes each node feature (or two node features corresponding to each edge) as input and predicts whether it is a cut vertex (cut edge) or not. The evaluation metric for both tasks is the graph-level accuracy, i.e., given a graph, the model prediction is considered correct only when all the cut vertices/edges are correctly identified. To make the results convincing, we construct a challenging dataset that comprises various types of hard graphs, including the regular graphs with cut vertices/edges and also \cref{example:1,example:2} mentioned in Section~\ref{sec:biconnect}. We also choose several GNN baselines with different levels of expressive power: $(\mathrm{i})$~classic MPNNs~\citep{kipf2017semisupervised,velivckovic2018graph,xu2019powerful}; $(\mathrm{ii})$~Graph Substructure Network~\citep{bouritsas2022improving}; $(\mathrm{iii})$~Graphormer \citep{ying2021transformers}. The details of model configurations, dataset, and training procedure are provided in \cref{sec:synthetic_detail}.

The results are presented in Table \ref{tab:syn}. It can be seen that baseline GNNs cannot perfectly solve these synthetic tasks. In contrast, the Graphormer-GD achieves 100\% accuracy on both tasks, implying that it can easily learn biconnectivity metrics even in very difficult graphs. Moreover, while using only SPD suffices to identify cut edges, it is still necessary to further incorporate RD to identify cut vertices. This is consistent with our theoretical results in \cref{thm:spdwl,thm:rdwl,thm:Graphormer-GD-gdwl}.

\textbf{Real-world tasks}. We further study the empirical performance of our Graphormer-GD on the real-world benchmark: ZINC from Benchmarking-GNNs~\citep{dwivedi2020benchmarking}.
To show the scalability of Graphormer-GD, we train our models on both ZINC-Full (consisting of 250K molecular graphs) and ZINC-Subset (12K selected graphs).
We comprehensively compare our model with prior expressive GNNs that have been publicly released. For a fair comparison, we ensure that the parameter budget of both Graphormer-GD and other compared models are around 500K, following~\citet{dwivedi2020benchmarking}.
Details of baselines and settings are presented in \cref{sec:realworld_detail}. 

The results are shown in Table \ref{tab:zinc}, where our score is averaged over four experiments with different seeds. We also list the per-epoch training time of different models on ZINC-subset as well as their model parameters. It can be seen that Graphormer-GD surpasses or matches all competitive baselines on the test set of both ZINC-Subset and ZINC-Full. Furthermore, we find that the empirical performance of compared models align with their expressive power measured by graph biconnectivity. For example, Subgraph GNNs that are expressive for biconnectivity also consistently outperform classic MPNNs by a large margin.
Compared with Subgraph GNNs, the main advantage of Graphormer-GD is that it is simpler to implement, has stronger parallelizability, while still achieving better performance. Therefore, we believe our proposed architecture is both effective and efficient and can be well extended to more practical scenarios like drug discovery.

\textbf{Other tasks}. We also perform node-level experiments on two popular datasets: the Brazil-Airports and the Europe-Airports. Due to space limit, the results are shown in \cref{sec:node_task}.

\begin{table}[t]
\vspace{-28pt}
\caption{Mean Absolute Error (MAE) on ZINC test set. Following~\citet{dwivedi2020benchmarking}, the parameter budget of compared models is set to 500k. We use $^{*}$ to indicate the best performance.}
  \vspace{2px}
  \label{tab:zinc}
  \small
  \centering
  \resizebox{0.98\textwidth}{!}{ \renewcommand{\arraystretch}{1.0}
    \begin{tabular}{clcccc}
    \toprule
    \multirow{2}{*}{Method} & \multirow{2}{*}{Model} & \multirow{2}{*}{Time (s)} & \multirow{2}{*}{Params} & \multicolumn{2}{c}{Test MAE}\\
    & & & & ZINC-Subset & ZINC-Full \\ \midrule
    
    \multirow{7}{*}{MPNNs}
    & GIN~\citep{xu2019powerful} & 8.05 & 509,549 & 0.526$\pm$0.051 &  0.088$\pm$0.002 \\
    & GraphSAGE~\citep{hamilton2017inductive} & 6.02 & 505,341 & 0.398$\pm$0.002 & 0.126$\pm$0.003 \\
    & GAT~\citep{velivckovic2018graph} & 8.28 & 531,345 & 0.384$\pm$0.007 & 0.111$\pm$0.002 \\
    & GCN~\citep{kipf2017semisupervised} & 5.85 & 505,079 & 0.367$\pm$0.011 & 0.113$\pm$0.002 \\

    & MoNet~\citep{monti2017geometric} & 7.19 & 504,013 & 0.292$\pm$0.006 & 0.090$\pm$0.002 \\
    & \textls[-25]{GatedGCN-PE\citep{bresson2017residual}} & 10.74 & 505,011 & 0.214$\pm$0.006 & - \\
    & MPNN(sum)~\citep{gilmer2017neural} & - &  480,805 & 0.145$\pm$0.007 & - \\
    & PNA~\citep{corso2020principal} & - & 387,155 & 0.142$\pm$0.010 & - \\ 
    \midrule
    \multirow{2}{*}{\begin{tabular}[c]{@{}c@{}}Higher-order\\GNNs\end{tabular}} & RingGNN~\citep{chen2019equivalence} & 178.03 & 527,283 &  0.353$\pm$0.019 & - \\
    & 3WLGNN~\citep{maron2019provably} & 179.35 & 507,603 &  0.303$\pm$0.068 & - \\
    \midrule
    \multirow{2}{*}{\begin{tabular}[c]{@{}c@{}}Substructure-\\based GNNs\end{tabular}}
    & GSN~\citep{bouritsas2022improving} & - & $\sim$500k & 0.101$\pm$0.010 & - \\
    & CIN-Small~\citep{bodnar2021cellular} & - & $\sim$100k & 0.094$\pm$0.004 & 0.044$\pm$0.003 \\ \midrule
    
    \multirow{5}{*}{\begin{tabular}[c]{@{}c@{}}Subgraph\\GNNs\end{tabular}}
    & NGNN~\citep{zhang2021nested} & - & $\sim$500k & 0.111$\pm$0.003 & 0.029$\pm$0.001 \\
    & DSS-GNN~\citep{bevilacqua2022equivariant} & - & 445,709 & 0.097$\pm$0.006 & - \\
    & GNN-AK~\citep{zhao2022stars} & - & $\sim$500k & 0.105$\pm$0.010 & - \\
    & GNN-AK+~\citep{zhao2022stars} & - & $\sim$500k & 0.091$\pm$0.011 & - \\
    & SUN~\citep{frasca2022Understanding} & 15.04 & 526,489 & 0.083$\pm$0.003 & - \\ \midrule
    
    \multirow{4}{*}{\begin{tabular}[c]{@{}c@{}}Graph\\Transformers\end{tabular}}
    & GT~\citep{dwivedi2021generalization} & - & 588,929 & 0.226$\pm$0.014 & - \\
    & SAN~\citep{kreuzer2021rethinking} & - & 508,577 & 0.139$\pm$0.006 & - \\
    & Graphormer~\citep{ying2021transformers} & 12.26 & 489,321 & 0.122$\pm$0.006 & 0.052$\pm$0.005 \\ 
    & URPE~\citep{luo2022your} & 12.40 & 491,737 & 0.086$\pm$0.007 & 0.028$\pm$0.002 \\
    \midrule

    GD-WL & Graphormer-GD (ours) & 12.52 & 502,793 & ~~0.081$\pm$0.009$^{*}$ & ~~0.025$\pm$0.004$^{*}$ \\\bottomrule
    \end{tabular}
    }
    \vspace{-8pt}
\end{table}

\section{conclusion}
In this paper, we systematically investigate the expressive power of GNNs via the perspective of graph biconnectivity.
Through the novel lens, we gain strong theoretical insights into the power and limits of existing popular GNNs. We then introduce the principled GD-WL framework that is fully expressive for all biconnectivity metrics.
We further design the Graphormer-GD architecture that is provably powerful while enjoying practical efficiency and parallelizability. Experiments on both synthetic and real-world datasets demonstrate the effectiveness of Graphormer-GD.

There are still many promising directions that have not yet been explored. \emph{Firstly}, it remains an important open problem whether biconnectivity can be solved more efficiently in $o(n^2)$ time using \emph{equivariant} GNNs. \emph{Secondly}, a deep understanding of GD-WL is generally lacking. For example, we conjecture that RD-WL can encode graph spectral \citep{lim2022sign} and is strictly more powerful than SPD-WL in distinguishing general graphs. \emph{Thirdly}, it may be interesting to further investigate more expressive distance (structural) encoding schemes beyond RD-WL and explore how to encode them in Graph Transformers. \emph{Finally}, one can extend biconnectivity to a hierarchy of higher-order variants (e.g., tri-connectivity), which provides a completely different view parallel to the WL hierarchy to study the expressive power and guide designing provably powerful GNNs architectures.


\subsubsection*{Acknowledgments}
Bohang Zhang is grateful to Ruichen Li for his great help in discussing and checking several of the main results in this paper, including \cref{thm:scwl,thm:dsswl,thm:spdwl,thm:distance_regular_maintext}. In particular, after the initial submission, Ruichen Li discovered a simpler proof of \cref{thm:spdwl_case1_2} and helped complete the proof of \cref{thm:rd_distance_regular}. Bohang Zhang would also thank Yiheng Du, Kai Yang amd Ruichen Li for correcting some small mistakes in the proof of \cref{thm:proof_dsswl_cut_vertex_1,thm:proof_rdwl_part1_0}.

\bibliography{iclr2023_conference}
\bibliographystyle{iclr2023_conference}

\newpage
\appendix
\renewcommand \thepart{} 
\renewcommand \partname{}
\part{Appendix} 
\setcounter{secnumdepth}{4}
\setcounter{tocdepth}{4}
\parttoc 
\newpage

\section{Recent advances in expressive GNNs}
\label{sec:related_work_expressive_gnn}
\vspace{-2pt}
Since the seminal works of \citet{xu2019powerful,morris2019weisfeiler}, extensive studies have devoted to developing new GNN architectures with better expressiveness beyond the 1-WL test. These works can be broadly classified into the following categories.

\textbf{Higher-order GNNs}. One straightforward way to design provably more expressive GNNs is inspired by the higher-order WL tests (see \cref{sec:kfwl}). Instead of performing node feature aggregation, these higher-order GNNs calculate a feature vector for each $k$-tuple of nodes ($k\ge 2$) and perform aggregation between features of different tuples using tensor operations \citep{morris2019weisfeiler,maron2019invariant,maron2019universality,maron2019provably,keriven2019universal,azizian2021expressive,geerts2022expressiveness}. In particular, \citet{maron2019provably} leveraged equivariant matrix multiplication to design network layers that mimic the 2-FWL aggregation. Due to the huge computational cost of higher-order GNNs, several recent works considered improving efficiency by leveraging the sparse and local nature of graphs and designing a ``local'' version of the $k$-WL aggregation, which comes at the cost of some expressiveness \citep{morris2020weisfeiler,morris2022speqnets}. The work of \citet{vignac2020building} can also be seen as a local 2-order GNN and its expressive power is bounded by 3-IGN \citep{maron2019universality}.

\textbf{Substructure-based GNNs}. Another way to design more expressive GNNs is inspired by studying the failure cases of 1-WL test. In particular, \citet{chen2020can} pointed out that standard MPNNs cannot detect/count common substructures such as cycles, cliques, and paths. Based on this finding, \citet{bouritsas2022improving} designed the Graph Substructure Network (GSN) by incorporating substructure counting into node features using a preprocessing step. Such an approach was later extended by \citet{barcelo2021graph} based on homomorphism counting. \citet{bodnar2021topological,bodnar2021cellular,thiede2021autobahn,horn2022topological} further developed novel WL aggregation schemes that take into account these substructures (e.g., cycles or cliques). \citet{toenshoff2021graph} considered using random walk techniques to generate small substructures.

\textbf{Subgraph GNNs}. In fact, the graphs indistinguishable by 1-WL tend to possess a high degree of symmetry (e.g., see \cref{fig:counterexamples}). Based on this observation, a variety of recent approaches sought to break the symmetry by feeding \emph{subgraphs} into an MPNN. To maintain equivariance, a set of subgraphs is generated \emph{symmetrically} from the original graph using predefined policies, and the final output is aggregated across all subgraphs. There have been several subgraph generation policies in prior works, such as node deletion \citep{cotta2021reconstruction}, edge deletion \citep{bevilacqua2022equivariant}, node marking \citep{papp2022theoretical}, and ego-networks \citep{zhao2022stars,zhang2021nested,you2021identity}. These works also slightly differ in the aggregation schemes. In particular, \citet{bevilacqua2022equivariant} developed a unified framework, called ESAN, which includes per-layer aggregation across subgraphs and thus enjoys better expressiveness. Very recently, \cite{frasca2022Understanding} further extended the framework based on a more relaxed symmetry analysis and proved an upper bound of its expressiveness to be 3-WL. \cite{qian2022ordered} provided a theoretical analysis of how subgraph GNNs relate to $k$-FWL and also designed an approach to learn policies.

\textbf{Non-equivariant GNNs}. Perhaps one of the simplest way to break the intrinsic symmetry of 1-WL aggregation is to use non-equivariant GNNs. Indeed, \citet{loukas2020graph} proved that if each node in a GNN is equipped with a unique identifier, then standard MPNNs can already be Turing universal. There have been several works that exploit this idea to build powerful GNNs, such as using port numbering \citep{sato2019approximation}, relational pooling \citep{murphy2019relational}, random features \citep{sato2021random,abboud2020surprising}, or dropout techniques \citep{papp2021dropgnn}. However, since the resulting architectures cannot fully preserve equivariance, the sample complexity required for training and generalization may not be guaranteed \citep{garg2020generalization}. Therefore, in this paper we only focus on analyzing and designing equivariant GNNs.

\textbf{Other approaches}. \citet{wijesinghe2022new,de2020natural} designed novel variants of MPNNs based on more powerful neighborhood aggregation schemes that are aware of the local graph structure, rather than simply treating neighboring nodes as a set. \citet{li2020distance,velingker2022affinity} incorporated distance encoding into node/edge features to enhance the expressive power of MPNNs. \citet{balcilar2021breaking,feldman2022weisfeiler} utilized spectral information of graphs to achieve better expressiveness beyond 1-WL. \citet{talak2021neural} proposed the Neural Tree Network that performs message passing between higher-order subgraphs instead of node-level aggregation.

Finally, for a comprehensive survey on expressive GNNs, we refer readers to \citet{sato2020survey} and \citet{morris2021weisfeiler}.

\section{The Weisfeiler-Lehman Algorithms and Recently Proposed Variants}
\label{sec:algorithms}
In this section, we give a precise description on the family of Weisfeiler-Lehman algorithms and several recently proposed variants that are studied in this paper. We first present the classic 1-WL algorithm \citep{weisfeiler1968reduction} and the more advanced $k$-FWL \citep{cai1992optimal,morris2019weisfeiler}. Then we present several recently proposed WL variants, including WL with Substructure Counting (SC-WL) \citep{bouritsas2022improving}, Overlap Subgraph WL (OS-WL) \citep{wijesinghe2022new}, Equivariant Subgraph Aggregation WL (DSS-WL) \citep{bevilacqua2022equivariant} and Generalized Distance WL (GD-WL).

Throughout this section, we assume $\operatorname{hash}:\gX\to\gC$ is an \emph{injective} hash function that can map ``arbitrary objects'' to a color in $\gC$ where $\gC$ is an abstract set called the \emph{color set}. Formally, the domain $\gX$ comprises all the objects we are interested in:
\begin{itemize}[topsep=0pt,leftmargin=30pt]
\setlength{\itemsep}{0pt}
    \item $\mathbb R\subset\gX$ and $\gC\subset \gX$;
    \item For any finite multiset $\gM$ with elements in $\gX$, $\gM\in\gX$;
    \item For any tuple $\vc\in\gX^k$ of finite dimension $k\in\mathbb N_+$, $\vc\in\gX$.
\end{itemize}

\subsection{1-WL Test}
\label{sec:1wl}
Given a graph $G=(\gV,\gE)$, the 1-dimensional Weisfeiler-Lehman algorithm (1-WL), also called the \emph{color refinement} algorithm, iteratively calculates a color mapping $\chi_G$ from each vertex $v\in \gV$ to a color $\chi_G(v)\in \gC$. The pseudo code of 1-WL is presented in \cref{alg:1wl}. Intuitively, at the beginning the color of each vertex is initialized to be the same. Then in each iteration, 1-WL algorithm updates each vertex color by combining its own color with the neighborhood color multiset using a hash function. This procedure is repeated for a sufficiently large number of iterations $T$, e.g. $T=|\gV|$.

\vspace{-5pt}
\begin{algorithm}[h]
\SetKwInOut{IN}{Input}
\SetKwInOut{OUT}{Output}
\caption{The 1-dimensional Weisfeiler-Lehman Algorithm}
\label{alg:1wl}
\IN{Graph $G=(\gV,\gE)$ and the number of iterations $T$}
\OUT{Color mapping $\chi_G: \gV\to \gC$}
\textbf{Initialize:} Pick a fixed (arbitrary) element $c_0\in \gC$, and let $\chi_G^0(v):=c_0$ for all $v\in\gV$\\
\For{$t \gets 1$ {\normalfont\textbf{to}} $T$}{
    \For{{\normalfont\textbf{each}} $v\in \gV$}{
        $\chi_G^t(v):= \operatorname{hash}\left(\chi_G^{t-1}(v),\ldblbrace \chi_G^{t-1}(u):u\in \mathcal N_G(v) \rdblbrace\right)$\label{alg:1wl_update}
    }
}
\textbf{Return:} $\chi_G^T$
\end{algorithm}
\vspace{-5pt}

At each iteration, the color mapping $\chi_G^t$ induces a \emph{partition} of the vertex set $\gV$ with an equivalence relation $\sim_{\chi_G^t}$ defined to be $u \sim_{\chi_G^t} v \iff \chi_G^t(u)=\chi_G^t(v)$ for $u,v\in\gV$. We call each equivalence class a \emph{color class} with an associated color $c\in \gC$, denoted as $(\chi_G^t)^{-1}(c):=\{v\in\gV:\chi_G^t(v)=c\}$. The corresponding partition is then denoted as $\gP_G^t=\{(\chi_G^t)^{-1}(c):c\in\gC_G^t\}$ where $\gC_G^t:=\{\chi_G^t(v):v\in\gV\}$ is the color set containing all the presented colors of vertices in $G$.

An important observation is that each 1-WL iteration \emph{refines} the partition $\gP_G^t$ to a finer partition $\gP_G^{t+1}$, because for any $u,v\in \gV$, $u \sim_{\chi_G^{t+1}} v$ implies $u \sim_{\chi_G^{t}} v$. Since the number of vertices $|\gV|$ is finite, there must exist an iteration $T_{\text{stable}}<|\gV|$ such that $\gP_G^{T_{\text{stable}}}=\gP_G^{T_{\text{stable}}+1}$. It follows that $\gP_G^{t}=\gP_G^{T_{\text{stable}}}$ for all $t\ge T_{\text{stable}}$, i.e. the partition stabilizes. We thus denote $\gP_G:=\gP_G^{T_{\text{stable}}}$ as the stable partition induced by the 1-WL algorithm, and denote $\chi_G$ as any stable color mapping (i.e. by picking any $\chi_G^t$ with $t\ge T_{\text{stable}}$). We can similarly define the inverse mapping $\chi_G^{-1}$. The mapping $\chi_G$ serves as a node feature extractor so that $\chi_G(v)$ is the representation of node $v\in \gV$. Correspondingly, the multiset $\ldblbrace \chi_G(v):v\in\gV\rdblbrace$ can serve as the representation of graph $G$.

The 1-WL algorithm can be used to distinguish whether two graphs $G$ and $H$ are isomorphic, by comparing their graph representations $\ldblbrace \chi_G(v):v\in\gV\rdblbrace$ and $\ldblbrace \chi_H(v):v\in\gV\rdblbrace$. If the two multisets are not equivalent, then $G$ and $H$ are clearly non-isomorphic. Thus 1-WL is a necessary condition to test graph isomorphism. Nevertheless, the 1-WL test fails when $\ldblbrace \chi_G(v):v\in\gV\rdblbrace=\ldblbrace \chi_H(v):v\in\gV\rdblbrace$ but $G$ and $H$ are still non-isomorphic (see \cref{fig:counterexamples} for a counterexample). This motivates the more powerful higher-order WL tests, which are illustrated in the next subsection.

\subsection{\texorpdfstring{$k$}{k}-FWL Test}
\label{sec:kfwl}
In this section, we present a family of algorithms called the $k$-dimensional Folklore  Weisfeiler-Lehman algorithms ($k$-FWL). Instead of calculating a node color mapping, $k$-FWL computes a color mapping on each $k$-tuple of nodes. The pseudo code of $k$-FWL ($k\ge 2$) is presented in \cref{alg:kfwl}.

\begin{algorithm}[h]
\SetKwInOut{IN}{Input}
\SetKwInOut{OUT}{Output}
\caption{The $k$-dimensional Folklore Weisfeiler-Lehman Algorithm}
\label{alg:kfwl}
\IN{Graph $G=(\gV,\gE)$ and the number of iterations $T$}
\OUT{Color mapping $\chi_G: \gV^k\to \gC$}
\textbf{Initialize:} Pick three fixed different elements $c_0,c_1, c_\mathrm{node}\in \gC$, let $\chi_G^0(\vv):= \operatorname{hash}(\operatorname{vec}(\mathbf A^\vv))$ for each $\vv\in \gV^k$ where $\mathbf A^\vv\in \gC^{k\times k}$ is a matrix with elements
\begin{equation}
\label{eq:kfwl_init}
    A^\vv_{ij}=\left\{\begin{array}{ll}
        c_\mathrm{node} & \text{if } v_i=v_j\\
        c_0 & \text{if } v_i\neq v_j \text{ and } \{v_i,v_j\}\notin \gE\\
        c_1 & \text{if } v_i\neq v_j \text{ and } \{v_i,v_j\}\in \gE
    \end{array}\right.
\end{equation}\\
\For{$t \gets 1$ {\normalfont\textbf{to}} $T$}{
    \For{{\normalfont\textbf{each}} $\vv\in \gV^k$}{
        $
            \chi_G^t(\vv):=\operatorname{hash}\left(\chi_G^{t-1}(\vv), \ldblbrace (\chi_G^{t-1}(N_1(\vv,u)), \cdots, \chi_G^{t-1}(N_k(\vv,u))):u\in \gV \rdblbrace\right)
        $\label{alg:kfwl_update}\\
        where $N_i(\vv,u)=(v_1,\cdots,v_{i-1}, u,v_{i+1}, \cdots, v_k)$
        
    }
}
\textbf{Return:} $\chi_G^T$
\end{algorithm}

Intuitively, at the beginning, the color of each vertex tuple $\vv$ encodes the full structure (i.e. isomophism type) of the subgraph induced by the \emph{ordered} vertex set $\{v_i:i\in[k]\}$, by hashing the ``adjacency'' matrix $\mathbf A^\vv$ defined in (\ref{eq:kfwl_init}). Then in each iteration, $k$-FWL algorithm updates the color of each vertex tuple by combining its own color with the ``neighborhood'' color using a hash function. Here, the neighborhood of a tuple $\vv$ is all the tuples that differ $\vv$ by exactly one element. These $k\times |\gV|$ neighborhood colors are grouped into a multiset of size $|\gV|$ where each element is a $k$-tuple. Finally, the update procedure is repeated for a sufficiently large number of iterations $T$, e.g. $T=|\gV|^k$.

Simiar to 1-WL, the $k$-FWL color mapping $\chi_G^t$ induces a partition of the set of \emph{vertex $k$-tuples} $\gV^k$, and each $k$-FWL iteration \emph{refines} the partition of the previous iteration. Since the number of vertex $k$-tuples $|\gV|^k$ is finite, there must exist an iteration $T_{\text{stable}}<|\gV|^k$ such that the partition no longer changes after $t\ge T_{\text{stable}}$. We denote the stable color mapping as $\chi_G$ by picking any $\chi_G^t$ with $t\ge T_{\text{stable}}$.

The $k$-FWL algorithm can be used to distinguish whether two graphs $G$ and $H$ are isomorphic, by comparing their graph representations $\ldblbrace \chi_G(\vv):\vv\in\gV^k\rdblbrace$ and $\ldblbrace \chi_H(\vv):\vv\in\gV^k\rdblbrace$. It has been proved that $k$-FWL is strictly more powerful than 1-WL in distinguishing non-isomorphic graphs, and $(k+1)$-FWL is strictly more powerful than $k$-FWL for all $k\ge 2$ \citep{cai1992optimal}.

Moreover, the $k$-FWL algorithm can also be used to extract \emph{node} representations as with 1-WL. To do this, we can simply define $\chi_G(v):=\chi_G(v,\cdots,v)$ as the vertex color of the $k$-FWL algorithm (without abuse of notation), which induces a partition $\gP_G$ over vertex set $\gV$. It has been shown that this partition is \emph{finer} than the partition induces by 1-WL, and also the vertex partition induced by $(k+1)$-FWL is finer than that of $k$-FWL \citep{kiefer2020power}.

\subsection{WL with Substructure Counting (SC-WL)}
\label{sec:scwl}
Recently, \citet{bouritsas2022improving} proposed a variant of the 1-WL algorithm by incorporating the so-called \emph{substructure counting} into WL aggregation procedure. This yields a algorithm that is provably powerful than the original 1-WL test.

To describe the algorithm, we first need the notation of \emph{automorphism group}. Given a graph $H=(\gV_H,\gE_H)$, an automorphism of $H$ is a bijective mapping $f:\gV_H\to\gV_H$ such that for any two vertices $u,v\in\gV_H$, $\{u,v\}\in\gE_H\iff \{f(u),f(v)\}\in\gE_H$. It follows that all automorphisms of $H$ form a group under function composition, which is called the \emph{automorphism group} and denoted as $\operatorname{Aut}(H)$.

The automorphism group $\operatorname{Aut}(H)$ yields a partition of the vertex set $\gV$, called \emph{orbits}. Formally, given a vertex $v\in\gV_H$, define its orbit $\operatorname{Orb}_H(v)=\{u\in\gV_H:\exists f\in\operatorname{Aut}(H), f(u)=v\}$. The set of all orbits $H \backslash \operatorname{Aut}(H) := \{\operatorname{Orb}_H(v):v\in\gV_H\}$ is called the \emph{quotient} of the automorphism. Denote $d_H=|H \backslash \operatorname{Aut}(H)|$ and denote the elements in $H \backslash \operatorname{Aut}(H)$ as $\{\gO_{H,i}^\mathrm{V}\}_{i=1}^{d_H}$. We are now ready to describe the procedure of SC-WL.

\textbf{Pre-processing}. Depending on the tasks, one first specify a set of (small) connected graphs $\gH=\{H_1,\cdots,H_k\}$, which will be used for sub-structure counting in the input graph $G$. Popular choices of these small graphs are cycles of different lengths (e.g., triangle or square) and cliques. Given a graph $G=(\gV_G,\gE_G)$, for each vertex $v\in\gV_G$ and each graph $H\in\gH$, the following quantities are calculated:
\begin{equation}
\label{eq:scwl_1}
    x^\mathrm{V}_{H,i}(v):=\left\{G[\gS]:\gS\subset \gV,G[\gS]\simeq H,v\in\gS,f_{G[\gS]\to\gV_H}(v)\in\gO^\mathrm{V}_{H,i}\right\},\quad i\in[d_H]
\end{equation}
where $f_{G[\gS]\to \gV_H}$ is any isomorphism that maps the vertices of graph $G[\gS]$ to those of graph $H$. Intuitively, $x^\mathrm{V}_{H,i}(v)$ counts the number of induced subgraphs of $G$ that is isomorphic to $H$ and contains node $v$, such that the orbit of $v$ is similar to the orbit $\gO^\mathrm{V}_{H,i}$. The counts corresponding to different orbits $\gO^\mathrm{V}_{H,i}$ and different graphs $H$ are finally combined and concatenated into a vector:
\begin{equation}
\label{eq:scwl_2}
    \vx^\mathrm{V}(v)=[\vx^\mathrm{V}_{H_1}(v)^\top,\cdots,\vx^\mathrm{V}_{H_k}(v)^\top]^\top\in\mathbb N_+^{D}
\end{equation}
where the dimension of $\vx^\mathrm{V}(v)$ is $D=\sum_{i\in [k]}d_i$.

\textbf{Message Passing}. The message passing procedure is similar to \cref{alg:1wl}, except that the aggregation formula (\cref{alg:1wl_update}) is replaced by the following update rule:
\begin{equation}
\label{eq:scwl_3}
    \chi_G^t(v):= \operatorname{hash}\left(\chi_G^{t-1}(v),\vx^\mathrm{V}(v),\ldblbrace (\chi_G^{t-1}(u),\vx^\mathrm{V}(u)):u\in \mathcal N_G(v) \rdblbrace\right)
\end{equation}
which incorporates the substructure counts (\ref{eq:scwl_1}, \ref{eq:scwl_2}). Note that the update rule (\ref{eq:scwl_3}) is slightly simpler than the original paper \citep[Section 3.2]{bouritsas2022improving}, but the expressive power of the two formulations are the same.

Finally, we note that the above procedure counts substructures and calculates features $\vx^\mathrm{V}$ for \emph{each vertex} of $G$. One can similarly consider calculating substructure counts for \emph{each edge} of $G$, and the conclusion in this paper (\cref{thm:scwl}) still holds. Please refer to \citet{bouritsas2022improving} for more details on how to calculate edge features.

\subsection{Equivariant Subgraph Aggregation WL (DSS-WL)}
\label{sec:dsswl}
Recently, \citet{bevilacqua2022equivariant} developd a new type of graph neural networks, called Equivariant Subgraph Aggregation Networks, as well as a new WL variant named DSS-WL. Given a graph $G=(\gV,\gE)$, DSS-WL first generates a bag of graphs $\gB^\pi_G=\ldblbrace G_1,\cdots,G_m\rdblbrace$ which share the vertices, i.e. $G_i=(\gV,\gE_i)$, but differ in the edge sets $\gE_i$. Here $\pi$ denotes the graph generation policy which determines the edge set $\gE_i$ for each graph $G_i$. The initial coloring $\chi_{G_i}^0(v)$ for each node $v\in\gV$ in graph $G_i$ is also determined by $\pi$ and can be different across different nodes and graphs. In each iteration, the algorithm refines the color of each node by jointly aggregating its neighboring colors in the own graph and across different graphs. This procedure is repeated for a sufficiently large iterations $T$ to obtain the stable color mappings $\chi_{G_i}$ and $\chi_G$. The pseudo code of DSS-WL is presented in \cref{alg:dsswl}. 

\begin{algorithm}[h]
\SetKwInOut{IN}{Input}
\SetKwInOut{OUT}{Output}
\caption{DSS Weisfeiler-Lehman Algorithm}
\label{alg:dsswl}
\IN{Graph $G=(\gV,\gE)$, the number of iterations $T$, and graph selection policy $\pi$}
\OUT{Color mapping $\chi_G: \gV\to \gC$}
\textbf{Initialize:} Generate a bag of graphs $\gB^\pi_G=\ldblbrace G_i\rdblbrace_{i=1}^m$, $G_i=(\gV,\gE_i)$ and initial coloring $\chi_{G_i}^0$ for $i\in[m]$ according to policy $\pi$ \\
Let $\chi_G^0(v):= \operatorname{hash}\left(\ldblbrace \chi_{G_i}^{t}(v):i\in [m]\rdblbrace\right)$ for each $v\in \gV$\\
\For{$t \gets 1$ {\normalfont\textbf{to}} $T$}{
    \For{{\normalfont\textbf{each}} $v\in \gV$}{
        \For{$i \gets 1$ {\normalfont\textbf{to}} $m$}{
            $\chi_{G_i}^t(v):= \operatorname{hash}\left(\chi_{G_i}^{t-1}(v),\ldblbrace \chi_{G_i}^{t-1}(u):u\in \mathcal N_{G_i}(v) \rdblbrace,\chi_{G}^{t-1}(v),\ldblbrace \chi_{G}^{t-1}(u):u\in \mathcal N_G(v) \rdblbrace\right)$\label{alg:dsswl_update}
        }
        $\chi_G^t(v):= \operatorname{hash}\left(\ldblbrace \chi_{G_i}^{t}(v):i\in [m]\rdblbrace\right)$ \label{alg:dsswl_all_graph_aggregation}
    }
}
\textbf{Return:} $\chi_G^T$
\end{algorithm}

The key component in the DSS-WL algorithm is the graph generation policy $\pi$ which must maintain \emph{symmetry}, i.e., be equivairant under permutation of the vertex set. We list several common choices below:
\begin{itemize}[topsep=0pt,leftmargin=30pt]
\setlength{\itemsep}{0pt}
    \item \textbf{Node marking policy} $\pi=\pi_{\mathrm{NM}}$. In this policy, we have $\gB^\pi_G=\ldblbrace G_v:v\in\gV\rdblbrace$ where $G_v=G$, i.e., there are $|\gV|$ graphs in $\gB^\pi_G$ whose structures are the completely the same. The difference, however, lies in the initial coloring which marks the special node $v$ in the following way: $\chi_{G_v}^0(v)=c_1$ and $\chi_{G_v}^0(u)=c_0$ for other nodes $u\neq v$, where $c_0,c_1\in\gC$ are two different colors.
    \item \textbf{Node deletion policy} $\pi=\pi_{\mathrm{ND}}$. The bag of graphs for this policy is also defined as $\gB^\pi_G=\ldblbrace G_v:v\in\gV\rdblbrace$, but each graph $G_v=(\gV,\gE_v)$ has a different edge set $\gE_v:=\gE\backslash\{\{v,w\}:w\in\gN_G(v)\}$. Intuitively, it removes all edges that connects to node $v$ and thus makes $v$ an isolated node. The initial coloring is chosen as a constant $\chi_{G_i}^0(v)=c_0$ for all $v\in\gV$ and $G_i\in \gB^\pi_G$ for some fixed color $c_0\in\gC$.
    
    \item \textbf{Ego network policy} $\pi=\pi_{\mathrm{EGO}(k)}$. In this policy, we also have $\gB^\pi_G=\ldblbrace G_v:v\in\gV\rdblbrace$, $G_v=(\gV,\gE_v)$. The edge set $\gE_v$ is defined as $\gE_v:=\{\{u,w\}\in\gE:\dis_G(u,v)\le k,\dis_G(w,v)\le k\}$, which corresponds to a subgraph containing all the $k$-hop neighbors of $v$ and isolating other nodes. The initial coloring is chosen as $\chi_{G_i}^0(v)=c_0$ for all $v\in\gV$ and $G_i\in \gB^\pi_G$ where $c_0\in\gC$ is a constant. One can also consider the \textbf{ego network policy with marking} $\pi=\pi_{\mathrm{EGOM}(k)}$, by marking the initial color of the special node $v$ for each $G_v$.
\end{itemize}

We note that for all the above policies, $|\gB^\pi_G|=|\gV|$. There are other choices such as the edge deletion policy \citep{bevilacqua2022equivariant}, but we do not discuss them in this paper. A straightforward analysis yields that DSS-WL with any above policy is strictly powerful than the classic 1-WL algorithm. Also, node marking policy has been shown to be not less powerful than the node deletion policy \citep{papp2022theoretical}.

Finally, we highlight that \citet{bevilacqua2022equivariant,cotta2021reconstruction} also proposed a weaker version of DSS-WL, called the DS-WL algorithm. The difference is that for DS-WL, \cref{alg:dsswl_update,alg:dsswl_all_graph_aggregation} in \cref{alg:dsswl} are replaced by a simple 1-WL aggregation: 
\begin{equation}
\label{eq:dswl_update}
    \chi_{G_i}^t(v):= \operatorname{hash}\left(\chi_{G_i}^{t-1}(v),\ldblbrace \chi_{G_i}^{t-1}(u):u\in \mathcal N_G(v) \rdblbrace\right).
\end{equation}
However, the original formulation of DS-WL \citep{bevilacqua2022equivariant} only outputs a graph representation $\ldblbrace\ldblbrace\chi_{G_i}(v):v\in\gV\rdblbrace:G_i\in\gB^\pi_G\rdblbrace$ rather than outputs each node color, which does not suit the node-level tasks (e.g., finding cut vertices). Nevertheless, there are simple adaptations that makes DS-WL output a color mapping $\chi_G$. We will study these adaptations in \cref{sec:counterexample_proof} (see the paragraph above \cref{thm:dswl_adaptation}) and discuss their limitations compared with DSS-WL.

\subsection{Generalized Distance WL (GD-WL)}
In this paper, we study a new variant of the color refinement algorithm, called the Generalized Distance WL (GD-WL). The complete algorithm is described below. As a special case, when choosing $d_G=\dis_G$, the resulting algorithm is called the Shortest Path Distance WL (SPD-WL), which is strictly powerful than the classic 1-WL.

\begin{algorithm}[h]
\SetKwInOut{IN}{Input}
\SetKwInOut{OUT}{Output}
\caption{The Genealized Distance Weisfeiler-Lehman Algorithm}
\label{alg:gdwl}
\IN{Graph $G=(\gV,\gE)$, distance metric $d_G:\gV\times \gV\to \mathbb R_+$, and the number of iterations $T$}
\OUT{Color mapping $\chi_G: \gV\to \gC$}
\textbf{Initialize:} Pick a fixed (arbitrary) element $c_0\in \gC$, and let $\chi_G^0(v):=c_0$ for all $v\in\gV$\\
\For{$t \gets 1$ {\normalfont\textbf{to}} $T$}{
    \For{{\normalfont\textbf{each}} $v\in \gV$}{
        $\chi_G^t(v):= \operatorname{hash}\left(\ldblbrace (d_G(v,u), \chi_G^{t-1}(u)):u\in \gV\rdblbrace\right)$\label{alg:gdwl_update}
    }
}
\textbf{Return:} $\chi_G^T$
\end{algorithm}

\section{Proof of Theorems}
This section provides all the missing proofs in this paper. For the convenience of reading, we will restate each theorem before giving a proof.

\subsection{Properties of color refinement algorithms}
\label{sec:property_of_wl}
In this subsection, we first derive several important properties that are shared by a general class of color refinement algorithms. They will serve as key lemmas in our subsequent proofs. Here, a general color refinement algorithm takes a graph $G=(\gV_G,\gE_G)$ as input and calculates a color mapping $\chi_G:\gV_G\to\gC$. We first define a concept called the \emph{WL-condition}.

\begin{definition}
\label{def:wl_condition}
\normalfont A color mapping $\chi_G:\gV_G\to\gC$ is said to satisfy the WL-condition if for any two vertices $u,v$ with the same color (i.e. $\chi_G(u)=\chi_G(v)$) and any color $c\in \gC$, $$|\gN_G(u)\cap \chi_G^{-1}(c)|=|\gN_G(v)\cap \chi_G^{-1}(c)|,$$
where $\chi_G^{-1}$ is the inverse mapping of $\chi_G$.
\end{definition}
\begin{remark}
\label{def:joint_wl_condition}
\normalfont The WL-condition can be further generalized to handle two graphs. Let $\chi_G:\gV_G\to\gC$ and $\chi_H:\gV_H\to\gC$ be two color mappings obtained by applying the same color refinement algorithm for graphs $G$ and $H$, respectively. $\chi_G$ and $\chi_H$ are said to \emph{jointly satisfy the WL-condition}, if for any two vertices $u\in\gV_G$ and $v\in\gV_H$ with the same color ($\chi_G(u)=\chi_H(v)$) and any color $c\in \gC$, $$|\gN_G(u)\cap \chi_G^{-1}(c)|=|\gN_H(v)\cap \chi_H^{-1}(c)|.$$
It clearly implies \cref{def:wl_condition} by choosing $G=H$.
\end{remark}

It is easy to see that the classic 1-WL algorithm (\cref{alg:1wl}) satisfies the WL-condition. In fact, many of the presented algorithms in this paper satisfy such a condition as we will show below, such as DSS-WL (\cref{alg:dsswl}), SPD-WL (\cref{alg:gdwl} with $d_G=\dis_G$), and $k$-FWL (\cref{alg:kfwl}).

\begin{proposition}
\label{thm:dsswl_wl_condition}
Consider the DSS-WL algorithm (\cref{alg:gdwl}) with arbitrary graph selection policy $\pi$. Let $\chi_G$ and $\chi_H$ be the color mappings for graphs $G$ and $H$, and let $\ldblbrace \chi_{G_i}:i\in[m_G]\rdblbrace$ and $\ldblbrace \chi_{H_i}:i\in[m_H]\rdblbrace$ be the color mapping for subgraphs generated by $\pi$. Then, 
\begin{itemize}[topsep=0pt,leftmargin=30pt]
\setlength{\itemsep}{0pt}
    \item $\chi_G$ and $\chi_H$ jointly satisfy the WL-condition;
    \item $\chi_{G_i}$ and $\chi_{H_j}$ jointly satisfy the WL-condition for any $i\in[m_G]$ and $j\in[m_H]$.
\end{itemize}
\end{proposition}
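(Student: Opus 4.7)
My plan is to derive both WL-conditions from a single mechanism: hash injectivity applied to the two update rules of DSS-WL at an iteration where the joint partition has stabilized.

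First I would run DSS-WL on $G$ and $H$ in parallel using the same hash function, and choose an iteration $T$ large enough that the joint partition on $\gV_G\sqcup\gV_H$ is stable both for the graph-level mapping $(\chi_G^T,\chi_H^T)$ and for every pair of subgraph-level mappings $(\chi_{G_i}^T,\chi_{H_j}^T)$. Such $T$ exists because each DSS-WL iteration can only refine partitions and the vertex sets are finite. Stability at $T$ then says that one further DSS-WL iteration preserves every color-equality relation between nodes, both within a single graph and across the two graphs.

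For the subgraph-level claim I would take any pair $u\in\gV_G,\,v\in\gV_H$ with $\chi_{G_i}^T(u)=\chi_{H_j}^T(v)$, use stability to conclude $\chi_{G_i}^{T+1}(u)=\chi_{H_j}^{T+1}(v)$, and then apply injectivity of the hash in the subgraph update (\cref{alg:dsswl_update}) to identify the two hash arguments componentwise. Reading off the second component (the multiset of in-subgraph neighbor colors) yields
\[
\ldblbrace \chi_{G_i}^T(u'):u'\in\gN_{G_i}(u)\rdblbrace=\ldblbrace \chi_{H_j}^T(v'):v'\in\gN_{H_j}(v)\rdblbrace,
\]
which is exactly the joint WL-condition for $\chi_{G_i}$ and $\chi_{H_j}$.

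For the graph-level claim a parallel argument would start from $\chi_G^T(u)=\chi_H^T(v)$. Stability gives $\chi_G^{T+1}(u)=\chi_H^{T+1}(v)$, and injectivity of the graph-level hash (\cref{alg:dsswl_all_graph_aggregation}) forces $\ldblbrace\chi_{G_i}^{T+1}(u):i\in[m_G]\rdblbrace=\ldblbrace\chi_{H_j}^{T+1}(v):j\in[m_H]\rdblbrace$, so I can pick any $i\in[m_G]$ and a matching $j\in[m_H]$ with $\chi_{G_i}^{T+1}(u)=\chi_{H_j}^{T+1}(v)$. Then I would apply hash injectivity to the subgraph update one more time, but now read off the \emph{fourth} component (the multiset of original-graph neighbor colors), obtaining
\[
\ldblbrace \chi_G^T(u'):u'\in\gN_G(u)\rdblbrace=\ldblbrace \chi_H^T(v'):v'\in\gN_H(v)\rdblbrace,
\]
which is the desired WL-condition for $\chi_G$ and $\chi_H$.

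The main subtle point I expect is justifying the cross-graph stability, since stability of color refinement is usually stated within a single graph whereas we are comparing colors across $G$ and $H$. The way to handle this is to view DSS-WL as one deterministic parallel process on the disjoint union $\gV_G\sqcup\gV_H$ under a shared hash, so that cross-graph color equalities are well-defined and the joint partition stabilizes in finitely many iterations. Once this setup is in place, everything else reduces to two clean applications of hash injectivity to the two update rules in~(\ref{eq:dsswl_aggregation}).
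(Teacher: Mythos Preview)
Your proposal is correct and follows essentially the same approach as the paper: both arguments use hash injectivity on the subgraph update rule (\cref{alg:dsswl_update}) to extract the in-subgraph neighbor multiset for the second bullet, and for the first bullet both first unpack the graph-level hash (\cref{alg:dsswl_all_graph_aggregation}) to find a matching pair $(i,j)$ with $\chi_{G_i}(u)=\chi_{H_j}(v)$, then read off the fourth component of the subgraph update to get the original-graph neighbor multiset. Your version is more explicit about the cross-graph stability issue (via the disjoint-union viewpoint) and the iteration bookkeeping, whereas the paper's proof leaves these implicit by working directly with the stable mappings $\chi_{G_i},\chi_G$.
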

\begin{proof}
We first prove the second bullet of \cref{thm:dsswl_wl_condition}. By definition of the DSS-WL aggregation procedure (\cref{alg:dsswl_update} in \cref{alg:dsswl}), $\chi_{G_i}(u)=\chi_{H_i}(v)$ already implies $\ldblbrace \chi_{G_i}(w):w\in \mathcal N_{G_i}(u) \rdblbrace=\ldblbrace \chi_{H_j}(w):w\in \mathcal N_{H_j}(v) \rdblbrace$. Namely, $|\{w:w\in \gN_{G_i}(u)\cap \chi_{G_i}^{-1}(c)\}|=|\{w:w\in \gN_{H_j}(v)\cap \chi_{H_j}^{-1}(c)\}|$ holds for any $c\in\gC$.

We then turn to the first bullet. If $\chi_G(u)=\chi_H(v)$, then $\ldblbrace \chi_{G_i}(u):i\in [m_G]\rdblbrace=\ldblbrace \chi_{H_j}(v):j\in [m_H]\rdblbrace$ (\cref{alg:dsswl_all_graph_aggregation} in \cref{alg:dsswl}). Then there exists a pair of indices $i\in[m_G]$ and $j\in[m_H]$ such that $\chi_{G_i}(u)=\chi_{H_j}(v)$. By definition of the DSS-WL aggregation, it implies $\ldblbrace \chi_G(w):w\in \mathcal N_G(u) \rdblbrace=\ldblbrace \chi_H(w):w\in \mathcal N_H(v) \rdblbrace$ and concludes the proof.
\end{proof}

\begin{proposition}
\label{thm:spdwl_wl_condition}
Let $\chi_G$ and $\chi_H$ be two mappings returned by SPD-WL (\cref{alg:gdwl} with $d_G=\dis_G$) for graphs $G$ and $H$, respectively. Then $\chi_G$ and $\chi_H$ jointly satisfy the WL-condition.
\end{proposition}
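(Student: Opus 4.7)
The plan is to argue that the full hash-input multiset determines the color at stability, and that neighbor colors are exactly the sub-multiset obtained by filtering on distance one.

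First I would fix an iteration count $T$ that is large enough that the induced partition on the disjoint union $\gV_G\sqcup\gV_H$ stabilizes. This is legitimate because the refinement is monotone across both graphs jointly: one observes that whenever $\chi^{t+1}_G(u)=\chi^{t+1}_H(v)$, extracting the unique pair with first coordinate $0$ from the hashed multiset forces $\chi^t_G(u)=\chi^t_H(v)$, so the joint equivalence relation induced by $(\chi^t_G,\chi^t_H)$ on $\gV_G\sqcup\gV_H$ refines along $t$ and must stabilize after finitely many steps (since the number of classes is bounded by $|\gV_G|+|\gV_H|$). Writing $\chi_G,\chi_H$ for the stable mappings, I would then state the fixed-point characterization: $\chi_G(u)=\chi_H(v)$ if and only if
\begin{equation*}
\ldblbrace (\dis_G(u,w),\chi_G(w)):w\in\gV_G\rdblbrace = \ldblbrace (\dis_H(v,w),\chi_H(w)):w\in\gV_H\rdblbrace,
\end{equation*}
which follows from injectivity of $\operatorname{hash}$ together with the fact that $\chi_G^{T-1}$ and $\chi_G^T$ (and likewise for $H$) induce the same partition after stabilization, and the bijection between these partitions is consistent across $G$ and $H$.

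Next, assuming $\chi_G(u)=\chi_H(v)$, I would restrict the above multiset equality to the sub-multiset of pairs whose first coordinate equals $1$. Since $G$ and $H$ are simple undirected graphs, $\dis_G(u,w)=1$ holds exactly for $w\in\gN_G(u)$, and analogously for $H$. Thus the restriction yields
\begin{equation*}
\ldblbrace \chi_G(w):w\in\gN_G(u)\rdblbrace = \ldblbrace \chi_H(w):w\in\gN_H(v)\rdblbrace.
\end{equation*}
Counting the multiplicity of any color $c\in\gC$ on both sides gives $|\gN_G(u)\cap\chi_G^{-1}(c)|=|\gN_H(v)\cap\chi_H^{-1}(c)|$, which is exactly the WL-condition as formulated in \cref{def:joint_wl_condition}.

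The only place that needs care is the joint stabilization argument in the first step; everything afterward is a one-line restriction of a multiset equality. I would therefore devote most of the write-up to formalizing that $\chi_G^T(u)=\chi_H^T(v)$ is equivalent to the equality of the full hash-input multisets at the stable iteration, while the extraction of neighbors from the distance-one slice is routine. No auxiliary constructions (such as the disjoint-union graph or the color graphs introduced later in the appendix) are needed here.
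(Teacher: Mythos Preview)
Your proposal is correct and follows essentially the same approach as the paper: from the stable color equality, deduce equality of the full hash-input multisets, restrict to the distance-$1$ slice to recover the neighbor-color multisets, and count multiplicities. The paper's proof is a terse three-line version of this (it simply asserts the multiset equality ``by the update rule'' without spelling out the joint-stabilization argument you flag), so your write-up is strictly more careful on the one point that actually needs care.
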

\begin{proof}
If $\chi_G(u)=\chi_H(v)$ for some nodes $u,v$, then by the update rule (\cref{alg:gdwl_update} in \cref{alg:gdwl})
$$\ldblbrace (\dis_G(u,w), \chi_G(w)):w\in \gV\rdblbrace=\ldblbrace (\dis_G(v,w), \chi_G(w)):w\in \gV\rdblbrace.$$
Since $w\in\gN_G(u)$ if and only if $\dis_G(u,w)=1$, we have $$\ldblbrace \chi_G(w):w\in\gN_G(u)\rdblbrace=\ldblbrace \chi_G(w):w\in \gN_G(v)\rdblbrace.$$
Therefore, for any $c\in\gC$, $|\{w:w\in \gN_G(u)\cap \chi_G^{-1}(c)\}|=|\{w:w\in \gN_G(v)\cap \chi_G^{-1}(c)\}|$.
\end{proof}

\begin{proposition}
\label{thm:kfwl_wl_condition}
Let $\chi_G$ and $\chi_H$ be two vertex color mappings returned by the $k$-FWL algorithm ($k\ge 2$). Then $\chi_G$ and $\chi_H$ jointly satisfy the WL-condition.
\end{proposition}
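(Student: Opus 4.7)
The strategy is to leverage the $k$-FWL update rule at the stable iteration to extract a bijection between $\gV_G$ and $\gV_H$, and then argue that (i) this bijection restricts to a bijection between $\gN_G(u)$ and $\gN_H(v)$, and (ii) it preserves the diagonal vertex colors. Throughout, I will exploit two facts: the $\operatorname{hash}$ is injective, so multiset equalities propagate through iterations; and the initial color in $(\ref{eq:kfwl_init})$ encodes the equality-and-adjacency pattern of every $k$-tuple via the $c_\mathrm{node}$, $c_0$, $c_1$ markers.

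First, assume $\chi_G(u, \ldots, u) = \chi_H(v, \ldots, v)$ at the stable iteration. By the update rule,
\begin{equation*}
\ldblbrace (\chi_G(N_i(\vu, w)))_{i=1}^k : w \in \gV_G \rdblbrace = \ldblbrace (\chi_H(N_i(\vv, w')))_{i=1}^k : w' \in \gV_H \rdblbrace,
\end{equation*}
where $\vu = (u, \ldots, u)$ and $\vv = (v, \ldots, v)$. This yields a bijection $\phi : \gV_G \to \gV_H$ with $\chi_G(N_i(\vu, w)) = \chi_H(N_i(\vv, \phi(w)))$ for every $w$ and $i$. Taking $i = 1$, $\chi_G(w, u, \ldots, u) = \chi_H(\phi(w), v, \ldots, v)$. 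Since $(\ref{eq:kfwl_init})$ makes $\chi_G^0(w, u, \ldots, u)$ distinguish the three cases $w = u$, $w \in \gN_G(u)$, and $w \notin \{u\} \cup \gN_G(u)$, and the stable color refines the initial color, $\phi$ restricts to a bijection $\gN_G(u) \to \gN_H(v)$.

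The crux is a \emph{diagonal reduction} lemma: whenever $\chi_G(\vu') = \chi_H(\vv')$ for tuples $\vu' \in \gV_G^k$ and $\vv' \in \gV_H^k$, then $\chi_G(u'_i, \ldots, u'_i) = \chi_H(v'_i, \ldots, v'_i)$ for every coordinate $i$. To prove it, I first observe that any bijection $\phi'$ obtained from $\chi_G(\vu') = \chi_H(\vv')$ satisfies $\phi'(u'_j) = v'_j$ for all $j$: choosing any $i \neq j$ (which exists since $k \ge 2$), the tuple $N_i(\vu', u'_j)$ has $u'_j$ at both positions $i$ and $j$, a coincidence recorded by the $c_\mathrm{node}$ entry in $(\ref{eq:kfwl_init})$ at location $(i, j)$; the matched tuple $N_i(\vv', \phi'(u'_j))$ must therefore have the same coincidence, forcing $\phi'(u'_j) = v'_j$. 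I then collapse the tuple coordinate by coordinate: applying the update rule with $w = u'_1$ and looking at position $2$ gives $\chi_G(u'_1, u'_1, u'_3, \ldots, u'_k) = \chi_H(v'_1, v'_1, v'_3, \ldots, v'_k)$; iterating this procedure at positions $3, \ldots, k$ produces $\chi_G(u'_1, \ldots, u'_1) = \chi_H(v'_1, \ldots, v'_1)$. Symmetry in the coordinates handles the other diagonals.

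Finally, applying the diagonal reduction to $\vu' = (w, u, \ldots, u)$ and $\vv' = (\phi(w), v, \ldots, v)$ for each $w \in \gN_G(u)$ (using the equality from the previous paragraph) and reading off the first coordinate yields $\chi_G(w, \ldots, w) = \chi_H(\phi(w), \ldots, \phi(w))$. Hence $\phi|_{\gN_G(u)}$ is a color-preserving bijection from $\gN_G(u)$ to $\gN_H(v)$, which immediately gives $|\gN_G(u) \cap \chi_G^{-1}(c)| = |\gN_H(v) \cap \chi_H^{-1}(c)|$ for every color $c$. The main obstacle I anticipate is the diagonal reduction: one must carefully extract $\phi'(u'_j) = v'_j$ from the equality pattern encoded in $(\ref{eq:kfwl_init})$, and then verify that the collapse step retains color equality at each iteration without circularity.
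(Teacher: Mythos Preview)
The proposal is correct and follows essentially the same approach as the paper: both extract the multiset equality from the $k$-FWL update rule, use the initial coloring \eqref{eq:kfwl_init} to restrict to neighbors, and prove a ``diagonal reduction'' lemma (the paper's \cref{thm:kfwl_lemma}) that collapses a tuple $(w,u,\ldots,u)$ to $(w,\ldots,w)$ coordinate by coordinate using the $c_{\mathrm{node}}$ coincidence pattern. The only cosmetic differences are that you phrase things via an explicit bijection $\phi$ rather than multiset identities, and you state the diagonal reduction for arbitrary tuples rather than just the two-value concatenations $\operatorname{cat}_{k-i,i}(u,w)$ the paper actually needs.
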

\begin{proof}
Let $\chi_G(u)=\chi_H(v)$ for some $u\in\gV_G$ and $v\in\gV_H$. By the update formula (\cref{alg:kfwl_update} in \cref{alg:kfwl}), $\ldblbrace\chi_G(u,\cdots,u,w):w\in\gV_G\rdblbrace=\ldblbrace\chi_H(v,\cdots,v,w):w\in\gV_H\rdblbrace$. Note that for any nodes $w_1\in\gV_G,w_2\in\gV_H$ and any $x_1\in\gN_G(w_1)$, $x_2\notin\gN_H(w_2)$, one has $\chi_G(w_1,\cdots,w_1,x_1)\neq \chi_H(w_2,\cdots,w_2,x_2)$. This is obtained by the definition of the initialization mapping $\chi_G^0$ and the fact that $\chi_G$ refines $\chi_G^0$. Consequently, $\ldblbrace\chi_G(u,\cdots,u,w):w\in\gN_G(u)\rdblbrace=\ldblbrace\chi_G(v,\cdots,v,w):w\in\gN_H(v)\rdblbrace$. Next, we can use the fact that if $\chi_G(u,\cdots,u,w_1)=\chi_G(v,\cdots,v,w_2)$ for some $w_1,w_2\in\gV$, then $\chi_G(w_1)=\chi_G(w_2)$ (see \cref{thm:kfwl_lemma}). Therefore, $\ldblbrace\chi_G(w):w\in\gN_G(u)\rdblbrace=\ldblbrace\chi_G(w):w\in\gN_H(v)\rdblbrace$, which concludes the proof.
\end{proof}

To complete the proof of \cref{thm:kfwl_wl_condition}, it remains to prove the following lemma:
\begin{lemma}
\label{thm:kfwl_lemma}
Let $\chi_G$ and $\chi_H$ be color mappings for graphs $G$ and $H$ in the $k$-FWL algorithm ($k\ge 2$). Denote $$\operatorname{cat}_{i,j}(w,x):=(\underbrace{w,\cdots,w}_{i\text{ times}},\underbrace{x,\cdots,x}_{j\text{ times}}).$$
Then for any $i\in[k-1]$ and any nodes $u,w\in\gV_G$, $v,x\in\gV_H$, if $\chi_G(\operatorname{cat}_{k-i,i}(u,w))=\chi_H(\operatorname{cat}_{k-i,i}(v,x))$, then $\chi_G(\operatorname{cat}_{k-i-1,i+1}(u,w))=\chi_H(\operatorname{cat}_{k-i-1,i+1}(v,x))$. Consequently, $\chi_G(w)=\chi_H(x)$.
\end{lemma}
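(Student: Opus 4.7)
The plan is to exploit two structural facts about the $k$-FWL algorithm: (i) at the stable coloring, whenever $\chi_G(\vv)=\chi_H(\vv')$, the $y$-indexed neighborhood multisets over $\gV_G$ and $\gV_H$ coincide as multisets of $k$-tuples of colors, and (ii) the stable coloring refines the initial coloring $\chi^0$, whose hash encodes the full pattern of coordinate equalities via $A^{\vv}_{ij}=c_{\mathrm{node}}$ whenever $v_i=v_j$.

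First I would set $\vv=\operatorname{cat}_{k-i,i}(u,w)$ and $\vv'=\operatorname{cat}_{k-i,i}(v,x)$, and use fact (i) with $y=w\in\gV_G$: there exists $z\in\gV_H$ such that $\chi_G(N_j(\vv,w))=\chi_H(N_j(\vv',z))$ for every $j\in[k]$. Specializing to the coordinate $j=k-i$ (the last position holding $u$ in $\vv$), a direct computation gives $N_{k-i}(\vv,w)=\operatorname{cat}_{k-i-1,i+1}(u,w)$, while $N_{k-i}(\vv',z)$ is the tuple equal to $v$ on positions $1,\dots,k-i-1$, equal to $z$ on position $k-i$, and equal to $x$ on positions $k-i+1,\dots,k$.

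Next I need to argue $z=x$, which is the crux. Here I use fact (ii): since $i\ge 1$, position $k-i+1$ exists. In $\operatorname{cat}_{k-i-1,i+1}(u,w)$, positions $k-i$ and $k-i+1$ are both equal to $w$, so the corresponding entry of the initialization matrix is $c_{\mathrm{node}}$. Because the stable coloring refines $\chi^0$, the matching tuple on the $H$-side must have the same initialization matrix, forcing its entries at positions $(k-i,k-i+1)$ to be equal as vertices. But on the $H$-side those entries are $z$ and $x$, so $z=x$. This yields $\chi_G(\operatorname{cat}_{k-i-1,i+1}(u,w))=\chi_H(\operatorname{cat}_{k-i-1,i+1}(v,x))$, which is the first claim.

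For the ``consequently'' part, I iterate the established implication $k-i$ times, increasing the second index from $i$ up to $k$, obtaining $\chi_G(\operatorname{cat}_{0,k}(u,w))=\chi_H(\operatorname{cat}_{0,k}(v,x))$, i.e.\ $\chi_G(w,\ldots,w)=\chi_H(x,\ldots,x)$, which by the paper's convention $\chi_G(w):=\chi_G(w,\ldots,w)$ is precisely $\chi_G(w)=\chi_H(x)$. The main obstacle I anticipate is being careful with the indexing when extracting $z=x$: one must verify the position $k-i+1$ exists (which is why the hypothesis restricts $i\in[k-1]$) and that the two coordinates compared in the adjacency matrix are genuinely $w,w$ on one side and $z,x$ on the other; everything else is a clean inductive unrolling.
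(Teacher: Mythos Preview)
Your proposal is correct and follows essentially the same approach as the paper's proof. The paper projects the matching neighborhood multisets onto the single coordinate $k-i$ and then restricts to the submultiset where positions $k-i$ and $k-i+1$ coincide (using that the stable coloring refines $\chi^0$), whereas you first pick a matching witness $z$ from the full $k$-tuple multiset and then pin down $z=x$ via the same refinement argument; the key idea---that the $c_{\mathrm{node}}$ entry in the initialization matrix forces coordinate equality to be preserved under equal stable colors---is identical in both.
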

\begin{proof}
By the update formula (\cref{alg:kfwl_update} in \cref{alg:kfwl}), $\chi_G(\operatorname{cat}_{k-i,i}(u,w))=\chi_H(\operatorname{cat}_{k-i,i}(v,x))$ implies that $\ldblbrace\chi_G(\operatorname{cat}_{k-i-1,1,i}(u,y,w)):y\in\gV_G\rdblbrace=\ldblbrace\chi_H(\operatorname{cat}_{k-i-1,1,i}(v,y,x)):y\in\gV_H\rdblbrace$. Note that for any $j\in[k-1]$ and any $\vz\in\gV_G^k$, $\vz'\in\gV_H^k$ with $z_j=z_{j+1}$ but $z_j'\neq z_{j+1}'$, one has $\chi_G(\vz)\neq \chi_H(\vz')$. This is obtained by the definition of the initialization mapping $\chi_G^0$ and the fact that $\chi_G$ refines $\chi_G^0$. Therefore, we have $\chi_G(\operatorname{cat}_{k-i-1,i+1}(u,w))=\chi_H(\operatorname{cat}_{k-i-1,i+1}(v,x))$, as desired. 
\end{proof}

Equipped with the concept of WL-condition, we now present several key results. In the following, let $\chi_G:\gV_G\to\gC$ and $\chi_H:\gV_H\to\gC$ be two color mappings jointly satisfying the WL-condition.

\begin{lemma}
\label{thm:wl_condition}
Let $(v_0,\cdots, v_d)$ be any path (not necessarily simple) of length $d$ in graph $G$. Then for any node $u_0\in\chi_H^{-1}(\chi_G(v_0))$ in graph $H$, there exists a path $(u_0,\cdots, u_d)$ of the same length $d$ starting at $u_0$, such that $\chi_H(u_i)=\chi_G(v_i)$ holds for all $i\in[d]$.
\end{lemma}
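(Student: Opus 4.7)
The plan is to proceed by induction on the path length $d$. The base case $d=0$ is immediate since we are given $\chi_H(u_0)=\chi_G(v_0)$, and a length-$0$ path is just a single vertex.

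For the inductive step, suppose we have already constructed $(u_0,\dots,u_i)$ in $H$ with $\{u_{j-1},u_j\}\in\gE_H$ and $\chi_H(u_j)=\chi_G(v_j)$ for all $j\le i$, where $i<d$. I want to find $u_{i+1}\in\gN_H(u_i)$ with $\chi_H(u_{i+1})=\chi_G(v_{i+1})$. The observation is that since $(v_0,\dots,v_d)$ is a path in $G$, we have $v_{i+1}\in\gN_G(v_i)$, so the color $c:=\chi_G(v_{i+1})$ satisfies
\[
    |\gN_G(v_i)\cap \chi_G^{-1}(c)|\ge 1.
\]
Now apply the joint WL-condition (\cref{def:joint_wl_condition}) to the vertices $v_i\in\gV_G$ and $u_i\in\gV_H$, which have the same color $\chi_G(v_i)=\chi_H(u_i)$ by the inductive hypothesis. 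This yields
\[
    |\gN_H(u_i)\cap \chi_H^{-1}(c)|=|\gN_G(v_i)\cap \chi_G^{-1}(c)|\ge 1,
\]
so we can pick any vertex $u_{i+1}\in\gN_H(u_i)\cap\chi_H^{-1}(c)$. By construction, $\{u_i,u_{i+1}\}\in\gE_H$ and $\chi_H(u_{i+1})=c=\chi_G(v_{i+1})$, completing the inductive step.

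There is no real obstacle here; the statement is essentially the defining property of the WL-condition iterated along the path. One minor point worth emphasizing in the write-up is that the lemma does not require the path to be simple, which is important because the inductive construction could in principle revisit a vertex (nothing in the argument prevents $u_{i+1}=u_{i-1}$ or similar), and this causes no issue. Also, the argument works symmetrically between $G$ and $H$, so the same lemma also yields a reverse direction by swapping their roles, which will likely be needed in later uses.
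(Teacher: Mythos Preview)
Your proof is correct and follows essentially the same approach as the paper: induction on the path length, using the joint WL-condition at each step to find a neighbor of the correct color. The only cosmetic differences are that the paper takes $d=1$ as the base case and phrases the induction as splitting $(v_0,\dots,v_{d-1})$ and $(v_{d-1},v_d)$, whereas you start from $d=0$ and extend one edge at a time; the content is the same.
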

\begin{proof}
The proof is based on induction over the path length $d$. For the base case of $d=1$, if the conclusion does not hold, then there exists two vertices $u\in\gV_G$, $v\in\gV_H$ with the same color (i.e. $\chi_G(u)=\chi_H(v)$) and a color $c=\chi_G(v_1)$ such that $\gN_G(u)\cap \chi_G^{-1}(c)\neq\emptyset$ but $\gN_H(v)\cap \chi_H^{-1}(c)=\emptyset$. This obviously contradicts the WL-condition. For the induction step on the path length $d$, one can just split it by two parts $(v_0,\cdots,v_{d-1})$ and $(v_{d-1},v_d)$. Separately using induction yields two paths $(u_0,\cdots,u_{d-1})$ and $(u_{d-1},u_d)$ such that $\chi_H(u_i)=\chi_G(v_i)$ for all $i\in[d]$. By linking the two paths we have completed the proof.
\end{proof}
Finally, let us define the shortest path distance between node $u$ and vertex set $\gS$ as $\dis_G(u,\gS):=\min_{v\in\gS}\dis_G(u,v)$. The above lemma directly yields the following corollary:
\begin{corollary}
\label{thm:wl_condition_corollary}
For any color $c\in\{\chi_G(w):w\in\gV_G\}$ and any two vertices $u\in\gV_G$, $v\in\gV_H$ with the same color (i.e. $\chi_G(u)=\chi_H(v)$), $\dis_G(u,\chi_G^{-1}(c))=\dis_H(v,\chi_H^{-1}(c))$.
\end{corollary}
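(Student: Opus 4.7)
The plan is to derive the corollary directly from \cref{thm:wl_condition} via a two-sided inequality argument, exploiting the symmetric role of $G$ and $H$ in the joint WL-condition.

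First I would set $d := \dis_G(u, \chi_G^{-1}(c))$ and pick a witness $w \in \chi_G^{-1}(c)$ achieving $\dis_G(u,w) = d$, together with a shortest path $(u = v_0, v_1, \ldots, v_d = w)$ in $G$ of length $d$. Since $v \in \chi_H^{-1}(\chi_G(u))$ by hypothesis, \cref{thm:wl_condition} applied to this path produces a path $(v = u_0, u_1, \ldots, u_d)$ in $H$ with $\chi_H(u_i) = \chi_G(v_i)$ for all $i$. In particular $\chi_H(u_d) = \chi_G(w) = c$, so $u_d \in \chi_H^{-1}(c)$ and the existence of a walk of length $d$ from $v$ to $u_d$ yields $\dis_H(v, \chi_H^{-1}(c)) \le d = \dis_G(u, \chi_G^{-1}(c))$.

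Next I would simply swap the roles of $(G,u)$ and $(H,v)$. The WL-condition is symmetric in the two graphs by \cref{def:joint_wl_condition}, so \cref{thm:wl_condition} holds with $G$ and $H$ interchanged. The same argument then gives $\dis_G(u, \chi_G^{-1}(c)) \le \dis_H(v, \chi_H^{-1}(c))$, and combining the two inequalities finishes the proof.

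I do not anticipate real obstacles: the only subtlety is verifying that a walk (rather than a simple path) still certifies an upper bound on the shortest-path distance, but this is immediate since a walk of length $d$ always implies $\dis \le d$. The step where one has to be a little careful is noting that $c$ lies in $\{\chi_H(w): w \in \gV_H\}$ in the first place, which is exactly what Lemma~\ref{thm:wl_condition} delivers as a byproduct (applied to the trivial path $(u_0)$ of length zero, or seen directly from the endpoint $u_d$ constructed above), ensuring $\chi_H^{-1}(c)$ is nonempty so that $\dis_H(v,\chi_H^{-1}(c))$ is well-defined.
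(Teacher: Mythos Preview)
Your proposal is correct and is exactly the natural way to unpack the paper's terse ``directly yields'': apply \cref{thm:wl_condition} to a shortest path realizing $\dis_G(u,\chi_G^{-1}(c))$ to get one inequality, then swap $G$ and $H$ for the other. The only edge case you might want to make explicit is when one of the distances is $\infty$ (disconnected graphs), but your two-sided inequality handles it automatically since the inequality with $\infty$ on the right is vacuous.
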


\subsection{Counterexamples}
\label{sec:counterexample_proof}
We provides the following two families of counterexamples, which most prior works cannot distinguish.
\begin{example}
\label{example:1}
\normalfont Let $G_1=(\gV,\gE_1)$ and $G_2=(\gV,\gE_2)$ be a pair of graphs with $n=2km+1$ nodes where $m,k$ are two positive integers satisfying $mk\ge 3$. Denote $\gV=[n]$ and define the edge sets as follows:
\begin{align*}
    \gE_1=&\left\{\{i,(i\operatorname{mod} 2km)+1\}:i\in[2km]\right\}\cup \left\{\{n,i\}:i\in[2km],i\operatorname{mod} m=0\right\},\\
    \gE_2=&\left\{\{i,(i\operatorname{mod} km)+1\}:i\in[km]\right\}\cup \left\{\{i+km,(i\operatorname{mod} km)+km+1\}:i\in[km]\right\}\cup\\
    &\left\{\{n,i\}:i\in[2km],i\operatorname{mod} m=0\right\}.
\end{align*}
See \cref{fig:counterexamples}(a-c) for an illustration of three cases: $(\mathrm{i})$ $m=2,k=2$; $(\mathrm{ii})$ $m=4,k=1$; $(\mathrm{iii})$ $m=1,k=4$. It is easy to see that regardless of the chosen of $m$ and $k$, $G_1$ always has no cut vertex but $G_2$ do always have a cut vertex with node number $n$. The case of $k=1$ is more special, for which $G_2$ actually has three cut vertices with node number $m$, $2m$, and $n$, respectively, and it even has two cut edges $\{m,n\}$ and $\{2m,n\}$ (\cref{fig:counterexamples}(b)).
\end{example}

\begin{example}
\label{example:2}
\normalfont Let $G_1=(\gV,\gE_1)$ and $G_2=(\gV,\gE_2)$ be a pair of graphs with $n=2m$ nodes where $m\ge 3$ is an arbitrary integer. Denote $\gV=[n]$ and define the edge sets as follows:
\begin{align*}
    \gE_1=&\left\{\{i,(i\operatorname{mod} n)+1\}:i\in[n]\right\}\cup \left\{\{m,2m\}\right\},\\
    \gE_2=&\left\{\{i,(i\operatorname{mod} m)+1\}:i\in[m]\right\}\cup \left\{\{i+m,(i\operatorname{mod} m)+m+1\}:i\in[m]\right\}\cup\left\{\{m,2m\}\right\}.
\end{align*}
See \cref{fig:counterexamples}(d) for an illustration of the case $n=8$. It is easy to see that $G_1$ does not have any cut vertex or cut edge, but $G_2$ do have two cut vertices with node number $m$ and $2m$, and has a cut edge $\{m,2m\}$.
\end{example}

\begin{theorem}
\label{thm:scwl_proof}
Let $\gH=\{H_1,\cdots,H_k\}$, $H_i=(\gV_i,\gE_i)$ be any set of connected graphs and denote $n_\mathrm{V}=\max_{i\in[k]}|\gV_i|$. Then SC-WL (\cref{sec:scwl}) using the substructure set $\gH$ can neither distinguish whether a given graph has cut vertices nor distinguish whether it has cut edges. Moreover, there exist counterexample graphs whose size (both in terms of vertices and edges) is $O(n_\mathrm{V})$.
\end{theorem}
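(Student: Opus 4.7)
The plan is to reduce SC-WL to the classical 1-WL test run with enhanced initial features, and then to pick a base counterexample whose parameters can be scaled so that those enhanced features cannot detect the difference between the two graphs. First observe that the update rule (\ref{eq:scwl_3}) is exactly 1-WL applied with the pair $(c_0,\vx^\mathrm{V}(v))$ as the initial color of $v$, where $\vx^\mathrm{V}(v)$ is the orbit-wise substructure-count vector of (\ref{eq:scwl_2}). Hence it would suffice to exhibit a pair of graphs $G_1,G_2$ of size $O(n_\mathrm{V})$ satisfying: (i) under the identity labelling of their common vertex set, $\vx^\mathrm{V}(v)$ is identical in $G_1$ and $G_2$ at every vertex $v$; and (ii) even with this initialization, 1-WL still fails to distinguish $G_1$ from $G_2$.

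For the base graphs I would take \cref{example:2} (equivalently \cref{example:1} with $k=1$) and set $m \geq n_\mathrm{V}+2$. This immediately gives $|\gV|=2m=O(n_\mathrm{V})$ and $|\gE|=O(n_\mathrm{V})$, and forces each cycle of $G_2$ to be strictly longer than every $H \in \gH$, so no substructure in $\gH$ can encircle a full cycle. Property (ii) is then essentially free: the standard argument that 1-WL fails on \cref{example:2} works vertex-by-vertex, so tagging each $v$ with the same initial color in both graphs cannot reintroduce a distinguishing signal. All the work therefore goes into (i).

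The key ingredient, and what I view as the main obstacle, is a local matching statement: for every $v$, every $H \in \gH$, and every orbit $\gO^\mathrm{V}_{H,i}$, the induced copies of $H$ through $v$ with $v$ sitting in orbit $\gO^\mathrm{V}_{H,i}$ are in bijection between $G_1$ and $G_2$. The high-level reason is that $G_1$ and $G_2$ agree except for a ``2-switch'' of two edges among the four distinguished vertices $\{1,m,m+1,2m\}$. Any induced copy of $H$ lies inside a ball of radius at most $n_\mathrm{V}-1 < m$, so when this ball avoids the distinguished vertices the local subgraphs of $G_1$ and $G_2$ are literally identical and the count is trivially matched. When the ball meets these vertices, my plan is to do a short case analysis on how many of the switched edges the copy actually uses (at most two, since the two switched pairs sit on opposite sides of the chord), and then pair up copies using the reflection automorphisms of $G_1$ and $G_2$ that fix $\{m,2m\}$. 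The subtle part is ensuring that the pairing also respects the orbit $\gO^\mathrm{V}_{H,i}$, because the orbit of $v$ inside an induced $H$ depends on the whole embedding rather than just on the local role of $v$; this is where I expect one must combine the rigid structure of $\operatorname{Aut}(H)$ with the symmetry of the base graphs.

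Once the lemma is in hand, the two biconnectivity tasks singled out in the statement follow immediately, since $G_1$ has neither a cut vertex nor a cut edge while $G_2$ has both, yet SC-WL then assigns them identical multisets of node (and edge) colors. The other biconnectivity problems of \cref{sec:preliminary}, namely identifying all cut vertices or all cut edges and distinguishing block cut trees, are witnessed by the same pair and the same size bound $O(n_\mathrm{V})$. Finally, the argument extends verbatim to the edge-feature variant of SC-WL mentioned in \cref{sec:scwl}, since the 2-switch preserves the orbit-wise edge-count vectors for exactly the same reason.
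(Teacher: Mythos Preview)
Your plan follows the same route as the paper: take the counterexample pairs of \cref{example:1,example:2} with $m>n_\mathrm{V}$, show the substructure-count vectors $\vx^\mathrm{V}$ agree node-by-node via a bijection on induced copies of each $H\in\gH$, and conclude that SC-WL reduces to ordinary 1-WL on an instance it already fails. Two refinements from the paper are worth knowing.

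First, a minor slip: \cref{example:2} is not \cref{example:1} with $k=1$; the latter carries an extra hub vertex $n=2m+1$ while the former has $2m$ nodes. Either family suffices (the paper in fact carries out the full argument on \cref{example:1} and only remarks that \cref{example:2} is analogous), but they are distinct constructions.

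Second, the paper sharpens the step you call the main obstacle in two ways. It first observes that every cycle of $G_1,G_2$ has length at least $m>n_\mathrm{V}$, so any $H\in\gH$ containing a cycle has zero induced copies in both graphs; hence one may assume each $H$ is a \emph{tree}, which greatly simplifies the local picture of a copy. It then bypasses your case split on switched edges by writing down, for each copy $G_1[\gS]\simeq H$, an explicit per-copy vertex map $g_\gS:\gS\to\gV$ built from a modular ``$\operatorname{cir}$'' rewrapping of indices (essentially folding each arm of the tree into the appropriate half of $G_2$). This $g_\gS$ is an isomorphism $G_1[\gS]\to G_2[g_\gS(\gS)]$ sending $v$ to a vertex in the same residue class mod $m$, so the orbit issue you flag is handled for free, and bijectivity of $\gS\mapsto g_\gS(\gS)$ is a routine check. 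Note also that the phrase ``reflection automorphisms of $G_1$ and $G_2$'' is slightly off target: automorphisms map each graph to itself and cannot directly pair copies in $G_1$ with copies in $G_2$; what you really need---and what the paper supplies---is a copy-dependent partial isomorphism between the two graphs.
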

\begin{proof}
We would like to prove that SC-WL cannot distinguish both \cref{example:1,example:2} when $n_\mathrm{V}< m$ ($m$ is defined in these examples). First note that for both examples, any cycle in both $G_1$ and $G_2$ has a length of at least $m$. Since the number of nodes in $H_i$ is $O(n_\mathrm{V})$, if $H_i$ contains cycles, it will not occur in both $G_1$ and $G_2$, thus taking no effect in distinguishing the two graphs. As a result, we can simply assume all graphs in $\gH$ are trees (connected graphs with no cycles). Below, we provide a complete proof for \cref{example:1}, which already yields the conclusion that SC-WL can neither distinguish cut vertices nor cut edges. We omit the proof for \cref{example:2} since the proof technique is similar.

\emph{Proof for \cref{example:1}}. Let $H_i$ be a tree with less than $m$ vertices where $m$ is defined in \cref{example:1}. By symmetry of the two graphs $G_1$ and $G_2$, it suffices to prove the following two types of equations: $\vx_{G_1}^\mathrm{V}(n)=\vx_{G_2}^\mathrm{V}(n)$ and $\vx_{G_1}^\mathrm{V}(i)=\vx_{G_2}^\mathrm{V}(i)$ for all $m<i\le 2m$, where $\vx^\mathrm{V}$ is defined in (\ref{eq:scwl_2}). We first aim to prove that $\vx_{G_1}^\mathrm{V}(v)=\vx_{G_2}^\mathrm{V}(v)$ for $v\in\{m+1,\cdots,2m\}$. Consider an induced subgraph $G_1[\gS]$ which is isomorphic to $H_i$ and contains node $v$. Define the set $\gT:=\{jm:j\in[k]\}\cap \gS$. For ease of presentation, we define an operation $\operatorname{cir}(x,a,b)$ that outputs an integer $y$ in the range of $(a,b]$ such that $y$ has the same remainder as $x$ (mod $b-a$). Formally, $\operatorname{cir}(x,a,b)=y$ if $a<y\le b$ and $x\equiv y\pmod{ b-a}$.
\begin{itemize}[topsep=0pt,leftmargin=30pt]
\setlength{\itemsep}{0pt}
    \item If $n\notin\gS$, then it is easy to see that $G_1[\gS]$ is a chain, i.e., no vertices have a degree larger than 2. We define the following mapping $g_\gS:\gS\to [n]$, such that
    \begin{equation*}
        g_\gS(u)=\left\{\begin{array}{ll}
            \operatorname{cir}(u,m,2m) & \text{if }k=1, \\
            \operatorname{cir}(u,0,km) & \text{if }k\ge 2.
        \end{array}\right.
    \end{equation*}
    In this way, the chain $G_1[\gS]$ is mapped to a chain of $G_2$ that contains $v$. Concretely, denote $g_\gS(\gS)=\{g_\gS(u):u\in\gS\}$, then $G_2[g_\gS(\gS)]\simeq G_1[\gS]\simeq H_i$, and obviously the orbit of $v$ in $G_2[g_\gS(\gS)]$ matches the orbit of $v$ in $G_1[\gS]$. See \cref{fig:proof_scwl}(a,b) for an illustration of this case.
    \item If $n\in\gS$, then it is easy to see that the set $\gT\neq \emptyset$. We will similarly construct a mapping $g_\gS:\gS\to [n]$ that maps $\gS$ to $g_\gS(\gS)$ satisfying $g_\gS(v)=v$, which is defined as follows. For each $u\in\gS\backslash\{n\}$, we find a unique vertex $w_u$ in $\gT$ such that $\dis_{G_1[\gS]}(u,w_u)$ is the minimum. Note that the node $w_u$ is well-defined since $\gT\neq\emptyset$ and any path in $G_1[\gS]$ from $u$ to a node in $\gT$ goes through $w_u$. Define
    \begin{equation*}
        g_\gS(u)=\left\{\begin{array}{ll}
            \operatorname{cir}(u,m,2m) & \text{if }k=1\text{ and }w_u=w_v, \\
            \operatorname{cir}(u,0,m) & \text{if }k=1\text{ and }w_u\neq w_v, \\
            \operatorname{cir}(u,0,km) & \text{if }k>1\text{ and }w_u\le km,\\
            \operatorname{cir}(u,km,2km) & \text{if }k>1\text{ and }w_u> km.
        \end{array}\right.
    \end{equation*}
    We also define $g_\gS(n)=n$. Such a definition guarantees that for any $x_1,x_2\in\gS$, $\{x_1,x_2\}\in\gE_{G_1}\iff \{g_\gS(x_1),g_\gS(x_2)\}\in\gE_{G_2}$. Therefore, $G_2[g_\gS(\gS)]\simeq G_1[\gS]\simeq H_i$. Moreover, observe that $g_\gS(u)\equiv u\pmod{m}$ always holds, and thus it is easy to see that the orbit of $v$ in $G_2[g_\gS(\gS)]$ matches the orbit of $v$ in $G_1[\gS]$. See \cref{fig:proof_scwl}(c,d) for an illustration of this case.
\end{itemize}
Finally, note that for any two different sets $\gS_1$ and $\gS_2$ such that $G_1[\gS_1]\simeq G_1[\gS_2]\simeq H_i$, we have $g_{\gS_1}(\gS_1)\neq g_{\gS_2}(\gS_2)$, which guarantees that the mapping $g:\{\gS\subset[n]:G_1[\gS]\simeq H_i,v\in\gS\}\to\{\gS\subset[n]:G_2[\gS]\simeq H_i,v\in\gS\}$ defined to be $g(\gS)=g_\gS(\gS)$ is injective. One can further check that the mapping $g$ is also surjective, and thus it is bijective. This means $\vx_{G_1}^\mathrm{V}(v)=\vx_{G_2}^\mathrm{V}(v)$ for $v\in\{m,\cdots,2m-1\}$. The proof for $\vx_{G_1}^\mathrm{V}(n)=\vx_{G_2}^\mathrm{V}(n)$ is almost the same, so we omit it here. Noting that under classic 1-WL, the colors $\chi_{G_1}(v)=\chi_{G_2}(v)$ are also the same. Therefore, adding the features $\vx^\mathrm{V}(v)$ does not help distinguish the two graphs. We have finished the proof for \cref{example:1}.
\end{proof}

\begin{figure}[t]
    \centering
    \small
    \begin{tabular}{cccc}
        \hspace{-6pt}\includegraphics[height=18em]{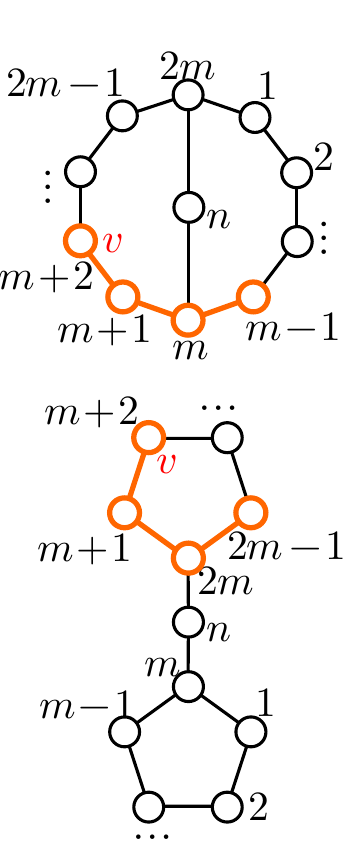}\hspace{-6pt} & \hspace{-6pt}\includegraphics[height=18em]{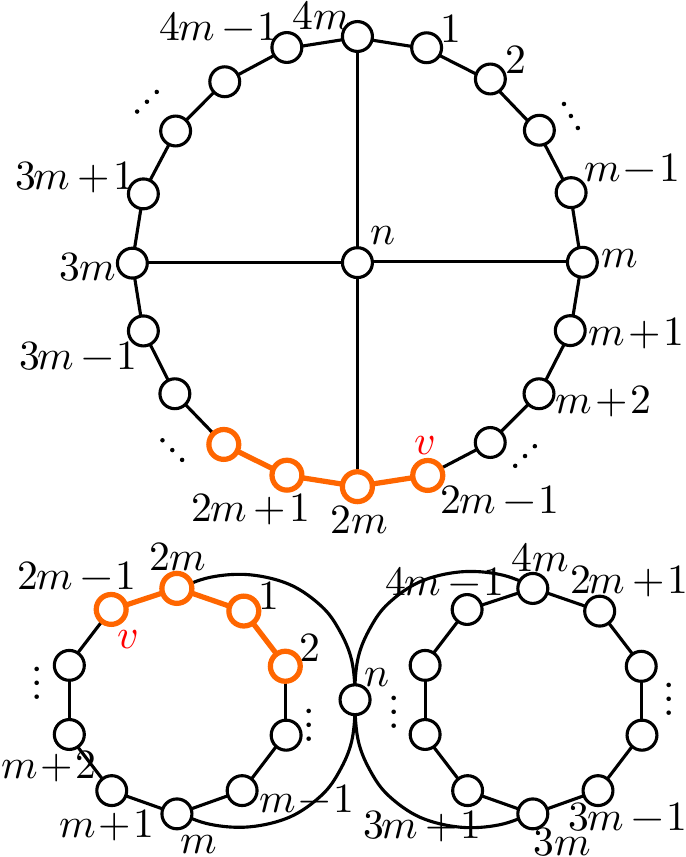}\hspace{-3pt} & \hspace{-3pt}\includegraphics[height=18em]{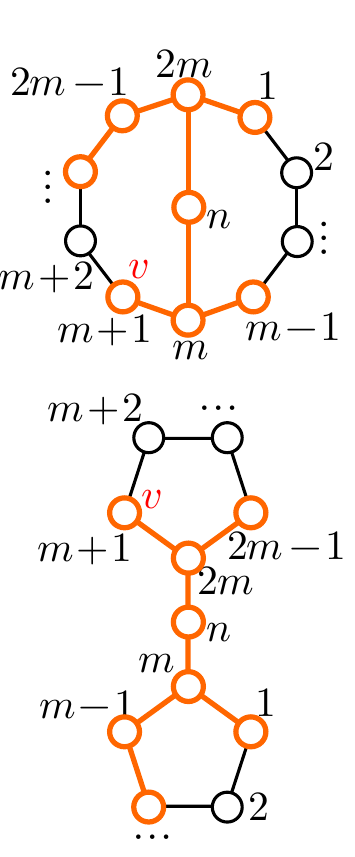}\hspace{-6pt} & \hspace{-6pt}\includegraphics[height=18em]{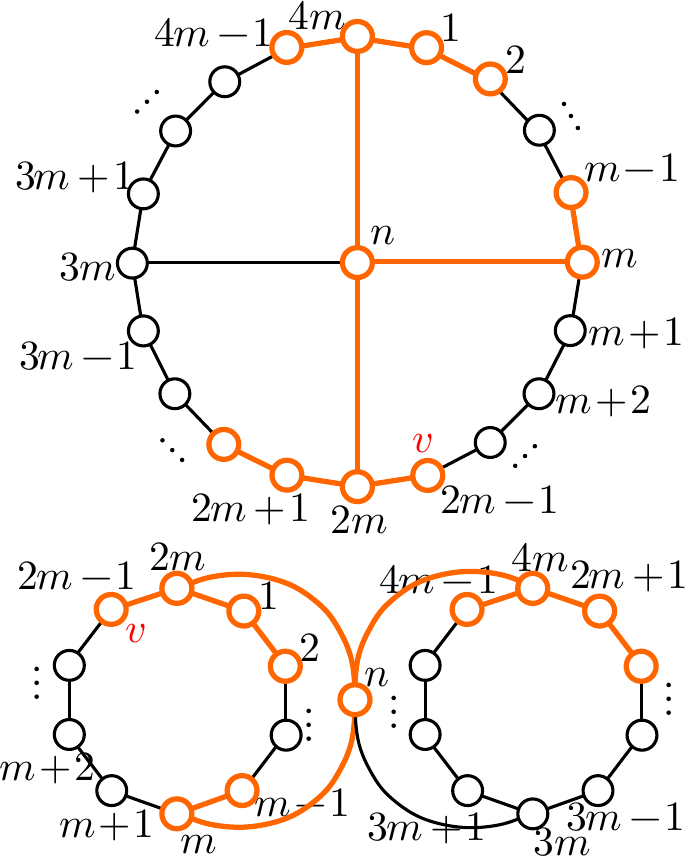}\hspace{-6pt}\\
        \hspace{-6pt}(a) $n\notin\gS,k=1$\hspace{-6pt} & (b) $n\notin\gS,k>1$ & \hspace{-6pt}(c) $n\in\gS,k=1$\hspace{-6pt} & (d) $n\in\gS,k>1$
    \end{tabular}
    \vspace{-7pt}
    \caption{Illustration of the proof of \cref{thm:scwl}. The trees $G_1[\gS], G_2[g(\gS)]$ are outlined by orange.}
    \label{fig:proof_scwl}
    \vspace{-12pt}
\end{figure}

Using a similar cycle analysis as the above proof, we have the following negative result for Simplicial WL \citep{bodnar2021topological} and Cellular WL \citep{bodnar2021cellular}:

\begin{proposition}
\label{thm:swl_cwl}
Consider the SWL algorithm \citep{bodnar2021topological}, or more generally, the CWL algorithms with either $k$-CL, $k$-IC, or $k$-C as lifting maps ($k\ge 3$ is an integer) \citep[Definition 14]{bodnar2021cellular}. These algorithms can neither distinguish whether a given graph has cut vertices nor distinguish whether it has cut edges.
\end{proposition}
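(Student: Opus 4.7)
My plan is to reuse the counterexample graphs from \cref{example:2} and show that, for $m$ chosen sufficiently large relative to the CWL/SWL parameter $k$, the lifting procedures introduce no simplices or cells beyond vertices and edges. Concretely, fix $m \geq \max(k+1, 4)$. Then $G_1$ is a $2m$-cycle with a diagonal chord $\{m,2m\}$, and $G_2$ is two disjoint $m$-cycles joined by the edge $\{m,2m\}$ (which is a cut edge of $G_2$, and $m,2m$ are cut vertices of $G_2$), whereas $G_1$ has no cut vertex or cut edge. Every simple cycle in $G_2$ has length exactly $m$, and every simple cycle in $G_1$ has length $2m$ or $m+1$. In particular, both graphs are triangle-free and have girth $\geq m \geq k+1$.

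Given this, I would verify that each considered lifting map produces a complex with no $2$-cell: $(\mathrm{i})$ $k$-CL adds cells only for cliques of size $\leq k$, and triangle-freeness rules out any clique larger than an edge; $(\mathrm{ii})$ $k$-IC and $(\mathrm{iii})$ $k$-C add cells for induced or general cycles of length $\leq k$, which are absent by the girth bound; the clique complex used in SWL likewise collapses to $0$- and $1$-simplices. Hence the lifted complex of each graph consists only of vertices and edges, with an incidence structure determined by the underlying simple graph.

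The next step is to argue that once the complex is $1$-dimensional, the CWL/SWL refinement degenerates to a joint vertex-and-edge color refinement: a vertex is updated from its upper-adjacent vertices (via shared edges) together with its incident-edge colors via co-boundary; an edge is updated from its two boundary vertices and its lower-adjacent edges, with \emph{no} upper adjacency since there are no $2$-cells. I would then show, by a direct induction on the number of refinement rounds, that on any graph the resulting stable vertex partition is exactly the one induced by classical $1$-WL: edge colors are always determined by the unordered pair of current endpoint colors together with the multiset of endpoint colors of incident edges, hence carry no information beyond what $1$-WL already aggregates through neighborhoods.

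Finally, I would invoke that $1$-WL fails to distinguish $(G_1, G_2)$ from \cref{example:2}, which is illustrated in \cref{fig:counterexamples}(d) and can be verified by a short induction exploiting the symmetric degree sequences. Combining the steps gives that CWL/SWL assigns identical graph-level multisets of vertex colors (and identical edge-color multisets) to $G_1$ and $G_2$, even though $G_2$ has cut vertices and a cut edge while $G_1$ has neither, thereby establishing the proposition. The main obstacle I anticipate is the third step: pinning down precisely which CWL aggregation terms remain non-trivial in the $1$-dimensional regime for each variant (SWL, $k$-CL, $k$-IC, $k$-C) and proving the equivalence with $1$-WL uniformly; the remaining steps reduce to routine girth/clique counting or direct reuse of the $1$-WL analysis already carried out for \cref{example:2}.
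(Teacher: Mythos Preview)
Your proposal is correct and follows essentially the same approach as the paper: choose $m$ large relative to $k$ so that the counterexample graphs are triangle-free with girth exceeding $k$, forcing every lifting map to yield a purely $1$-dimensional complex on which SWL/CWL collapses to ordinary $1$-WL, which already fails on these pairs. The paper cites both \cref{example:1} and \cref{example:2} but does not spell out the CWL-to-$1$-WL reduction you flag as the delicate step either---it simply asserts it---so your plan is at least as detailed as the paper's argument.
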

\begin{proof}
Observe that the counterexample graphs in both \cref{example:1,example:2} do not have cliques. Therefore, SWL (or CWL with $k$-CL) reduces to the classic 1-WL and thus fails to distinguish them. Since the lengths of any cycles in these counterexample graphs are at least $m$ ($m$ is defined in \cref{example:1,example:2}), we have that CWL with $k$-IC or $k$-C also reduces to 1-WL when $m>k$. Therefore, there exists graphs whose size is $O(k)$ such that CWL can neither distinguish cut vertices nor cut edges.

Finally, we point out that even if $k$ is not a constant (i.e., can scale to the graph size), CWL with $k$-IC still fails to distinguish whether a given graph has cut vertices. This is because for \cref{example:1} with $k\ge 2$ (e.g. \cref{fig:counterexamples}(b,c)), CWL with IC still outputs the same graph representation for both $G_1$ and $G_2$. This happens because all the 2-dimensional \emph{cells} in these examples are cycles of an equal length of $m+2$ and one can easily check that they have the same CWL color.
\end{proof}

We finally turn to the case of subgraph-based WL variants.

\begin{proposition}
\label{thm:oswl}
The Overlap Subgraph WL \citep{wijesinghe2022new} using any subgraph mapping $\omega$ can neither distinguish whether a given graph has cut vertices nor distinguish whether it has cut edges.
\end{proposition}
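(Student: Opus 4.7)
The plan is to reuse the same two families of counterexamples from \cref{example:1,example:2} that defeated SC-WL (\cref{thm:scwl}) and CWL (\cref{thm:swl_cwl}), and to show that OS-WL adds nothing to 1-WL when evaluated on these graphs. Recall that OS-WL augments the 1-WL update by injecting, for every node $v$ and every neighbor $u\in\gN_G(v)$, a structural descriptor $\omega(S_{vu})$ of the ``overlap subgraph'' $S_{vu}$ induced by the common neighborhood of $v$ and $u$ (together with some local radius-bounded region around the edge $\{u,v\}$). Since $\omega$ is arbitrary, it suffices to show that the \emph{overlap subgraphs themselves are isomorphic} between corresponding edges of $G_1$ and $G_2$; then no mapping $\omega$ can distinguish them.

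First I would make a systematic case analysis of the edges in \cref{example:1,example:2}. In both examples the girth is $\geq m$, so for every edge $\{u,v\}$ the common neighborhood $\gN_G(u)\cap\gN_G(v)$ is a subset of the hub's neighborhood (\cref{example:1}) or is empty (\cref{example:2}). A direct inspection shows that $(\mathrm{i})$ for every ``rim'' edge $\{i,i+1\}$ of \cref{example:1}, the overlap subgraph is either empty or consists of the hub vertex alone, and this does not depend on which graph the edge lives in; $(\mathrm{ii})$ for every spoke edge $\{n,jm\}$, the common neighbors are exactly the other spoke endpoints adjacent to both $jm$ and $n$, and by the symmetric construction of $G_1,G_2$ these sets induce identical isomorphism types in both graphs; $(\mathrm{iii})$ in \cref{example:2} the common neighborhood is empty for every edge, so the overlap subgraphs are trivial. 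In each case I would exhibit an explicit bijection (essentially the mapping $g_\gS$ defined in the proof of \cref{thm:scwl}) between edges of $G_1$ and $G_2$ that preserves the isomorphism type of the overlap subgraph.

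Given this local-overlap isomorphism, the argument proceeds by induction on the OS-WL iteration number $t$, exactly parallel to the 1-WL collapse argument on these graphs. The base case is the constant initialization. For the inductive step, if the multisets of colors agree between corresponding nodes of $G_1$ and $G_2$ at iteration $t-1$, then the standard 1-WL contribution agrees at iteration $t$ (since 1-WL itself fails to distinguish these graphs, as recorded in the summary table), and the overlap-subgraph contributions agree by the structural bijection above. Hence the two color mappings remain equivalent for all $t$, and in particular the graph representations, node colors, and edge colors coincide. This simultaneously refutes the ability of OS-WL to detect cut vertices (\cref{example:1}) and cut edges (\cref{example:2}).

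The main obstacle I anticipate is the ``spoke'' case in \cref{example:1} when $k\geq 2$: the hub vertex has high degree and the overlap subgraphs around spoke edges involve several other spoke endpoints, so one has to verify carefully that the bijection $g_\gS$ from the proof of \cref{thm:scwl} restricts to an isomorphism on each overlap subgraph (possibly extended to a small radius, depending on the precise form of OS-WL). If the overlap operator used in GraphSNN is defined to include induced subgraphs on a slightly larger neighborhood than the strict common-neighbor set, the verification reduces to the same cycle-length argument used for SC-WL: no cycle short enough to appear in such a bounded neighborhood exists in either $G_1$ or $G_2$ for $m$ sufficiently large, so the overlap subgraphs are forests and the bijection $g_\gS$ transports them faithfully. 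Once this local check is complete, the inductive conclusion is immediate.
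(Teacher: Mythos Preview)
Your plan is workable but substantially more complicated than the paper's proof, and it misses the one-line structural observation that makes the whole thing trivial. The paper's argument is simply this: in GraphSNN the overlap subgraph $S_{uv}$ for an edge $\{u,v\}$ is the induced subgraph on the closed-neighborhood intersection $N[u]\cap N[v]$, so if $G$ is \emph{triangle-free} then $N(u)\cap N(v)=\emptyset$ for every edge and every $S_{uv}$ is just the single edge $\{u,v\}$. Hence $\omega(S_{uv})$ is a constant and OS-WL collapses to 1-WL outright. Since \cref{example:1} with $m>1$ and \cref{example:2} with $m>3$ are triangle-free, the 1-WL failure on those pairs immediately gives the result for both cut vertices and cut edges. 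The remaining case $m=1$ of \cref{example:1} (which does contain triangles) is already exhibited as a counterexample in the original GraphSNN paper.

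You actually come close to this: you note that the girth is at least $m$, which for $m>1$ is exactly triangle-freeness. But instead of concluding ``all overlap subgraphs are the trivial edge, so OS-WL $=$ 1-WL,'' you launch into an edge-type case analysis and a bijection argument modeled on the SC-WL proof. That machinery is unnecessary here, and your hedging about ``if the overlap operator is defined to include a slightly larger neighborhood'' reflects the cost of not pinning down the definition first. Your approach does have one merit: it would handle the $m=1$ case uniformly (your analysis that rim edges give triangles and spoke edges give $K_4$ minus an edge, identically in $G_1$ and $G_2$, is correct), whereas the paper simply cites \citet{wijesinghe2022new} for that case. But for the generic parameters, the triangle-free observation is both shorter and more robust than an edge-by-edge verification.
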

\begin{proof}
 An important limitation of OS-WL is that if a graph does not contain triangles, then any overlap subgraph $S_{uv}$ between two adjacent nodes $u,v$ will only have one edge $\{u,v\}$. Consequently, the subgraph mapping $\omega$ does not take effect can OS-WL reduces to the classic 1-WL. Therefore, \cref{example:1} with $m>1$ and \cref{example:2} with $m>3$ still apply here since the graphs $G_1$ and $G_2$ do not contain triangles (see \cref{fig:counterexamples}(a,b,d)). Moreover, \cref{example:1} with $m=1$ (see \cref{fig:counterexamples}(c)) is also a counterexample as discussed in \citet[Figure 2(a)]{wijesinghe2022new}.
\end{proof}

\begin{proposition}
\label{thm:ego_policy}
The DSS-WL with ego network policy without marking cannot distinguish the graphs in \cref{example:1} with $m=1$ (\cref{fig:counterexamples}(c)).
\end{proposition}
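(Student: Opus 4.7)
The plan is to handle separately the trivial case of ego radius $r\ge 2$ and the delicate case $r=1$, for the graphs $G_1,G_2$ of \cref{example:1} with $m=1$ and $k\ge 3$ (so $G_1$ is the wheel $W_{2k}$ and $G_2$ consists of two $k$-wheels sharing the hub $n$, both of diameter $2$). The key structural feature to exploit is that in each of $G_1$, $G_2$, $G_1^{(n)}$ and $G_2^{(n)}$, the hub $n$ is the unique vertex of degree $2k$ and the remaining $2k$ ``outer'' vertices form a single orbit of the automorphism group; consequently the classic 1-WL partitions $\gV$ into $\{n\}$ and the outer vertices with matching colors between the two graphs.

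When $r\ge 2$ the diameter bound makes every ego-network equal to the full graph, so $\gB_{G_j}^{\pi_{\mathrm{EGO}(r)}}$ consists of $2k+1$ identical copies of $G_j$ under identical uniform initialization, and the DSS-WL update (\ref{eq:dsswl_aggregation}) collapses to a 1-WL update, which by the observation above already fails to distinguish $G_1$ from $G_2$.

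For $r=1$ the bags become non-trivial: $G_j^{(n)}$ is the full $G_j$ (wheel vs.\ two wheels), while for outer $v$ the subgraph $G_j^{(v)}$ has active set $\{v{-}1,v,v{+}1,n\}$ inducing a $K_4$ minus the edge $\{v{-}1,v{+}1\}$, with the remaining vertices isolated. The pointed subgraph $(G_j^{(v)},v)$ is isomorphic across $j=1,2$, but the hub-ego $G_j^{(n)}$ is not. The heart of the proof is an induction on $t$ establishing the following joint invariant across both graphs: $\chi_{G_j}^{t}$ assigns one color to $n$ and a second color to every outer vertex; $\chi_{G_j^{(n)}}^{t}$ does the same inside the hub-ego; and $\chi_{G_j^{(v)}}^{t}$ for any outer $v$ takes exactly four values according to the role (center-$v$, side-$\{v{\pm}1\}$, hub-$n$, isolated-outer), with all colors matching between $j=1$ and $j=2$.

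The main obstacle is the hub-ego step, since $G_1^{(n)}\not\simeq G_2^{(n)}$; the orbit property resolves it, because under the invariant the aggregation inside either $G_j^{(n)}$ only sees the multisets $\ldblbrace\text{outer}\rdblbrace^{2k}$ at $n$ and $\ldblbrace\text{outer},\text{outer},\text{hub}\rdblbrace$ at each outer vertex, and the full-graph contribution in (\ref{eq:dsswl_aggregation}) is identical, so the two-color pattern is preserved regardless of the global difference between wheel and two-wheels. For the outer-ego step, the pointed isomorphism $(G_1^{(v)},v)\simeq(G_2^{(v)},v)$ plus the uniform outer full-graph color makes every hash input in $G_1^{(v)}$ match that in $G_2^{(v)}$ elementwise, including the updates at non-active outer vertices whose $G_j$-neighborhood multiset is $\ldblbrace\text{outer},\text{outer},\text{hub}\rdblbrace$ in either graph. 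Finally, the cross-graph aggregation at \cref{alg:dsswl_all_graph_aggregation} of \cref{alg:dsswl} will be handled by a counting argument: for outer $v$, exactly three outer-egos ($G_j^{(v)}$ and $G_j^{(v\pm 1)}$) see $v$ in active roles, the unique hub-ego sees $v$ as outer, and the remaining $2k-3$ outer-egos see $v$ as isolated; the multiplicities match between $j=1,2$, yielding $\chi_{G_1}^{t+1}(v)=\chi_{G_2}^{t+1}(v)$ and closing the induction.
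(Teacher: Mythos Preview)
Your proposal is correct and follows essentially the same route as the paper: split on the ego radius (the diameter-$2$ observation reduces $r\ge 2$ to plain 1-WL), and for $r=1$ exploit that the outer-vertex egos are all isomorphic ($K_4$ minus an edge, equivalently the paper's ``two triangles sharing an edge'') while the hub-ego is the full graph, so that DSS-WL colors depend only on the role of each vertex and these roles match between $G_1$ and $G_2$. The paper simply asserts that the DSS-WL partition stabilizes after one iteration with identical color maps in $G_1$ and $G_2$; your explicit role-based inductive invariant (hub/outer in $G_j$ and $G_j^{(n)}$; center/side/hub/isolated in $G_j^{(v)}$) together with the cross-subgraph multiplicity count is exactly what verifies that assertion, so the two arguments coincide. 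One cosmetic point: at $t=0$ the invariant holds only in the degenerate form ``color determined by role'' rather than ``exactly four values,'' so phrase the inductive hypothesis as the former.
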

\vspace{-5pt}
\begin{proof}
First note that for any two vertices $u,v$ in either $G_1$ or $G_2$ defined in \cref{example:1}, their shortest path distance does not exceed 2. Thus we only need to consider the ego network policy $\pi_\mathrm{EGO(1)}$ and $\pi_\mathrm{EGO(2)}$.
\begin{itemize}[topsep=0pt,leftmargin=30pt]
\setlength{\itemsep}{0pt}
    \item For $\pi_\mathrm{EGO(2)}$, the ego graphs of all nodes are simply the original graph and thus all graphs in the bag $\gB^\pi$ and equal. Thus DSS-WL reduces to the classic 1-WL and cannot distinguish $G_1$ and $G_2$.
    \item For $\pi_\mathrm{EGO(1)}$, the ego graph of each node $v\neq n$ is a graph with 5 edges, having a shape of two triangles sharing one edge. These ego graphs are clearly isomorphic. The ego graph of the special node $n$ is the original graph containing all edges. It is easy to see that the vertex partition of DSS-WL becomes stable only after one iteration, and the color mapping of $G_1$ and $G_2$ is the same. Therefore, DSS-WL cannot distinguish $G_1$ and $G_2$.
\end{itemize}
We thus conclude the proof.
\end{proof}

\begin{proposition}
\label{thm:gnnak}
The GNN-AK architecture proposed in \citet{zhao2022stars} cannot distinguish whether a given graph has cut vertices.
\end{proposition}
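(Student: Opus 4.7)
The plan is to reduce GNN-AK to DSS-WL with the ego-network policy $\pi_{\mathrm{EGO}(k)}$ \emph{without} marking, and then invoke \cref{thm:ego_policy}. First, I would recall the architecture of GNN-AK: for each node $v\in\gV$, one extracts the $k$-hop ego subgraph $G_v$ centered at $v$, applies a \emph{shared} base MPNN inside each $G_v$ to produce intermediate node embeddings, and then aggregates the per-subgraph embeddings back to each node via centroid/subgraph/context pooling. The crucial structural observation is that the root vertex $v$ is \emph{not marked} in the initial features of $G_v$, and that the base MPNN operates independently in each subgraph (no cross-subgraph coupling between layers).

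Second, I would cast GNN-AK as a color-refinement procedure and argue that it is at most as expressive as DSS-WL with policy $\pi_{\mathrm{EGO}(k)}$ and trivial (uniform) initial coloring, a setting covered by \cref{thm:ego_policy}. Concretely, each layer of the base MPNN inside $G_v$ computes an injectively hashable function of a node's color together with the multiset of its neighbours' colors inside $G_v$, which is refined by the within-subgraph DSS-WL update (\cref{alg:dsswl_update} in \cref{alg:dsswl}); and the centroid/subgraph/context pooling used to combine per-subgraph outputs into the representation of $v$ is a permutation-invariant function of the bag $\ldblbrace \chi_{G_u}(v):u\in\gV\rdblbrace$, which is refined by the aggregation step \cref{alg:dsswl_all_graph_aggregation} of DSS-WL. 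Consequently, whenever DSS-WL with $\pi_{\mathrm{EGO}(k)}$ assigns the same stable graph representation to two graphs, so does GNN-AK.

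Third, I apply \cref{thm:ego_policy}: for the pair of graphs in \cref{example:1} with $m=1$ (cf.\ \cref{fig:counterexamples}(c)), DSS-WL with any policy $\pi_{\mathrm{EGO}(k)}$ produces identical multisets of colors, yet $G_1$ has no cut vertex while $G_2$ has three. By the reduction above, GNN-AK produces identical graph-level outputs on this pair, so it cannot distinguish whether a graph has a cut vertex. The same argument handles GNN-AK-ctx, since its context encoding is again a permutation-invariant function of the subgraph outputs and therefore also bounded by DSS-WL with $\pi_{\mathrm{EGO}(k)}$.

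The main obstacle is a careful formalization of the simulation step: one must check that every computation performed by GNN-AK at every layer, including the combination of centroid, subgraph and (optionally) context signals across iterations, is a well-defined function of the DSS-WL colors induced by $\pi_{\mathrm{EGO}(k)}$ without marking. This is a standard but slightly tedious lift of the familiar ``MPNNs are bounded by 1-WL'' argument to the subgraph setting, analogous to the reductions used implicitly in \cref{sec:esan}; once the simulation is written down, the counterexample of \cref{thm:ego_policy} immediately closes the proof.
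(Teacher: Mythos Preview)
Your overall strategy---exhibit the pair from \cref{example:1} with $m=1$ (the counterexample behind \cref{thm:ego_policy}) and argue that GNN-AK cannot separate them because the root is not marked before the base MPNN runs---is exactly the paper's approach. The paper's proof is a two-sentence remark: GNN-AK is ``similar to DSS-WL using the ego network policy but weaker'', and the centroid encoding ``is performed \emph{after} the WL procedure'' rather than before, so it cannot act as node marking and hence cannot distinguish $G_1$ from $G_2$.

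There is, however, a genuine error in your simulation step. You claim that centroid and subgraph pooling are ``a permutation-invariant function of the bag $\ldblbrace \chi_{G_u}(v):u\in\gV\rdblbrace$''. This is false: centroid pooling returns $\chi_{G_v}(v)$, a \emph{specific} element of that multiset (the one with $u=v$), and no permutation-invariant map on the multiset can single it out; subgraph pooling depends on $\ldblbrace \chi_{G_v}(u):u\in\gV\rdblbrace$, which is a different bag altogether. So the blanket upper bound by DSS-WL with $\pi_{\mathrm{EGO}(k)}$ does not follow from what you wrote, and the ``tedious lift'' you anticipate would not close as written.

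The repair is precisely the paper's observation. Because the root is unmarked \emph{before} the base MPNN runs, the centroid value $\chi_{G_v}(v)$ is determined only by the isomorphism type of $G_v$ together with the $1$-WL orbit of $v$ inside it. On the counterexample, the ego-subgraphs of all non-central nodes are isomorphic with the root sitting in a fixed $1$-WL class, while the central node's ego-subgraph is the whole graph, whose $1$-WL colouring is identical for $G_1$ and $G_2$. A direct check then shows that centroid, subgraph, and context poolings all coincide on $G_1$ and $G_2$. Replace your incorrect multiset-invariance claim by this ``centroid applied after WL, hence only sees $1$-WL orbits'' argument and the proof is complete.
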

\vspace{-5pt}
\begin{proof}
The GNN-AK architecture is quite similar to DSS-WL using the ego network policy but is weaker. There is also a subtle difference: GNN-AK adds the so-called centroid encoding. However, unlike node marking that is performed before the WL procedure, centroid encoding is performed after the WL procedure. The subtle difference causes GNN-AK to be unable to distinguish between the two graphs $G_1$ and $G_2$.
\end{proof}

We finally consider the DS-WL algorithm proposed in \citet{cotta2021reconstruction,bevilacqua2022equivariant}. As discussed in \cref{sec:dsswl}, the original DS-WL formulation only outputs a graph representation rather than node colors. There are two simple ways to define nodes colors for DS-WL:
\begin{itemize}[topsep=0pt,leftmargin=30pt]
\setlength{\itemsep}{0pt}
    \item If the graph generation policy $\pi$ is node-based, then each subgraph in $\gB^\pi_G=\ldblbrace G_i\rdblbrace_{i=1}^{|\gV|}$ is uniquely associated to a specific node $v\in\gV$. We can thus use the graph representation of each subgraph $G_i$ as the color of each node. This strategy has appeared in prior works, e.g. \citet{zhao2022stars}.
    \item For a general graph generation policy $\pi$, there no longer exists an explicit bijective mapping between nodes and subgraphs. In this case, another possible way is to define $\chi_G(v):=\ldblbrace\chi_{G_i}(v):G_i\in\gB^\pi_G\rdblbrace$, similar to DSS-WL. This approach is recently introduced by \citet{qian2022ordered}. However, such a strategy loses the memory advantage of DS-WL (i.e., needing $\Theta(|\gV||\gB^\pi_G|)$ memory complexity rather than $\Theta(|\gV|+|\gB^\pi_G|)$), and is less expressive than DSS-WL. We thus do not study this variant in the present work.
\end{itemize}
\begin{proposition}
\label{thm:dswl_adaptation}
The DS-WL algorithm with node marking/deletion policy cannot distinguish cut vertices when each node's color is defined as its associated subgraph representation.
\end{proposition}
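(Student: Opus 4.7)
The plan is to exhibit a concrete counterexample drawn from \cref{example:1}. Fix parameters $m \ge 1$ and $k \ge 2$ with $mk \ge 3$, and let $G_1, G_2$ be the pair constructed there; write $v^* := 2km+1$ for the special vertex, which is not a cut vertex in $G_1$ but is a cut vertex in $G_2$ (it separates the two disjoint $km$-cycles). I will show that, under DS-WL with either $\pi_{\mathrm{NM}}$ or $\pi_{\mathrm{ND}}$, the 1-WL graph representations of the associated subgraphs $(G_1)_{v^*}$ and $(G_2)_{v^*}$ coincide, yielding $\chi_{G_1}(v^*) = \chi_{G_2}(v^*)$ and thus the desired failure to distinguish cut vertices.

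The node deletion case is immediate. Removing every edge incident to $v^*$ isolates it, so $(G_1)_{v^*}$ is the disjoint union of a cycle of length $2km$ with the isolated vertex $v^*$, while $(G_2)_{v^*}$ is the disjoint union of two cycles of length $km$ with the isolated vertex $v^*$. In both subgraphs every non-isolated vertex has degree $2$, so 1-WL assigns every such vertex the same color while giving the isolated vertex its own distinct color. The multisets of vertex colors agree, hence $\chi_{G_1}(v^*) = \chi_{G_2}(v^*)$.

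The node marking case is somewhat more delicate. In both $(G_1)_{v^*}$ and $(G_2)_{v^*}$ the marked vertex $v^*$ has the same $2k$ neighbors (those with label $jm$ for $j \in [2k]$), and the remaining cycle vertices lie in $2k$ arcs of length $m-1$ joining consecutive neighbors-of-$v^*$. I would prove by induction on the 1-WL iteration index $t$ that the two subgraphs admit identical color partitions at round $t$, with each color class described by (i) being $v^*$, (ii) being a neighbor of $v^*$, or (iii) being an arc vertex at a specified distance to the nearer endpoint of its arc. The inductive step uses that every non-marked vertex sees the same multiset of neighbor colors in the two subgraphs: an interior arc vertex has exactly two arc-neighbors lying in well-defined distance classes; an arc vertex adjacent to a neighbor-of-$v^*$ sees that neighbor together with one arc vertex; and a neighbor-of-$v^*$ sees $v^*$ together with the two arc vertices at the ends of its two incident arcs.

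The main obstacle is to argue that the global topological difference between one cycle of length $2km$ and two disjoint cycles of length $km$ never becomes visible to 1-WL. This is resolved by observing that 1-WL propagates information purely through adjacencies and only via color multisets, so it can cross between arcs only through $v^*$; since every neighbor-of-$v^*$ is treated symmetrically by $v^*$, the cyclic order in which the arcs connect the hubs cannot be recovered. Hence the stable colorings of $(G_1)_{v^*}$ and $(G_2)_{v^*}$ induce identical multisets of vertex colors, completing the counterexample for both policies.
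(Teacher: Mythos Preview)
Your proposal is correct and uses the same counterexample family (\cref{example:1}) as the paper, though you work with general parameters $m\ge 1$, $k\ge 2$ while the paper simply takes the special case $m=1$ (\cref{fig:counterexamples}(c)), where every non-central vertex is a neighbor of $v^*$ and the argument collapses to a one-line check.

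A small remark on your node-marking argument: your induction is fine, but it can be shortcut. Since $v^*$ has degree $2k$ while every other vertex has degree $2$ or $3$, ordinary 1-WL already isolates $v^*$ in its own color class in both $G_1$ and $G_2$. Marking $v^*$ therefore does not refine the 1-WL partition at all, so the subgraph representations $\ldblbrace\chi_{(G_1)_{v^*}}(w):w\in\gV\rdblbrace$ and $\ldblbrace\chi_{(G_2)_{v^*}}(w):w\in\gV\rdblbrace$ coincide exactly because $G_1$ and $G_2$ are already known to be 1-WL indistinguishable. This replaces your inductive analysis of arc positions with a single observation, and it works uniformly for all admissible $m,k$.
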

\vspace{-5pt}
\begin{proof}
One can similarly check that for \cref{example:1} with $m=1$ (see \cref{fig:counterexamples}(c)), the color of node $n$ will be the same for both graphs $G_1$ and $G_2$. Therefore, DS-WL cannot identify cut vertices.
\end{proof}
Finally, using a similar proof technique, the NGNN architecture proposed in \citet{zhang2021nested} (with shortest path distance encoding) cannot identify cut vertices.

\subsection{Proof of \cref{thm:dsswl}}
\label{sec:proof_dsswl}
\begin{theorem}
Let $G=(\gV,\gE_G)$ and $H=(\gV,\gE_H)$ be two graphs, and let $\chi_G$ and $\chi_H$ be the corresponding DSS-WL color mapping with node marking policy. Then the following holds:
\begin{itemize}[topsep=0pt,leftmargin=30pt]
\setlength{\itemsep}{0pt}
    \item For any two nodes $w\in\gV$ in $G$ and $x\in\gV$ in $H$, if $\chi_G(w)=\chi_H(x)$, then $w$ is a cut vertex in graph $G$ if and only if $x$ is a cut vertex in graph $H$.
    \item For any two edges $\{w_1,w_2\}\in\gE_G$ and $\{x_1,x_2\}\in\gE_H$, if $\ldblbrace \chi_G(w_1),\chi_G(w_2)\rdblbrace=\ldblbrace \chi_H(x_1),\chi_H(x_2)\rdblbrace$, then $\{w_1,w_2\}$ is a cut edge if and only if $\{x_1,x_2\}$ is a cut edge.
\end{itemize}
\end{theorem}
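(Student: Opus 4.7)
My plan is to prove both statements of the theorem by first extracting an auxiliary structural invariant from the DSS-WL color mapping under the node-marking policy, and then using component-based arguments on pairs of marked subgraphs. I will rely on the joint WL-condition framework from Appendix~A.1: DSS-WL with any policy satisfies the WL-condition both on the global colors $\chi_G,\chi_H$ and on each pair of marked subgraph colors $\chi_{G_v},\chi_{H_u}$ (\cref{thm:dsswl_wl_condition}), so the path-lifting lemma (\cref{thm:wl_condition}) and its distance corollary (\cref{thm:wl_condition_corollary}) apply at both levels.

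\textbf{Step 1: Node marking encodes BFS-distance to the marker.} For a subgraph $G_v$ with $v$ marked, the initial coloring uniquely distinguishes $v$ from all other nodes, so under the joint WL-condition for $\chi_{G_v},\chi_{H_x}$, one gets $\chi_{G_v}(u)=\chi_{H_x}(u')\Rightarrow \dis_G(u,v)=\dis_H(u',x)$ by applying \cref{thm:wl_condition_corollary} to the singleton color class $\{v\},\{x\}$. Aggregating this across subgraphs via line~8 of \cref{alg:dsswl}, I will show (by an easy induction over DSS-WL iterations) that $\chi_G(w)=\chi_H(x)$ implies equality of the labeled distance multisets $\ldblbrace(\dis_G(w,u),\chi_G(u)):u\in\gV_G\rdblbrace=\ldblbrace(\dis_H(x,u'),\chi_H(u')):u'\in\gV_H\rdblbrace$. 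This is the lemma whose existence is promised in the discussion following the theorem.

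\textbf{Step 2: Cut vertices.} Suppose $w$ is a cut vertex and $\chi_G(w)=\chi_H(x)$. Let $C_1,\dots,C_k$ ($k\ge 2$) be the connected components of $G\setminus\{w\}$. In the marked subgraph $G_w$, pick neighbors $u_i\in C_i\cap\gN_G(w)$ for $i=1,2$; inside $G_w\setminus\{w\}$ there is no path from $u_1$ to $u_2$. The global match $\chi_G(w)=\chi_H(x)$ forces (via line~8) a pairing $G_w\leftrightarrow H_x$ with matched color mappings; in particular, applying Step~1 inside $G_w$ and $H_x$, for every neighbor of $w$ at subgraph-distance $\infty$ from some other neighbor (ignoring $w$), the same must hold in $H_x$ from the matched neighbors of $x$. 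This is made rigorous by passing from $G_w$ to the induced subgraph on $\gV\setminus\{w\}$ and invoking the WL-condition: if $x$ were not a cut vertex, there would exist a walk in $H\setminus\{x\}$ between any two neighbors of $x$, which by path lifting (\cref{thm:wl_condition}) would lift to a walk between the corresponding neighbors of $w$ in $G\setminus\{w\}$, contradicting the component decomposition.

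\textbf{Step 3: Cut edges.} For an edge $\{w_1,w_2\}$ with $\ldblbrace\chi_G(w_1),\chi_G(w_2)\rdblbrace=\ldblbrace\chi_H(x_1),\chi_H(x_2)\rdblbrace$ (WLOG colors match in order), I plan to work with the two marked subgraphs $G_{w_1}$ and $G_{w_2}$ simultaneously: by line~8 they are jointly matched with $H_{x_1},H_{x_2}$. The edge $\{w_1,w_2\}$ is a cut edge iff after deleting it, $w_1$ and $w_2$ lie in different components, iff the only walk of length $1$ from $w_1$ to $w_2$ is the edge itself and no alternative walk exists. Using Step~1 in $G_{w_1}$ to read off distances from $w_1$ (where $w_2$ is the unique neighbor at distance $1$ within its component after removing the edge), and the symmetric statement in $H_{x_1}$, I will transport the cut-edge property across the correspondence.

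\textbf{Main obstacle.} The hard part is Step~1: making precise the claim that the cross-subgraph aggregation in line~8 of \cref{alg:dsswl} does \emph{not} collapse the distance signatures produced in each $G_v$. The subtlety is that $\chi_G(w)$ hashes the multiset of per-subgraph colors of $w$, so we need to show both that a consistent bijection between bags $\gB_G^{\pi_{\mathrm{NM}}}$ and $\gB_H^{\pi_{\mathrm{NM}}}$ exists and that within each matched pair $(G_v,H_{v'})$ the marker nodes $v,v'$ are themselves colored identically — this is what enables the BFS-distance extraction to be transferred from the subgraph level to the global level. Once Step~1 is secured, Steps~2 and~3 reduce to standard component/path-lifting arguments inside a single marked subgraph.
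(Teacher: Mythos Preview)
Your Step~1 is essentially right and mirrors what the paper does (its \cref{thm:proof_dsswl_key} records exactly the properties you need, in particular item~(e) giving $\chi_{G_v}(u)=\chi_{H_{v'}}(u')\Rightarrow\dis_G(u,v)=\dis_H(u',v')$; \cref{thm:proof_dsswl_cut_vertex_1,thm:proof_dsswl_cut_vertex_2} then upgrade this to the global distance-multiset statement you describe). But you have misidentified the ``main obstacle'': Step~1 is routine once you note the marker has a unique subgraph color. The genuinely hard part is Step~2, and there your argument has a gap.

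The issue is endpoint control in path lifting. You want to say: pick neighbors $u_1\in C_1$, $u_2\in C_2$ of $w$, find color-matched neighbors $u_1',u_2'$ of $x$, take a walk $u_1'\to u_2'$ in $H\setminus\{x\}$, and lift it via \cref{thm:wl_condition} to a walk $u_1\to u_2$ in $G\setminus\{w\}$. Avoiding $w$ is fine (the marker's color is unique, so a lifted walk whose colors avoid $\chi_{H_x}(x)=\chi_{G_w}(w)$ avoids $w$). But \cref{thm:wl_condition} only lets you fix the \emph{start} of the lifted walk; the endpoint is merely some node with the same subgraph color as $u_2'$, i.e., as $u_2$. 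When the components $C_1,C_2$ are isomorphic as branches rooted at $w$ (think of a path $a\!-\!w\!-\!b$), there is a node of color $\chi_{G_w}(u_2)$ already inside $C_1$, and your lifted walk can terminate there, producing no contradiction. The paper closes this gap with a structural lemma you do not have (\cref{thm:proof_dsswl_cut_vertex_3}): at most one component of $G\setminus\{u\}$ contains nodes of the \emph{global} color $\chi_G(u)$. It then picks the test node in a component with \emph{no} such nodes and arranges the lifted walk to terminate at a node of color $\chi_G(u)$, which forces it to leave that component through $u$---contradicting the color-based avoidance. This extra lemma, proved via a maximal-distance argument on $\chi_G^{-1}(\chi_G(u))$, is the real crux.

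For Step~3, the paper takes a completely different and much cleaner route: once Step~1 shows the DSS-WL partition refines the SPD-WL partition (this is exactly \cref{thm:proof_dsswl_cut_edge_1}), the cut-edge statement follows immediately from \cref{thm:spdwl} via \cref{thm:finer_partition}. Your proposal to work with two marked subgraphs simultaneously is vague, and the characterization you give (``$w_2$ is the unique neighbor at distance $1$ within its component after removing the edge'') is not something the $G_{w_1}$-distances see---in $G_{w_1}$ the edge is still present, so $\dis_G(w_1,w_2)=1$ regardless of whether it is a bridge. You would face the same endpoint-control problem as in Step~2, and the paper's own SPD-WL proof (\cref{sec:proof_spdwl}) shows that establishing the cut-edge property from distance data alone already requires substantial case analysis and auxiliary graph constructions.
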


\emph{Proof}. We divide the proof into two parts in \cref{sec:proof_dsswl_cut_vertex,sec:proof_dsswl_cut_edge}, separately focusing on proving each bullet of \cref{thm:dsswl}. Before going into the proof, we first define several notations. Denote $\chi_G^u(v)$ as the color of node $v$ under the DSS-WL algorithm when marking $u$ as a special node. By definition of DSS-WL (\cref{alg:dsswl_all_graph_aggregation} in \cref{alg:dsswl}), $\chi_G(v)=\operatorname{hash}\left(\ldblbrace \chi_G^u(v):u\in\gV\rdblbrace\right)$. We can similarly define the inverse mappings $(\chi_G^u)^{-1}$.

We first present a lemma which can help us exclude the case of disconnected graphs. 
\begin{lemma}
\label{thm:proof_dsswl_0}
Given a node $w$, let $\gS_G(w)\subset \gV$ be the connected component in graph $G$ that comprises node $w$. For any two nodes $w\in\gV$ in $G$ and $x\in\gV$ in $H$, if $\chi_G(w)=\chi_H(x)$, then $\chi_{G[\gS_G(w)]}(w)=\chi_{H[\gS_H(x)]}(x)$.
\end{lemma}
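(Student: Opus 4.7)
The plan is to prove a joint statement by induction on the DSS-WL iteration $t$: (I) whenever $\chi_G^t(v) = \chi_H^t(v')$, we have $\chi_{G[\gS_G(v)]}^t(v) = \chi_{H[\gS_H(v')]}^t(v')$; and (II) whenever $u \in \gS_G(v)$, $u' \in \gS_H(v')$, and $\chi_{G_u}^t(v) = \chi_{H_{u'}}^t(v')$, we have $\chi_{G[\gS_G(v)]_u}^t(v) = \chi_{H[\gS_H(v')]_{u'}}^t(v')$. The lemma is the stable-iteration case of~(I), since the stable color $\chi_G(v)$ is the limit of $\chi_G^t(v)$ and similarly for $\chi_{G[\gS_G(v)]}(v)$.

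The engine driving the induction is a \emph{locality} observation: in any subgraph $G_u$, the marked initial color $c_1$ at $u$ can be transported by the intra-subgraph 1-WL-style aggregation of \cref{alg:dsswl_update} only along edges of $G$, hence only to nodes in $\gS_G(u)$. Therefore, whenever $u \notin \gS_G(v)$, the evolution of $\chi_{G_u}^t(w)$ at any $w \in \gS_G(v)$ never directly depends on the marking $u$; it depends only on the structure within $\gS_G(v)$ and on the previous global color $\chi_G^{t-1}$. A straightforward auxiliary induction then shows that all such ``out-of-component'' subgraph colors collapse to a single canonical value determined solely by $\gS_G(v)$ together with the previous global signal. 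Consequently, the multiset defining $\chi_G^t(v)$ splits cleanly as
\begin{equation*}
\ldblbrace \chi_{G_u}^t(v) : u \in \gV_G \rdblbrace \;=\; \ldblbrace \chi_{G_u}^t(v) : u \in \gS_G(v) \rdblbrace \;\sqcup\; \ldblbrace \chi_{G_u}^t(v) : u \notin \gS_G(v) \rdblbrace,
\end{equation*}
where the first part mirrors the DSS-WL computation on $G[\gS_G(v)]$ (by (II)) and the second part is canonical. With this split in hand, the inductive step is routine: \cref{alg:dsswl_update} only queries $v$ and its neighbors in $\gN_G(v) \subset \gS_G(v)$, which are handled by (I) and (II); the cross-subgraph hash \cref{alg:dsswl_all_graph_aggregation} is then resolved by the above split.

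The main obstacle I foresee is the initialization. At $t=0$, $\chi_G^0(v) = \operatorname{hash}(\{c_1^1, c_0^{|\gV_G|-1}\})$ encodes $|\gV_G|$ rather than $|\gS_G(v)|$, so (I) fails literally at $t=0$. To handle this, the inductive invariant I carry must record, alongside the in-component data, an auxiliary out-of-component contribution; the lemma is then deduced only at stability, at which point $\chi_G(v)$ is shown to determine both pieces and in particular the in-component multiset, from which $\chi_{G[\gS_G(v)]}(v)$ is reconstructed. The most delicate step is verifying that the in-/out-of-component split of the multiset can genuinely be read off from $\chi_G(v)$ alone at stability, and that this reconstruction is isomorphism-invariant, so that the resulting equality across $G$ and $H$ is legitimate.
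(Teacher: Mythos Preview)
Your proposal identifies the right core observation---the marking at $u$ can reach $v$ via the intra-subgraph aggregation only when $u$ and $v$ lie in the same connected component---and your plan to separate the multiset $\ldblbrace \chi_{G_u}^t(v):u\in\gV\rdblbrace$ into in-component and out-of-component pieces is exactly the mechanism the paper uses. Where you diverge is in the \emph{level} at which you work: you try to maintain (I) and (II) for every iteration $t$, which forces you to confront the genuine $t=0$ obstacle you correctly flag (the raw initial color encodes $|\gV|$, not $|\gS_G(v)|$). Your proposed remedy---carry the out-of-component contribution as auxiliary data and only cash in the conclusion at stability---is sound, but once you do this the argument effectively collapses to the paper's.

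The paper's proof bypasses the induction entirely by working \emph{only} at the stable coloring. It first observes that for $u\in\gS_G(w)$ and $v\notin\gS_H(x)$ one has $\chi_G^u(w)\neq\chi_H^v(x)$ (the marking at $u$ propagates to $w$, but the marking at $v$ never reaches $x$). Since $\chi_G(w)=\chi_H(x)$ forces the full multisets to agree, and since in-component colors are provably distinct from out-of-component colors, the in-component sub-multisets must agree: $\ldblbrace \chi_G^u(w):u\in\gS_G(w)\rdblbrace=\ldblbrace \chi_H^u(x):u\in\gS_H(x)\rdblbrace$. The paper then asserts (``clearly implies'') that this equality transfers to the restricted graphs, which is precisely the content of your invariant (II). So the paper is shorter but leaves implicit what you make explicit; your approach is more meticulous but pays for it with the base-case complication. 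Either route works; the paper's is the more direct of the two.
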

\begin{proof}
We first prove that if $\chi_G(w)=\chi_H(x)$, then $\ldblbrace \chi_G^u(w):u\in\gS_G(w)\rdblbrace=\ldblbrace \chi_H^u(x):u\in\gS_H(x)\rdblbrace$. First note that for any nodes $u,w$ in $G$ and $v,x$ in $H$, if $u\in\gS_G(w)$ but $v\notin\gS_H(x)$, then $\chi_G^{u}(w)\neq \chi_H^{v}(x)$. This is because DSS-WL only performs neighborhood aggregation, and the marking $v$ cannot propagate to node $x$ while the marking $u$ can propagate to node $w$. By definition we have $$\chi_G(w)=\operatorname{hash}\left(\ldblbrace \chi_G^u(w):u\in\gS_G(w)\rdblbrace \cup \ldblbrace \chi_G^v(w):v\notin\gS_G(w)\rdblbrace\right).$$
Similarly,$$\chi_H(x)=\operatorname{hash}\left(\ldblbrace \chi_H^u(x):u\in\gS_H(x)\rdblbrace \cup \ldblbrace \chi_H^v(x):v\notin\gS_H(x)\rdblbrace\right).$$
Since $\chi_G(w)=\chi_H(x)$, we have $\ldblbrace \chi_G^u:u\in\gS_G(w)\rdblbrace=\ldblbrace \chi_H^u:u\in\gS_H(x)\rdblbrace$. This clearly implies $\ldblbrace \chi_{G[\gS_G(w)]}^u:u\in\gS_G(w)\rdblbrace=\ldblbrace \chi_{H[\gS_H(x)]}^u:u\in\gS_H(x)\rdblbrace$, and thus $\chi_{G[\gS_G(w)]}(w)=\chi_{H[\gS_H(x)]}(x)$.
\end{proof}

Note that $w$ is a cut vertex in $G$ implies $w$ is a cut vertex in $G[\gS_G(w)]$. Therefore, based on \cref{thm:proof_dsswl_0}, we can restrict our attention to subgraphs $G[\gS_G(w)]$ and $H[\gS_H(x)]$ instead of the original (potentially disconnected) graphs. In other words, in the subsequent proof we can simply assume that \emph{both graphs $G$ and $H$ are connected}.

We next present several simple but important properties regrading the DSS-WL color mapping as well as the subgraph color mappings.
\begin{lemma}
\label{thm:proof_dsswl_key}
Let $u,w$ be two nodes in connected graph $G$ and $v,x$ be two nodes in connected graph $H$. Then the following holds:
\begin{enumerate}[label=(\alph*),topsep=0pt,leftmargin=30pt]
\setlength{\itemsep}{0pt}
    \item If $w=u$ and $x\neq v$, then $\chi_G^u(w)\neq \chi_H^v(x)$;
    \item If $\chi_G^u(w)=\chi_H^v(x)$, then $\chi_G(w)=\chi_H(x)$;
    \item If $\chi_G^u(w)=\chi_H^v(x)$, then $\chi_G(u)=\chi_H(v)$;
    \item $\chi_G(w)=\chi_H(x)$ if and only if $\chi_G^w(w)=\chi_H^x(x)$;
    \item If $\chi_G^u(w)=\chi_H^v(x)$, then $\dis_G(u,w)=\dis_H(v,x)$.
\end{enumerate}
\end{lemma}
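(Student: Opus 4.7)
I would prove the five claims in the order (a), (b), (d), (e), (c), since each later claim leans on the earlier ones, and only (c) requires a genuinely new idea beyond routine unpacking of the DSS-WL update.

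Claims (a) and (b) are direct consequences of the mechanics of \cref{alg:dsswl}. For (a), the node marking policy $\pi_{\mathrm{NM}}$ initializes $\chi_{G_u}^0(u)=c_1$ and $\chi_{H_v}^0(x)=c_0$ for $x\neq v$; since every DSS-WL update passes the vertex's previous color as the first argument of an injective hash, distinct colors at iteration $t-1$ produce distinct colors at iteration $t$, and this propagates by induction to the stable iteration. For (b), evaluating the update at the stable iteration shows that $\chi_G(w)$ is literally one of the arguments of the hash defining $\chi_G^u(w)$; the injectivity of the hash therefore transfers equality $\chi_G^u(w)=\chi_H^v(x)$ to equality $\chi_G(w)=\chi_H(x)$.

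The reverse direction of (d) is a special case of (b). For the forward direction, $\chi_G(w)=\chi_H(x)$ unfolds via \cref{alg:dsswl_all_graph_aggregation} to the multiset equality $\ldblbrace\chi_G^u(w):u\in\gV_G\rdblbrace=\ldblbrace\chi_H^v(x):v\in\gV_H\rdblbrace$. The color $\chi_G^w(w)$ on the left must then equal some $\chi_H^{v_0}(x)$ on the right; but (a), applied with the roles of $G$ and $H$ swapped, excludes $v_0\neq x$, forcing $\chi_G^w(w)=\chi_H^x(x)$. Claim (e) in turn follows from the WL-condition guaranteed by \cref{thm:dsswl_wl_condition} together with the distance-matching corollary \cref{thm:wl_condition_corollary}: the color $c=\chi_G^u(u)$ is realized in $G_u$ exactly at the singleton $\{u\}$ by (a), and since $\dis_G(u,w)$ is finite by connectedness of $G$, this color must also be realized somewhere in $H_v$, which by (a) can only be the singleton $\{v\}$; hence $\dis_G(u,w)=\dis_{G_u}(w,(\chi_G^u)^{-1}(c))=\dis_{H_v}(x,(\chi_H^v)^{-1}(c))=\dis_H(v,x)$.

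I expect claim (c) to be the main obstacle, because one has to recover information about the marked vertices $u,v$ from the colors of the auxiliary vertices $w,x$. The idea is to push a walk from $w$ to $u$ across to $H_v$ using the WL-condition, and then use the uniqueness of the marked color to identify the endpoint. Concretely, I would choose a shortest path $(w=y_0,y_1,\ldots,y_d=u)$ in $G$, where $d=\dis_G(u,w)$, and apply \cref{thm:wl_condition} to the joint WL-condition of $\chi_G^u$ and $\chi_H^v$ starting from the equality $\chi_G^u(w)=\chi_H^v(x)$. This yields a path $(x=z_0,z_1,\ldots,z_d)$ in $H_v$ with $\chi_H^v(z_i)=\chi_G^u(y_i)$ for every $i$; in particular $\chi_H^v(z_d)=\chi_G^u(u)$, which carries the marked pattern, so claim (a) forces $z_d=v$. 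Thus $\chi_H^v(v)=\chi_G^u(u)$, and invoking (d) one last time yields $\chi_G(u)=\chi_H(v)$, completing the argument.
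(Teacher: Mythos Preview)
Your proposal is correct and essentially matches the paper's proof: items (a), (b), (d), and (e) are argued identically (induction over the marked initial color, unpacking the hash arguments, the multiset identity from \cref{alg:dsswl_all_graph_aggregation} combined with (a), and the WL-condition via \cref{thm:wl_condition_corollary} together with the uniqueness of the marked color). For (c) the paper uses the same walk-pushing argument via \cref{thm:wl_condition} to conclude $\chi_G^u(u)=\chi_H^v(v)$, differing only cosmetically in that it proves (c) before (d) and closes with (b) rather than (d) at the final step.
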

\begin{proof}
Item (a) holds because in DSS-WL, the node with marking cannot have the same color as a node without marking. This can be rigorously proved by induction over the iteration $t$ in the DSS-WL algorithm (\cref{alg:dsswl_update} in \cref{alg:dsswl}).

Item (b) simply follows by definition of the DSS-WL aggregation procedure since the color $\chi_G^u(w)$ encodes the color of $\chi_G(w)$.

We next prove item (c), which follows by using the WL-condition of DSS-WL algorithm (\cref{thm:dsswl_wl_condition}). Since $G$ is connected, there is a path from $w$ to $u$. Therefore, in graph $H$ there is also a path from $x$ to some node $v'$ satisfying $\chi_G^u(u)=\chi_H^v(v')$ (\cref{thm:wl_condition}). Now using item (a), it can only be the case $v'=v$ and thus $\chi_G^u(u)=\chi_H^v(v)$. Finally, by item (b) we obtain the desired result.

We next prove item (d). On the one hand, item (b) already shows that  $\chi_G^w(w)=\chi_G^x(x)\implies \chi_G(w)=\chi_H(x)$. On the other hand, by definition of the DSS-WL algorithm,
\begin{align*}
    \chi_G(w)&= \operatorname{hash}\left(\ldblbrace \chi_G^w(w)\rdblbrace \cup \ldblbrace \chi_G^u(w):u\in\gV\backslash\{w\}\rdblbrace\right),\\
    \chi_H(x)&= \operatorname{hash}\left(\ldblbrace \chi_H^x(x)\rdblbrace \cup \ldblbrace \chi_H^v(x):v\in\gV\backslash\{x\}\rdblbrace\right).
\end{align*}
Since $\chi_G(w)=\chi_H(x)$ and $\chi_G^w(w)\neq \chi_H^v(x)$ holds for all $v\in\gV\backslash\{x\}$ (by item (a)), we obtain $\chi_G^w(w)=\chi_G^x(x)$.

We finally prove item (e), which again can be derived from the WL-condition of DSS-WL algorithm. If $\chi_G^u(w)=\chi_H^v(x)$, then by \cref{thm:wl_condition_corollary} we have $\dis_G(w,(\chi_G^u)^{-1}(\chi_G^u(u)))=\dis_H(x,(\chi_H^v)^{-1}(\chi_G^u(u)))$. Using item (a), we have $(\chi_G^u)^{-1}(\chi_G^u(u))=\{u\}$ and for any $v'\neq v$, $\chi_H^v(v')\neq\chi_H^v(v)$. Therefore, it can only be the case that $(\chi_H^v)^{-1}(\chi_G^u(u))=\{v\}$ and $\chi_H^v(v)=\chi_G^u(u)$. This yields $\dis_G(u,w)=\dis_G(v,x)$ and concludes the proof.
\end{proof}

\subsubsection{Proof for the first part of \cref{thm:dsswl}}
\label{sec:proof_dsswl_cut_vertex}
The following technical lemma is useful in the subsequent proof:
\begin{lemma}
\label{thm:proof_dsswl_cut_vertex_1}
Let $u,v\in\gV$ be two nodes in connected graphs $G$ and $H$, respectively. If $\chi_G^u(u)=\chi_H^v(v)$, then $\ldblbrace \chi_G^u(w):w\in\gV\rdblbrace=\ldblbrace \chi_H^v(w):w\in\gV\rdblbrace$.
\end{lemma}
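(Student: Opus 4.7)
\textbf{Plan for the proof of \cref{thm:proof_dsswl_cut_vertex_1}.} The plan is to show multiset equality colorwise, i.e. that $|(\chi_G^u)^{-1}(c)| = |(\chi_H^v)^{-1}(c)|$ for every color $c$, via a BFS-style induction on the shortest-path distance from the marked node. The three tools already available are: $(\mathrm{i})$ by \cref{thm:dsswl_wl_condition}, the two subgraph colorings $\chi_G^u$ and $\chi_H^v$ jointly satisfy the WL-condition; $(\mathrm{ii})$ by \cref{thm:wl_condition}, any path in $G$ starting from $u$ can be lifted to a path in $H$ starting from $v$ with the same color sequence, so the \emph{sets} of colors appearing coincide; $(\mathrm{iii})$ by item (e) of \cref{thm:proof_dsswl_key}, every node carrying a given color $c$ lies at one common distance $d_c$ from its respective marked node, in both graphs.

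First I would partition the vertex sets by distance, writing $L_d^G := \{w\in\gV:\dis_G(u,w)=d\}$ and $L_d^H := \{x\in\gV:\dis_H(v,x)=d\}$; by $(\mathrm{iii})$ each color class sits inside a single distance layer, so it suffices to prove layer-by-layer multiset equality. The base case $d=0$ reduces to the single color $c_0 := \chi_G^u(u) = \chi_H^v(v)$, and item (a) of \cref{thm:proof_dsswl_key} (specialized to $G=H$, $u=v$) implies that $u$ (resp.\ $v$) is the unique node carrying $c_0$, so the $d=0$ multisets match.

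For the inductive step, assume the multisets agree for all distances $<d$ and fix any color $c$ appearing in $L_d^G$; by $(\mathrm{ii})$ the same $c$ appears in $L_d^H$. Pick $w\in(\chi_G^u)^{-1}(c)$ and any neighbor $w^\star\in L_{d-1}^G$, and set $c' := \chi_G^u(w^\star)$. The joint WL-condition then gives a common value $k\ge 1$ for the number of $c'$-colored neighbors of every $c$-colored node (in either graph), and a common value $l\ge 1$ for the number of $c$-colored neighbors of every $c'$-colored node. Double-counting edges between the two classes yields
\[
k\cdot|(\chi_G^u)^{-1}(c)| \;=\; l\cdot|(\chi_G^u)^{-1}(c')|, \qquad k\cdot|(\chi_H^v)^{-1}(c)| \;=\; l\cdot|(\chi_H^v)^{-1}(c')|.
\]
By the induction hypothesis the $c'$-class cardinalities already agree, and since $k>0$ the $c$-class cardinalities must agree as well, closing the induction.

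The step I expect to be the main obstacle is the clean split into distance layers: without it, one would have to track pairs $(c,d)$ of colors and distances at which they occur in each graph simultaneously, and the bookkeeping of the double-counting argument becomes awkward. Fortunately item (e) of \cref{thm:proof_dsswl_key} bakes in exactly the distance-preservation needed across the two graphs (which in turn crucially relies on the uniqueness of the marked color guaranteed by item (a)), so the layered induction goes through and the remaining verifications are routine counting.
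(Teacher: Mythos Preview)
Your proposal is correct and follows essentially the same strategy as the paper: a BFS-style induction on distance layers, using item (e) of \cref{thm:proof_dsswl_key} to ensure each color lives in a single layer, and the joint WL-condition (\cref{thm:dsswl_wl_condition}) to propagate multiset equality from one layer to the next. The only cosmetic difference is in the inductive step: the paper aggregates the full neighbor multisets of layer $d$, restricts to layer $d{+}1$, and then divides out the (color-constant) back-degree to recover $\gC_G^{d+1}=\gC_H^{d+1}$, whereas you fix a single color $c$ at distance $d$, pick one predecessor color $c'$ at distance $d{-}1$, and run the bipartite double-count $k\cdot|(\chi_G^u)^{-1}(c)|=l\cdot|(\chi_G^u)^{-1}(c')|$ directly---this is the same edge-counting identity the paper uses implicitly, just isolated to one pair of color classes and arguably a bit cleaner to read.
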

\begin{proof}
Let $\gN_G^d(u):=\{w\in\gV:\dis_G(u,w)=d\}$ be the $d$-hop neighbors of node $u$ in graph $G$, and denote $\gC_G^d:=\ldblbrace \chi_G^u(w):w\in\gN_G^d(u)\rdblbrace$ as the multiset containing the color of all nodes $w$ with distance $d$ to node $u$. We can similarly denote $\gN_H^d(v):=\{w:\dis_H(v,w)=d\}$ and $\gC_H^d=\ldblbrace \chi_H^v(w):w\in\gN_H^d(v)\rdblbrace$. It suffices to prove that for all $d\in\mathbb N_+$, $\gC_G^d=\gC_H^d$.

We will prove the above result by induction. The case of $d=0$ is trivial. Now suppose the case of $d$ is true (i.e., $\gC_G^d=\gC_H^d$) and we want to prove $\gC_G^{d+1}=\gC_H^{d+1}$. Note that for any nodes $x_1,x_2$ satisfying $\chi_G^u(x_1)=\chi_H^v(x_2)$, $\ldblbrace\chi_G^u(w):w\in\gN_G(x_1)\rdblbrace=\ldblbrace\chi_H^v(w):w\in\gN_H(x_2)\rdblbrace$. Therefore, by the induction assumption $\gC_G^d=\gC_H^d$,
\begin{equation*}
    \bigcup_{x\in\gN_G^d(u)}\ldblbrace\chi_G^u(w):w\in\gN_G(x)\rdblbrace=\bigcup_{x\in\gN_H^d(v)}\ldblbrace\chi_H^v(w):w\in\gN_H(x)\rdblbrace.
\end{equation*}
We next claim that $\gC_G^d\cap\gC_G^{d'}=\emptyset$ for any $d\neq d'$. This is because for any nodes $w_1$ and $w_2$ with the same color $\chi_G^u(w_1)=\chi_G^u(w_2)$, by \cref{thm:proof_dsswl_key}(e) we have $\dis_G(w_1,u)=\dis_G(w_2,u)$. Using this property, we obtain
\begin{equation*}
    \bigcup_{x\in\gN_G^d(u)}\ldblbrace\chi_G^u(w):w\in\gN_G(x)\cap\gN_G^{d+1}(u)\rdblbrace=\bigcup_{x\in\gN_H^d(v)}\ldblbrace\chi_H^v(w):w\in\gN_H(x)\cap\gN_H^{d+1}(v)\rdblbrace.
\end{equation*}
It is equivalent to the following equation:
\begin{equation*}
    \bigcup_{w\in\gN_G^{d+1}(u)}\ldblbrace\chi_G^u(w)\rdblbrace\times |\gN_G(w)\cap\gN_G^{d}(u)|=\bigcup_{w\in\gN_H^{d+1}(v)}\ldblbrace\chi_H^v(w)\rdblbrace\times |\gN_H(w)\cap\gN_H^{d}(v)|.
\end{equation*}
where $\ldblbrace c\rdblbrace\times m$ is a multiset containing $m$ repeated elements $c$. Finally, observe that if $\chi_G^u(w_1)=\chi_H^v(w_2)$ for some nodes $w_1$ and $w_2$, then $|\gN_G(w_1)\cap\gN_G^{d}(u)|=|\gN_H(w_2)\cap\gN_H^{d}(v)|$ (because $\gC_G^d\cap\gC_G^{d'}=\emptyset$ for any $d\neq d'$). Consequently, $\ldblbrace\chi_G^u(w):w\in\gN_G^{d+1}(u)\rdblbrace=\ldblbrace\chi_H^v(w):w\in\gN_H^{d+1}(v)\rdblbrace$, namely  $\gC_G^{d+1}=\gC_H^{d+1}$. We have thus completed the proof of the induction step.
\end{proof}

We now present the following key result, which shows an important property of the color mapping for DSS-WL:
\begin{corollary}
\label{thm:proof_dsswl_cut_vertex_2}
Let $u,v\in\gV$ be two nodes in connected graph $G$ with the same DSS-WL color, i.e. $\chi_G(u)=\chi_G(v)$. Then for any color $c\in\gC$, $\ldblbrace\chi_G^u(w):w\in\chi_G^{-1}(c)\rdblbrace=\ldblbrace\chi_G^v(w):w\in\chi_G^{-1}(c)\rdblbrace$.
\end{corollary}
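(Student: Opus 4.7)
The plan is to derive this corollary as a direct consequence of the two preceding results, \cref{thm:proof_dsswl_key} and \cref{thm:proof_dsswl_cut_vertex_1}, by tracking how the per-node colors in the ``marked'' mappings $\chi_G^u$ and $\chi_G^v$ project down to the unmarked color $\chi_G$.

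First, from the hypothesis $\chi_G(u)=\chi_G(v)$ I would apply \cref{thm:proof_dsswl_key}(d) (with $H=G$ and $w=u$, $x=v$) to upgrade this equality to $\chi_G^u(u)=\chi_G^v(v)$. Feeding this into \cref{thm:proof_dsswl_cut_vertex_1} (again taking $H=G$) immediately yields the multiset identity
\[
\ldblbrace \chi_G^u(w):w\in\gV\rdblbrace=\ldblbrace \chi_G^v(w):w\in\gV\rdblbrace.
\]
This already gives a global matching between the two mappings; the remaining task is only to show that this matching respects the partition of $\gV$ induced by $\chi_G$.

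For that, I would unpack the multiset equality as a bijection $\sigma:\gV\to\gV$ such that $\chi_G^u(w)=\chi_G^v(\sigma(w))$ for every $w\in\gV$. Then by \cref{thm:proof_dsswl_key}(b) (with $w$ and $\sigma(w)$ in place of $w$ and $x$), the equality $\chi_G^u(w)=\chi_G^v(\sigma(w))$ forces $\chi_G(w)=\chi_G(\sigma(w))$. Hence $\sigma$ sends each color class $\chi_G^{-1}(c)$ bijectively onto itself, and restricting the bijection to $\chi_G^{-1}(c)$ yields
\[
\ldblbrace\chi_G^u(w):w\in\chi_G^{-1}(c)\rdblbrace=\ldblbrace\chi_G^v(w):w\in\chi_G^{-1}(c)\rdblbrace,
\]
as desired.

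There is essentially no hard step: the only subtle point is making sure one invokes \cref{thm:proof_dsswl_cut_vertex_1} with both graphs set to $G$ (so that the two mappings live on the same vertex set and a bijection between the underlying multisets can actually be produced), and then recognizing that item (b) of \cref{thm:proof_dsswl_key} is precisely the tool that turns a matching of marked colors into a matching within color classes of the unmarked mapping. I expect the proof to take only a handful of lines once these ingredients are assembled in this order.
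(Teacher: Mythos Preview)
Your proposal is correct and follows essentially the same approach as the paper: both use \cref{thm:proof_dsswl_key}(d) to upgrade $\chi_G(u)=\chi_G(v)$ to $\chi_G^u(u)=\chi_G^v(v)$, then invoke \cref{thm:proof_dsswl_cut_vertex_1} for the global multiset equality, and finally use \cref{thm:proof_dsswl_key}(b) to show the matching respects $\chi_G$-color classes. The only cosmetic difference is that the paper phrases the last step as a contradiction (if the restricted multisets differed there would be a ``crossing'' pair $w_1\in\chi_G^{-1}(c)$, $w_2\notin\chi_G^{-1}(c)$ with $\chi_G^u(w_1)=\chi_G^v(w_2)$, contradicting (b)), whereas you phrase it directly via a bijection $\sigma$; the content is identical.
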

\begin{proof}
First observe that if $\chi_G(u)=\chi_G(v)$, then $\chi_G^u(u)=\chi_G^v(v)$ (by \cref{thm:proof_dsswl_key}(d)). Consequently, $\ldblbrace\chi_G^u(w):w\in\gV\rdblbrace=\ldblbrace\chi_G^v(w):w\in\gV\rdblbrace$ holds by \cref{thm:proof_dsswl_cut_vertex_1}. If $\ldblbrace\chi_G^u(w):w\in\chi_G^{-1}(c)\rdblbrace\neq\ldblbrace\chi_G^v(w):w\in\chi_G^{-1}(c)\rdblbrace$, then there must exist two nodes $w_1\in\chi_G^{-1}(c)$ and $w_2\notin\chi_G^{-1}(c)$, such that $\chi_G^u(w_1)=\chi_G^v(w_2)$. Therefore, by \cref{thm:proof_dsswl_key}(b) we have $\chi_G(w_1)=\chi_G(w_2)$, yielding a contradiction.
\end{proof}

In the subsequent proof, we assume the connected graph $G$ is not vertex-biconnected and let $u\in\gV$ be a cut vertex in $G$. Let $\{\gS_i\}_{i=1}^m$ ($m\ge 2$) be the partition of the vertex set $\gV\backslash\{u\}$, representing each connected component after removing node $u$.

\begin{lemma}
\label{thm:proof_dsswl_cut_vertex_3}
There is at most one set $\gS_i$ satisfying $\gS_i\cap\chi_G^{-1}(\chi_G(u))\neq \emptyset$. In other words, if $\gS_i\cap\chi_G^{-1}(\chi_G(u))\neq \emptyset$ for some $i\in[m]$, then for any $j\in[m]$ and $j\neq i$, $\gS_j\cap\chi_G^{-1}(\chi_G(u))=\emptyset$.
\end{lemma}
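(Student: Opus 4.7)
The plan is to argue by contradiction. Suppose two distinct components $\gS_1 \neq \gS_2$ contain nodes with color $\chi_G(u)$, say $v_1 \in \gS_1 \cap \chi_G^{-1}(\chi_G(u))$ and $v_2 \in \gS_2 \cap \chi_G^{-1}(\chi_G(u))$. Note $v_1, v_2 \neq u$ (they lie in $\gV\backslash\{u\}$), so both $d_1 := \dis_G(u,v_1)$ and $d_2 := \dis_G(u,v_2)$ are at least $1$. The goal is to derive a contradiction by showing that the distance profile of $u$ to the same-color class cannot simultaneously match the distance profiles of $v_1$ and $v_2$, once the cut-vertex property forces paths from $v_1$ or $v_2$ across components to be ``stretched'' through $u$.

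The main leverage is \cref{thm:proof_dsswl_cut_vertex_2} applied with $c = \chi_G(u)$: since $\chi_G(u) = \chi_G(v_1) = \chi_G(v_2)$, the three multisets
\[
    M_x := \ldblbrace \chi_G^x(w) : w \in \chi_G^{-1}(\chi_G(u)) \rdblbrace, \qquad x \in \{u, v_1, v_2\},
\]
are all equal. Combining this with \cref{thm:proof_dsswl_key}(e) (which says $\chi_G^x(w) = \chi_G^{x'}(w')$ implies $\dis_G(x,w) = \dis_G(x',w')$), one obtains the crucial distance-multiset equality
\[
    D_u = D_{v_1} = D_{v_2}, \qquad D_x := \ldblbrace \dis_G(x,w) : w \in \chi_G^{-1}(\chi_G(u)) \rdblbrace.
\]
In particular, $\max D_u = \max D_{v_1} = \max D_{v_2} =: r$.

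Now I use the cut-vertex structure to force a violation. Pick $w^* \in \chi_G^{-1}(\chi_G(u))$ with $\dis_G(u,w^*) = r$. Since $v_1 \in \chi_G^{-1}(\chi_G(u))$ with $v_1 \neq u$ gives $r \geq d_1 \geq 1$, we have $w^* \neq u$, so $w^*$ lies in a unique component $\gS_{i^*}$. At least one of the indices $1,2$ differs from $i^*$; choose $j \in \{1,2\}$ with $j \neq i^*$. Because $u$ is a cut vertex, every path from $v_j \in \gS_j$ to $w^* \in \gS_{i^*}$ must traverse $u$, giving
\[
    \dis_G(v_j, w^*) \;=\; \dis_G(v_j, u) + \dis_G(u, w^*) \;=\; d_j + r \;>\; r,
\]
which contradicts $\max D_{v_j} = r$.

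The only subtle step is the upgrade from the color-multiset equality in \cref{thm:proof_dsswl_cut_vertex_2} to a distance-multiset equality via \cref{thm:proof_dsswl_key}(e); once this is in place, the rest reduces to a one-line pigeonhole on which component contains the farthest same-color node from $u$. I do not expect technical obstacles beyond correctly threading the already-established DSS-WL properties, and no additional structural facts about $G$ seem needed.
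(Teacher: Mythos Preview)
Your proposal is correct and follows essentially the same approach as the paper. Both arguments combine \cref{thm:proof_dsswl_cut_vertex_2} with \cref{thm:proof_dsswl_key}(e) to obtain equality of the distance multisets $D_x=\ldblbrace\dis_G(x,w):w\in\chi_G^{-1}(\chi_G(u))\rdblbrace$ for $x\in\chi_G^{-1}(\chi_G(u))$, and then exploit additivity of shortest-path distances through the cut vertex $u$ to contradict the common maximum $r=\max D_u$; the only cosmetic difference is that the paper picks the farthest same-color node first and then a second one in another component, whereas you fix $v_1,v_2$ first and then pick $w^*$ realizing $r$.
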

\begin{proof}
When $|\chi_G^{-1}(\chi_G(u))|=1$, the conclusion clearly holds. If $|\chi_G^{-1}(\chi_G(u))|>1$, then we can pick a node $u_1\in\chi_G^{-1}(\chi_G(u))$ that maximizes the shortest path distance $\dis_G(u_1,u)$. Let $u_1\in\gS_i$ for some $i\in[m]$. If the lemma does not hold, then we can pick another node $u_2\in\chi_G^{-1}(\chi_G(u))$ and $u_2\notin\gS_i$. Since $u_1$ and $u_2$ are in different connected component after removing $u$, $\dis_G(u_1,u_2)=\dis_G(u_1,u)+\dis_G(u_2,u)$. See \cref{fig:proof_dsswl}(a) for an illustration of this paragraph.

By \cref{thm:proof_dsswl_cut_vertex_2}, $\ldblbrace\chi_G^{u_1}(w):w\in\chi_G^{-1}(\chi_G(u))\rdblbrace=\ldblbrace\chi_G^{u}(w):w\in\chi_G^{-1}(\chi_G(u))\rdblbrace$. Therefore, there must exist a node $u_3\in\chi_G^{-1}(\chi_G(u))$ satisfying $\chi_G^{u_1}(u_2)=\chi_G^u(u_3)$. We thus have $\dis_G(u_2,u_1)=\dis_G(u_3,u)$ by \cref{thm:proof_dsswl_key}(e). On the other hand, by definition of the node $u_1$, $\dis_G(u_1,u)\ge \dis_G(u_3,u)$. Therefore, $\dis_G(u_2,u_1)=\dis_G(u_1,u)+\dis_G(u_2,u)> \dis_G(u_3,u)$. This yields a contradiction and concludes the proof.
\end{proof}

\begin{figure}[t]
    \centering
    \small
    \setlength\tabcolsep{20pt}
    \begin{tabular}{cc}
        \includegraphics[height=11em]{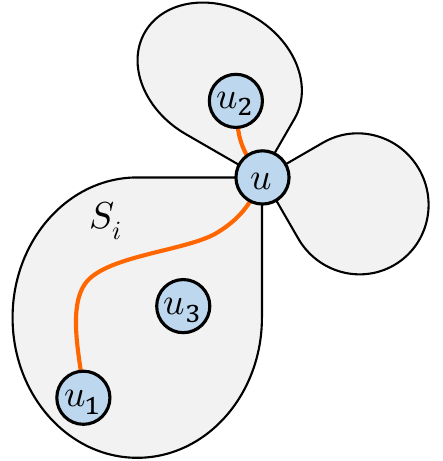} & \includegraphics[height=11em]{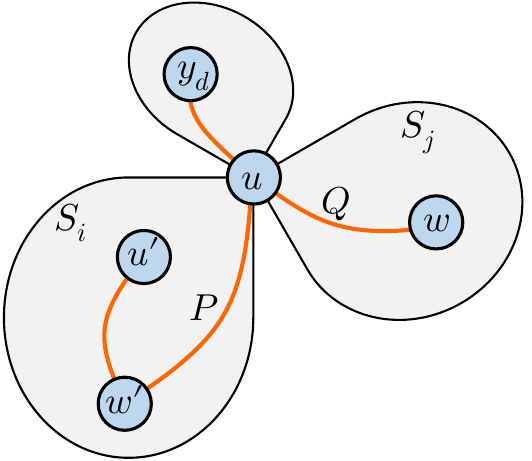} \\
        (a) Proof of \cref{thm:proof_dsswl_cut_vertex_3} & (b) Proof of  \cref{thm:proof_dsswl_cut_vertex_4}
    \end{tabular}
    \caption{Several illustrations to help understand the lemmas.}
    \label{fig:proof_dsswl}
\end{figure}

\begin{lemma}
\label{thm:proof_dsswl_cut_vertex_4}
For all $u'\in\chi_G^{-1}(\chi_G(u))$, $u'$ it is a cut vertex of $G$.
\end{lemma}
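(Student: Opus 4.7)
The plan is to argue by contradiction, combining the joint WL-condition for $\chi_G^u$ and $\chi_G^{u'}$ (given by \cref{thm:dsswl_wl_condition}) with path-lifting via \cref{thm:wl_condition}. Assume some $u'\in\chi_G^{-1}(\chi_G(u))$ is not a cut vertex; then $u'\neq u$, and \cref{thm:proof_dsswl_cut_vertex_3} forces $u'\in\gS_1$, the unique component of $G-u$ meeting $\chi_G^{-1}(\chi_G(u))$. By \cref{thm:proof_dsswl_key}(d), $\chi_G^u(u)=\chi_G^{u'}(u')=:c^*$, and by \cref{thm:proof_dsswl_key}(a) this color singles out $u$ under marking $u$ and $u'$ under marking $u'$.

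The core construction is a two-step lifting. First, I would pick any neighbor $v_1$ of $u$ lying in some $\gS_j$ with $j\neq 1$; applying \cref{thm:wl_condition} to the edge $(u,v_1)$ yields a neighbor $w_1$ of $u'$ with $\chi_G^{u'}(w_1)=\chi_G^u(v_1)$. Second, since $G-u'$ is connected (as $u'$ is assumed not a cut vertex), there exists a path $(w_1=p_0,p_1,\ldots,p_l=u)$ inside $G-u'$; none of these vertices carry $\chi_G^{u'}$-color $c^*$. Concatenating and lifting the resulting walk from $u'$ to $u$ back via \cref{thm:wl_condition} produces a walk $(u,\tilde w_1,\tilde p_1,\ldots,\tilde p_l)$ in $G$ whose internal $\chi_G^u$-colors all avoid $c^*$, so the walk never revisits $u$.

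Next, note that $\chi_G^u(\tilde p_l)=\chi_G^{u'}(u)\neq c^*$, so by \cref{thm:proof_dsswl_key}(b) and $\tilde p_l\neq u$, together with \cref{thm:proof_dsswl_cut_vertex_3}, I obtain $\tilde p_l\in\gS_1$. Since the walk from $\tilde w_1$ onward avoids $u$, it must stay inside one component of $G-u$, and since its endpoint lies in $\gS_1$, the whole segment lives in $\gS_1$; in particular $\tilde w_1\in\gS_1$. But $\chi_G^u(\tilde w_1)=\chi_G^u(v_1)$, so the color of the neighbor $v_1\in\gS_j$ ($j\neq 1$) is forced to appear as a neighbor-color of $u$ in $\gS_1$ as well.

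The main obstacle is converting this color-set inclusion into the desired contradiction. If some neighbor-color of $u$ in $\bigcup_{j\neq 1}\gS_j$ is absent from $\gS_1$, the contradiction is immediate. The delicate case is when components share all neighbor-colors. I would handle it by extending the lifting above to arbitrary $v\in\gS_j$ (the same argument shows every $\chi_G^u$-color that appears in $\gS_j$ also appears in $\gS_1$) and by running the symmetric argument with the roles of $u$ and $u'$ exchanged. Combined with \cref{thm:proof_dsswl_cut_vertex_1,thm:proof_dsswl_cut_vertex_2}, the resulting multiset matchings should force an impossible distribution of $\chi_G^u$-colors across the components of $G-u$, closing the contradiction and completing the proof.
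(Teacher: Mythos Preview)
Your setup and tools are exactly right, and the argument is very close to the paper's. The gap is in one specific choice: where you anchor the second lifting. You lift the concatenated walk $(u',w_1,p_1,\ldots,p_l=u)$ starting at $u$, which inevitably drops the lifted walk into $\gS_1$ (since its endpoint $\tilde p_l$ has $\chi_G(\tilde p_l)=\chi_G(u)$ and the walk avoids $u$). That yields only the color-set inclusion you noted, not a contradiction; and your final paragraph's plan---extending to all of $\gS_j$, running a ``symmetric'' argument with $u$ and $u'$ swapped, and invoking \cref{thm:proof_dsswl_cut_vertex_1,thm:proof_dsswl_cut_vertex_2} for unspecified multiset matchings---is not concrete. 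In particular, the symmetric argument cannot be run as stated because $u$ \emph{is} a cut vertex while $u'$ is assumed not to be, so there is no parallel component structure on the $u'$-side to exploit.

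The one-line fix (and this is precisely what the paper does) is to anchor the lifting at $v_1\in\gS_j$ rather than at $u$. You already have $\chi_G^u(v_1)=\chi_G^{u'}(w_1)$; applying \cref{thm:wl_condition} to the path $(w_1,p_1,\ldots,p_l=u)$ with starting point $v_1$ produces a walk $(v_1,\tilde p_1,\ldots,\tilde p_l)$ in $G$ whose $\chi_G^u$-colors all avoid $c^*$, hence the walk never touches $u$. Since it starts at $v_1\in\gS_j$ and stays in $G-u$, it lies entirely in $\gS_j$. But $\chi_G^u(\tilde p_l)=\chi_G^{u'}(u)$, so by \cref{thm:proof_dsswl_key}(b) $\tilde p_l\in\chi_G^{-1}(\chi_G(u))$, contradicting $\gS_j\cap\chi_G^{-1}(\chi_G(u))=\emptyset$ from \cref{thm:proof_dsswl_cut_vertex_3}. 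The paper's proof is essentially this, with $v_1$ replaced by an arbitrary $w\in\gS_j$ (no need to pick a neighbor of $u$).
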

\begin{proof}
When $|\chi_G^{-1}(\chi_G(u))|=1$, the conclusion clearly holds. Now assume $|\chi_G^{-1}(\chi_G(u))|>1$. Since $u$ is a cut vertex in $G$, by \cref{thm:proof_dsswl_cut_vertex_3}, there exists a set $\gS_j$ such that $\gS_j\cap\chi_G^{-1}(\chi_G(u))=\emptyset$. Pick any node $w\in\gS_j$, then $\chi_G(w)\neq \chi_G(u)$. Let $u'\neq u$ be any node with color $\chi_G(u)=\chi_G(u')$. It follows that $\chi_G^u(u)=\chi_G^{u'}(u')$ by \cref{thm:proof_dsswl_key}(d). Based on the WL-condition of the mappings $\chi_G^u$ and $\chi_G^{u'}$, by \cref{thm:wl_condition} there exists a node $w'$ with color $\chi_G^{u'}(w')=\chi_G^u(w)$ (because there is a path from node $u$ to $w$). See \cref{fig:proof_dsswl}(b) for an illustration of this paragraph.

Suppose $u'$ is not a cut vertex. Then there is a path $P$ from $w'$ to $u$ without going through node $u'$. Denote $P=(x_0,\cdots,x_d)$ where $x_0=w'$ and $x_d=u$. It follows that $\chi_G^{u'}(x_i)\neq\chi_G^{u'}(u')$ for all $i\in[d]$ (by \cref{thm:proof_dsswl_key}(a)). Again by using the WL-condition, there exists a path $Q=(y_0,\cdots,y_d)$ satisfying $y_0=w$ and $\chi_G^u(y_i)=\chi_G^{u'}(x_i)$ for all $i\in[d]$. In particular, $\chi_G^u(y_d)=\chi_G^{u'}(u)$, which implies $\chi_G(y_d)=\chi_G(u)$ by using \cref{thm:proof_dsswl_key}(b). By the definition of $w$ and \cref{thm:proof_dsswl_cut_vertex_3}, any path from $w$ to $y_d\in\chi_G^{-1}(\chi_G(u))$ must go through node $u$, implying that $\chi_G^u(y_i)=\chi_G^u(u)$ for some $i\in[d]$. However, we have proved that $\chi_G^u(y_i)=\chi_G^{u'}(x_i)\neq\chi_G^{u'}(u')=\chi_G^u(u)$, yielding a contradiction. Therefore, $u'$ is a cut vertex.
\end{proof}

Using a similar proof technique as the one in \cref{thm:proof_dsswl_cut_vertex_4}, we can prove the first part of \cref{thm:dsswl}. Suppose $u'\in\chi_H^{-1}(\chi_G(u))$ and we want to prove that $u'$ is a cut vertex of graph $H$. Observe that $|\chi_G^{-1}(\chi_G(u))|=|\chi_H^{-1}(\chi_H(u))|$. (A simple proof is as follows: $\chi_G(u)=\chi_H(u')$ implies $\chi_G^u(u)=\chi_H^{u'}(u')$ by \cref{thm:proof_dsswl_key}(d), and thus using \cref{thm:proof_dsswl_cut_vertex_1} we have $\ldblbrace \chi_G^u(w):w\in\gV\rdblbrace=\ldblbrace \chi_H^{u'}(w):w\in\gV\rdblbrace$ and finally obtain $\ldblbrace \chi_G(w):w\in\gV\rdblbrace=\ldblbrace \chi_H(w):w\in\gV\rdblbrace$ by \cref{thm:proof_dsswl_key}(b).)

\begin{figure}[t]
    \centering
    \small
    \setlength\tabcolsep{5pt}
    \begin{tabular}{cc}
        \includegraphics[height=10em]{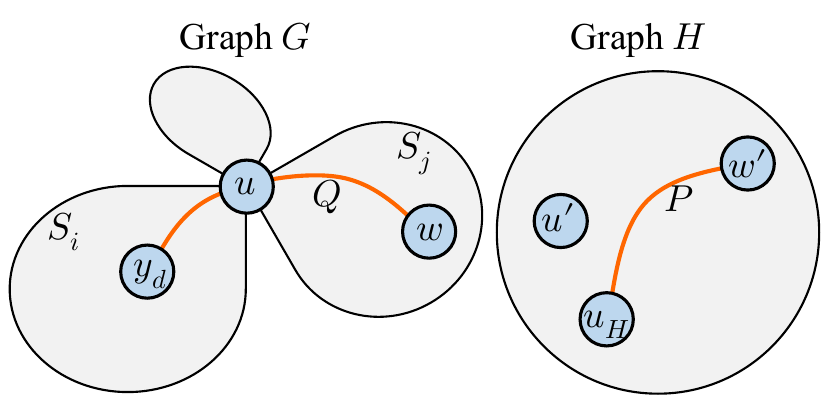} & \includegraphics[height=10em]{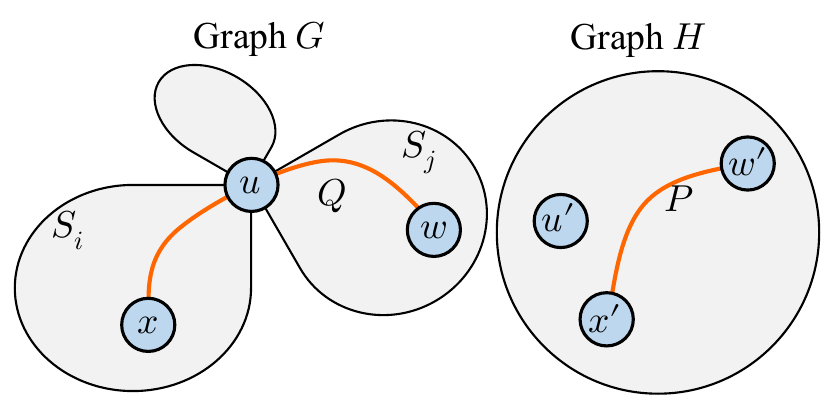} \\
        (a) Proof of the main theorem ($|\chi_G^{-1}(\chi_G(u))|>1$) & (b) Proof of the main theorem ($|\chi_G^{-1}(\chi_G(u))|=1$)
    \end{tabular}
    \caption{Several illustrations to help understand the main proof of \cref{thm:dsswl}.}
    \label{fig:proof_dsswl_main}
\end{figure}

We first consider the case when $|\chi_G^{-1}(\chi_G(u))|=|\chi_H^{-1}(\chi_H(u))|>1$. Following the above proof, we can similarly pick $w\in\gS_j$ in $G$ and $w'$ in $H$ satisfying $\chi_G(w)\neq\chi_G(u)$ and $\chi_H^{u'}(w')=\chi_G^u(w)$. Since $|\chi_G^{-1}(\chi_G(u))|>1$, we can pick a node $u_H\in\chi_H^{-1}(\chi_G(u))$ in $H$ such that $u_H\neq u'$. If $u'$ is not a cut vertex, then there is a path $P=(x_0,\cdots,x_d)$ in $H$ where $x_0=w'$ and $x_d=u_H$, such that $\chi_H^{u'}(x_i)\neq\chi_H^{u'}(u')$ for all $i\in[d]$ (by \cref{thm:proof_dsswl_key}(a)). Using the WL-condition, there exists a path $Q=(y_0,\cdots,y_d)$ in $G$ satisfying $y_0=w$ and $\chi_G^u(y_i)=\chi_H^{u'}(x_i)$ for all $i\in[d]$. In particular, $\chi_G^u(y_d)=\chi_H^{u'}(u_H)$, which implies $\chi_G(y_d)=\chi_G(u_H)$ by using \cref{thm:proof_dsswl_key}(b). However, any path from $w$ to $y_d\in\chi_G^{-1}(\chi_G(u))$ must go through node $u$, implying that $\chi_G^u(y_i)=\chi_G^u(u)$ for some $i\in[d]$. This yields a contradiction because $\chi_G^u(y_i)=\chi_H^{u'}(x_i)\neq\chi_H^{u'}(u')=\chi_G^u(u)$. See \cref{fig:proof_dsswl_main}(a) for an illustration of this paragraph.

We finally consider the case when $|\chi_G^{-1}(\chi_G(u))|=|\chi_H^{-1}(\chi_H(u))|=1$. Let $w\in \gS_1$ and $x\in \gS_2$ be two nodes in $G$ that belongs to different connected components when removing node $u$, then $\chi_G(w)\neq\chi_G(u)$ and $\chi_G(x)\neq \chi_G(u)$. Since $\chi_G(u)=\chi_H(u')$, by the WL-condition (\cref{thm:wl_condition}) there is a node $w'\in\chi_H^{-1}(\chi_G(w))$ in $H$. Consequently, $\chi_G^{w}(w)=\chi_H^{w'}(w')$ (\cref{thm:proof_dsswl_key}(d)). Again by the WL-condition, there is a node $x'\in(\chi_H^{w'})^{-1}(\chi_G^w(x))$ in $H$. Clearly, $w'\neq u'$ and $x'\neq u'$ (because they have different colors). If $u'$ is not a cut vertex, then there is path $P=(y_0,\cdots,y_d)$ in $H$ such that $y_0=x'$, $y_d=w'$ and $y_i\neq u'$ for all $i\in[d]$. It follows that for all $i\in[d]$, $\chi_H(y_i)\neq\chi_H(u')$ by our assumption $|\chi_H^{-1}(\chi_H(u))|=1$, and thus $\chi_H^{w'}(y_i)\neq\chi_H^{w'}(u')$ (by \cref{thm:proof_dsswl_key}(b)). Since $\chi_G^w(x)=\chi_H^{w'}(x')$, by the WL-condition (\cref{thm:wl_condition}), there is a path $Q=(z_0,\cdots,z_d)$ in $G$ satisfying $z_0=x$ and $z_i\in(\chi_G^{w})^{-1}(\chi_H^{w'}(y_i))$ for $i\in[d]$. See \cref{fig:proof_dsswl_main}(b) for an illustration of this paragraph.

Clearly, we have $z_d=w$ using $\chi_G^w(z_d)=\chi_H^{w'}(w')$ and \cref{thm:proof_dsswl_key}(a). On the other hand, by \cref{thm:proof_dsswl_key}(b), $\chi_G^w(z_i)=\chi_H^{w'}(y_i)$ implies $\chi_G(z_i)=\chi_H(y_i)$ and thus $\chi_G(z_i)\neq \chi_H(u')=\chi_G(u)$ holds for all $i\in[d]$ and thus $z_i\neq u$. In other words, we have found a path from $x$ to $w$ without going through node $u$, which yields a contradiction as $u$ is a cut vertex. We have thus finished the proof.

\subsubsection{Proof for the second part of \cref{thm:dsswl}}
\label{sec:proof_dsswl_cut_edge}
The proof is based on the following key result:
\begin{corollary}
\label{thm:proof_dsswl_cut_edge_1}
Let $w$ and $x$ be two nodes in connected graph $G$ with the same DSS-WL color, i.e. $\chi_G(w)=\chi_G(x)$. Then for any color $c\in\gC$, $$\ldblbrace\dis_G(w,v):v\in\chi_G^{-1}(c)\rdblbrace=\ldblbrace\dis_G(x,v):v\in\chi_G^{-1}(c)\rdblbrace.$$
\end{corollary}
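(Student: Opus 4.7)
The plan is to reduce the statement to the multiset equality for the marked colorings already established in \cref{thm:proof_dsswl_cut_vertex_1}, and then extract the distance information via \cref{thm:proof_dsswl_key}(e) together with the color-preservation \cref{thm:proof_dsswl_key}(b). Since $\chi_G(w)=\chi_G(x)$, \cref{thm:proof_dsswl_key}(d) immediately gives $\chi_G^w(w)=\chi_G^x(x)$. Applying \cref{thm:proof_dsswl_cut_vertex_1} with $H=G$, $u=w$, $v=x$ therefore yields
\begin{equation*}
\ldblbrace \chi_G^w(y):y\in\gV\rdblbrace=\ldblbrace \chi_G^x(y):y\in\gV\rdblbrace.
\end{equation*}

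Next, I would fix a bijection $\pi:\gV\to\gV$ witnessing this multiset equality, so that $\chi_G^w(y)=\chi_G^x(\pi(y))$ for every $y\in\gV$. For any such $y$, two consequences follow immediately: by \cref{thm:proof_dsswl_key}(e) we have $\dis_G(w,y)=\dis_G(x,\pi(y))$, and by \cref{thm:proof_dsswl_key}(b) we have $\chi_G(y)=\chi_G(\pi(y))$. The latter means that $\pi$ preserves the (unmarked) DSS-WL color, so for every color $c\in\gC$ the map $\pi$ restricts to a bijection $\chi_G^{-1}(c)\to\chi_G^{-1}(c)$.

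Finally, rewriting the target multiset along this bijection gives
\begin{equation*}
\ldblbrace\dis_G(w,v):v\in\chi_G^{-1}(c)\rdblbrace=\ldblbrace\dis_G(x,\pi(v)):v\in\chi_G^{-1}(c)\rdblbrace=\ldblbrace\dis_G(x,v):v\in\chi_G^{-1}(c)\rdblbrace,
\end{equation*}
which is exactly the desired conclusion. I do not anticipate a real obstacle here since all the heavy lifting, in particular the matching of marked colors across the full vertex set and the fact that equal marked colors force equal distances to the marked node, has already been carried out in \cref{thm:proof_dsswl_key} and \cref{thm:proof_dsswl_cut_vertex_1}; the only subtlety to be careful about is that $\pi$ genuinely stabilizes each color class $\chi_G^{-1}(c)$, which is guaranteed by item (b) of \cref{thm:proof_dsswl_key}.
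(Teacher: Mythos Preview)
Your proposal is correct and takes essentially the same approach as the paper. The paper routes through the intermediate \cref{thm:proof_dsswl_cut_vertex_2} (which establishes $\ldblbrace\chi_G^w(v):v\in\chi_G^{-1}(c)\rdblbrace=\ldblbrace\chi_G^x(v):v\in\chi_G^{-1}(c)\rdblbrace$) and then applies \cref{thm:proof_dsswl_key}(e), whereas you inline the content of that corollary by constructing the bijection $\pi$ explicitly and checking via \cref{thm:proof_dsswl_key}(b) that it stabilizes each color class; the underlying logic is identical.
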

\begin{proof}
By \cref{thm:proof_dsswl_cut_vertex_2}, we have $\ldblbrace\chi_G^w(v):v\in\chi_G^{-1}(c)\rdblbrace=\ldblbrace\chi_G^x(v):v\in\chi_G^{-1}(c)\rdblbrace$. Since for any nodes $u,v$, $\chi_G^w(u)=\chi_G^x(v)$ implies $\dis_G(u,w)=\dis_G(v,x)$ (by \cref{thm:proof_dsswl_key}(e)), we have obtained the desired conclusion.
\end{proof}
Equivalently, the above corollary says that if $\chi_G(w)=\chi_G(x)$, then the following two multisets are equivalent:
$$\ldblbrace(\dis_G(w,v),\chi_G(v)):v\in\gV\rdblbrace=\ldblbrace(\dis_G(x,v),\chi_G(v)):v\in\gV\rdblbrace.$$
Therefore, it guarantees that the vertex partition induced by the DSS-WL color mapping is \emph{finer} than that of the SPD-WL (\cref{alg:gdwl} with $d_G=\dis_G$). We can thus invoke \cref{thm:spdwl}, which directly concludes the proof (due to \cref{thm:finer_partition}).

\subsection{Proof of \cref{thm:spdwl}}
\label{sec:proof_spdwl}
\begin{theorem}
Let $G=(\gV,\gE_G)$ and $H=(\gV,\gE_H)$ be two graphs, and let $\chi_G$ and $\chi_H$ be the corresponding SPD-WL color mapping. Then the following holds:
\begin{itemize}[topsep=0pt,leftmargin=30pt]
\setlength{\itemsep}{0pt}
    \item For any two edges $\{w_1,w_2\}\in\gE_G$ and $\{x_1,x_2\}\in\gE_H$, if $\ldblbrace \chi_G(w_1),\chi_G(w_2)\rdblbrace=\ldblbrace \chi_H(x_1),\chi_H(x_2)\rdblbrace$, then $\{w_1,w_2\}$ is a cut edge if and only if $\{x_1,x_2\}$ is a cut edge.
    \item If the graph representations of $G$ and $H$ are the same under SPD-WL, then their block cut-edge trees (\cref{def:bcetree}) are isomorphic. Mathematically, $\ldblbrace \chi_G(w):w\in\gV\rdblbrace=\ldblbrace \chi_H(w):w\in\gV\rdblbrace$ implies that $\operatorname{BCETree}(G)\simeq\operatorname{BCETree}(H)$.
\end{itemize}
\end{theorem}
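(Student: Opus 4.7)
The plan rests on a single workhorse identity: because SPD-WL's update at $v$ hashes the multiset $\ldblbrace(\dis_G(v,u),\chi_G^{t-1}(u)):u\in\gV_G\rdblbrace$, a stable equality $\chi_G(u)=\chi_H(v)$ forces, for every color $c$, the distance-profile identity $\ldblbrace\dis_G(u,w):w\in\chi_G^{-1}(c)\rdblbrace=\ldblbrace\dis_H(v,w):w\in\chi_H^{-1}(c)\rdblbrace$. Combined with the WL-condition for SPD-WL (\cref{thm:spdwl_wl_condition}) and the path-lifting lemma (\cref{thm:wl_condition}), this supplies all the cross-graph structural transport I will need, without ever invoking the WL iteration directly.

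For the cut-edge bullet I pair endpoints as $\chi_H(x_i)=\chi_G(w_i)$ and split on whether the endpoint colors are distinct. When $\chi_G(w_1)\neq\chi_G(w_2)$, the cut-edge hypothesis forces $\dis_G(u,w_1)-\dis_G(u,w_2)\in\{-1,+1\}$ for every $u\in\gV_G$, so the nodes partition cleanly into a $w_1$-side $V_1$ and a $w_2$-side $V_2$; the distance-profile identity then matches $|V_1\cap\chi_G^{-1}(c)|$ with its $H$-analogue at $x_1$ for every color $c$. Via path-lifting, any hypothetical alternative $x_1$-to-$x_2$ path in $H$ must traverse an edge whose endpoints lie on opposite sides of this bipartition; lifting that edge back to $G$ produces an alternative $w_1$-to-$w_2$ path, contradicting the cut-edge assumption. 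Packaging this cleanly is where the paper's auxiliary color graph (vertices are colors, edges record cross-color adjacencies in the concrete graph) enters, reducing the question to a cut-edge statement in a much smaller invariant.

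The symmetric case $\chi_G(w_1)=\chi_G(w_2)$ is the main obstacle, because the signed distance bipartition collapses. My plan is to restrict attention to the edge set $E_c=\{\{u,v\}\in\gE_G:\chi_G(u)=\chi_G(v)=c\}$ with $c=\chi_G(w_1)$, and to use a second auxiliary graph on $\chi_G^{-1}(c)$ with edge set $E_c$ to show that SPD-WL forces enough regularity (transitivity of colors, matched distance profiles between same-colored endpoints) that either every edge in $E_c$ is a cut edge of $G$ or none is. The matched graph representation yields the same dichotomy in $H$ with the same outcome; any local mismatch would produce a color-profile discrepancy detectable by SPD-WL. Verifying this regularity claim and ruling out a mixed scenario is the most delicate step, and I expect to need a path-lifting argument that uses both endpoints of the ambiguous edge simultaneously, together with a parity or cycle-length analysis to detect an alternative path.

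For the block-cut-edge-tree bullet I bootstrap on the cut-edge bullet. Since cut-edge status is a function of the endpoint color pair, matching the graph representations automatically matches the multisets of cut-edge and non-cut-edge color pairs; removing the cut edges decomposes $G$ and $H$ into edge-biconnected components whose internal color multisets are globally matched. I will then construct a BCETree isomorphism inductively: match a leaf of $\operatorname{BCETree}(G)$ with a leaf of $\operatorname{BCETree}(H)$ sharing the same signature (node-color multiset together with the color pair of the attaching cut edge), peel them off, and apply SPD-WL invariance to the residual. The main hurdle here is a bookkeeping one: when several components share the same signature, any local leaf-matching must extend to a globally consistent tree isomorphism. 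I plan to resolve this by showing that the distance profile at an attaching cut vertex already encodes the full node-color multiset of the attached subtree, so local matching choices are in fact forced and compose consistently along the tree.
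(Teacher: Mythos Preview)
Your high-level decomposition matches the paper's: split on whether the endpoint colors coincide, handle each case with an auxiliary graph, then bootstrap to the BCETree. The Case~1 outline (distinct endpoint colors) is aligned with the paper, including the color-graph idea (\cref{def:color_graph}); the paper fills in the steps you sketch via Lemmas~\ref{thm:spdwl_case1_1}--\ref{thm:spdwl_case1_5}, whose proofs are more delicate than your outline suggests (in particular, extracting the side-counts $|V_1\cap\chi_G^{-1}(c)|$ from the two separate distance profiles at $w_1$ and $w_2$ is not immediate, and the paper instead first proves that one side of the cut contains \emph{exactly one} vertex of the two endpoint colors).

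The genuine gap is Case~2 ($\chi_G(w_1)=\chi_G(w_2)=c$). Your auxiliary graph on $\chi_G^{-1}(c)$ with edge set $E_c$ throws away exactly the information you need: it sees only the edges inside the color class and none of the global distance structure to other colors. The paper's mechanism is different and is the missing idea. It keeps the ``which endpoint is closer'' bipartition $h_G:\gV\to\{u,v\}$ (well-defined because $\{u,v\}$ is a cut edge) but tracks it \emph{for every node}, not just those of color $c$, via $f_G(w)=(h_G(w),\chi_G(w))$, and builds an auxiliary graph $G^{\mathrm A}$ on $\{u,v\}\times\gC$ (\cref{def:aux_graph}). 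The crucial lemmas then show $f_G$ is surjective, that $|f_G^{-1}(u,\chi_G(u))|=|f_G^{-1}(v,\chi_G(v))|=1$ (so in fact $|\chi_G^{-1}(c)|=2$, which makes your ``all-or-none'' dichotomy trivially true but for a reason your outline does not supply), and that $|f_G^{-1}(u,c')|=|f_G^{-1}(v,c')|$ for every color $c'$ (\cref{thm:spdwl_case2_6}). This symmetry is what transfers to $H$: one defines $h_H$ analogously and shows that the only edge of $H^{\mathrm A}$ crossing the $u$/$v$ bipartition is $\ldblbrace(u,c),(v,c)\rdblbrace$, forcing $\{x_1,x_2\}$ to be a cut edge. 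Your ``parity or cycle-length analysis'' gesture does not substitute for this; without the $\{u,v\}\times\gC$ auxiliary graph and the balanced-count lemma, there is no clear route from ``$\{w_1,w_2\}$ is a cut edge in $G$'' to ``$\{x_1,x_2\}$ is a cut edge in $H$'' in the equal-color case.

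Your BCETree plan is close in spirit to the paper's (\cref{sec:proof_spdwl_general}): the paper also builds the tree from the leaves inward, but rather than a leaf-peeling induction it writes down an explicit formula $\gT(c_u,c_v)$ for the multiset of cut-vertex colors on each side of a given cut edge, computed purely from the distance multisets $\gD(c_1,c_2)$, which sidesteps the consistency bookkeeping you flag.
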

\emph{Proof Sketch}.
The proof of \cref{thm:spdwl} is highly non-trivial and is divided into three parts (presented in \cref{sec:proof_spdwl_case1,sec:proof_spdwl_case2,sec:proof_spdwl_general}, respectively). We first consider the special setting when both $G$ and $H$ are connected and $\ldblbrace \chi_G(w):w\in\gV\rdblbrace=\ldblbrace \chi_H(w):w\in\gV\rdblbrace$. Assume $G$ is not edge-biconnected, and let $\{u,v\}\in\gE_G $ be a cut edge in $G$. We separately consider two cases: $\chi_G(u)\neq \chi_G(v)$ (\cref{sec:proof_spdwl_case1}) and $\chi_G(u)= \chi_G(v)$ (\cref{sec:proof_spdwl_case2}), and prove that any edge $\{u',v'\}\in\gE_H$ satisfying $\ldblbrace\chi_G(u),\chi_G(v)\rdblbrace=\ldblbrace\chi_H(u'),\chi_H(v')\rdblbrace$ is also a cut edge of $H$. This basically finishes the proof of the first bullet in the theorem. Finally, we consider the general setting where graphs $G,H$ can be disconnected and their representation is not the same in \cref{sec:proof_spdwl_general}, and complete the proof of \cref{thm:spdwl}.

Without abuse of notation, throughout \cref{sec:proof_spdwl_case1,sec:proof_spdwl_case2} we redefine the color set $\gC:=\{ \chi_G(w):w\in\gV\}=\{ \chi_H(w):w\in\gV\}$ to focus only on colors that are present in $G$ (or $H$), rather than all (irrelevant) colors in the range of a hash function. 

\subsubsection{The case of \texorpdfstring{$\chi_G(u)\neq \chi_G(v)$}{χG(u)=χG(v)} for connected graphs}
\label{sec:proof_spdwl_case1}
We first define several notations. Throughout this case, denote $\{\gS_u,\gS_v\}$ as the partition of $\gV$, representing the two connected components after removing the edge $\{u,v\}$ such that $u\in \gS_u$, $v\in \gS_v$, $\gS_u\cap\gS_v=\emptyset$ and $\gS_u\cup\gS_v=\gV$. We then define an important concept called the color graph.

\begin{definition}
\label{def:color_graph}
\normalfont (Color graph) Define the auxiliary color graph $G^\mathrm{C}=(\gC,\gE_{G^\mathrm{C}})$ where $\gE_{G^\mathrm{C}}=\{\ldblbrace \chi_G(w),\chi_G(x)\rdblbrace:\{w,x\}\in E_G\}$. Note that $G^\mathrm{C}$ can have self loops, so each edge is denoted as a multiset with two elements.
\end{definition}

\begin{lemma}
\label{thm:spdwl_case1_1}
Let $\gS=\chi_G^{-1}(\chi_G(u))\cup \chi_G^{-1}(\chi_G(v))$ be the set containing vertices with color $\chi_G(u)$ or $\chi_G(v)$. Then either $\gS\cap \gS_u=\{u\}$ or $\gS\cap \gS_v=\{v\}$.
\end{lemma}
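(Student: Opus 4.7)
I would argue by contradiction, partitioning each of the two color classes by which side of the cut edge it lies on. Let $A = \chi_G^{-1}(\chi_G(u)) \cap \gS_u$, $A' = \chi_G^{-1}(\chi_G(u)) \cap \gS_v$, $B = \chi_G^{-1}(\chi_G(v)) \cap \gS_u$, and $B' = \chi_G^{-1}(\chi_G(v)) \cap \gS_v$, so $u \in A$ and $v \in B'$. The negation of the claim reads ``$A \neq \{u\}$ or $B \neq \emptyset$'' \emph{and} ``$A' \neq \emptyset$ or $B' \neq \{v\}$'', which gives four sub-cases to kill.

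The key tool is that removing the cut edge gives the decomposition $\dis_G(a,b) = \dis_{G[\gS_u]}(a,u) + 1 + \dis_{G[\gS_v]}(v,b)$ for $a\in\gS_u, b\in\gS_v$, while distances between vertices on the same side are computed entirely inside that side. Combined with the stability of SPD-WL---namely that the multiset $\ldblbrace \dis_G(u',w): w \in \chi_G^{-1}(c)\rdblbrace$ is constant over $u' \in \chi_G^{-1}(\chi_G(u))$ for every color $c$---I extract two invariants: (I) every color-$\chi_G(u)$ vertex has the same number $N$ of color-$\chi_G(v)$ neighbors (from the distance-$1$ count); (II) every color-$\chi_G(u)$ vertex has the same eccentricity $f$ restricted to the class $\chi_G^{-1}(\chi_G(u))$ (from the maximum of the multiset).

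Using (II) I plan to prove the key implication $A' \neq \emptyset \Rightarrow A = \{u\}$ (and dually $B \neq \emptyset \Rightarrow B' = \{v\}$). Choose $u^* \in A'$ maximising $\dis_{G[\gS_v]}(u^*,v)$. The decomposition gives $f \geq \dis_{G[\gS_v]}(u^*,v) + 1 + \alpha$, where $\alpha := \max_{w \in A}\dis_{G[\gS_u]}(u,w)$, while at $u$ itself the eccentricity evaluates to $\max(\alpha,\; 1+\dis_{G[\gS_v]}(u^*,v))$ by the same choice of $u^*$. Equating these and using $\dis_{G[\gS_v]}(u^*,v)\geq 1$ (since $u^*\neq v$) forces $\alpha = 0$, i.e.\ $A = \{u\}$.

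This implication directly kills the sub-cases ``$A \neq \{u\}$ and $A' \neq \emptyset$'' and ``$B \neq \emptyset$ and $B' \neq \{v\}$''. For the remaining two I would invoke invariant (I). If $A \neq \{u\}$ and $B' \neq \{v\}$, then the dual implications give $A' = \emptyset$ and $B = \emptyset$, so any $u' \in A\setminus\{u\}$ has all its neighbors in $\gS_u$ and hence no color-$\chi_G(v)$ neighbor, contradicting that $u$ has $v$ as such a neighbor. If $B \neq \emptyset$ and $A' \neq \emptyset$, then the key implication forces $A = \{u\}$ and $B' = \{v\}$; but then every $v' \in B$ must be adjacent to $u$ (its only possible color-$\chi_G(u)$ neighbor), yielding $N = 1+|B|\geq 2$ computed at $u$, whereas any $u^* \in A'$ has at most $|B'| = 1$ color-$\chi_G(v)$ neighbor, again violating (I). The main obstacle I anticipate is the extremal eccentricity computation in paragraph three: the bookkeeping is delicate, and one must exploit both the cut-edge decomposition and the extremal choice of $u^*$ so that the ``cross-the-cut-twice'' path genuinely dominates and tightens the inequality to $\alpha = 0$.
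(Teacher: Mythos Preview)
Your argument is correct and rests on the same two invariants the paper uses—constancy of the within-color eccentricity (your (II)) and of the cross-color neighbour count (your (I))—together with the cut-edge distance decomposition. The paper organises the contradiction differently (first showing $A'\neq\emptyset$ and $B\neq\emptyset$ via neighbour-count arguments, then WLOG reducing to $A'\neq\emptyset\wedge|A|>1$ and picking $\tilde u=\arg\max_{w\in A\cup A'}\dis_G(u,w)$ for a single eccentricity contradiction), but your packaging into the reusable implication $A'\neq\emptyset\Rightarrow A=\{u\}$ followed by a four-case cleanup is essentially the same route.
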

\begin{proof}
Assume the lemma does not hold, i.e. $|\gS\cap \gS_u|>1$ and $|\gS\cap \gS_v|>1$. We first prove that $\chi_G^{-1}(\chi_G(u))\cap \gS_v\neq \emptyset$ and $\chi_G^{-1}(\chi_G(v))\cap \gS_u\neq \emptyset$. By symmetry, we only need to prove the former. Suppose $\chi_G^{-1}(\chi_G(u))\cap\gS_v=\emptyset$, then $(\chi_G^{-1}(\chi_G(v))\cap \gS_v)\backslash \{v\}\neq\emptyset$ (because $|\gS\cap \gS_v|>1$), and thus there exists $v'\in \gS_v$, $v'\neq v$ such that $\chi_G(v')=\chi_G(v)$. Note that $v'$ must connect to a node $u'$ with $\chi_G(u')=\chi_G(u)$. Since $\{u,v\}$ is a cut edge in $G$, $u'\in \gS_v$. Therefore, $\chi_G^{-1}(\chi_G(u))\cap \gS_v\neq \emptyset$, yielding a contradiction. This paragraph is illustrated in \cref{fig:proof_spdwl_1}(a).

We next prove that at least one of the following two conditions holds (which are symmetric): $(\mathrm{i})$ $(\chi_G^{-1}(\chi_G(u))\cap \gS_u)\backslash \{u\}\neq\emptyset$; $(\mathrm{ii})$ $(\chi_G^{-1}(\chi_G(v))\cap \gS_v)\backslash \{v\}\neq\emptyset$. Based on the above paragraph, there exists $v'\in \gS_u$ satisfying $\chi_G(v')=\chi_G(v)$. Note that $v'$ must connect to a node with color $\chi_G(u)$. If condition $(\mathrm{i})$ does not hold, i.e. $\chi_G^{-1}(\chi_G(u))\cap \gS_u=\{u\}$, then $v'$ must connect to $u$. This means $|\gN_G(u)\cap \chi_G^{-1}(\chi_G(v))|\ge 2$. Again using $\chi_G^{-1}(\chi_G(u))\cap\gS_v\neq \emptyset$ (the above paragraph), we can pick such a node $u'\in\chi_G^{-1}(\chi_G(u))\cap\gS_v$. By the WL-condition (\cref{thm:spdwl_wl_condition}), $|\gN_G(u')\cap \chi_G^{-1}(\chi_G(v))|\ge 2$, which implies $|\gS_v\cap \chi_G^{-1}(\chi_G(v))|\ge 2$. Thus $(\chi_G^{-1}(\chi_G(v))\cap \gS_v)\backslash \{v\}\neq\emptyset$ holds, which is exactly the condition $(\mathrm{ii})$. This paragraph is illustrated in \cref{fig:proof_spdwl_1}(b).

Based on the above two paragraphs, by symmetry we can without loss of generality assume $\chi_G^{-1}(\chi_G(u))\cap \gS_v\neq \emptyset$ and $(\chi_G^{-1}(\chi_G(u))\cap \gS_u)\backslash\{u\}\neq \emptyset$. We are now ready to derive a contradiction. To do this, pick $\tilde u=\arg\max_{w\in\chi_G^{-1}(\chi_G(u))} \dis_G(u,w)$ and separately consider the following two cases:
\begin{itemize}[topsep=0pt,leftmargin=30pt]
\setlength{\itemsep}{0pt}
    \item $\tilde u\in \gS_u$. Then by picking a node $x\in \gS_v\cap \chi_G^{-1}(\chi_G(u))$, it follows that $\dis_G(x,\tilde u)= \dis_G(x,v)+\dis_G(u,\tilde u)+1>\dis_G(u,\tilde u)$.
    \item $\tilde u\in \gS_v$. Then by picking a node $x\in (\gS_u\cap \chi_G^{-1}(\chi_G(u)))\backslash \{u\}$, it follows that $\dis_G(x,\tilde u)\ge \dis_G(x,u)+\dis_G(u,\tilde u)>\dis_G(u,\tilde u)$. 
\end{itemize}
In both cases, $x$ and $u$ cannot have the same color under SPD-WL because $$\max_{w\in\chi_G^{-1}(\chi_G(u))} \dis_G(u,w)=\dis_G(u,\tilde u)<\dis_G(x,\tilde u)\le \max_{w\in\chi_G^{-1}(\chi_G(u))} \dis_G(x,w).$$
This yields a contradiction and concludes the proof.
\end{proof}

Based on \cref{thm:spdwl_case1_1}, in the subsequent proof we can without loss of generality assume $\chi_G^{-1}(\chi_G(u))\cap \gS_u=\{u\}$ and $ \chi_G^{-1}(\chi_G(v))\cap \gS_u=\emptyset$. This leads to the following lemma:

\begin{figure}[t]
    \centering
    \begin{minipage}[t]{0.53\textwidth}
        \vspace{0pt}
        \small
        \begin{tabular}{m{1em}m{1cm}}
        (a) & \includegraphics[height=8em]{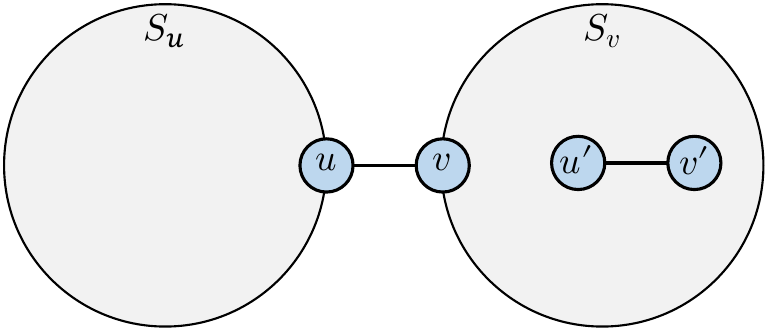}\\
        (b) & \includegraphics[height=8em]{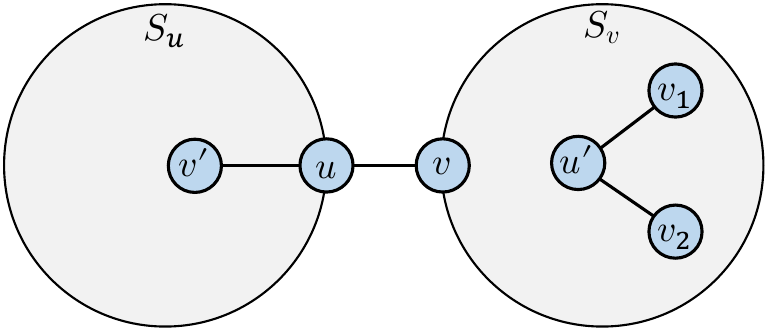}\\
        \end{tabular}
        \caption{Illustration of the proof of \cref{thm:spdwl_case1_1}.}
        \label{fig:proof_spdwl_1}
    \end{minipage}
    \hspace{10pt}
    \begin{minipage}[t]{0.4\textwidth}
        \vspace{0pt}
        \small
        \includegraphics[width=\textwidth]{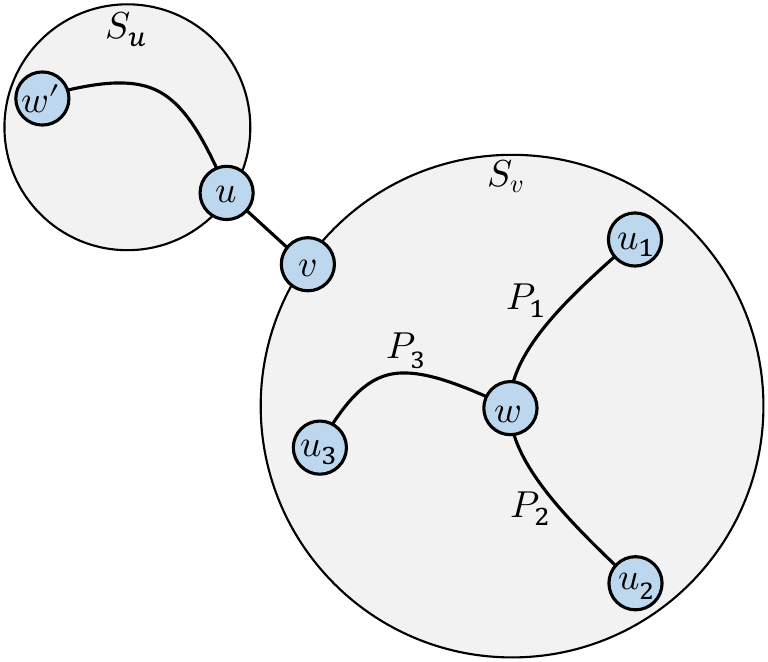}
        \caption{Illustration of the proof of \cref{thm:spdwl_case1_2}.}
        \label{fig:proof_spdwl_2}
    \end{minipage}
\end{figure}

\begin{lemma}
\label{thm:spdwl_case1_2}
For any $u_1,u_2\in \chi_G^{-1}(\chi_G(u))$, $u_1\neq u_2$, any path from $u_1$ to $u_2$ goes through a node $v'\in \chi_G^{-1}(\chi_G(v))$.
\end{lemma}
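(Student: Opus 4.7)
The plan is to argue by contradiction. Suppose a walk $P = (u_1, w_1, \ldots, w_{L-1}, u_2)$ in $G$ avoids every node of color $\chi_G(v)$. Write $A := \chi_G^{-1}(\chi_G(u))$ and $B := \chi_G^{-1}(\chi_G(v))$; by the standing assumption $A \cap \gS_u = \{u\}$ and $B \cap \gS_u = \emptyset$. First I dispose of the case where one endpoint equals $u$: if $u_1 = u$ then $u_2 \in A \cap \gS_v$, and since the only edge joining $\gS_u$ and $\gS_v$ is the cut edge $\{u,v\}$, the walk $P$ is forced through $v \in B$, contradicting $B$-avoidance (the case $u_2 = u$ is symmetric). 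Hence I may assume $u_1, u_2 \in A \cap \gS_v$ with $u_1 \ne u_2$; the same cut-edge observation confines $P$ entirely to $G[\gS_v \setminus B]$.

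The main tool is the lifting result \cref{thm:wl_condition}, which applies because SPD-WL satisfies the WL-condition by \cref{thm:spdwl_wl_condition}. Lifting the color sequence of $P$ from $u$ (legal since $\chi_G(u) = \chi_G(u_1)$) produces a walk $P' = (u, w_1', \ldots, w_L')$ with $\chi_G(w_i') = \chi_G(w_i)$. Since this color sequence avoids $\chi_G(v)$ and $u$'s only neighbor outside $\gS_u$ is $v$, an induction on step index shows $P'$ remains in $\gS_u$; combined with $A \cap \gS_u = \{u\}$ and $\chi_G(w_L') = \chi_G(u)$, this forces $P'$ to be a \emph{closed} walk at $u$. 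In parallel, the WL-condition gives each $a \in A$ a unique neighbor $\beta(a) \in B$ (transferring the unique $B$-neighbor $v$ of $u$), and applying SPD-WL equivalence of the pairs $(u, u_1)$ and $(v, \beta(u_1))$ together with a sum-preservation argument on distance multisets (in the style of the proof of \cref{thm:spdwl_case1_1}) upgrades the multiset equalities to the pointwise identity
\begin{equation*}
\dis_G(u_1, w) \;=\; 1 + \dis_G(\beta(u_1), w) \qquad \text{for all } w \in (A \cup B) \setminus \{u_1\}.
\end{equation*}

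To turn the closed-versus-open asymmetry into a contradiction, I will use the max-distance trick from \cref{thm:spdwl_case1_1}. Let $D := \max_{a \in A} \dis_G(u, a)$; SPD-WL equivalence of $u$ and $u_1$ gives $D = \max_{a \in A} \dis_G(u_1, a)$. The identity above shows that every shortest $u_1$-to-$A$ path begins with the edge $\{u_1, \beta(u_1)\}$, so the existence of the $B$-avoiding walk $P$ provides a second route from $u_1$ to $u_2$ that does not begin through $\beta(u_1)$. Concatenating $P$ with a judicious SPD-WL lift of a walk from $u$ that attains the maximum $D$, and re-transferring via SPD-WL equivalence, I will exhibit an $A$-vertex $\tilde a$ whose distance $\dis_G(u_1, \tilde a)$ exceeds $D$, contradicting the invariance of $D$.

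The main obstacle is bridging from the existence of $P$ (a combinatorial constraint on walks in $G[\gS_v \setminus B]$) to shortest-path distances in $G$ (which is what SPD-WL actually controls), since the WL lift is not unique and does not track endpoints. My plan to overcome this is to combine the rigidity on the $u$-side — every $B$-avoiding walk from $u$ is forced to return to $u$ — with the cut-edge-like identity above, which ensures $\{u_1, \beta(u_1)\}$ acts as a cut edge for all distance bookkeeping inside $A \cup B$. The tension between the forced-closed behavior on the $u$-side and the available open behavior on the $u_1$-side witnessed by $P$ is what supplies the final contradiction.
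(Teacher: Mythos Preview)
Your sum-preservation argument is valid and does yield the pointwise identity $\dis_G(u_1,a)=1+\dis_G(\beta(u_1),a)$ for all $a\in A\setminus\{u_1\}$: each difference $\dis_G(u_1,a)-\dis_G(\beta(u_1),a)$ is at most $1$ by the triangle inequality, SPD-WL transfer from the pair $(u,v)$ forces their sum over $A\setminus\{u_1\}$ to be $|A|-1$, so each equals $1$. The closed-walk observation on the $u$-side is also correct. But note the identity only says that \emph{one} shortest $u_1$--$a$ path passes through $\beta(u_1)\in B$; it does not say every shortest path does, and it says nothing whatsoever about non-shortest walks such as $P$.

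This is where the argument breaks. Your stated contradiction is to exhibit $\tilde a\in A$ with $\dis_G(u_1,\tilde a)>D$, yet $D=\max_{a\in A}\dis_G(u_1,a)$ by SPD-WL, so no such $\tilde a$ can exist no matter what you concatenate --- concatenating walks gives only upper bounds on $\dis_G$, never the lower bound you need. The ``closed vs.\ open'' walk asymmetry you describe is real, but SPD-WL sees distance multisets, not walks, and at $u$ and $u_1$ themselves the distance profiles to $A$ already agree (that is exactly what your identity records). The paper converts the walk-level asymmetry into a distance discrepancy by moving the comparison point away from the endpoints: take a $B$-avoiding $A$--$A$ path $P$ of \emph{minimal} length, let $w$ be its midpoint, and lift the first half $P_1$ to obtain $w'\in\gS_u$ with $\chi_G(w')=\chi_G(w)$. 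Minimality of $|P|$ pins $\dis_G(u,w')=|P_1|$, and then the number of $A$-vertices within distance $|P_2|$ of $w'$ is exactly $1$ (namely $u$, since every other $A$-vertex lies across the cut edge), whereas at $w$ it is at least $2$ (both endpoints of $P$). That count is an SPD-WL-visible statistic, contradicting $\chi_G(w)=\chi_G(w')$. Without a device like the midpoint that manufactures a \emph{distance-level} witness from the existence of $P$, your plan has no leverage.
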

\begin{proof}
Note that $\chi_G^{-1}(\chi_G(u))\cap \gS_u=\{u\}$. If $|\chi_G^{-1}(\chi_G(u))\cap \gS_v|\le 1$, the conclusion is clear since any path from $u_1$ to $u_2$ goes through $v$. Now suppose $|\chi_G^{-1}(\chi_G(u))\cap \gS_v|>1$ and the lemma does not hold. Then there exist two different nodes $u_1',u_2'\in \chi_G^{-1}(\chi_G(u))\cap \gS_v$ and a path $P$ from $u_1'$ to $u_2'$ without going through any node in the set $\chi_G^{-1}(\chi_G(v))$. Pick $u_1$, $u_2$ and $P$ such that the length $|P|$ is minimal. Split $P$ into two parts $P_1$ and $P_2$ with endpoints $\{u_1,w\}$ and $\{w,u_2\}$ such that $|P_1|\le |P_2|\le |P_1|+1$ and $|P_1|+|P_2|=|P|$. Note that $|P|\ge 2$ since $\{u_1,u_2\}\notin \gE_G$ (otherwise $u$ cannot have the same color as $u_1$ because $\chi_G^{-1}(u)\cap \gS_u=\{u\}$). Therefore, $w\neq u_1$ and $w\neq u_2$. Also note that $\chi_G(w)\neq \chi_G(u)$ since $|P|$ is minimal. Since SPD-WL satisfies the WL-condition (\cref{thm:spdwl_wl_condition}), there is a path (not necessarily simple) from $u$ to some $w'\in \chi_G^{-1}(\chi_G(w))$ of length $|P_1|$ without going through nodes in the set $\chi_G^{-1}(\chi_G(v))$ (according to \cref{thm:wl_condition}). Therefore, $w'\in \gS_u$. See \cref{fig:proof_spdwl_2} for an illustration of this paragraph.

We next prove that $\dis_G(u,w')=|P_1|$. First, we obviously have $\dis_G(u,w')\le |P_1|$. Moreover, since $w',u\in\gS_u$ and $\chi_G^{-1}(\chi_G(v))\cap \gS_u=\emptyset$ (\cref{thm:spdwl_case1_1}), any shortest path from $w'$ to $u$ does not go through nodes in the set $\chi_G^{-1}(\chi_G(v))$. Again using the WL-condition, there exists a path $P_3$ (not necessarily simple) from $w$ to some $u_3\in \chi_G^{-1}(\chi_G(u))$ of length $|P_3|=\dis_G(u,w')$ without going through nodes in the set $\chi_G^{-1}(\chi_G(v))$ (according to \cref{thm:wl_condition}). It follows that $u_3\in\gS_v$. Consider the following two cases:
\begin{itemize}[topsep=0pt,leftmargin=30pt]
\setlength{\itemsep}{0pt}
    \item If $u_3= u_1$, by the minimal length of $P$ we have $ |P_1|\le |P_3|=\dis_G(u,w')\le|P_1|$ and thus $\dis_G(u,w')=|P_1|$.
    \item If $u_3\neq u_1$, by linking the path $P_1$ and $P_3$, there will be a path of length $|P_1|+|P_3|$ from $u_1$ to $u_3$ without going through nodes in $\chi_G^{-1}(\chi_G(v))$. Since $P$ has the minimal length, $|P_1|+|P_2|\le |P_1|+|P_3|$. Therefore, $|P_2|\le |P_3|=\dis_G(u,w')$ and thus by definition $|P_1|\le |P_2|\le \dis_G(u,w')\le |P_1|$. Therefore, $|P_1|=|P_2|=\dis_G(u,w')$.
\end{itemize}
Now define the set $\gD(x):=\{u':u'\in\chi_G^{-1}(\chi_G(u)),\dis_G(x,u')\le |P_2|\}$. Let us focus on the cardinality of the sets $\gD(w)$ and $\gD(w')$. It follows that $\gD(w')=\{u\}$, because for any other node $u'\in\chi_G^{-1}(\chi_G(u))$, $u'\neq u$, we have $u'\in\gS_v$ and thus $$\dis_G(w',u')>\dis_G(w',v)=\dis_G(w',u)+1=|P_1|+1\ge |P_2|.$$
Therefore, $|\gD(w')|=1$. On the other hand, we clearly have $|\gD(w)|\ge 2$ since both $u_1,u_2\in \gD(w)$. Consequently, $w$ and $w'$ cannot have the same color under the SPD-WL algorithm because $|\gD(w')|\neq |\gD(w')|$. This yields a contradiction and completes the proof.
\end{proof}

The next lemma presents an important property of the color graph $G^\mathrm{C}$ (defined in \cref{def:color_graph}).

\begin{lemma}
\label{thm:spdwl_case1_3}
$G^\mathrm{C}$ has a cut edge $\ldblbrace\chi_G(u),\chi_G(v)\rdblbrace$.
\end{lemma}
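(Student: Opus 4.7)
The plan is to prove the lemma by contradiction, leveraging the WLOG reductions established in Lemmas \ref{thm:spdwl_case1_1} and \ref{thm:spdwl_case1_2} together with the lift provided by the WL-condition (Lemma \ref{thm:wl_condition}). Concretely, I assume $\ldblbrace\chi_G(u),\chi_G(v)\rdblbrace$ is \emph{not} a cut edge of $G^\mathrm{C}$ and produce a contradiction with $\chi_G^{-1}(\chi_G(v))\cap\gS_u=\emptyset$.

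First, under the negation there must exist a walk $c_0,c_1,\ldots,c_k$ in $G^\mathrm{C}$ with $c_0=\chi_G(u)$, $c_k=\chi_G(v)$, and $\ldblbrace c_i,c_{i+1}\rdblbrace\neq\ldblbrace\chi_G(u),\chi_G(v)\rdblbrace$ for every $i$. The next step is to lift this color walk to an honest walk $w_0=u,w_1,\ldots,w_k$ in $G$ with $\chi_G(w_i)=c_i$. This uses that each $\ldblbrace c_i,c_{i+1}\rdblbrace\in\gE_{G^\mathrm{C}}$ witnesses some vertex of color $c_i$ that has a neighbor of color $c_{i+1}$; by Proposition \ref{thm:spdwl_wl_condition}, SPD-WL satisfies the WL-condition, so \emph{every} vertex of color $c_i$ does, allowing $w_{i+1}$ to be chosen inductively (formally this is a direct application of Lemma \ref{thm:wl_condition} to the walk).

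Now comes the punchline. Each edge $\{w_i,w_{i+1}\}\in\gE_G$ carries the color multiset $\ldblbrace c_i,c_{i+1}\rdblbrace$, which by construction differs from $\ldblbrace\chi_G(u),\chi_G(v)\rdblbrace$. In particular none of these edges equals $\{u,v\}$. Since $\{u,v\}$ is by assumption a cut edge of $G$, it is the \emph{only} edge joining $\gS_u$ and $\gS_v$, hence a walk starting at $u\in\gS_u$ that never uses $\{u,v\}$ must remain inside $\gS_u$. Therefore $w_k\in\gS_u$, but $\chi_G(w_k)=c_k=\chi_G(v)$, contradicting the WLOG reduction $\chi_G^{-1}(\chi_G(v))\cap\gS_u=\emptyset$ from Lemma \ref{thm:spdwl_case1_1}. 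This contradiction proves the lemma.

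I do not anticipate a serious obstacle here: Lemma \ref{thm:spdwl_case1_1} already forces the asymmetric structure that makes one side of the cut edge "color-poor," and the WL-condition machinery mechanically converts color-level walks into graph-level walks. The only subtlety worth being careful about is insisting that the avoided object in $G^\mathrm{C}$ is the \emph{specific edge} $\ldblbrace\chi_G(u),\chi_G(v)\rdblbrace$ (not all edges of some color), so that the lifted walk in $G$ may still use edges whose endpoints happen to share these colors when they lie entirely inside $\gS_v$; this does not affect the argument because the walk is confined to $\gS_u$ from the start.
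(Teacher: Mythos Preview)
Your proposal is correct and follows essentially the same approach as the paper: assume the color edge is not a cut edge, obtain a path in $G^\mathrm{C}$ from $\chi_G(u)$ to $\chi_G(v)$ that avoids $\ldblbrace\chi_G(u),\chi_G(v)\rdblbrace$, lift it to $G$ via the WL-condition starting at $u$, and observe the lifted walk stays in $\gS_u$ yet terminates at a vertex of color $\chi_G(v)$, contradicting the WLOG reduction from Lemma~\ref{thm:spdwl_case1_1}. The only cosmetic difference is that the paper takes a \emph{simple} path in $G^\mathrm{C}$ and argues $w_i\neq u$ for $i\ge 2$ and $w_2\neq v$ from distinctness of colors, whereas you allow a general walk and argue directly that no lifted edge can equal $\{u,v\}$ because its color multiset is wrong; your formulation is slightly cleaner.
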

\begin{proof}
Suppose $\ldblbrace \chi_G(u),\chi_G(v)\rdblbrace$ is not a cut edge of $G^\mathrm{C}$. Then there is a simple cycle $(c_1,\cdots,c_m)$ where $c_1=\chi_G(u)$, $c_m=\chi_G(v)$ and $m>2$. Namely, there exists a simple path from $c_1$ to $c_m$ with length $\ge 2$. By the definition of $G^\mathrm{C}$ and the WL-condition, there exists a sequence of nodes of $G$ $\{w_i\}_{i=1}^m$ where $w_1=u$ and $\chi(w_i)=c_i$ such that $\{w_i,w_{i+1}\}\in\gE_G$, $i\in[m-1]$. Note that $w_i\neq u$ for $i=\{2,\cdots,m\}$ and $w_2\neq v$ because $(c_1,\cdots,c_m)$ is a simple path. Therefore, $w_i\in \gS_u$ for all $i\in[m]$. However, it contradicts $|\gS\cap \gS_u|=1$ (\cref{thm:spdwl_case1_1}) since $\chi_G(w_m)=\chi_G(v)$.
\end{proof}

Combining \cref{thm:spdwl_case1_1,thm:spdwl_case1_2,thm:spdwl_case1_3}, we arrived at the following corollary:

\begin{corollary}
\label{thm:spdwl_case1_4}
For all $u'\in\chi_G^{-1}(\chi_G(u))$ and $v'\in\chi_G^{-1}(\chi_G(v))$, if $\{u',v'\}\in\gE_G$, then it is a cut edge of $G$.
\end{corollary}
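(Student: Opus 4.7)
The plan is to prove the corollary by contradiction: I would assume that $\{u',v'\}$ is an edge of $G$ with $\chi_G(u')=\chi_G(u)$, $\chi_G(v')=\chi_G(v)$, but is \emph{not} a cut edge, and derive a contradiction by combining the three preceding lemmas with the WL-condition of SPD-WL (\cref{thm:spdwl_wl_condition}). A preparatory uniqueness fact I would establish first is that, because $\{u,v\}$ is the only edge between $\gS_u$ and $\gS_v$ and $\chi_G^{-1}(\chi_G(v))\cap\gS_u=\emptyset$ by \cref{thm:spdwl_case1_1}, the vertex $u$ has exactly one neighbor of color $\chi_G(v)$, namely $v$ itself. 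Applying the WL-condition, every vertex $u''\in\chi_G^{-1}(\chi_G(u))$ then has exactly one neighbor of color $\chi_G(v)$; in particular, $v'$ is the unique neighbor of $u'$ of color $\chi_G(v)$.

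The trivial subcase $u'=u$ is handled immediately by this uniqueness, which forces $v'=v$ and hence $\{u',v'\}=\{u,v\}$, the cut edge we started from. Assuming next $u'\neq u$, \cref{thm:spdwl_case1_1} places $u'\in\gS_v$, and $v'\in\gS_v$ automatically. The non-cut-edge assumption then yields a simple path $P=(p_0,\ldots,p_k)$ from $u'$ to $v'$ that avoids the edge $\{u',v'\}$. Because $\{u,v\}$ is the only edge between $\gS_u$ and $\gS_v$ and $P$ is simple, $P$ can use $\{u,v\}$ at most once; but a single crossing would strand one endpoint on the wrong side of the cut, so $P$ must in fact lie entirely inside $\gS_v$.

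The final step is to project $P$ to a walk $Q=(\chi_G(p_0),\ldots,\chi_G(p_k))$ in the color graph $G^\mathrm{C}$ and invoke \cref{thm:spdwl_case1_3}: the color-edge $\ldblbrace\chi_G(u),\chi_G(v)\rdblbrace$ is a cut edge of $G^\mathrm{C}$, so if $\gC^*$ denotes the component of $G^\mathrm{C}$ minus this color-edge that contains $\chi_G(u)$, then $\chi_G(v)\notin\gC^*$. Since $Q$ starts at $\chi_G(u)\in\gC^*$ and ends at $\chi_G(v)\notin\gC^*$, it must traverse the color-cut-edge; let $j$ be the first step at which this happens. Minimality of $j$ gives $\chi_G(p_i)\in\gC^*$ for all $i\le j$, so no such $p_i$ has color $\chi_G(v)$, while the crossing pins $\chi_G(p_j)=\chi_G(u)$ and $\chi_G(p_{j+1})=\chi_G(v)$. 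A case split on $j$ would then close the argument: if $j=0$, the uniqueness from the first paragraph forces $p_1=v'$, making $P$ use $\{u',v'\}$, a contradiction; if $j\ge 1$, then $u'=p_0$ and $p_j$ are two \emph{distinct} vertices of $\chi_G^{-1}(\chi_G(u))$, and the subpath $(p_0,\ldots,p_j)$ joining them contains no vertex of color $\chi_G(v)$, directly violating \cref{thm:spdwl_case1_2}. The main obstacle I anticipate is precisely this bridge between the color-graph-level cut-edge property of \cref{thm:spdwl_case1_3} and a vertex-level color-exclusion along $P$; the minimality of $j$ together with the asymmetric partition from \cref{thm:spdwl_case1_1} is what makes this bookkeeping go through cleanly and lets the three preceding lemmas combine into a single contradiction.
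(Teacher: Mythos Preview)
Your argument is correct and uses the same three lemmas in essentially the same way as the paper. The only difference is the direction in which you run the contradiction: the paper takes $j$ to be the first index with $\chi_G(w_j)=\chi_G(v)$, then invokes \cref{thm:spdwl_case1_2} to rule out intermediate $\chi_G(u)$-colored vertices, and finally contradicts \cref{thm:spdwl_case1_3} by exhibiting a color-path of length $\ge 2$ from $\chi_G(u)$ to $\chi_G(v)$ that avoids the color-cut-edge. You instead start from \cref{thm:spdwl_case1_3}, take $j$ to be the first crossing of the color-cut-edge, and then contradict either uniqueness ($j=0$) or \cref{thm:spdwl_case1_2} ($j\ge 1$). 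These are dual orderings of the same idea.

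Two of your steps are correct but not needed: the explicit case $u'=u$ (which the paper absorbs, since then $v'=v$ by uniqueness and $\{u,v\}$ is already the given cut edge) and the observation that $P\subset\gS_v$ (which you never use in the color-graph portion of the argument). Dropping them would streamline your write-up to match the paper's uniform treatment.
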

\begin{proof}
If $\{u',v'\}$ is not a cut edge, there is a simple cycle going through $\{u',v'\}$. Denote it as $(w_1,\cdots,w_m)$ where $w_1=u',w_m=v'$, $m>2$. By \cref{thm:spdwl_case1_1}, $w_2\notin \chi_G(v)$, otherwise $u'$ will connect to at least two different nodes $w_2,w_m\in\chi_G^{-1}(\chi_G(v))$ and thus $u'$ and $u$ cannot have the same color under SPD-WL. Let $j$ be the index such that $j=\min\{j\in [m]:\chi_G(w_j)=\chi_G(v)\}$, then $j>2$. Consider the path $(w_1,\cdots,w_j)$. It follows that $\chi_G(w_k)\neq \chi_G(u)$ for all $k\in\{2,\cdots,j\}$ by \cref{thm:spdwl_case1_2} (otherwise there is a path from node $w_1$ to some node $w_i\in\chi_G^{-1}(\chi_G(u))$ ($i\in\{2,\cdots,j\}$) that does not go through nodes in the set $\chi_G^{-1}(\chi_G(v))$, a contradiction). Therefore, $(\chi_G(w_1),\cdots,\chi_G(w_j))$ is a path of length $\ge 2$ in $G^\mathrm{C}$ from $\chi_G(u)$ to $\chi_G(v)$ (not necessarily simple), without going through the edge $\ldblbrace \chi_G(u),\chi_G(v)\rdblbrace$. This contradicts \cref{thm:spdwl_case1_3}, which says that $\ldblbrace \chi_G(u),\chi_G(v)\rdblbrace$ is a cut edge in $G^\mathrm{C}$.
\end{proof}

Based on \cref{thm:spdwl_case1_3}, the cut edge $\ldblbrace \chi_G(u),\chi_G(v)\rdblbrace$ partitions the vertices $\gC$ of the color graph $G^\mathrm{C}$ into two classes. Denote them as $\{\gC_u,\gC_v\}$ where $\chi_G(u)\in \gC_u$ and $\chi_G(v)\in \gC_v$. The next corollary characterizes the structure of the node colors calculated in SPD-WL.

\begin{corollary}
\label{thm:spdwl_case1_5}
For any $w$ satisfying $\chi_G(w)\in \gC_u$, there exists a cut edge $\{u',v'\}$, $u'\in\chi_G^{-1}(\chi_G(u))$, $v'\in\chi_G^{-1}(\chi_G(v))$, that partitions $\gV$ into two classes $\gS_{u'}\cup \gS_{v'}$, $u',w\in \gS_{u'}$, $v'\in \gS_{v'}$, such that $\chi_G^{-1}(\chi_G(u'))\cup \gS_{u'}=\{u'\}$ and $\chi_G^{-1}(\chi_G(v'))\cup \gS_{u'}=\emptyset$.
\end{corollary}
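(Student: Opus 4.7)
The plan is to identify the desired cut edge by peeling off all ``$\chi_G(u)$--$\chi_G(v)$'' edges of $G$ and extracting an exit edge from the component of $w$. Set $E^{*}:=\{\{u'',v''\}\in\gE_G:\chi_G(u'')=\chi_G(u),\ \chi_G(v'')=\chi_G(v)\}$ and form $G^{\dagger}:=(\gV,\gE_G\setminus E^{*})$; by \cref{thm:spdwl_case1_4} every edge of $E^{*}$ is already a cut edge of $G$. The first claim is that every connected component of $G^{\dagger}$ is monochromatic with respect to the bipartition $\gC=\gC_u\sqcup\gC_v$ of the color graph: otherwise a path in $G^{\dagger}$ between a $\gC_u$-colored and a $\gC_v$-colored node would lift, via the WL-condition (\cref{thm:spdwl_wl_condition,thm:wl_condition}), to a walk in $G^{\mathrm{C}}$ from $\gC_u$ to $\gC_v$ avoiding $\ldblbrace\chi_G(u),\chi_G(v)\rdblbrace$, contradicting \cref{thm:spdwl_case1_3}. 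Consequently the component $C_w$ containing $w$ is entirely $\gC_u$-colored.

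Since $G$ is connected and $v\notin C_w$, any path from $w$ to $v$ exits $C_w$, and its first exit edge necessarily belongs to $E^{*}$ by the monochromaticity claim. This yields $u'\in\chi_G^{-1}(\chi_G(u))\cap C_w$ and $v'\in\chi_G^{-1}(\chi_G(v))\setminus C_w$ with $\{u',v'\}\in E^{*}$; by \cref{thm:spdwl_case1_2}, $u'$ is in fact the \emph{unique} $\chi_G(u)$-colored vertex of $C_w$, since otherwise two such vertices would be joined by a path inside $C_w$ devoid of any $\chi_G(v)$-colored node, violating that lemma. Because $\{u',v'\}$ is a cut edge of $G$, it partitions $\gV=\gS_{u'}\sqcup\gS_{v'}$ with $u'\in\gS_{u'}$ and $v'\in\gS_{v'}$, and $C_w\subseteq\gS_{u'}$ (deleting one edge rather than all of $E^{*}$ only enlarges components) gives $w\in\gS_{u'}$ as required.

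The final step is to verify $\chi_G^{-1}(\chi_G(u))\cap\gS_{u'}=\{u'\}$ together with $\chi_G^{-1}(\chi_G(v))\cap\gS_{u'}=\emptyset$, and this is where the main obstacle lies. I plan to invoke \cref{thm:spdwl_case1_1} on the cut edge $\{u',v'\}$, which outputs either the desired pair of equalities or the symmetric alternative that $\gS_{v'}$ contains $v'$ as its only $\chi_G(v)$-colored vertex and no $\chi_G(u)$-colored vertex. To rule out the alternative I will split on the location of $w$ in the original partition: if $w\in\gS_u$ then simply taking $\{u',v'\}=\{u,v\}$ already satisfies every claim by the WLOG hypothesis in force throughout this subsection; if instead $w\in\gS_v$, then $u'\neq u$, so the edge $\{u,v\}\neq\{u',v'\}$ survives the deletion of $\{u',v'\}$, forcing $u\in\gS_{u'}$ under the symmetric alternative. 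I expect to close out the contradiction by combining this with the maximum-distance argument from the proof of \cref{thm:spdwl_case1_1}, exploiting the monochromaticity of $C_w$ together with the minimality of the chosen exit edge; if a direct case analysis becomes unwieldy, a backup route is to induct on $|\gV|$ by restricting to the connected subgraph $G[\gS_v]$, where a strictly smaller instance of the same situation arises.
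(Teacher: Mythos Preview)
Your route through $G^{\dagger}=(\gV,\gE_G\setminus E^{*})$ and the component $C_w$ is a genuinely different organization from the paper's. The paper never forms $G^{\dagger}$; instead it picks $u'$ as the endpoint of a path from $w$ that avoids $\chi_G^{-1}(\chi_G(v))$ (existence coming from $\chi_G(w)\in\gC_u$ and the WL-condition), takes any $v'\in\gN_G(u')\cap\chi_G^{-1}(\chi_G(v))$, and then proves the stronger fact that \emph{every} $x\in\gS_{u'}$ has $\chi_G(x)\in\gC_u$ by examining a shortest path from $x$ to $u'$, locating the last index where the color lies in $\gC_v$, and deriving a contradiction from \cref{thm:spdwl_case1_1} (if that index is $m-1$) or \cref{thm:spdwl_case1_2} (otherwise). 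The two displayed equalities follow at once.

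Your argument is correct through the identification of $u',v'$ and the inclusion $C_w\subseteq\gS_{u'}$ (the appeal to the WL-condition for monochromaticity is unnecessary---projection of a $G^{\dagger}$-path to $G^{\mathrm C}$ is direct from \cref{def:color_graph}---but harmless). The final step, however, is left incomplete: invoking \cref{thm:spdwl_case1_1} on $\{u',v'\}$ and then trying to exclude the symmetric alternative by a case split on $w\in\gS_u$ versus $w\in\gS_v$, followed by a vaguely specified maximum-distance argument or an induction on $|\gV|$, is neither carried out nor clearly workable (you never chose a ``minimal'' exit edge, for instance).

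There is a much shorter completion of \emph{your own} approach that you overlooked. From the standing WLOG hypothesis, $u$ has exactly one $\chi_G(v)$-colored neighbor (namely $v$, since $\{u,v\}$ is a cut edge and $\chi_G^{-1}(\chi_G(v))\cap\gS_u=\emptyset$); by the WL-condition the same holds for $u'$, so $v'$ is the \emph{unique} $\chi_G(v)$-colored neighbor of $u'$. Combined with your observation that $u'$ is the unique $\chi_G(u)$-colored vertex of $C_w$, this shows $\{u',v'\}$ is the \emph{only} edge of $E^{*}$ incident to $C_w$. Hence $C_w$ is already a full connected component of $G\setminus\{u',v'\}$, i.e.\ $\gS_{u'}=C_w$, and both equalities follow immediately from the monochromaticity and uniqueness you had already established---no dichotomy, no case analysis, no induction needed.
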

\begin{remark}
\normalfont \cref{thm:spdwl_case1_5} can be seen as a generalized version of \cref{thm:spdwl_case1_1}. Indeed, when $w\in \gS_u$, one can pick $u'=u$ and $v'=v$. Then $\chi_G^{-1}(\chi_G(u'))\cup \gS_{u'}=\{u'\}$ and $\chi_G^{-1}(\chi_G(v'))\cup \gS_{u'}=\emptyset$ hold due to \cref{thm:spdwl_case1_1}. In general, \cref{thm:spdwl_case1_5} says that all the cut edges with color $\{\chi_G(u),\chi_G(v)\}$ play an equal role: \cref{thm:spdwl_case1_1} applies for any chosen cut edge $\{u',v'\}$. An illustration of \cref{thm:spdwl_case1_5} is given in \cref{fig:proof_spdwl_3}(a).
\end{remark}
\begin{proof}
By the definition of $\gC_u$, any node $c\in\gC_u$ in the \emph{color graph} can reach the node $\chi_G(u)$ without going through $\chi_G(v)$. Therefore, there exists some $u'\in\chi_G^{-1}(\chi_G(u))$ such that there exists a path $P_1$ from $w$ to $u'$ without going through nodes in the set $\chi_G^{-1}(\chi_G(v))$. Also, there exists a node $v'\in\gN_G(u')$ with $\chi_G(v')=\chi_G(v)$ due to the color of $u'$. By \cref{thm:spdwl_case1_4}, $\{u',v'\}$ is a cut edge of $G$. Clearly, $w\in \gS_{u'}$. 

We next prove the following fact: for any $x\in \gS_{u'}$, $\chi_G(x)\in \gC_u$. Otherwise, one can pick a node $x\in\gS_{u'}$ with color $\chi_G(x)\in\gC_v$. Consider the \emph{shortest} path between nodes $x$ and $u'$, denoted as $(y_1,\cdots,y_m)$ where $y_1=x$ and $y_m=u'$. It follows that $y_i\in \gS_u$ for all $i\in[m]$. Denote $c_i=\chi_G(y_i), i\in [m]$. Then $(c_1,\cdots,c_m)$ is a path (not necessarily simple) in the color graph $G^\mathrm{C}$. Now pick the index $j=\max\{j\in[m]:c_j\in \gC_v\}$ (which is well-defined because $c_1\in\gC_v$). It follows that $j<m$ (since $y_m\in\gC_u$), $c_j=\chi_G(v)$ and $c_{j+1}=\chi_G(u)$ (because $\ldblbrace\chi_G(u),\chi_G(v)\rdblbrace$ is a cut edge that partitions the color graph $G^\mathrm{C}$ into $\gC_u$ and $\gC_v$). Consider the following two cases (see \cref{fig:proof_spdwl_3}(b) for an illustration):
\begin{itemize}[topsep=0pt,leftmargin=30pt]
\setlength{\itemsep}{0pt}
    \item $j=m-1$. Then $u'$ connects to both nodes $y_j$ and $v'$ with color $\chi_G(y_j)=\chi_G(v')=\chi_G(v)$. This contradicts \cref{thm:spdwl_case1_1} since $u$ only connects to one node $v$ with color $\chi_G(v)$.
    \item $j<m-1$. Then $y_{j+1}\neq u'$ because the path $(y_1,\cdots,y_m)$ is simple. Howover, one has $\chi_G(y_i)\neq \chi_G(v)$ for all $i\in\{j+1,\cdots,m\}$ by definition of $j$. This contradicts \cref{thm:spdwl_case1_2}.
\end{itemize}
This completes the proof that for any $x\in \gS_{u'}$, $\chi_G(x)\in \gC_u$. Therefore, $\chi_G^{-1}(\chi_G(v'))\cup \gS_{u'}=\emptyset$.

We finally prove that $\chi_G^{-1}(\chi_G(u))\cup \gS_{u'}=\{u'\}$. If not, pick $u''\in \chi_G^{-1}(\chi_G(u))\cup \gS_{u'}$ and $u''\neq u'$. By \cref{thm:spdwl_case1_2}, the \emph{shortest} path between $u'$ and $u''$ goes through some node $v''$ with color $\chi_G(v)$. Clearly, $v''\in \gS_u$, which contradicts the above paragraph and concludes the proof.
\end{proof}

\begin{figure}[t]
    \centering
    \small
    \setlength\tabcolsep{15pt}
    \begin{tabular}{cc}
        \includegraphics[height=15em]{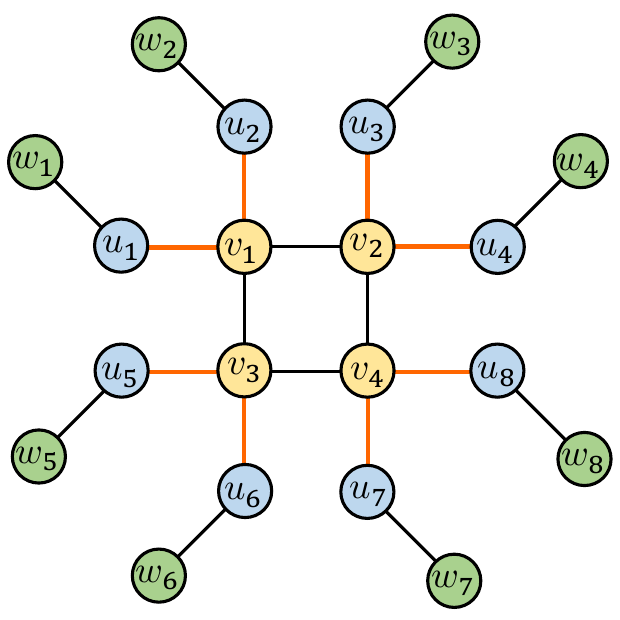} & \includegraphics[height=15em]{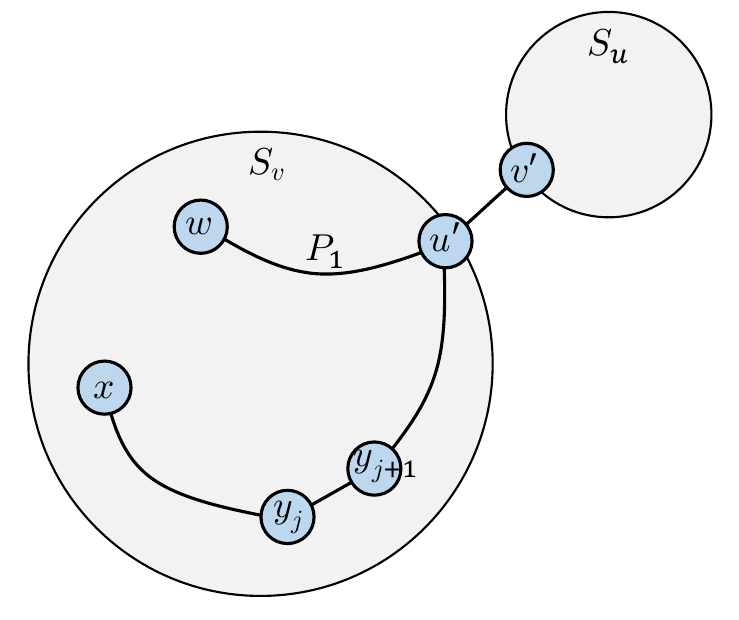} \\
        (a) & (b)
    \end{tabular}
    \vspace{-7pt}
    \caption{Illustration of \cref{thm:spdwl_case1_5} and its proof.}
    \label{fig:proof_spdwl_3}
    \vspace{-5pt}
\end{figure}

We have already fully characterized the properties of cut edges $\{u',v'\}$ with color $\{\chi_G(u),\chi_G(v)\}$. Now we switch our focus to the graph $H$. We first prove a general result that holds for arbitrary $H$.

\begin{lemma}
\label{thm:spdwl_case1_6}
Let $\{w_1,w_2\}\in \gE_H$ and $P$ is a path with the minimum length from $w_1$ to $w_2$ without going through edge $\{w_1,w_2\}$. In other words, linking path $P$ with the edge $\{w_1,w_2\}$ forms a simple cycle $Q$. Then for any two nodes $x_1,x_2$ in $Q$, $\dis_H(x_1,x_2)=\dis_Q(x_1,x_2)$.
\end{lemma}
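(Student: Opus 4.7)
The plan is to reduce the claim to the classical fact that subpaths of shortest paths are shortest paths, after suitably modifying $H$. Let $H':=(\gV_H,\gE_H\setminus\{\{w_1,w_2\}\})$ be the graph obtained by deleting the edge $\{w_1,w_2\}$. By the hypothesis on $P$, it is a minimum-length $w_1$--$w_2$ path in $H'$, so $\dis_{H'}(w_1,w_2)=|P|$. Writing $P=(w_1{=}p_0,p_1,\dots,p_{|P|}{=}w_2)$, any node $x_i=p_i$ on $P$ satisfies $\dis_{H'}(x_i,w_1)=i$ and $\dis_{H'}(x_i,w_2)=|P|-i$, and for two nodes $x_1=p_i,x_2=p_j$ with $i<j$ on $P$, $\dis_{H'}(x_1,x_2)=j-i$. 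These identities all follow from the ``subpaths of shortest paths are shortest'' principle applied inside $H'$.

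Now fix $x_1,x_2\in Q$. The cycle $Q$ splits at $x_1,x_2$ into two arcs $A$ and $B$, where (WLOG) $A$ contains the edge $\{w_1,w_2\}$ and $B$ is a subpath of $P$; with the notation above, $|B|=j-i$ and $|A|=i+1+(|P|-j)$, so $|A|+|B|=|P|+1$ and $\dis_Q(x_1,x_2)=\min(|A|,|B|)$. Suppose for contradiction there is a path $R$ in $H$ from $x_1$ to $x_2$ with $|R|<\dis_Q(x_1,x_2)$; we may assume $R$ is simple. I will split on whether $R$ uses the forbidden edge.

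If $R$ does not use $\{w_1,w_2\}$, then $R$ lies in $H'$, so $|R|\geq\dis_{H'}(x_1,x_2)=|B|\geq\dis_Q(x_1,x_2)$, contradicting $|R|<\dis_Q(x_1,x_2)$. If $R$ does use $\{w_1,w_2\}$, then since $R$ is simple it traverses that edge exactly once; decompose $R=R_1\cdot\{w_1,w_2\}\cdot R_2$ with $R_1,R_2\subseteq H'$. There are two orientations. In the first, $R$ meets $w_1$ before $w_2$: then $|R|=|R_1|+1+|R_2|\geq\dis_{H'}(x_1,w_1)+1+\dis_{H'}(w_2,x_2)=i+1+(|P|-j)=|A|$, contradicting $|R|<\dis_Q(x_1,x_2)\leq|A|$. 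In the second, $R$ meets $w_2$ before $w_1$: then $|R|\geq\dis_{H'}(x_1,w_2)+1+\dis_{H'}(w_1,x_2)=(|P|-i)+1+j=|P|+1+(j-i)>\dis_Q(x_1,x_2)$, again a contradiction.

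\textbf{Main obstacle.} The only nontrivial part is handling the case where the hypothetical shortcut $R$ secretly uses the forbidden edge $\{w_1,w_2\}$; a naive attempt to splice $R$ into $P$ can produce a walk in $H'$ that is not obviously shorter than $P$. The correct move is to bound the two $H'$-segments $R_1,R_2$ separately by the distances within $P$, which is where the subpath-of-shortest-path identities above are essential. Once that bookkeeping is set up, both orientations reduce to direct inequalities between $|R|$, $|A|$, and $|B|$.
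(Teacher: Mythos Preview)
Your proof is correct and follows essentially the same two-case split as the paper (on whether the hypothetical shortcut $R$ uses the edge $\{w_1,w_2\}$). The paper's version is phrased slightly differently: instead of bounding $|R_1|,|R_2|$ by distances in $H'$ via the subpath-of-shortest-path identities, it splices $R$ with the complementary arc of $Q$ to produce a closed walk of length $|R|+|Q_2|<|Q|$ (resp.\ $|R|+|Q_1|<|Q|$) that traverses $\{w_1,w_2\}$ exactly once, and then reads off a $w_1$--$w_2$ walk in $H'$ shorter than $P$. Your explicit distance bookkeeping in $H'$ is a cleaner way to handle the orientation subcase (which the paper glosses over), while the paper's splicing argument is shorter but leaves more to the reader; the underlying contradiction with the minimality of $|P|$ is identical.
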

\begin{proof}
Split the cycle $Q$ into two paths $Q_1$ and $Q_2$ with endpoints $\{x_1,x_2\}$ where $Q_1$ contains the edge $\{w_1,w_2\}$ and $Q_2$ does not contain $\{w_1,w_2\}$. Assume the above lemma does not hold and $\dis_H(w,x)<\dis_Q(w,x)$. It means that there exists a path $R$ in $H$ from $x_1$ to $x_2$ for which $|R|<\min(|Q_1|,|Q_2|)$. Note that the edge $\{u,v\}$ occurs at most once in $R$. Separately consider two cases:
\begin{itemize}[topsep=0pt,leftmargin=30pt]
\setlength{\itemsep}{0pt}
    \item $\{w_1,w_2\}$ occurs in $R$. Then linking $R$ with $Q_2$ forms a cycle that contains $\{w_1,w_2\}$ exactly once;
    \item $\{w_1,w_2\}$ does not occur in $R$. Then linking $R$ with $Q_1$ forms a cycle that contains $\{w_1,w_2\}$ exactly once.
\end{itemize}
In both cases, the cycle has a length less than $|Q|$. This contradicts the condition that $P$ is a path with minimum length from $w_1$ to $w_2$ without passing edge $\{w_1,w_2\}$. 
\end{proof}

We can similarly consider the color graph $H^\mathrm{C}=(\gC,\gE_{H^\mathrm{C}})$ defined in \cref{def:color_graph}. Note that we have assumed that the graph representations of $G$ and $H$ are the same, i.e. $\ldblbrace \chi_G(w):w\in\gV\rdblbrace=\ldblbrace \chi_H(w):w\in\gV\rdblbrace$. It follows that $H^\mathrm{C}$ is isomorphic to $G^\mathrm{C}$ and the identity vertex mapping is an isomorphism, i.e., $\ldblbrace c_1,c_2\rdblbrace\in\gE_{G^\mathrm{C}}\iff\ldblbrace c_1,c_2\rdblbrace\in\gE_{H^\mathrm{C}}$. Therefore, $\ldblbrace\chi_G(u),\chi_G(v)\rdblbrace$ is a cut edge of $H^\mathrm{C}$ (\cref{thm:spdwl_case1_3}) that splits the vertices $\gC$ into two classes $\gC_u,\gC_v$. Since the vertex labels of $H$ are not important, we can without abuse of notation let $u,v$ be two nodes such that $\{u,v\}\in\gE_H$, $\chi_H(u)=\chi_G(u)$, $\chi_H(v)=\chi_G(v)$, and $\chi_H(u)\in\gC_u$, $\chi_H(v)\in \gC_v$. We similarly define $\chi_H^{-1}(c)=\{w\in\gV:\chi_H(w)=c\}$. Define a mapping $h:\gC\to\{\chi_H(u),\chi_H(v)\}$ where
\begin{equation}
    h(c)=\left\{\begin{array}{ll}
        \chi_H(u) & \text{if } \dis_{H^\mathrm{C}}(c,\chi_H(u))<\dis_{H^\mathrm{C}}(c,\chi_H(v)), \\
        \chi_H(v) & \text{if } \dis_{H^\mathrm{C}}(c,\chi_H(u))>\dis_{H^\mathrm{C}}(c,\chi_H(v)).
    \end{array}\right.
\end{equation}
Note that it never happens that $\dis_{H^\mathrm{C}}(c,\chi_H(u))=\dis_{H^\mathrm{C}}(c,\chi_H(v))$ because $\ldblbrace\chi_H(u),\chi_H(v)\rdblbrace$ is a cut edge of $H^\mathrm{C}$.

Assume $\{u,v\}$ is not a cut edge in $H$. Then there exists a path $(w_1,\cdots,w_m)$ in $H$ with $w_1=u$ and $w_m=v$ without going through $\{u,v\}$. We pick such a path with the minimum length, then the path is simple. Since $h(\chi_H(u))\in\gC_u$ and $h(\chi_H(v))\in\gC_v$, there is a minimum index $j\in[m-1]$ such that $h(\chi_H(w_j))\in\gC_u$ and $h(\chi_H(w_{j+1}))\in\gC_v$. By definition of $\gC_u,\gC_v$ and the cut edge $\ldblbrace\chi_H(u),\chi_H(v)\rdblbrace$, it follows that $\chi_H(w_j)=\chi_H(u)$ and $\chi_H(w_{j+1})=\chi_H(v)$. Denote $u':=w_j$. Note that $j\neq 1$ and $j\neq 2$, otherwise $u$ either connects to two nodes $w_2$ and $w_m$ with color $\chi_H(w_2)=\chi_H(w_m)=\chi_H(v)$, or connects to the node $u'$ with color $\chi_H(u')=\chi_H(u)$, contradicting $\chi_H(u)=\chi_G(u)$. Pick $k=\lceil j/2\rceil$. By \cref{thm:spdwl_case1_6}, $(w_1,\cdots,w_k)$ is the shortest path between $u$ and $w_k$, and $(w_k,\cdots,w_j)$ is the shortest path between $w_k$ and $u'$. We give an illustration of the structure of $H$ in \cref{fig:proof_spdwl_4}(a) based on this paragraph.

\begin{figure}[t]
    \centering
    \small
    \setlength\tabcolsep{12pt}
    \begin{tabular}{cc}
        \includegraphics[height=13em]{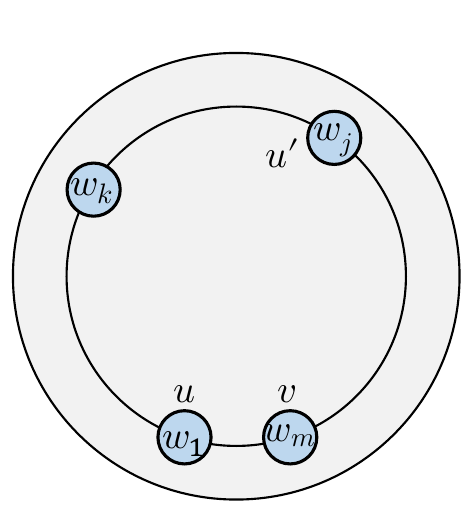} & \includegraphics[height=13em]{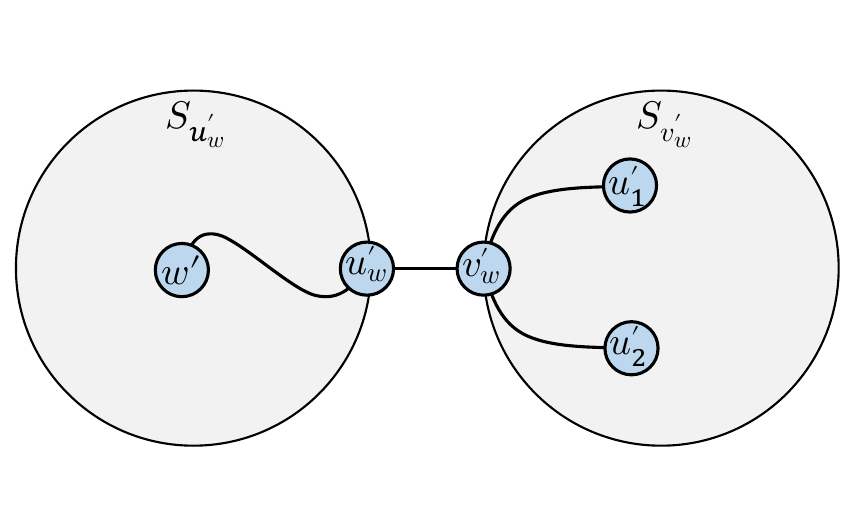} \\
        (a) Graph $H$ & (b) Graph $G$
    \end{tabular}
    \caption{Illustrations to help understand the proof of the main result.}
    \label{fig:proof_spdwl_4}
    \vspace{-5pt}
\end{figure}

Since the graph representations of $G$ and $H$ are the same under SPD-WL, there exists a node $w'$ with color $\chi_G(w')=\chi_H(w_k)$ and two different nodes $u_1',u_2'$ with color $\chi_G(u_1')=\chi_G(u_2')=\chi_G(u)$, such that $\dis_G(w',u_1')=\dis_H(w_k,w_1)$ and $\dis_G(w',u_2')=\dis_H(w_k,w_j)$. In particular, $|\dis_G(w',u_1')-\dis_G(w',u_2')|\le 1$. Note that by the definition of indices $j$ and $k$, in the color graph $H^\mathrm{C}$ there is a path from $\chi_H(w_k)$ to $\chi_H(u)$ without going through nodes in the set $\chi_H^{-1}(\chi_H(v))$, so $\chi_H(w_k)\in\gC_u$, namely $\chi_G(w')\in\gC_u$. By \cref{thm:spdwl_case1_5}, there is a cut edge $\{u_w',v_w'\}$ that partitions $G$ into two vertex sets $\gS_{u_w'}, \gS_{v_w'}$, with $w',u_w'\in \gS_{u_w'}$, $v_w'\in \gS_{v_w'}$. Note that $u_w'\neq u_1'$ and $u_w'\neq u_2'$ (otherwise by \cref{thm:spdwl_case1_5} any path from $w'$ to a node $u'\neq u_w'$ with color $\chi_G(u')=\chi_G(u)$ must first go through $u_w'$ and then go through $v_w'$, implying that $|\dis_G(w',u_1')-\dis_G(w',u_2')|\ge 2$ and yielding a contradiction). Therefore, $\dis_G(w',u_1')>\dis_G(w',u_w')$ and $\dis_G(w',u_2')>\dis_G(w',u_w')$. We give an illustration of the structure of $G$ in \cref{fig:proof_spdwl_4}(b) based on this paragraph.

Pick any $v_w\in \chi_H^{-1}(\chi_H(v))$ satisfying $\dis_H(v_w,w_k)=\dis_G(v_w',w')$. Denote the operation $\operatorname{dropmin}(\gS):=\gS\backslash\ldblbrace\min\gS\rdblbrace$ that takes a multiset $\gS$ and removes one of the minimum elements in $\gS$. We have
\begin{align}
    \notag
    &\operatorname{dropmin}(\ldblbrace \dis_G(w',u_G):u_G\in \textstyle\chi_G^{-1}(\chi_G(u))\rdblbrace)\\
    =&\operatorname{dropmin}(\ldblbrace \dis_G(w',v_w')+\dis_G(v_w',u_G):u_G\in\chi_G^{-1}(\chi_G(u))\rdblbrace) \tag{\text{by \cref{thm:spdwl_case1_5}}}\\
    \notag
    =&\operatorname{dropmin}(\ldblbrace \dis_H(w_k,v_w)+\dis_H(v_w,u_H):u_H\in\chi_H^{-1}(\chi_H(u))\rdblbrace)
\end{align}
and also
\begin{equation*}
    \operatorname{dropmin}(\ldblbrace \dis_G(w',u_G):u_G\in \textstyle\chi_G^{-1}(\chi_G(u)))= \operatorname{dropmin}(\ldblbrace\dis_H(w_k,u_H):u_H\in\chi_H^{-1}(\chi_H(u))\rdblbrace)
\end{equation*}
due to the same color $\chi_G(w')=\chi_H(w_k)$. Combining the above two equations and noting that $\dis_H(w_k,v_w)+\dis_H(v_w,u_H)\ge \dis_H(w_k,u_H)$, we obtain the following result: for any $u_H\in\chi_H^{-1}(\chi_H(u))$ for which $\dis_H(w_k,v_w)+\dis_H(v_w,u_H)>\dis_G(w',u_w')$, $\dis_H(w_k,v_w)+\dis_H(v_w,u_H)= \dis_H(w_k,u_H)$. In particular,
\begin{align*}
    \dis_H(w_k,w_1)&=\dis_H(w_k,v_w)+\dis_H(v_w,w_1),\\
    \dis_H(w_k,w_j)&=\dis_H(w_k,v_w)+\dis_H(v_w,w_j).
\end{align*}
Therefore,
\begin{equation*}
\begin{aligned}
    \dis_H(w_1,w_j)&=\dis_H(w_1,w_k)+\dis_H(w_k,w_j)\\
    &=2\dis_H(w_k,v_w)+\dis_H(v_w,w_1)+\dis_H(v_w,w_j)\\
    &\ge 2\dis_H(w_k,v_w)+\dis_H(w_1,w_j),
\end{aligned}
\end{equation*}
implying $w_k=v_w$. However, $\chi_H(w_k)\in\gC_u$ while $\chi_H(v_w)\in\gC_v$, yielding a contradiction.

\subsubsection{The case of \texorpdfstring{$\chi_G(u)= \chi_G(v)$}{χG(u)=χG(v)} for connected graphs}
\label{sec:proof_spdwl_case2}
We first define several notations. Define the mapping $f_G:\gV\to \{u,v\}\times \gC$ as follows: $f_G(w)=(h_G(w),\chi_G(w))$ where
\begin{equation}
\label{eq:proof_spdwl_case2}
    h_G(w)=\left\{\begin{array}{ll}
        u & \text{if } \dis_G(w,v)=\dis_G(w,u)+1,\\
        v & \text{if } \dis_G(w,u)=\dis_G(w,v)+1.
    \end{array}\right.
\end{equation}
It is easy to see that $h_G$ is well-defined for all $w\in\gV$ because $\{u,v\}$ is a cut edge of $G$. We further define the following auxiliary graph:
\begin{definition}
\label{def:aux_graph}
\normalfont (Auxiliary graph) Define the auxiliary graph $G^\mathrm{A}=(\gV_{G^\mathrm{A}},\gE_{G^\mathrm{A}})$ where $\gV_{G^\mathrm{A}}:=\{u,v\}\times \gC$ and $\gE_{G^\mathrm{A}}:=\{\ldblbrace f_G(w_1),f_G(w_2)\rdblbrace:\{w_1,w_2\}\in \gE_G\}$. Note that $G^\mathrm{A}$ can have self loops, so each edge is denoted as a multiset with two elements.
\end{definition}
It is straightforward to see that there is only one edge in $G^\mathrm{A}$ with the form $\ldblbrace(u,c_1),(v,c_2)\rdblbrace\in\gE_{G^\mathrm{A}}$ for some $c_1,c_2\in\gC$ since $\{u,v\}$ is a cut edge of $G$. Therefore, the only edge is $\ldblbrace(u,\chi_G(u)),(v,\chi_G(v))\rdblbrace$ and is a cut edge in $G^\mathrm{A}$.

We also define $f_G^{-1}$ as the inverse mapping of $f_G$, i.e. $f_G^{-1}(z,c)=\{w\in \gV:f_G(w)=(z,c)\}$. We first prove that $f_G^{-1}$ is well-defined on the domain $\gV_{G^\mathrm{A}}$.

\begin{lemma}
\label{thm:spdwl_case2_1}
$f_G$ is a surjection.
\end{lemma}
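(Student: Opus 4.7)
The target is to show that every pair $(z,c)\in\{u,v\}\times\gC$ lies in the image of $f_G$. The first observation I would make is that because $\{u,v\}$ is a cut edge, $\gV$ partitions as $\gS_u\sqcup\gS_v$, and any shortest path from a node $w\in\gS_u$ to $v$ must traverse $\{u,v\}$, so $\dis_G(w,v)=\dis_G(w,u)+1$, giving $h_G(w)=u$; symmetrically $h_G(w)=v$ for $w\in\gS_v$. Hence the goal reduces to showing that for every color $c\in\gC$ there exist nodes of color $c$ in both $\gS_u$ and $\gS_v$.

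The key input is the hypothesis $\chi_G(u)=\chi_G(v)$. By definition of the SPD-WL update rule, this equality forces
\begin{equation*}
\ldblbrace(\dis_G(u,w),\chi_G(w)):w\in\gV\rdblbrace \;=\; \ldblbrace(\dis_G(v,w),\chi_G(w)):w\in\gV\rdblbrace.
\end{equation*}
Next I would introduce the counts $A_u(c,d):=|\{w\in\gS_u:\chi_G(w)=c,\,\dis_G(u,w)=d\}|$ and $A_v(c,d):=|\{w\in\gS_v:\chi_G(w)=c,\,\dis_G(v,w)=d\}|$. Using the cut-edge identities $\dis_G(v,w)=\dis_G(u,w)+1$ for $w\in\gS_u$ and $\dis_G(u,w)=\dis_G(v,w)+1$ for $w\in\gS_v$, I can translate both sides of the SPD-WL equality into expressions in these counts. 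Matching the multiplicity of each pair $(d,c)$ yields
\begin{equation*}
A_u(c,d)+A_v(c,d-1)\;=\;A_u(c,d-1)+A_v(c,d)\qquad (d\ge 1),
\end{equation*}
i.e.\ the difference $A_u(c,d)-A_v(c,d)$ is independent of $d$.

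The base case $d=0$ gives $A_u(c,0)=[\chi_G(u)=c]=[\chi_G(v)=c]=A_v(c,0)$, so $A_u(c,d)=A_v(c,d)$ for every $d$. Summing over $d$ yields $|\chi_G^{-1}(c)\cap\gS_u|=|\chi_G^{-1}(c)\cap\gS_v|$. Since $c\in\gC$ there is at least one node of color $c$ in $\gV$, so at least one of the two sides is positive, hence both are. This produces the required witnesses for $(u,c)$ and $(v,c)$, completing the surjectivity argument.

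The only place any real care is needed is the algebraic translation step: one must keep track of the ``shift by one'' caused by the cut edge and handle the boundary $d=0$ correctly so that the telescoping argument goes through. Beyond that bookkeeping, every ingredient is already available from the SPD-WL WL-condition (\cref{thm:spdwl_wl_condition}) and the partition of $\gV$ induced by the cut edge.
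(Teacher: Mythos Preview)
Your argument is correct, and it takes a genuinely different route from the paper's proof. The paper argues by contradiction: assuming some $f_G^{-1}(v,c)=\emptyset$, it picks $w\in f_G^{-1}(u,c)$ and shows (via the WL-condition applied to neighbors) that $f_G^{-1}(v,\chi_G(x))=\emptyset$ for every neighbor $x$ of $w$, then propagates this emptiness along paths in $\gS_u$ until it reaches $u$, yielding the contradiction $f_G^{-1}(v,\chi_G(v))=\emptyset$.

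Your approach is more direct and in fact stronger. By translating the SPD-WL multiset identity at $u$ and $v$ into the counts $A_u(c,d),A_v(c,d)$ and telescoping, you establish $A_u(c,d)=A_v(c,d)$ for all $d$, hence $|f_G^{-1}(u,c)|=|f_G^{-1}(v,c)|$ for every color $c$. This is exactly \cref{thm:spdwl_case2_6}, which the paper only reaches after the chain \cref{thm:spdwl_case2_1}\,$\to$\,\cref{thm:spdwl_case2_2}\,$\to$\,\cref{thm:spdwl_case2_3}\,$\to$\,\cref{thm:spdwl_case2_4}\,$\to$\,\cref{thm:spdwl_case2_5}\,$\to$\,\cref{thm:spdwl_case2_6}. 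Surjectivity then falls out as an immediate corollary. What your approach buys is a shortcut through much of \cref{sec:proof_spdwl_case2}; what the paper's path-propagation argument buys is that it uses only the bare WL-condition (neighbor multisets) rather than the full SPD multiset, so it would transfer to weaker color refinements that still satisfy \cref{def:wl_condition}. For the present lemma, both are valid; yours is the cleaner one in this SPD-specific setting.
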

\begin{proof}
Suppose that $f_G$ is not a surjection. Then there exists a color $c\in \gC$ such that either $f_G^{-1}(u,c)$ or $f_G^{-1}(v,c)$ is an empty set. Without loss of generality, assume $f_G^{-1}(v,c)=\emptyset$, then $f_G^{-1}(u,c)\neq\emptyset$. Pick any $w\in f_G^{-1}(u,c)$. Obviously, $w\neq u$ (otherwise $f_G^{-1}(v,\chi_G(v))= \emptyset$, a contradiction). Then we claim that for any $x\in\gN_G(w)$, $f_G^{-1}(v,\chi_G(x))$ is empty. Note that $x\in f_G^{-1}(u,\chi_G(x))$. If the claim does not hold, take $x'\in f_G^{-1}(v,\chi_G(x))$. Since $x$ connects to a node with color $c$ and $\chi_G(x)=\chi_G(x')$, $x'$ must also connect to a node with color $c$. Denote the node that connects to $x'$ with color $c$ as $w'$. Then $w'\in f_G^{-1}(v,c)$, yielding a contradiction.

By induction, for any $x$ such that there exists a path from $x$ to $w$ without going through the edge $\{u,v\}$, we have $f_G^{-1}(v,\chi_G(x))=\emptyset$. This finally implies $f_G^{-1}(v,\chi_G(v))=\emptyset$, leading to a contradiction. Therefore, $f$ is a surjection.
\end{proof}

\begin{lemma}
\label{thm:spdwl_case2_2}
$|f_G^{-1}(u,\chi_G(u))|=|f_G^{-1}(v,\chi_G(v))|=1$.
\end{lemma}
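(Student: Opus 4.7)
The plan is to prove the two equalities symmetrically (swapping the roles of $u$ and $v$), so I would focus on showing $|f_G^{-1}(u,\chi_G(u))|=1$. Write $c:=\chi_G(u)=\chi_G(v)$ and recall that $\{u,v\}$ being a cut edge partitions $\gV$ into the two components $\gS_u\ni u$ and $\gS_v\ni v$. For any $w\in\gS_u$, every $w$-to-$v$ path must traverse the cut edge, so $\dis_G(w,v)=\dis_G(w,u)+1$; hence $h_G(w)=u$ iff $w\in\gS_u$. Thus ruling out an element $w\in f_G^{-1}(u,c)$ with $w\neq u$ is equivalent to showing that no $w\in\gS_u\setminus\{u\}$ has $\chi_G(w)=c$.

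The strategy is an eccentricity comparison. For each color $c'$, define the color-restricted eccentricity $\epsilon_G(z,c'):=\max_{x\in\chi_G^{-1}(c')}\dis_G(z,x)$. By the SPD-WL invariance established in \cref{thm:spdwl_wl_condition} (together with the argument of \cref{thm:wl_condition_corollary}), nodes of equal color induce identical multisets of distances to every fixed color class, so $\chi_G(z_1)=\chi_G(z_2)$ implies $\epsilon_G(z_1,c')=\epsilon_G(z_2,c')$. Applied to $u,v$ and any hypothetical $w$ with color $c$, this yields $\epsilon_G(u,c)=\epsilon_G(v,c)=\epsilon_G(w,c)$.

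Next, I would pick $\tilde u\in\chi_G^{-1}(c)$ attaining $\dis_G(u,\tilde u)=\epsilon_G(u,c)$ and first rule out $\tilde u\in\gS_u$: in that case the cut-edge structure gives $\dis_G(v,\tilde u)=\dis_G(u,\tilde u)+1$, forcing $\epsilon_G(v,c)>\epsilon_G(u,c)$, a contradiction. So $\tilde u\in\gS_v$. Assume for contradiction there exists $w\in\gS_u\setminus\{u\}$ with $\chi_G(w)=c$. Since $w\in\gS_u$ and $\tilde u\in\gS_v$, every $w$-to-$\tilde u$ path crosses the cut edge, giving $\dis_G(w,\tilde u)=\dis_G(w,u)+\dis_G(u,\tilde u)\ge 1+\epsilon_G(u,c)$. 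Therefore $\epsilon_G(w,c)\ge 1+\epsilon_G(u,c)>\epsilon_G(u,c)$, contradicting $\chi_G(w)=\chi_G(u)$.

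I do not anticipate any real obstacle here; the entire argument fits in a few lines once one observes that the cut-edge structure forces the eccentricity-attaining node $\tilde u$ (for color $c$) to sit on the side opposite $u$, and that any additional same-color node on $u$'s side then strictly beats $u$'s eccentricity and violates the SPD-WL invariance. The case $|f_G^{-1}(v,\chi_G(v))|=1$ is completely symmetric.
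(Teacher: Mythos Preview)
Your proof is correct. Both arguments are eccentricity-style contradictions exploiting the cut-edge structure and the SPD-WL invariance of distance multisets, but they are organized differently. The paper picks, on each side separately, the farthest same-color node from the respective endpoint (call them $u'\in f_G^{-1}(u,c)$ and $v'\in f_G^{-1}(v,c)$), computes $\dis_G(u',v')=\dis_G(u,u')+\dis_G(v,v')+1$, and then uses the multiset equality $\ldblbrace\dis_G(u',x):x\in\chi_G^{-1}(c)\rdblbrace=\ldblbrace\dis_G(u,x):x\in\chi_G^{-1}(c)\rdblbrace$ to locate a matching node $w$ from $u$, forcing $w$ onto the $v$-side and squeezing $\dis_G(u,u')=0$. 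Your route instead considers the \emph{global} maximizer $\tilde u\in\chi_G^{-1}(c)$ of $\dis_G(u,\cdot)$, first forces $\tilde u\in\gS_v$ (else $\epsilon_G(v,c)>\epsilon_G(u,c)$), and then observes that any hypothetical $w\in(\gS_u\setminus\{u\})\cap\chi_G^{-1}(c)$ must satisfy $\dis_G(w,\tilde u)=\dis_G(w,u)+\dis_G(u,\tilde u)\ge 1+\epsilon_G(u,c)$, contradicting $\epsilon_G(w,c)=\epsilon_G(u,c)$. Your argument is slightly more streamlined, needing only one extremal pick and a single side at a time, while the paper's version treats both halves in one shot via $u',v'$; both rely on the same SPD-WL fact and the same cut-edge additivity, so the difference is organizational rather than conceptual.
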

\begin{proof}
Pick $u'=\arg\max_{u'\in f_G^{-1}(u,\chi(u))}\dis_G(u,u')$ and similarly pick $v'$. It follows that any path between $u'$ and $v'$ goes through edge $\{u,v\}$. Therefore, $\dis_G(u',v')=\dis_G(u,u')+\dis_G(v,v')+1$. Since all nodes $u,u',v,v'$ have the same color under SPD-WL, there exists a node $w\in\chi_G^{-1}(\chi_G(u))$ satisfying $\dis_G(u,w)=\dis_G(u',v')$ and thus $\dis_G(u,w)>\dis_G(u,u')$. By definition of the node $u'$, $f_G(w)\neq (u,\chi(u))$ and thus $f_G(w)= (v,\chi(u))$. Therefore, $\dis_G(u,w)=\dis_G(v,w)+1$, which implies that $$\dis_G(v,w)=\dis_G(v,v')+\dis_G(u,u').$$
Since $\dis_G(v,w)\le \dis_G(v,v')$, we have $\dis_G(v,w)=\dis_G(v,v')$ and $u=u'$. A similar argument yields $v=v'$, finishing the proof. 
\end{proof}

We can now prove some useful properties of the auxiliary graph $G^\mathrm{A}$ based on \cref{thm:spdwl_case2_1,thm:spdwl_case2_2}.
\begin{corollary}
\label{thm:spdwl_case2_3}
For any $c_1,c_2\in \gC$, $\ldblbrace (u,c_1),(u,c_2)\rdblbrace\in \gE_{G^\mathrm{A}}$ if and only if $\ldblbrace (v,c_1),(v,c_2)\rdblbrace\in \gE_{G^\mathrm{A}}$.
\end{corollary}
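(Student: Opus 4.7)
The plan is to exploit two ingredients: (i) the WL-condition satisfied by SPD-WL (Proposition~\ref{thm:spdwl_wl_condition}), which guarantees that same-color vertices have identical color-distribution on their neighborhoods; and (ii) the uniqueness statements of Lemmas~\ref{thm:spdwl_case2_1} and~\ref{thm:spdwl_case2_2}, which say that $f_G$ is a surjection and that $u$ (resp.\ $v$) is the unique preimage of $(u,\chi_G(u))$ (resp.\ $(v,\chi_G(v))$). By the definition of $\gE_{G^\mathrm{A}}$, it suffices to prove the forward direction, since the reverse follows by swapping the roles of $u$ and $v$.

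Suppose $\ldblbrace (u,c_1),(u,c_2)\rdblbrace\in\gE_{G^\mathrm{A}}$. By definition there is an edge $\{w_1,w_2\}\in\gE_G$ with $f_G(w_1)=(u,c_1)$ and $f_G(w_2)=(u,c_2)$, so $w_1,w_2\in\gS_u$ and $\chi_G(w_i)=c_i$. I would split into two cases. \textbf{Case A}: $c_1\neq\chi_G(u)$. Then by surjectivity of $f_G$ (Lemma~\ref{thm:spdwl_case2_1}), pick any $w_1'\in f_G^{-1}(v,c_1)\subseteq\gS_v$. Since $\chi_G(w_1)=\chi_G(w_1')=c_1$, the WL-condition yields some $w_2'\in\gN_G(w_1')$ with $\chi_G(w_2')=c_2$. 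Because $c_1\neq\chi_G(u)=\chi_G(v)$ we have $w_1'\neq v$, so the only way $w_2'$ could fail to lie in $\gS_v$ would be via the cut edge $\{u,v\}$, which is impossible. Hence $\{w_1',w_2'\}\in\gE_G$ is an intra-$\gS_v$ edge witnessing $\ldblbrace (v,c_1),(v,c_2)\rdblbrace\in\gE_{G^\mathrm{A}}$.

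\textbf{Case B}: $c_1=\chi_G(u)$. By Lemma~\ref{thm:spdwl_case2_2} and $u\in f_G^{-1}(u,\chi_G(u))$, we must have $w_1=u$, hence $w_2\in\gN_G(u)\cap\gS_u$, in particular $w_2\neq v$. A quick check shows $c_2\neq\chi_G(u)$: otherwise $w_2\in f_G^{-1}(u,\chi_G(u))\setminus\{u\}$, contradicting $|f_G^{-1}(u,\chi_G(u))|=1$. Now invoke the WL-condition directly at $u$ and $v$ (which have the same color $\chi_G(u)=\chi_G(v)$): since $|\gN_G(u)\cap\chi_G^{-1}(c_2)|\ge 1$, also $|\gN_G(v)\cap\chi_G^{-1}(c_2)|\ge 1$, so pick $w_2'$ from the latter set. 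Since $c_2\neq\chi_G(v)$, we have $w_2'\neq u$, hence $w_2'\in\gS_v$, and $\{v,w_2'\}$ witnesses $\ldblbrace (v,c_1),(v,c_2)\rdblbrace\in\gE_{G^\mathrm{A}}$.

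The main subtlety I anticipate is bookkeeping around the cut edge: whenever we transport a witness edge from the $u$-side to the $v$-side via the WL-condition, we must ensure the transported endpoint does not accidentally land across the cut (i.e., on $u$ when we intended $\gS_v$). Both cases above dispose of this worry by a color mismatch argument, leveraging that $\chi_G(u)=\chi_G(v)$ is the \emph{only} color that could allow a crossing, and the uniqueness of $u$ and $v$ as preimages of their color on their respective sides. Apart from this, the argument is short and self-contained given the preceding lemmas.
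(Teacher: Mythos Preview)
Your proof is correct and follows essentially the same approach as the paper: transport a witnessing edge from the $u$-side to the $v$-side using surjectivity of $f_G$ (Lemma~\ref{thm:spdwl_case2_1}) and the WL-condition, then rule out crossing the cut via the color mismatch $c_1\neq\chi_G(u)$. The only difference is that the paper avoids your Case~B entirely by first observing (via Lemma~\ref{thm:spdwl_case2_2}) that $c_1$ and $c_2$ cannot both equal $\chi_G(u)$---otherwise $w_1=w_2=u$---and then invoking the symmetry of the unordered pair $\ldblbrace c_1,c_2\rdblbrace$ to assume WLOG $c_1\neq\chi_G(u)$; your explicit Case~B is correct but redundant once this symmetry is noted.
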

\begin{proof}
By definition of $\gE_G^\mathrm{A}$, if $\ldblbrace (u,c_1),(u,c_2)\rdblbrace\in \gE_{G^\mathrm{A}}$, then there exists two vertices $w_1\in f_G^{-1}(u,c_1)$ and $w_2\in f_G^{-1}(u,c_2)$ such that $\{w_1,w_2\}\in\gE_G$. By \cref{thm:spdwl_case2_2}, either $\chi_G(w_1)\neq \chi_G(u)$ or $\chi_G(w_2)\neq \chi_G(u)$. Without loss of generality, assume $c_1\neq \chi_G(u)$. By \cref{thm:spdwl_case2_1}, there exists $x_1\in f_G^{-1}(v,c_1)$. Since $\chi_G(x_1)=\chi_G(w_1)$, $x_1$ must also connect to a node $x_2$ with $\chi_G(x_2)=c_2$. The edge $\{x_1,x_2\}\neq \{u,v\}$ because $\chi_G(x_1)=c_1\neq \chi_G(u)$. Therefore, $f(x_2)=(v,c_2)$, namely $\ldblbrace(v,c_1),(v,c_2)\rdblbrace\in \gE_G^\mathrm{A}$.
\end{proof}

The following lemma establishes the distance relationship between graphs $G$ and $G^\mathrm{A}$.

\begin{lemma}
\label{thm:spdwl_case2_4}
The following holds:
\begin{itemize}[topsep=0pt,leftmargin=30pt]
\setlength{\itemsep}{0pt}
    \item For any $w,w'\in\gV$, $\dis_G(w,w')\ge\dis_{G^\mathrm{A}}(f(w),f(w'))$.
    \item For any $\xi,\xi'\in\gV_{G^\mathrm{A}}$ and any node $w\in f_G^{-1}(\xi)$, there exists a node $w'\in f_G^{-1}(\xi')$ such that $\dis_G(w,w')=\dis_{G^\mathrm{A}}(\xi,\xi')$.
\end{itemize}
\end{lemma}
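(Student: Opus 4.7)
My plan is to prove the two items in order, with part~1 being a direct mapping of paths and part~2 requiring induction via a refined WL-condition.

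For part~1, I would take any shortest path $w = w_0, w_1, \dots, w_k = w'$ in $G$ with $k = \dis_G(w, w')$ and look at the image sequence $f_G(w_0), f_G(w_1), \dots, f_G(w_k)$ in $G^\mathrm{A}$. By the very definition of $\gE_{G^\mathrm{A}}$ (\cref{def:aux_graph}), each consecutive pair $(f_G(w_i), f_G(w_{i+1}))$ is either equal (when $\chi_G(w_i) = \chi_G(w_{i+1})$ and they lie on the same side of the cut edge) or adjacent in $G^\mathrm{A}$. So this image is a walk in $G^\mathrm{A}$ from $f_G(w)$ to $f_G(w')$ of length at most $k$, which immediately gives $\dis_{G^\mathrm{A}}(f_G(w), f_G(w')) \le k = \dis_G(w, w')$.

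For part~2, I would proceed by induction on $d := \dis_{G^\mathrm{A}}(\xi, \xi')$. The base case $d = 0$ is trivial by taking $w' = w$. For the inductive step, fix a shortest path $\xi = \eta_0 \sim \eta_1 \sim \cdots \sim \eta_d = \xi'$ in $G^\mathrm{A}$. The central technical step, which I view as the main obstacle, is to upgrade the SPD-WL WL-condition of \cref{thm:spdwl_wl_condition} from $\chi_G$ to $f_G$: namely, to show that whenever $f_G(w_1) = f_G(w_2)$, we have $|\gN_G(w_1) \cap f_G^{-1}(\eta)| = |\gN_G(w_2) \cap f_G^{-1}(\eta)|$ for every $\eta \in \gV_{G^\mathrm{A}}$. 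I would prove this by observing that \cref{thm:spdwl_case2_2} pins down $f_G^{-1}(u, \chi_G(u)) = \{u\}$ and $f_G^{-1}(v, \chi_G(v)) = \{v\}$; hence if $w_1 \neq u, v$ lies on the $u$-side (i.e.\ $h_G(w_1) = u$) and $f_G(w_1) = f_G(w_2)$, then $w_2$ also lies on the $u$-side with $w_2 \neq u, v$, since the cut edge $\{u,v\}$ is traversed only at $u$ (resp.\ $v$). In that regime every neighbor of $w_1$ (and of $w_2$) stays on the $u$-side, so the $h_G$-label of every neighbor is already determined by which side $w_i$ itself is on, and the refined neighborhood multiset equality reduces to the ordinary SPD-WL WL-condition. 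The corner cases $w_1 = u$ or $w_1 = v$ are handled by \cref{thm:spdwl_case2_2}.

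With the refined WL-condition in hand, the induction closes quickly. Since $\eta_{d-1} \sim \eta_d = \xi'$ in $G^\mathrm{A}$, by \cref{def:aux_graph} there exist $x \in f_G^{-1}(\eta_{d-1})$ and $y \in f_G^{-1}(\xi')$ with $\{x,y\} \in \gE_G$, so $|\gN_G(x) \cap f_G^{-1}(\xi')| \ge 1$. Applying the inductive hypothesis to $w \in f_G^{-1}(\xi)$ and $\eta_{d-1}$ yields some $w_{d-1} \in f_G^{-1}(\eta_{d-1})$ with $\dis_G(w, w_{d-1}) = d - 1$. The refined WL-condition applied to $x$ and $w_{d-1}$ gives $|\gN_G(w_{d-1}) \cap f_G^{-1}(\xi')| \ge 1$, so we may pick $w' \in \gN_G(w_{d-1}) \cap f_G^{-1}(\xi')$. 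Then $\dis_G(w, w') \le d$ by concatenating paths, and combining with part~1 we get $\dis_G(w, w') = d = \dis_{G^\mathrm{A}}(\xi, \xi')$, completing the induction.
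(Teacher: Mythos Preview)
Your proposal is correct and follows essentially the same approach as the paper. Part~1 is identical; for part~2, the paper proves directly the one-step edge-lifting property (for each edge $\ldblbrace\xi,\xi'\rdblbrace\in\gE_{G^\mathrm{A}}$ with $\xi\neq\xi'$ and each $w\in f_G^{-1}(\xi)$, some neighbor of $w$ lies in $f_G^{-1}(\xi')$) using the same key observation that non-$\{u,v\}$ edges stay on one side of the cut, whereas you package this as a refined WL-condition for $f_G$ and run an explicit induction on $d$---a cosmetic difference only.
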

\begin{proof}
The first bullet is trivial since for all $\{w,w'\}\in\gE_G$, $\ldblbrace f(w),f(w')\rdblbrace\in\gE_{G^\mathrm{A}}$ by \cref{def:aux_graph}. We prove the second bullet in the following. Note that $G^\mathrm{A}$ can have self-loops, but for any $\xi,\xi'\in\gV^\mathrm{A}$, the shortest path between $\xi$ and $\xi'$ will not go through self-loops. We only need to prove that for all $\ldblbrace\xi,\xi'\rdblbrace\in\gE^\mathrm{A}$, $\xi\neq \xi'$ and all $w\in f_G^{-1}(\xi)$, there exists $w'\in f_G^{-1}(\xi')$ such that $\{w,w'\}\in \gE_G$. This will imply that for any $\xi,\xi'\in\gV_{G^\mathrm{A}}$ and any node $w\in f_G^{-1}(\xi)$, there exists a node $w'\in f_G^{-1}(\xi')$ such that $\dis_G(w,w')\le\dis_{G^\mathrm{A}}(\xi,\xi')$, which completes the proof by combining the first bullet in \cref{thm:spdwl_case2_4}.

The case of $\ldblbrace\xi,\xi'\rdblbrace=\ldblbrace(u,\chi_G(u)),(v,\chi_G(v))\rdblbrace$ is trivial. Now assume that $\ldblbrace\xi,\xi'\rdblbrace\neq\ldblbrace(u,\chi_G(u)),(v,\chi_G(v))\rdblbrace$. By \cref{def:aux_graph}, there exists $x\in f_G^{-1}(\xi)$ and $x'\in f_G^{-1}(\xi')$, such that $\{x,x'\}\in\gE_G$. Note that $h_G(x)=h_G(x')$ because $\{x,x'\}\neq \{u,v\}$. Since $\chi_G(x)=\chi_G(w)$, there exists $w'\in \chi_G^{-1}(\chi_G(x'))$ such that $\{w,w'\}\in\gE_G$. It must hold that $h_G(w)=h_G(w')$ (otherwise $\{w,w'\}=\{u,v\}$ and thus $\ldblbrace\xi,\xi'\rdblbrace=\ldblbrace(u,\chi_G(u)),(v,\chi_G(v))$). Therefore, $h_G(w')=h_G(w)=h_G(x)=h_G(x')$ and thus $f_G(w')=f_G(x')$, namely $w'\in f_G^{-1}(\xi')$.
\end{proof}

\cref{thm:spdwl_case2_4} leads to the following corollary:
\begin{corollary}
\label{thm:spdwl_case2_5}
The following holds:
\begin{itemize}[topsep=0pt,leftmargin=30pt]
\setlength{\itemsep}{0pt}
    \item For any $w,w'\in\gV$ satisfying $\chi_G(w)=\chi_G(w')$ and $h_G(w)=h_G(w')$ (i.e. $f_G(w)=f_G(w')$), $\dis_G(u,w)=\dis_G(u,w')$ and $\dis_G(v,w)=\dis_G(v,w')$;
    \item For any $w,w'\in\gV$ satisfying $\chi_G(w)=\chi_G(w')$ and $h_G(w)\neq h_G(w')$, $\dis_G(u,w)=\dis_G(v,w')$ and $\dis_G(v,w)=\dis_G(u,w')$.
\end{itemize}
\end{corollary}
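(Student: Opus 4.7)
\textbf{Proof plan for \cref{thm:spdwl_case2_5}.} The plan is to reduce both bullets to a single application of the WL-condition combined with the ``cut-edge distance formula''. The first step is to pin down the color class of $u$ and $v$ precisely: since $\chi_G(u)=\chi_G(v)$, any $w\in\chi_G^{-1}(\chi_G(u))$ has $f_G(w)=(h_G(w),\chi_G(u))$ with $h_G(w)\in\{u,v\}$, so $\chi_G^{-1}(\chi_G(u))\subseteq f_G^{-1}(u,\chi_G(u))\cup f_G^{-1}(v,\chi_G(u))$. By \cref{thm:spdwl_case2_2}, both of these preimages are singletons, and clearly $u\in f_G^{-1}(u,\chi_G(u))$ and $v\in f_G^{-1}(v,\chi_G(v))=f_G^{-1}(v,\chi_G(u))$. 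Hence $\chi_G^{-1}(\chi_G(u))=\{u,v\}$, a clean identification that is crucial for the rest of the argument.

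Next, I invoke the WL-condition. By \cref{thm:spdwl_wl_condition}, the SPD-WL color mapping $\chi_G$ (viewed as jointly with itself) satisfies the WL-condition, so \cref{thm:wl_condition_corollary} applied with color $c=\chi_G(u)$ and vertices $w,w'$ (which share the same color) yields
\begin{equation*}
\dis_G(w,\chi_G^{-1}(\chi_G(u)))=\dis_G(w',\chi_G^{-1}(\chi_G(u))),
\end{equation*}
which by the previous paragraph rewrites as
\begin{equation*}
\min(\dis_G(w,u),\dis_G(w,v))=\min(\dis_G(w',u),\dis_G(w',v)).
\end{equation*}

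The final step is a case analysis using the cut-edge formula: because $\{u,v\}$ is a cut edge, the definition of $h_G$ gives $\dis_G(x,v)=\dis_G(x,u)+1$ whenever $h_G(x)=u$, and $\dis_G(x,u)=\dis_G(x,v)+1$ whenever $h_G(x)=v$. In particular, $\min(\dis_G(x,u),\dis_G(x,v))$ equals $\dis_G(x,u)$ in the first case and $\dis_G(x,v)$ in the second. For the first bullet, when $h_G(w)=h_G(w')=u$ (the case $=v$ is symmetric), the displayed equality collapses to $\dis_G(w,u)=\dis_G(w',u)$, and adding $1$ gives $\dis_G(w,v)=\dis_G(w',v)$. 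For the second bullet, WLOG $h_G(w)=u$ and $h_G(w')=v$; then the displayed equality becomes $\dis_G(w,u)=\dis_G(w',v)$, and adding $1$ on each side via the cut-edge formula yields $\dis_G(w,v)=\dis_G(w,u)+1=\dis_G(w',v)+1=\dis_G(w',u)$, completing the proof.

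The only subtle step is establishing $\chi_G^{-1}(\chi_G(u))=\{u,v\}$, which crucially uses \cref{thm:spdwl_case2_2}; after that, the argument is a short and transparent application of the WL-condition plus the elementary cut-edge identity, with no need to invoke the auxiliary graph $G^\mathrm{A}$ or \cref{thm:spdwl_case2_4}.
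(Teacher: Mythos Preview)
Your proof is correct and takes a genuinely different, more elementary route than the paper's. The paper proves this corollary by going through the auxiliary graph $G^{\mathrm A}$: it invokes \cref{thm:spdwl_case2_4} to translate $\dis_G(u,w)$ and $\dis_G(u,w')$ into distances in $G^{\mathrm A}$ (using that $f_G^{-1}(f_G(u))=\{u\}$ from \cref{thm:spdwl_case2_2}), and for the second bullet additionally relies on the symmetry property \cref{thm:spdwl_case2_3} to argue $\dis_{G^{\mathrm A}}((u,\chi_G(u)),(u,c))=\dis_{G^{\mathrm A}}((v,\chi_G(v)),(v,c))$. Your argument instead observes directly that \cref{thm:spdwl_case2_2} forces $\chi_G^{-1}(\chi_G(u))=\{u,v\}$, then applies the generic WL-condition consequence \cref{thm:wl_condition_corollary} with $c=\chi_G(u)$ to obtain equality of $\min(\dis_G(\cdot,u),\dis_G(\cdot,v))$, and finishes with the cut-edge identity $|\dis_G(x,u)-\dis_G(x,v)|=1$. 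This sidesteps \cref{thm:spdwl_case2_3} and \cref{thm:spdwl_case2_4} entirely for this corollary, yielding a shorter proof. The trade-off is that the paper's auxiliary-graph machinery is set up systematically and is reused in the main argument of \cref{sec:proof_spdwl_case2} (where edges of $H^{\mathrm A}$ are analyzed), so its structural viewpoint still pays off later; your shortcut simplifies only this intermediate step.
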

\begin{proof}
Proof of the first bullet: by \cref{thm:spdwl_case2_4}, there exists two nodes $u_1,u_2\in f_G^{-1}(f_G(u))$ such that $\dis_G(u_1,w)=\dis_{G^\mathrm{A}}(f_G(u),f_G(w))$ and $\dis_G(u_2,w')=\dis_{G^\mathrm{A}}(f_G(u),f_G(w'))$. Therefore, $\dis_G(u_1,w)=\dis_G(u_2,w')$. However, by \cref{thm:spdwl_case2_2} and the condition $h_G(w)=h_G(w')$, it must be $u_1=u_2=u$, namely $\dis_G(u,w)=\dis_G(u,w')$. The proof of $\dis_G(v,w)=\dis_G(v',w')$ is similar.

Proof of the second bullet: Let $\chi_G(w)=\chi_G(w')=c$. Without loss of generality, assume $f_G(w)=(u,c)$ and $f(w')=(v,c)$. By \cref{thm:spdwl_case2_4}, it suffices to prove that $\dis_{G^\mathrm{A}}((u,\chi_G(u)),(u,c))=\dis_{G^\mathrm{A}}((v,\chi_G(v)),(v,c))$. By the definition of $G^\mathrm{A}$ and its cut edge $\ldblbrace(u,\chi_G(u),(v,\chi_G(v))\rdblbrace$, the shortest path between $(u,\chi_G(u))$ and $(u,c)$ must only go through nodes in the set $\{(u,c_1):c_1\in \gC\}$, and similarly the shortest path between $(v,\chi_G(v))$ and $(v,c)$ must only go through nodes in $\{(v,c_2):c_2\in \gC\}$. Finally, \cref{thm:spdwl_case2_3} says that for $c_1,c_2\in\gC$, $\ldblbrace(u,c_1),(u,c_2)\rdblbrace\in G^\mathrm{A}$ if and only if $\ldblbrace(v,c_1),(v,c_2)\rdblbrace\in G^\mathrm{A}$. We thus conclude that $\dis_{G^\mathrm{A}}((u,\chi_G(u)),(u,c))=\dis_{G^\mathrm{A}}((v,\chi_G(v)),(v,c))$ and $\dis_G(u,w)=\dis_G(v,w')$.
\end{proof}

Finally, we can prove the following important corollary:
\begin{corollary}
\label{thm:spdwl_case2_6}
For any $c\in\gC$, $|f_G^{-1}(u,c)|=|f_G^{-1}(v,c)|$.
\end{corollary}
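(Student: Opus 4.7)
The plan is to exploit the fact that, since $\chi_G(u)=\chi_G(v)$, the SPD-WL update at $u$ and $v$ forces the two distance multisets $\ldblbrace(\dis_G(u,x),\chi_G(x)):x\in\gV\rdblbrace$ and $\ldblbrace(\dis_G(v,x),\chi_G(x)):x\in\gV\rdblbrace$ to coincide, and then to read off the sizes $|f_G^{-1}(u,c)|$ and $|f_G^{-1}(v,c)|$ as the multiplicities of a single pair $(d_c,c)$ on the two sides.

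First I would dispose of the trivial case $c=\chi_G(u)=\chi_G(v)$ by invoking \cref{thm:spdwl_case2_2}, which already gives $|f_G^{-1}(u,c)|=|f_G^{-1}(v,c)|=1$. So fix $c\neq\chi_G(u)$. Applying the first bullet of \cref{thm:spdwl_case2_5} to any two nodes in $f_G^{-1}(u,c)$ shows that $\dis_G(u,\cdot)$ is constant on $f_G^{-1}(u,c)$; call this common value $d^u_c$. By the definition of $h_G$ in (\ref{eq:proof_spdwl_case2}) and the fact that $h_G(w)=u$, we also have $\dis_G(v,w)=d^u_c+1$ for every $w\in f_G^{-1}(u,c)$. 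Symmetrically, there is a constant $d^v_c$ with $\dis_G(v,w')=d^v_c$ and $\dis_G(u,w')=d^v_c+1$ for every $w'\in f_G^{-1}(v,c)$. The second bullet of \cref{thm:spdwl_case2_5}, applied to one node of each kind (which exist by \cref{thm:spdwl_case2_1}), gives $d^u_c=d^v_c$; denote this common value $d_c$.

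Now the key step: since $\chi_G(u)=\chi_G(v)$, the SPD-WL update rule (\ref{eq:spdwl}) forces
\begin{equation*}
\ldblbrace(\dis_G(u,x),\chi_G(x)):x\in\gV\rdblbrace=\ldblbrace(\dis_G(v,x),\chi_G(x)):x\in\gV\rdblbrace.
\end{equation*}
Count the multiplicity of the pair $(d_c,c)$ on both sides. On the left, the contributors are exactly those $x\in\chi_G^{-1}(c)$ with $\dis_G(u,x)=d_c$; by the previous paragraph and because $c\neq\chi_G(u)$, these are precisely the elements of $f_G^{-1}(u,c)$ (the complementary nodes in $\chi_G^{-1}(c)$, namely $f_G^{-1}(v,c)$, have $\dis_G(u,\cdot)=d_c+1\neq d_c$). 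On the right, the contributors are symmetrically the elements of $f_G^{-1}(v,c)$. Equating the two multiplicities yields $|f_G^{-1}(u,c)|=|f_G^{-1}(v,c)|$.

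The main subtlety is making sure the ``constant distance'' argument covers every node in $\chi_G^{-1}(c)$ and correctly partitions it into the two fibers without collision of $d$-values, which is exactly what the two bullets of \cref{thm:spdwl_case2_5} together guarantee. Everything else is a routine bookkeeping step on the SPD-WL multiset equality.
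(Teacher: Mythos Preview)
Your proof is correct and follows essentially the same approach as the paper: use \cref{thm:spdwl_case2_5} to establish that nodes in $f_G^{-1}(u,c)$ have $\dis_G(u,\cdot)=d_c$ and $\dis_G(v,\cdot)=d_c+1$ (and symmetrically for $f_G^{-1}(v,c)$), then read off the fiber sizes from the SPD-WL multiset equality induced by $\chi_G(u)=\chi_G(v)$. The paper's version is slightly terser (it does not separate the case $c=\chi_G(u)$, and compares the full two-valued distance multisets rather than a single multiplicity), but the argument is the same.
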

\begin{proof}
Pick any $w\in f_G^{-1}(u,c)$ and $x\in f_G^{-1}(v,c)$. By \cref{thm:spdwl_case2_5}, we have
\begin{align*}
    \dis_G(w,u)&=\dis_G(x,v):=d,\\
    \dis_G(w,v)&=\dis_G(x,u)=d+1.
\end{align*}
The multiset $\ldblbrace\dis_G(u,w'):\chi_G(w')=c\rdblbrace$ contains $|f_G^{-1}(u,c)|$ elements of value $d$ and $|f_G^{-1}(v,c)|$ elements of value $d+1$. The multiset $\ldblbrace\dis_G(v,w'):\chi_G(w')=c\rdblbrace$ has $|f_G^{-1}(v,c)|$ elements of value $d$ and $|f_G^{-1}(u,c)|$ elements of value $d+1$. Since $u$ and $v$ has the same color under SPD-WL, the two multiset must be equivalent. Therefore, $|f_G^{-1}(u,c)|=|f_G^{-1}(v,c)|$.
\end{proof}

Next, we switch our focus to the graph $H$. Since we have assumed that the graph representations of $G$ and $H$ are the same, i.e. $\ldblbrace\chi_G(w): w\in\gV\rdblbrace = \ldblbrace\chi_H(w): w\in\gV\rdblbrace$, the size of the set $\{w\in \gV:\chi_H(w)=\chi_G(u)\}$ must be 2. We may denote the elements as $u$ and $v$ without abuse of notation and thus $\{u,v\}\in\gE_H$. Also for any $w\in \gV$, we have $\dis_H(w,u)\neq\dis_H(w,v)$. Therefore, we can similarly define the mapping $f_H:\gV\to\{u,v\}\times\gV$ and the mapping $h_H:\gV\to\{u,v\}$ as in (\ref{eq:proof_spdwl_case2}). The auxiliary graph $H^\mathrm{A}$ is defined analogous to \cref{def:aux_graph}.

\begin{lemma}
\label{thm:spdwl_case2_7}
For any $c\in\gC$, $|f_H^{-1}(u,c)|=|f_H^{-1}(v,c)|=|f_G^{-1}(u,c)|=|f_G^{-1}(v,c)|$.
\end{lemma}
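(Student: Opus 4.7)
The plan is to transfer the detailed distance information that was established for $G$ (Corollaries~\ref{thm:spdwl_case2_5} and \ref{thm:spdwl_case2_6}) directly to $H$, using only the SPD-WL equality $\chi_G(u)=\chi_H(u)$ and the fact that $\{u,v\}\in\gE_H$ (already noted before the lemma). First I would observe that, by the SPD-WL update rule, $\chi_G(u)=\chi_H(u)$ forces the multisets $\ldblbrace(\dis_G(u,w),\chi_G(w)):w\in\gV\rdblbrace$ and $\ldblbrace(\dis_H(u,w),\chi_H(w)):w\in\gV\rdblbrace$ to coincide, and the same with $v$ in place of $u$. Applying Corollary~\ref{thm:spdwl_case2_5} in $G$, each of these multisets, restricted to a fixed color $c\in\gC$, has a rigid form: writing $a:=|f_G^{-1}(u,c)|$ and $b:=|f_G^{-1}(v,c)|$, the distances from $u$ consist of $a$ copies of some $d$ and $b$ copies of $d+1$, and the distances from $v$ consist of $b$ copies of $d$ and $a$ copies of $d+1$. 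Corollary~\ref{thm:spdwl_case2_6} moreover gives $a=b$.

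Next I would verify that $f_H$ is well-defined, i.e.\ $\dis_H(u,w)\neq\dis_H(v,w)$ for every $w$. This follows from the multiset transfer above, since in $G$ the cut-edge assumption forces these two distances to differ by exactly $1$, so the matched multiset in $H$ contains no pair of equal distances. Combined with the triangle inequality and $\dis_H(u,v)=1$ (which uses $\{u,v\}\in\gE_H$), one gets $|\dis_H(u,w)-\dis_H(v,w)|=1$ for every $w$. Consequently, each $w\in\chi_H^{-1}(c)$ lies in exactly one of $f_H^{-1}(u,c)$ or $f_H^{-1}(v,c)$, and its pair of distances from $u,v$ is $(d_w,d_w+1)$ or $(d_w+1,d_w)$ accordingly, for some integer $d_w$.

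Finally I would decompose the two sides of $\chi_H^{-1}(c)$ by the value $d_w$, introducing counts $n_A(k):=|\{w\in f_H^{-1}(u,c):d_w=k\}|$ and $n_B(k):=|\{w\in f_H^{-1}(v,c):d_w=k\}|$. Matching the resulting two distance multisets (from $u$ and from $v$) in $H$ against the two known multisets from $G$ yields a small linear system. The multiset from $u$ forces the support to satisfy $n_A(k)=0$ for $k\notin\{d,d+1\}$ and $n_B(k)=0$ for $k\notin\{d-1,d\}$, and cross-checking with the multiset from $v$ pins the solution uniquely to $n_A(d)=a$, $n_B(d)=b$, with all remaining counts zero. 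This gives $|f_H^{-1}(u,c)|=a$ and $|f_H^{-1}(v,c)|=b$, and since $a=b$ by Corollary~\ref{thm:spdwl_case2_6}, all four quantities coincide.

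The main obstacle, in my view, is not the mechanical bookkeeping of the linear system but rather making sure $f_H$ is well-defined purely from the SPD-WL transfer (since we have not yet shown $\{u,v\}$ is a cut edge in $H$, which is the theorem we are ultimately trying to prove); the triangle-inequality step, relying on $\{u,v\}\in\gE_H$, is exactly what bridges this gap. Once that is in place, the remainder is a direct parallel to the proof of Corollary~\ref{thm:spdwl_case2_6}.
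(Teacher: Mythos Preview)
Your argument is correct, but it is considerably more elaborate than the paper's. The paper dispatches the lemma in two lines: since $\chi_H(u)=\chi_H(v)$, the two distance multisets $\ldblbrace\dis_H(u,w):\chi_H(w)=c\rdblbrace$ and $\ldblbrace\dis_H(v,w):\chi_H(w)=c\rdblbrace$ must coincide; but for each $w\in\chi_H^{-1}(c)$ the two distances differ by exactly $\pm 1$ according to $h_H(w)$, so the difference of the sums of the two multisets is $|f_H^{-1}(v,c)|-|f_H^{-1}(u,c)|$, which therefore vanishes. Then the total-count identity $|f_H^{-1}(u,c)|+|f_H^{-1}(v,c)|=|\chi_H^{-1}(c)|=|\chi_G^{-1}(c)|=|f_G^{-1}(u,c)|+|f_G^{-1}(v,c)|$ together with Corollary~\ref{thm:spdwl_case2_6} finishes. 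In other words, the paper never invokes Corollary~\ref{thm:spdwl_case2_5} on the $H$ side at all, and never solves for the actual distance values in $H$.

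Your route instead transfers the exact distance profile from $G$ (via Corollary~\ref{thm:spdwl_case2_5}) and solves the full linear system in the counts $n_A(\cdot),n_B(\cdot)$. This is valid and in fact proves a bit more than the lemma asks: along the way you establish that every $w\in f_H^{-1}(u,c)$ satisfies $\dis_H(u,w)=d$ (and symmetrically for $v$), i.e.\ the analogue of Corollary~\ref{thm:spdwl_case2_5} already holds in $H$. That extra information is not used downstream, so the paper's shortcut loses nothing. Your check that $f_H$ is well-defined via the triangle inequality and $\{u,v\}\in\gE_H$ is exactly what the paper records in the paragraph immediately preceding the lemma, so you are not assuming anything illicit there.
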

\begin{proof}
If $|f_H^{-1}(u,c)|\neq |f_H^{-1}(v,c)|$, we have $\ldblbrace \dis_H(u,w):\chi_H(w)=c\rdblbrace\neq \ldblbrace \dis_H(v,w):\chi_H(w)=c\rdblbrace$, implying that $u$ and $v$ cannot have the same color under SPD-WL. This already concludes the proof by using \cref{thm:spdwl_case2_6} as $$|f_H^{-1}(u,c)|+|f_H^{-1}(v,c)|=|f_G^{-1}(u,c)|+|f_G^{-1}(v,c)|.$$

\vspace{-25pt}
\end{proof}

We finally present a technical lemma which will be used in the subsequent proof.
\begin{lemma}
\label{thm:spdwl_case2_8}
Given node $w\in\gV$ and color $c\in\gC$, define multisets
\begin{align*}
    \gD_{G,=}(w,c)&:=\ldblbrace\dis_G(w,x):x\in\chi_G^{-1}(c),h_G(x)=h_G(w)\rdblbrace,\\
    \gD_{G,\neq}(w,c)&:=\ldblbrace\dis_G(w,x):x\in\chi_G^{-1}(c),h_G(x)\neq h_G(w)\rdblbrace.
\end{align*}
For any two nodes $w,w'\in{\gV}$ in graphs $G$ and $H$ satisfying $\chi_G(w)=\chi_H(w')$, pick any $d\in\gD_{G,\neq}(w,c)$ and $d'\in\gD_{H,=}(w',c)$. Then $d'<d$.
\end{lemma}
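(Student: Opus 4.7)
The plan is to reduce the cross-graph inequality $d'<d$ to two ingredients: a within-graph separation in each of $G$ and $H$ asserting that same-side distances to color-$c$ vertices are strictly smaller than opposite-side distances; and the joint WL-condition for SPD-WL (\cref{thm:spdwl_wl_condition}), which gives a multiset identity between $G$ and $H$ of the combined distance multiset from $w$ and $w'$ to color-$c$ vertices. Together with the cardinality match $|\gD_{G,=}(w,c)|=|\gD_{H,=}(w',c)|$ (from \cref{thm:spdwl_case2_6,thm:spdwl_case2_7}), the two same-side multisets must be exactly the lower tail of the common combined multiset, so $\gD_{G,=}(w,c)=\gD_{H,=}(w',c)$ as multisets, and the claim follows by chaining
\[
d'\le\max\gD_{H,=}(w',c)=\max\gD_{G,=}(w,c)<\min\gD_{G,\neq}(w,c)\le d.
\]

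The heart of the argument is the within-graph separation, which I would prove as follows. Let $\alpha=h_G(w)$ and let $\bar\alpha$ be the other endpoint of the cut edge $\{u,v\}$. Pick any $x_s\in\chi_G^{-1}(c)$ with $h_G(x_s)=\alpha$ and any $x_o\in\chi_G^{-1}(c)$ with $h_G(x_o)=\bar\alpha$. Because $\chi_G(x_s)=\chi_G(x_o)$, SPD-WL applied to the color $\chi_G(u)=\chi_G(v)$ forces the two-element multisets $\ldblbrace\dis_G(x_s,u),\dis_G(x_s,v)\rdblbrace$ and $\ldblbrace\dis_G(x_o,u),\dis_G(x_o,v)\rdblbrace$ to agree. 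The cut edge $\{u,v\}$ makes $\dis_G(y,u)$ and $\dis_G(y,v)$ differ by exactly $1$ for every $y$, with the smaller value attained at $h_G(y)$; this identifies the smaller element in each pair and gives $\dis_G(x_s,\alpha)=\dis_G(x_o,\bar\alpha)=:a$. Since $x_o$ is on the opposite side of the cut, every shortest $w$–$x_o$ path crosses $\{u,v\}$, so $\dis_G(w,x_o)=\dis_G(w,\alpha)+1+a$; on the other hand, the triangle inequality via $\alpha$ gives $\dis_G(w,x_s)\le\dis_G(w,\alpha)+a$, yielding the strict inequality $\dis_G(w,x_s)<\dis_G(w,x_o)$. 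The identical argument inside $H$ yields the matching separation there.

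The remainder is bookkeeping: combine the two within-graph separations with the SPD-WL joint WL-condition (which delivers the combined multiset equality across graphs) and the cardinality equality to deduce that the same-side multisets in $G$ and $H$ coincide, then apply the display above. The only genuine obstacle is spotting the identification $\dis_G(x_s,\alpha)=\dis_G(x_o,\bar\alpha)$; once this color-forced swap is in place, the extra $+1$ from crossing the cut edge, pitted against the triangle-inequality upper bound, makes the separation essentially automatic.
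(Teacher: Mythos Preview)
Your within-$G$ separation argument is correct, and in fact it already contains all the ingredients needed for a direct proof. The gap is the sentence ``The identical argument inside $H$ yields the matching separation there.'' In $G$ the step
\[
\dis_G(w,x_o)=\dis_G(w,\alpha)+1+a
\]
holds because $\{u,v\}$ is a cut edge, so every $w$--$x_o$ shortest path crosses it. At this point in the proof we do \emph{not} know that $\{u,v\}$ is a cut edge in $H$; indeed the lemma is precisely what is later used to establish that. In $H$ you only know $\{u,v\}\in\gE_H$ and $|\dis_H(y,u)-\dis_H(y,v)|=1$, which gives no lower bound of the required form on $\dis_H(w',x'_o)$. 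Without the within-$H$ separation, your ``lower tail'' argument for $\gD_{G,=}(w,c)=\gD_{H,=}(w',c)$ collapses: the combined-multiset equality and cardinality match alone do not pin down which half of the multiset is $\gD_{H,=}$.

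The fix is to bypass the within-$H$ separation entirely and compare across graphs directly, which is what the paper does. After your color-forced identification $\dis_G(x_s,\alpha)=\dis_G(x_o,\bar\alpha)=a$, note that the same two-element multiset argument applied to any $x'\in f_H^{-1}(h_H(w'),c)$ gives $\dis_H(x',h_H(w'))=a$ as well, and likewise $\dis_H(w',h_H(w'))=\dis_G(w,\alpha)$. Then the triangle inequality in $H$ (no cut edge needed) yields
\[
\dis_H(w',x')\le \dis_H(w',h_H(w'))+\dis_H(h_H(w'),x')=\dis_G(w,\alpha)+a<\dis_G(w,\alpha)+1+a=\dis_G(w,x_o),
\]
the last equality using the cut edge in $G$, which you do have. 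This is exactly $d'<d$.
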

\begin{proof}
Without loss of generality, assume $h_G(w)=h_H(w')=u$ and let $f_G(w)=f_H(w')=(u,c_w)$. Pick $x\in f_G^{-1}(v,c)$ and $x'\in f_H^{-1}(u,c)$, then $\dis_H(x',u)=\min(\dis_G(x,u),\dis_G(x,v))$ and $\dis_H(w',u)=\min(\dis_G(w,u),\dis_G(w,v))$. Thus
\begin{align}
    \notag
    \dis_H(w',x')&\le \dis_H(w',u)+\dis_H(u,x')\\
    \notag
    &=\min(\dis_G(w,u),\dis_G(w,v))+\min(\dis_G(x,u),\dis_G(x,v))\\
    \notag
    &<\min(\dis_G(w,u)+\dis_G(x,v),\dis_G(w,v)+\dis_G(x,u))+1\\
    \notag
    &=\dis_G(w,x),
\end{align}
which concludes the proof.
\end{proof}

In the following, we will prove that $\{u,v\}$ is a cut edge in graph $H$. Consider an edge $\ldblbrace(u,c_1),(v,c_2)\rdblbrace\in \gE_{H^\mathrm{A}}$ (such an edge exists because $\ldblbrace(u,\chi_H(u)),(v,\chi_H(v))\rdblbrace\in \gE_H^\mathrm{A}$). We will prove that this is the only case, i.e. it must be $c_1=\chi_H(u)=\chi_H(v)=c_2$.

By \cref{def:aux_graph}, $\ldblbrace(u,c_1),(v,c_2)\rdblbrace\in \gE_{H^\mathrm{A}}$ implies that there exists two nodes $x'\in f_H^{-1}(u,c_1)$ and $w'\in f_H^{-1}(v,c_2)$, such that $\{w',x'\}\in\gE_H$. Pick $w\in \chi_G^{-1}(c_2)$. By \cref{thm:spdwl_case2_8}, $\gD_{H,=}(w',c_1)\cap \gD_{G,\neq}(w,c_1)=\emptyset$. Since $w'$ and $w$ have the same color under SPD-WL, $$\gD_{H,=}(w',c_1)\cup \gD_{H,\neq}(w',c_1)=\gD_{G,=}(w,c_1)\cup\gD_{G,\neq}(w,c_1).$$
By \cref{thm:spdwl_case2_7}, $|\gD_{H,=}(w',c_1)|=|\gD_{H,\neq}(w',c_1)|=|\gD_{G,=}(w,c_1)|=|\gD_{G,\neq}(w,c_1)|$. Therefore, $\gD_{G,\neq}(w,c_1)=\gD_{H,\neq}(w',c_1)$. We claim that all elements in the set $\gD_{G,\neq}(w,c_1)$ are the same. This is because for any $x\in \chi_G^{-1}(c_1)$, $h_G(x)\neq h_G(w)$, we have $$\dis_G(w,x)=\dis_G(w,h(w))+1+\dis_G(h(x),x),$$
and by \cref{thm:spdwl_case2_5} $\dis_G(h(x),x)$ has an equal value for different $x$. Since $\{w',x'\}\in\gE_H$, we have $1\in\gD_{H,\neq}(w',c_1)$ and thus all elements in $\gD_{G,\neq}(w,c_1)$ equals 1. Therefore, $c_1=\chi_G(u)$. Analogously, $c_2=\chi_G(u)$. Therefore, $c_1=\chi_H(u)=\chi_H(v)=c_2$.

Let $\gS_u=\{w\in\gV:h_H(w)=u\}$ and $\gS_v=\{w\in\gV:h_H(w)=v\}$. Then the above argument implies that if $w\in\gS_u$, $x\in\gS_v$ and $\{w,x\}\in\gE_G$, then $\{w,x\}=\{u,v\}$. Therefore $\{u,v\}$ is a cut edge of graph $H$.

\subsubsection{The general case}
\label{sec:proof_spdwl_general}
The above proof assumes that the graphs $G$ and $H$ are both connected, and their graph representations are euqal, i.e. $\ldblbrace \chi_G(w):w\in\gV\rdblbrace=\ldblbrace \chi_H(w):w\in\gV\rdblbrace$. In the subsequent proof we remove these assumptions and prove the general setting.

\begin{lemma}
\label{thm:spdwl_general_1}
Either of the following two properties holds:
\begin{itemize}[topsep=0pt,leftmargin=30pt]
\setlength{\itemsep}{0pt}
    \item $\ldblbrace \chi_G(w):w\in\gV\rdblbrace=\ldblbrace \chi_H(w):w\in\gV\rdblbrace$;
    \item $\ldblbrace \chi_G(w):w\in\gV\rdblbrace\cap\ldblbrace \chi_H(w):w\in\gV\rdblbrace=\emptyset$.
\end{itemize}
\end{lemma}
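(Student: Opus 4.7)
The plan is to proceed by a short inductive analysis on the iteration index $t$ of the SPD-WL refinement, exploiting the injectivity of the hash function to trace constraints backward in time. The key observation is that if $\chi_G^t(v)=\chi_H^t(x)$ for some $v\in\gV_G$ and $x\in\gV_H$, then by injectivity of $\operatorname{hash}$ in the update rule of \cref{alg:gdwl}, the input multisets must coincide:
\[
\ldblbrace (\dis_G(v,u),\chi_G^{t-1}(u)) : u\in\gV_G\rdblbrace = \ldblbrace (\dis_H(x,u),\chi_H^{t-1}(u)) : u\in\gV_H\rdblbrace.
\]
Projecting onto the second coordinates immediately yields $\ldblbrace\chi_G^{t-1}(u):u\in\gV_G\rdblbrace = \ldblbrace\chi_H^{t-1}(u):u\in\gV_H\rdblbrace$. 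Hence overlap of the color \emph{sets} at iteration $t$ forces equality of the color \emph{multisets} at iteration $t-1$.

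The argument then hinges on the smallest iteration at which the two multisets first disagree, $t^*:=\min\{t\geq 0 : P_G^t \neq P_H^t\}$ where $P_G^t:=\ldblbrace\chi_G^t(u):u\in\gV_G\rdblbrace$, with $t^*:=\infty$ if the multisets agree at every iteration. If $t^*=\infty$, then $P_G^T=P_H^T$ at the stable iteration $T$, giving the first alternative of the lemma. If instead $t^*<\infty$, the contrapositive of the key observation shows that no color can be shared at iteration $t^*+1$, since sharing would force $P_G^{t^*}=P_H^{t^*}$, contradicting the minimality of $t^*$. A straightforward induction then propagates disjointness of color sets to every later iteration: once disjoint at iteration $t$, the multisets at $t$ have disjoint supports and are therefore unequal, so no color can be shared at $t+1$. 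This yields the second alternative of the lemma at the stable iteration.

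The only subtlety is the choice of the iteration count $T$ defining the stable color mappings $\chi_G$ and $\chi_H$: one must take $T$ large enough both to reach partition stability on the two graphs and, in the case $t^*<\infty$, to satisfy $T\geq t^*+1$. Since $t^*$ is finite whenever it exists and is bounded above by the joint stabilization iteration (at most $|\gV_G|+|\gV_H|$), taking $T$ of this order suffices. I expect the main conceptual step is recognizing that hash-injectivity naturally produces a monotone dichotomy indexed by $t^*$; once this observation is in place, the rest of the proof is elementary bookkeeping.
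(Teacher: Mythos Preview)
Your proposal is correct and takes essentially the same approach as the paper: both arguments hinge on the observation that, by injectivity of $\operatorname{hash}$ in the GD-WL update, any single shared color at iteration $t$ forces equality of the full color multisets at iteration $t-1$, so once the multisets first differ they become disjoint at the next step and remain so. The paper's proof is terser (it simply states that inequality at $t$ forces disjointness at $t+1$ and then persists), while you organize the same implication around the first-divergence index $t^*$ and are slightly more explicit about choosing $T$ large enough; these are cosmetic differences, not substantive ones.
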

\begin{proof}
Consider the GD-WL procedure defined in \cref{alg:gdwl} with arbitrary distance function $d_G$. Suppose at iteration $t\ge T$, $\ldblbrace \chi_G^t(w):w\in\gV\rdblbrace\neq\ldblbrace \chi_H^t(w):w\in\gV\rdblbrace$. Then at iteration $t+1$, we have for each $v\in\gV$,
$$\chi_G^{t+1}(v)= \operatorname{hash}\left(\ldblbrace \operatorname{hash}(d_G(v,u), \chi_G^{t}(u)):u\in \gV\rdblbrace\right).$$
Therefore, $\chi_H^{t+1}(v)\neq \chi_G^{t+1}(u)$ for all $u\in\gV_G$ and $v\in\gV_H$, namely $$\ldblbrace \chi_G^{t+1}(w):w\in\gV\rdblbrace\cap\ldblbrace \chi_H^{t+1}(w):w\in\gV\rdblbrace=\emptyset.$$
Finally, by the injective property of the hash function, for any $t\ge T+1$, the above equation always holds. Therefore, the stable color mappings $\chi_G$ and $\chi_H$ satisfy \cref{thm:spdwl_general_1}.
\end{proof}

The above lemma implies that if there exists edges $\{w_1,w_2\}\in\gE_G$, $\{x_1,x_2\}\in\gE_H$ satisfying $\ldblbrace \chi_G(w_1),\chi_G(w_2)\rdblbrace=\ldblbrace \chi_H(x_1),\chi_H(x_2)\rdblbrace$, then $\ldblbrace \chi_G(w):w\in\gV\rdblbrace=\ldblbrace \chi_H(w):w\in\gV\rdblbrace$. Also, SPD-WL ensures that both graphs are either connected or disconnected. If they are both connected, the previous proof (\cref{sec:proof_spdwl_case1,sec:proof_spdwl_case2}) ensures that $\{w_1,w_2\}$ is a cut edge of $G$ if and only if $\{x_1,x_2\}$ is a cut edge of $H$. For the disconnected case, let $\gS_G\subset \gV$ be the largest connected component containing nodes $w_1,w_2$, and similarly denote $\gS_H\subset \gV$ as the largest connected component containing nodes $x_1,x_2$. Obviously, $|\gS_G|=|\gS_H|$ due to the facts that $\dis_G(w_1,y)=\infty\neq \dis_G(w_1,y')$ for all $y\notin \gS_G,y'\in\gS_G$ and that the two edges $\{w_1,w_2\}\in\gE_G$, $\{x_1,x_2\}$ have the same color under SPD-WL. Moreover, $\ldblbrace \chi_G(w):w\in\gS_G\rdblbrace=\ldblbrace \chi_H(w):w\in\gS_H\rdblbrace$. Now consider re-execute the SPD-WL algorithm on subgraphs $G[\gS_G]$ and $H[\gS_H]$ induced by the vertices in set $\gS_G$ and $\gS_H$, respectively. It follows that for any $u_G\in\gS_G$ and $u_H\in\gS_H$, $\chi_G(u_G)=\chi_H(u_H)$ implies that $\chi_{G[\gS_G]}(u_G)=\chi_{H[\gS_H]}(u_H)$. Therefore, $\{w_1,w_2\}$ is a cut edge of $G[\gS_G]$ if and only if $\{x_1,x_2\}$ is a cut edge of $H[\gS_H]$. By the dinifition of $\gS_G$ and $\gS_H$, $\{w_1,w_2\}$ is a cut edge of $G$ if and only if $\{x_1,x_2\}$ is a cut edge of $H$.

It remains to prove that $\ldblbrace \chi_G(w):w\in\gV\rdblbrace=\ldblbrace \chi_H(w):w\in\gV\rdblbrace$ implies $\operatorname{BCETree}(G)\simeq\operatorname{BCETree}(H)$. By definition of the block cut-edge tree, each cut edge of $G$ corresponds to a tree edge in $\operatorname{BCETree}(G)$ and each biconnected component of $G$ corresponds to a node of $\operatorname{BCETree}(G)$. We still only focus on the case of connected graphs $G,H$, and it is straightforward to extend the proof to the general (disconnected) case using a similar technique as the previous paragraph.

Given a fixed SPD-WL graph representation $\gR$, consider any graphs $G=(\gV,\gE_G)$ satisfying $\ldblbrace\chi_G(w):w\in\gV\rdblbrace=\gR$. Since we have proved that the SPD-WL node feature $\chi_G(v)$, $v\in\gV$ precisely locates all the cut edges, the multiset $$\gC^\mathrm{E}:=\ldblbrace\{\chi_G(u),\chi_G(v)\}:\{u,v\}\in\gE_G \text{ is a cut edge}\rdblbrace$$
is fixed (fully determined by $\gR$, not $G$). Denote $\gC^\mathrm{V}:=\bigcup_{\{c_1,c_2\}\in\gC^\mathrm{E}}\{c_1,c_2\}$
as the set that contains the color of endpoints of all cut edges. For each cut edge $\{u,v\}\in\gE_G$, denote $\gS_{G,u}$ and $\gS_{G,v}$ be the vertex partition corresponding to the two connected components after removing the edge $\{u,v\}$, satisfying $u\in\gS_{G,u}$, $v\in\gS_{G,v}$, $\gS_{G,u}\cap\gS_{G,v}=\emptyset$, $\gS_{G,u}\cup\gS_{G,v}=\gV$. It suffices to prove that given a cut edge $\{u,v\}\in\gE_G$ with color $\{\chi_G(u),\chi_G(v)\}$, the multiset $\ldblbrace \chi_G(w):w\in \gS_{G,u},\chi_G(w)\in\gC^\mathrm{V}\rdblbrace$ can be determined purely based on $\gR$ and the edge color $\ldblbrace\chi_G(u),\chi_G(v)\rdblbrace$, rather than the specific graph $G$ or edge $\{u,v\}$. This basically concludes the proof, since the BCETree can be uniquely constructed as follows: if $\ldblbrace \chi_G(w):w\in \gS_{G,u},\chi_G(w)\in\gC^\mathrm{V}\rdblbrace=\ldblbrace\chi_G(u)\rdblbrace$ (i.e. with only one element), then $\ldblbrace\chi_G(u),\chi_G(v)\rdblbrace$ is a leaf edge of the BCETree such that $\chi_G(u)$ connects to a biconnected component that is a leaf of the BCETree. After finding all the leaf edges, we can then find the BCETree edges that connect to leaf edges and determine which leaf edges they connect. The procedure can be recursively executed until the full BCETree is constructed. The whole procedure does not depend on the specific graph $G$  and only depends on $\gR$.

We now show how to determine $\ldblbrace \chi_G(w):w\in \gS_{G,u},\chi_G(w)\in\gC^\mathrm{V}\rdblbrace$ given a cut edge $\{u,v\}\in\gE_G$ with color $\{\chi_G(u),\chi_G(v)\}$. Define the multiset
\begin{equation}
    \gD(c_1,c_2):=\ldblbrace \dis_G(w,x):x\in\chi_G^{-1}(c_2) \rdblbrace\tag{$w\in\chi_G^{-1}(c_1)$ can be picked arbitrarily}
\end{equation}
Note that $\gD(c_1,c_2)$ is well-defined (does not depend on $w$) by definition of the SPD-WL color. For any $c_u,c_v\in\gC^\mathrm{E}$, pick arbitrary cut edge $\{u,v\}$ with color $\chi_G(u)=c_u,\chi_G(v)=c_v$. Define
\begin{equation}
    \gT(c_u,c_v)=\bigcup_{c\in\gC^\mathrm{V}}\ldblbrace c\rdblbrace\times |(\gD(c_u,c)+1)\cap\gD(c_v,c)|
\end{equation}
where $\ldblbrace c\rdblbrace\times m$ denotes a multiset with $m$ repeated elements $c$, and $\gD(c_u,c)+1:=\ldblbrace d+1:d\in\gD(c_u,c)\rdblbrace$. Intuitively speaking, $\gT(c_u,c_v)$ corresponds to the color of all nodes $w\in\gV$ such that $\dis_G(u,w)+1=\dis_G(v,w)$ and $\chi_G(w)\in\gC^\mathrm{V}$. Therefore, $\gT(c_u,c_v)$ is exactly the multiset $\ldblbrace \chi_G(w):w\in \gS_{G,u},\chi_G(w)\in\gC^\mathrm{V}\rdblbrace$ and we have completed the proof.

\subsection{Proof of \cref{thm:rdwl}}
\label{sec:proof_rdwl}
\begin{theorem}
Let $G=(\gV,\gE_G)$ and $H=(\gV,\gE_H)$ be two graphs, and let $\chi_G$ and $\chi_H$ be the corresponding RD-WL color mapping. Then the following holds:
\begin{itemize}[topsep=0pt,leftmargin=30pt]
\setlength{\itemsep}{0pt}
    \item For any two nodes $w\in\gV$ in $G$ and $x\in\gV$ in $H$, if $\chi_G(w)=\chi_H(x)$, then $w$ is a cut vertex of $G$ if and only if $x$ is a cut vertex of $H$.
    \item If the graph representations of $G$ and $H$ are the same under RD-WL, then their block cut-vertex trees (\cref{def:bcvtree}) are isomorphic. Mathematically, $\ldblbrace \chi_G(w):w\in\gV\rdblbrace=\ldblbrace \chi_H(w):w\in\gV\rdblbrace$ implies that $\operatorname{BCVTree}(G)\simeq\operatorname{BCVTree}(H)$.
\end{itemize}
\end{theorem}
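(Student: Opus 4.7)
The proof will parallel that of \cref{thm:spdwl} but replace the shortest-path distance by the resistance distance and exploit a sharper algebraic property of $\disR$: the \emph{series law at cut vertices}. Specifically, I will rely on the classical fact that for any three nodes $u_1, u_2, w$ of a connected graph $G$ with $u_1, u_2 \neq w$,
\begin{equation*}
    \disR_G(u_1, u_2) \le \disR_G(u_1, w) + \disR_G(w, u_2),
\end{equation*}
with equality if and only if $w$ lies on every path from $u_1$ to $u_2$, i.e.\ $w$ separates $u_1$ and $u_2$. Consequently, $w$ is a cut vertex of a connected $G$ iff there exist $u_1, u_2 \neq w$ realizing this equality. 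This algebraic characterization is the main new ingredient relative to the SPD-WL argument.

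For Part 1, assuming $\chi_G(w) = \chi_H(x)$, I will first reduce to the case where both graphs are connected by a lemma analogous to \cref{thm:proof_dsswl_0}, noting that $\disR_G(w, u) = \infty$ iff $u$ lies in a different connected component from $w$. Next, I will establish an RD-analogue of the WL-condition (mirroring \cref{thm:spdwl_wl_condition}): $\chi_G(w) = \chi_H(x)$ implies the multiset equality
\begin{equation*}
    \ldblbrace (\disR_G(w,u), \chi_G(u)) : u \in \gV \rdblbrace = \ldblbrace (\disR_H(x,v), \chi_H(v)) : v \in \gV \rdblbrace
\end{equation*}
at the stable coloring, which then iteratively propagates through refinement. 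The core step is to show that the triangle-equality statistic
\begin{equation*}
    \gT_G(w) := \ldblbrace (\disR_G(u_1, u_2), \chi_G(u_1), \chi_G(u_2)) : u_1, u_2 \in \gV, u_1, u_2 \neq w \rdblbrace
\end{equation*}
is in fact determined by $\chi_G(w)$, after which the existence of a pair realizing the series-law equality is directly readable from the color of $w$, yielding $w$ cut vertex iff $x$ cut vertex.

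For Part 2, using Part 1 the cut vertices of $G$ and $H$ are matched by RD-WL color. I will then show that biconnected components are likewise matched: two non-cut nodes lie in the same vertex-biconnected component iff no third node separates them via the additivity relation, a property detectable from the refined colors. This lets me reconstruct $\mathrm{BCC}^\mathrm{V}(G)$ from $\ldblbrace \chi_G(w) : w \in \gV \rdblbrace$. The tree structure of $\operatorname{BCVTree}(G)$ is then built inductively from the leaves inward: leaf blocks are those containing exactly one cut vertex, and after peeling them off one proceeds canonically based only on the color multiset, mirroring the BCETree reconstruction at the end of \cref{sec:proof_spdwl_general}.

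The hard part will be justifying that $\gT_G(w)$ is determined by $\chi_G(w)$, since RD-WL natively exposes only distances from a single anchor. To bridge this gap I anticipate invoking the series law in the opposite direction: at a cut vertex, pairwise distances between nodes in different branches decompose additively into anchor-distances, which \emph{are} captured by $\chi_G(w)$; for pairs within the same branch, an auxiliary construction analogous to the color graph of \cref{def:color_graph} but built from resistance distances should track the relevant pairwise information. This is where the bulk of the technical effort will lie and is also the point where the proof genuinely departs from the SPD case.
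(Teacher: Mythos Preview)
Your central algebraic observation---that $w$ is a cut vertex iff there exist $u_1,u_2\neq w$ with $\disR_G(u_1,w)+\disR_G(w,u_2)=\disR_G(u_1,u_2)$---is exactly the key lemma the paper proves (\cref{thm:proof_rdwl_part1_0}), and the reduction to connected graphs via $\disR=\infty$ is also what the paper does. So the skeleton of your plan matches.

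However, your proposed route to Part 1 through the statistic $\gT_G(w)$ has a genuine gap. You acknowledge that RD-WL only exposes distances from a single anchor, and you propose to recover pairwise distances by (i) additive decomposition for pairs in different branches of $w$, and (ii) an unspecified ``auxiliary color graph'' for pairs in the same branch. The problem is circular: to apply (i) you must already know which nodes lie in which branch of $w$, but that branch structure is precisely what you are trying to establish (and exists only if $w$ is a cut vertex). Moreover, the full multiset $\gT_G(w)$ is almost certainly \emph{not} determined by $\chi_G(w)$---the paper never claims this, and your sketch for (ii) gives no mechanism to recover within-branch pairwise resistance distances from anchor-based data.

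The paper avoids this entirely with a more surgical argument. Rather than characterizing cut-vertex-ness symmetrically from colors, it starts from a \emph{known} cut vertex $u$ and proves a structural lemma (\cref{thm:proof_rdwl_part1_1}): at most one of the components $\gS_i$ obtained by deleting $u$ can contain a node of color $\chi_G(u)$. This guarantees a ``clean'' component $\gS_j$ with $\gS_j\cap\chi_G^{-1}(\chi_G(u))=\emptyset$. Picking any $w\in\gS_j$, every path from $w$ to any node of color $\chi_G(u)$ passes through $u$, so the entire multiset $\ldblbrace\disR_G(w,u''):u''\in\chi_G^{-1}(\chi_G(u))\rdblbrace$ decomposes as $\disR_G(w,u)+\disR_G(u,u'')$. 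Transferring this multiset identity via $\chi_G(w)=\chi_H(w')$ and $\chi_G(u)=\chi_H(u')$ to the graph $H$ forces the same additive decomposition at $u'$, which by \cref{thm:proof_rdwl_part1_0} certifies $u'$ as a cut vertex (\cref{thm:proof_rdwl_part1_2}). No pairwise distances between arbitrary nodes are ever needed---only distances \emph{to nodes of color $\chi_G(u)$} from a carefully chosen witness $w$, and those are directly readable from RD-WL colors.

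For Part 2, your high-level plan (reconstruct the tree from the leaves inward using only the color multiset) matches the paper's, but the enabling lemma is again different: the paper shows (\cref{thm:proof_rdwl_part2_2}) that the multiset $\ldblbrace\ldblbrace\chi_G(w):w\in\gS_{G,i}(u)\rdblbrace\rdblbrace_i$ of component-color-multisets is determined by $\chi_G(u)$, which is what actually drives the inductive reconstruction. Your proposed criterion (``no third node separates them'') would again require pairwise information you do not have.
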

\begin{proof}[Proof Sketch]
First observe that \cref{thm:spdwl_general_1} holds for general distances and thus applies here. Therefore, if $\chi_G(w)=\chi_H(x)$, the graph representations will be the same, i.e. $\ldblbrace \chi_G(w):w\in\gV\rdblbrace=\ldblbrace \chi_H(w):w\in\gV\rdblbrace$. By a similar analysis as SPD-WL (\cref{sec:proof_spdwl_general}), we can only focus on the case that both graphs are connected. We prove the first bullet of \cref{thm:rdwl} in \cref{sec:proof_rdwl_part1} and prove the second bullet in \cref{sec:proof_rdwl_part2}, both assuming that $G$ and $H$ are connected and their graph representations are the same.
\end{proof}

\subsubsection{Proof of the first part}
\label{sec:proof_rdwl_part1}

We first present a key property of the Resistance Distance, which surprisingly relates to the cut vertices in a graph.
\begin{lemma}
\label{thm:proof_rdwl_part1_0}
Let $G=(\gV,\gE)$ be a connected graph and $v\in \gV$. Then $v$ is a cut vertex of $G$ if and only if there exists two nodes $u,w\in\gV$, $u\neq v$, $w\neq v$, such that $\disR_G(u,v)+\disR_G(v,w)=\disR_G(u,w)$.
\end{lemma}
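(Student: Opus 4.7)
The plan is to leverage the standard electrical-network interpretation of resistance distance, working with the voltage function induced by injecting unit current at $v$ and extracting it at $w$. Let $L$ denote the graph Laplacian of $G$ and $L^{+}$ its Moore-Penrose pseudoinverse, so that $\disR_G(a,b)=(\mathbf{e}_a-\mathbf{e}_b)^{\top}L^{+}(\mathbf{e}_a-\mathbf{e}_b)$. A short algebraic calculation gives the triangle-gap identity
$$\disR_G(u,v)+\disR_G(v,w)-\disR_G(u,w)=2(\mathbf{e}_v-\mathbf{e}_u)^{\top}L^{+}(\mathbf{e}_v-\mathbf{e}_w)=2(\phi_v-\phi_u),$$
where $\phi:=L^{+}(\mathbf{e}_v-\mathbf{e}_w)$ is (up to an additive constant) the voltage vector for source $v$ and sink $w$. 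Since $\phi$ is harmonic at every $x\in\gV\setminus\{v,w\}$ and attains its global maximum at $v$, additivity is equivalent to the single scalar condition $\phi_u=\phi_v$, turning the whole problem into a maximum-principle question.

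For the forward direction, suppose $v$ is a cut vertex, and let $\gS_1,\gS_2,\dots,\gS_k$ ($k\ge 2$) be the connected components of $G[\gV\setminus\{v\}]$. I would pick $u\in\gS_1$ and $w\in\gS_2$, set $G_i:=G[\gS_i\cup\{v\}]$, and use the fact that $G$ is the one-point union of the $G_i$'s at the cut vertex $v$. Kirchhoff's current law then forces that, when unit current flows from $u$ to $w$, no current can circulate inside any $\gS_i$ with $i\ge 3$, and the restriction of the unique voltage solution to $\gS_1\cup\{v\}$ (respectively $\gS_2\cup\{v\}$) solves the unit-current problem $u\to v$ in $G_1$ (respectively $v\to w$ in $G_2$). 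This series decomposition yields $\disR_G(u,w)=\disR_{G_1}(u,v)+\disR_{G_2}(v,w)$; the same no-circulation argument applied to the source-sink pairs $(u,v)$ and $(v,w)$ gives $\disR_G(u,v)=\disR_{G_1}(u,v)$ and $\disR_G(v,w)=\disR_{G_2}(v,w)$, proving additivity.

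For the backward direction, assume additivity holds for some $u,w\neq v$. If $u=w$ then $\disR_G(u,w)=0$ while $\disR_G(u,v)+\disR_G(v,w)=2\disR_G(u,v)>0$, a contradiction, so $u\neq w$, and the identity above forces $\phi_u=\phi_v$. Arguing by contraposition, suppose $v$ is \emph{not} a cut vertex; then $G[\gV\setminus\{v\}]$ is connected and contains a simple path $u=x_0,x_1,\dots,x_k=w$ avoiding $v$. I would apply the discrete strong maximum principle inductively along this path: each intermediate $x_i\in\gV\setminus\{v,w\}$ is a harmonic node, so $\phi_{x_i}=\phi_v=\max\phi$ forces $\phi_y=\phi_v$ for every neighbor $y$ of $x_i$; propagating gives $\phi_{x_{k-1}}=\phi_v$, and one more application at the harmonic node $x_{k-1}$ yields $\phi_w=\phi_v$. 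This contradicts $\phi_v-\phi_w=\disR_G(v,w)>0$, which holds because $G$ is connected and $v\neq w$, so $v$ must be a cut vertex.

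The main obstacle is handling technical details in both directions: in the forward direction, rigorously justifying the electrical decomposition at the cut vertex (uniqueness of the harmonic extension across a one-point union, and vanishing current in unused components $\gS_i$ with $i\ge 3$); and in the backward direction, ensuring that the maximum-principle propagation actually reaches $w$ instead of stopping one step short, especially in the borderline case where $u$ and $w$ are adjacent ($k=1$).
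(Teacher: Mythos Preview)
Your proposal is correct and takes a genuinely different route from the paper's proof. The paper argues via the equivalence $\disC_G=2|\gE|\,\disR_G$ and computes hitting times combinatorially: for the ``not a cut vertex'' direction it partitions the set of hitting paths $u\to w$ into those that pass through $v$ and those that do not, and shows by an explicit probability-weighted sum that $h_G(u,w)<h_G(u,v)+h_G(v,w)$ strictly; for the ``cut vertex'' direction it notes every hitting path must pass through $v$ and obtains equality. Your approach instead works directly with the potential $\phi=L^{+}(\mathbf e_v-\mathbf e_w)$, reduces the whole question to the scalar condition $\phi_u=\phi_v$ via the triangle-gap identity, and settles both directions with the discrete maximum principle. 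Your argument is cleaner and more in the spirit of potential theory; the paper's is more combinatorial and avoids any appeal to harmonic-function facts beyond the RD--CTD equivalence. One small remark: once you have the identity, your forward direction can be shortened---since $\phi$ is harmonic on the component $\gS_1\ni u$ of $G[\gV\setminus\{v\}]$ and the only boundary vertex is $v$, the maximum principle already forces $\phi\equiv\phi_v$ on $\gS_1\cup\{v\}$, so you need not invoke the series decomposition at all. The borderline case $k=1$ you flag is in fact harmless: $u$ is harmonic (you already excluded $u=w$), so $\phi_u=\max\phi$ forces $\phi_w=\max\phi$ directly.
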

\begin{proof}
We use the key finding that the Resistance Distance is equivalent to the Commute Time Distance multiplied by a constant \citep[see also \cref{sec:detail_of_rdwl}]{chandra1996electrical}, i.e. $\disC_G(u,w)=2|\gE|\disR_G(u,w)$. Here, the Commute Time Distance is defined as $\disC_G(u,w):=h_G(u,w)+h_G(w,u)$ where $h_G(u,w)$ is the average hitting time from $u$ to $w$ in a random walk (\cref{sec:detail_of_rdwl}).
\begin{itemize}[topsep=0pt,leftmargin=30pt]
\setlength{\itemsep}{0pt}
    \item If $v$ is not a cut vertex, given any nodes $u,w$, $u\neq v$, $w\neq v$, we can partition the set of all \emph{hitting} paths $\gP_{uw}$ from $u$ to $w$ (not necessarily simple) into two sets $\gP_{uw}^v$ and $\overline\gP_{uw}^{v}$ such that all paths $P\in\gP_{uw}^v$ contain $v$ and no path $P\in\overline\gP_{uw}^v$ contains $v$. Clearly, $\overline\gP_{uw}^v\neq\emptyset$. Given a path $P=(x_0,\cdots,x_m)$, define the probability function $q(P):=1/\prod_{i=0}^{m-1} \deg_G(x_i)$. Then by definitions of the average hitting time $h$,
    \begin{align*}
        h_G(u,w)&=\sum_{P\in\gP_{uw}}q(P)\cdot |P|=\sum_{P\in\gP_{uw}^v}q(P)\cdot |P|+\sum_{P\in\overline\gP_{uw}^v}q(P)\cdot |P|\\
        &=\sum_{P_1\in\overline\gP_{uv}^w,P_2\in\gP_{vw}}q(P_1)q(P_2)(|P_1|+|P_2|)+\sum_{P\in\overline\gP_{uw}^v}q(P)\cdot |P|\\
        &=\sum_{P_1\in\overline\gP_{uv}^w}q(P_1)|P_1|\left(\sum_{P_2\in\gP_{vw}} q(P_2)\right)+\sum_{P_2\in\gP_{vw}}q(P_2)|P_2|\left(\sum_{P_1\in\overline\gP_{uv}^w} q(P_1)\right)\\
        &\quad+\sum_{P\in\overline\gP_{uw}^v}q(P)|P|\\
        &\le \sum_{P\in\overline\gP_{uv}^w}q(P)|P|+\sum_{P\in\overline\gP_{uw}^v}q(P)|P|+\sum_{P\in\gP_{vw}}q(P)|P|\\
        &< \sum_{P\in\overline\gP_{uv}^w}q(P)|P|+\sum_{P\in\gP_{uv}^w}q(P)|P|+\sum_{P\in\gP_{vw}}q(P)|P|\\
        &=h_G(u,v)+h_G(v,w).
    \end{align*}
    We can similarly prove that $h_G(w,u)< h_G(w,v)+h_G(v,u)$.
    \item If $v$ is a cut vertex, then there exist two different nodes $u,w\in\gV$, $u\neq v$, $w\neq v$, such that any path from $u$ to $w$ goes through $v$. A similar analysis yields the conclusion that $h_G(u,w)=h_G(u,v)+h_G(v,w)$ and $h_G(w,u)=h_G(w,v)+h_G(v,u)$.
\end{itemize}
This completes the proof of \cref{thm:proof_rdwl_part1_0}.
\end{proof}

In the subsequent proof, assume $u\in\gV$ is a cut vertex of $G$, and let $\{\gS_i\}_{i=1}^m$ ($m\ge 2$) be the partition of the vertex set $\gV\backslash\{u\}$, representing each connected component after removing node $u$. We have the following lemma (which has a similar form as \cref{thm:spdwl_case1_1}):

\begin{lemma}
\label{thm:proof_rdwl_part1_1}
There is at most one set $\gS_i$ satisfying $\gS_i\cap\chi_G^{-1}(\chi_G(u))\neq \emptyset$. In other words, if $\gS_i\cap\chi_G^{-1}(\chi_G(u))\neq \emptyset$ for some $i\in[m]$, then for any $j\in[m]$ and $j\neq i$, $\gS_j\cap\chi_G^{-1}(\chi_G(u))=\emptyset$.
\end{lemma}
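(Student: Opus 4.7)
The plan is to mirror the shortest-path argument used in Lemma \ref{thm:spdwl_case1_1}, but with resistance distance playing the role of SPD. The crucial additional ingredient is Lemma \ref{thm:proof_rdwl_part1_0}: when $u$ is a cut vertex whose removal separates two nodes $w_1$ and $w_2$, the proof of that lemma actually shows (in the forward implication) that $\disR_G(w_1, w_2) = \disR_G(w_1, u) + \disR_G(u, w_2)$, because every hitting path from $w_1$ to $w_2$ must pass through $u$, which forces the analogous identity on commute times and hence on resistance distances.

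First I would dispose of the trivial case $|\chi_G^{-1}(\chi_G(u))| = 1$. Otherwise, pick $u_1 \in \chi_G^{-1}(\chi_G(u))$ maximizing $\disR_G(u, u_1)$; note that $u_1 \ne u$ would occur automatically if some other node with color $\chi_G(u)$ existed in a non-$u$-containing component, but we do not need to assume this. Without loss of generality $u_1 \in \gS_i$ (or $u_1 = u$). Suppose for contradiction there exist two distinct indices $i, j$ and nodes $u_2 \in \gS_j \cap \chi_G^{-1}(\chi_G(u))$ with $u_1$ and $u_2$ lying in different components of $G[\gV \setminus \{u\}]$ (if $u_1 = u$, just take any two components that both meet $\chi_G^{-1}(\chi_G(u))$ and pick representatives $u_1', u_2'$ from each; the argument below specializes).

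Since every path from $u_1$ to $u_2$ passes through $u$, the forward direction of Lemma \ref{thm:proof_rdwl_part1_0} yields
\begin{equation*}
    \disR_G(u_1, u_2) \;=\; \disR_G(u_1, u) + \disR_G(u, u_2) \;>\; \disR_G(u_1, u),
\end{equation*}
where the strict inequality uses $u_2 \ne u$ and hence $\disR_G(u, u_2) > 0$. On the other hand, $\chi_G(u) = \chi_G(u_1)$ together with the RD-WL update rule (\ref{eq:gdwl}) with $d_G = \disR_G$ implies the multiset identity
\begin{equation*}
    \ldblbrace \disR_G(u, w) : w \in \chi_G^{-1}(\chi_G(u)) \rdblbrace \;=\; \ldblbrace \disR_G(u_1, w) : w \in \chi_G^{-1}(\chi_G(u)) \rdblbrace,
\end{equation*}
obtained by restricting the full multiset of pairs $(\disR_G(\cdot, w), \chi_G(w))$ to the color $\chi_G(u)$. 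The maximum of the left-hand multiset equals $\disR_G(u, u_1)$ by choice of $u_1$, whereas the right-hand multiset contains $\disR_G(u_1, u_2) > \disR_G(u, u_1)$, a contradiction.

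The main obstacle is the subtle reliance on the forward direction of Lemma \ref{thm:proof_rdwl_part1_0}: I need to confirm that its proof actually gives the sharp additive equality $\disR_G(w_1, w_2) = \disR_G(w_1, v) + \disR_G(v, w_2)$ whenever $v$ separates $w_1$ from $w_2$, and not just the existential statement phrased in the lemma. The hitting-time computation in the displayed chain of inequalities inside that lemma's proof does give this stronger fact (the inequalities become equalities precisely when $\gP_{uw}^v$ exhausts $\gP_{uw}$), so this step is safe. Everything else is a direct analogue of the SPD-WL maximality trick, and no case analysis on $|\chi_G^{-1}(\chi_G(u))|$ beyond the trivial singleton case is required.
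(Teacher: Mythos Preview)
Your proposal is correct and follows essentially the same argument as the paper: pick the maximizer $u_1\in\chi_G^{-1}(\chi_G(u))$ of $\disR_G(u,\cdot)$, use the additive identity $\disR_G(u_1,u_2)=\disR_G(u_1,u)+\disR_G(u,u_2)$ for nodes separated by the cut vertex $u$ (which, as you correctly observe, is the stronger pointwise fact established inside the proof of Lemma~\ref{thm:proof_rdwl_part1_0}), and derive a contradiction from the multiset equality forced by $\chi_G(u)=\chi_G(u_1)$. The only cosmetic difference is that the paper phrases the contradiction as ``the maxima of the two multisets differ'' rather than exhibiting a too-large element, and it handles the degenerate case by noting that $u_1=u$ forces every $\gS_i$ to miss $\chi_G^{-1}(\chi_G(u))$, which is equivalent to your singleton case since $|\chi_G^{-1}(\chi_G(u))|>1$ automatically gives $u_1\neq u$.
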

\begin{proof}
Let $u_i=\arg\max_{u'\in\chi_G^{-1}(\chi_G(u))}\disR_G(u,u')$. If $u_i=u$, then $\gS_i\cap\chi_G^{-1}(\chi_G(u))= \emptyset$ for all $i\in[m]$ and thus \cref{thm:proof_rdwl_part1_1} clearly holds. Otherwise, $u_i\in \gS_i$ for some $i$. We will prove that for any $j\neq i$, $\gS_j\cap\chi_G^{-1}(\chi_G(u))=\emptyset$.

If the above conclusion does not holds, then we can pick a set $\gS_j$ and a vertex $u_j\in\gS_j\cap\chi_G^{-1}(\chi_G(u))$. Since $u$ is a cut vertex and $\gS_i$, $\gS_j$ are different connected components, by \cref{thm:proof_rdwl_part1_0} we have $\disR_G(u_i,u_j)=\disR_G(u_i,u)+\disR_G(u,u_j)>\disR_G(u_i,u)$. This yields a contradiction because $\max_{u'\in\chi_G^{-1}(\chi_G(u))}\disR_G(u,u')\neq \max_{u'\in\chi_G^{-1}(\chi_G(u_i))}\disR_G(u_i,u')$, which means that $u$ and $u_i$ cannot have the same RD-WL color.
\end{proof}

The next lemma presents a key result which is similar to \cref{thm:spdwl_case1_4}.

\begin{lemma}
\label{thm:proof_rdwl_part1_2}
For all $u'\in\chi_G^{-1}(\chi_G(u))$, $u'$ it is a cut vertex of $G$.
\end{lemma}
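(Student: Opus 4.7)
\textbf{Proof proposal for Lemma \ref{thm:proof_rdwl_part1_2}.}

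The plan is to use Lemma~\ref{thm:proof_rdwl_part1_0} in both directions: $u$ being a cut vertex gives us a pair of witness nodes $(w_1,w_2)$ with $\disR_G(w_1,u)+\disR_G(u,w_2)=\disR_G(w_1,w_2)$, and I need to produce an analogous witness pair for $u'$ to conclude $u'$ is a cut vertex. By Lemma~\ref{thm:proof_rdwl_part1_1} we may restrict attention to the case $u'\neq u$, in which case $u'$ must lie in the unique component $\gS_i$ (containing all other nodes of color $\chi_G(u)$, if any); otherwise $\chi_G^{-1}(\chi_G(u))=\{u\}$ and the claim is trivial.

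The next step is to choose witnesses cleanly. Using $u$'s cut-vertex status, pick any two components $\gS_j,\gS_k$ with $j\neq k$, both distinct from $\gS_i$ (which is possible when $m\ge 3$; the boundary case $m=2$ with $\gS_i$ being one of the two will require a separate but very similar argument using one witness from $\gS_i\setminus\chi_G^{-1}(\chi_G(u))$). Pick $w_1\in\gS_j$, $w_2\in\gS_k$; by Lemma~\ref{thm:proof_rdwl_part1_1} neither has color $\chi_G(u)$. Now invoke the RD-WL invariance: since $\chi_G(u)=\chi_G(u')$, the multiset equality
\begin{equation*}
\ldblbrace(\disR_G(u,w),\chi_G(w)):w\in\gV\rdblbrace=\ldblbrace(\disR_G(u',w),\chi_G(w)):w\in\gV\rdblbrace
\end{equation*}
yields nodes $w_1',w_2'\in\gV$ with $\chi_G(w_a')=\chi_G(w_a)$ and $\disR_G(u',w_a')=\disR_G(u,w_a)$ for $a\in\{1,2\}$. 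Note $w_1'\neq u'$ and $w_2'\neq u'$ because $\chi_G(w_a)\neq\chi_G(u)=\chi_G(u')$.

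Putting these together gives $\disR_G(u',w_1')+\disR_G(u',w_2')=\disR_G(u,w_1)+\disR_G(u,w_2)=\disR_G(w_1,w_2)$, so by Lemma~\ref{thm:proof_rdwl_part1_0} it would suffice to show $\disR_G(w_1',w_2')=\disR_G(w_1,w_2)$. This is the main obstacle: the RD-WL color of $u'$ only encodes distances to single nodes, not pairwise distances between two other nodes. My plan to overcome it is to apply the RD-WL invariance a second time at the vertex $w_1'$: since $\chi_G(w_1)=\chi_G(w_1')$, there exists a node $w_2''$ with $\chi_G(w_2'')=\chi_G(w_2)$ and $\disR_G(w_1',w_2'')=\disR_G(w_1,w_2)$. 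The remaining task is to show that one may choose the transfers coherently so that $w_2''=w_2'$, or alternatively to show $\disR_G(w_1',w_2'')=\disR_G(w_1',w_2')$ directly. I expect this to follow by a careful selection of $(w_1,w_2)$ at extremal resistance distance (e.g., taking $w_1$ that maximizes $\disR_G(u,w_1)$ within its color class, mirroring the device used in Lemma~\ref{thm:proof_rdwl_part1_1}) so that only one candidate for the image is compatible with all the simultaneous RD-WL constraints. This is the technical heart of the argument and is what I would work out most carefully; once the equality $\disR_G(w_1',w_2')=\disR_G(w_1,w_2)$ is established, Lemma~\ref{thm:proof_rdwl_part1_0} applied to $u'$ with witnesses $(w_1',w_2')$ immediately gives that $u'$ is a cut vertex.
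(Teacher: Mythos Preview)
Your plan has a real gap at exactly the point you flag as the ``main obstacle.'' Transferring two arbitrary-colored witnesses $w_1,w_2$ to $w_1',w_2'$ and then trying to match the pairwise distance $\disR_G(w_1',w_2')$ is the wrong shape for what RD-WL gives you: the color of a node encodes the multiset of its distances to all other nodes, but tells you nothing about distances between two \emph{other} nodes. Your suggested fixes (a second transfer at $w_1'$, or an extremal choice of $w_1$) do not close this; there is no mechanism forcing $w_2''=w_2'$, and extremality in the style of Lemma~\ref{thm:proof_rdwl_part1_1} works there only because the second point was $u$ itself, not an unrelated node.

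The paper's argument avoids the obstacle entirely by choosing the \emph{second} witness to have color $\chi_G(u)$ rather than an arbitrary color. Concretely: pick a single $w$ in a component $\gS_j$ disjoint from $\chi_G^{-1}(\chi_G(u))$ (such a $\gS_j$ exists by Lemma~\ref{thm:proof_rdwl_part1_1}), and pick $w'$ with $\chi_G(w')=\chi_G(w)$ and $\disR_G(u',w')=\disR_G(u,w)$. Since every $u''\in\chi_G^{-1}(\chi_G(u))$ is separated from $w$ by $u$, one has the \emph{elementwise} identity $\disR_G(w,u'')=\disR_G(w,u)+\disR_G(u,u'')$, hence a multiset identity
\[
\Ldblbrace\disR_G(w,u'')\Rdblbrace_{u''}=\Ldblbrace\disR_G(w',u')+\disR_G(u',u'')\Rdblbrace_{u''},
\]
using $\chi_G(u)=\chi_G(u')$ to swap the inner multiset of distances. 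But $\chi_G(w)=\chi_G(w')$ also gives $\ldblbrace\disR_G(w,u'')\rdblbrace_{u''}=\ldblbrace\disR_G(w',u'')\rdblbrace_{u''}$. Now the triangle inequality $\disR_G(w',u'')\le\disR_G(w',u')+\disR_G(u',u'')$ holds termwise, and equality of the two multisets forces equality termwise (sum both sides). Taking $u''=u$ gives the witness pair $(w',u)$ for $u'$ via Lemma~\ref{thm:proof_rdwl_part1_0}. The point is that by making one witness lie in $\chi_G^{-1}(\chi_G(u))$, the pairwise distance you need is of the form $\disR_G(u',u'')$, which \emph{is} controlled by the color identity $\chi_G(u)=\chi_G(u')$.
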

\begin{proof}
If $|\chi_G^{-1}(\chi_G(u))|=1$, then \cref{thm:proof_rdwl_part1_2} clearly holds. Otherwise, by \cref{thm:proof_rdwl_part1_1} there exists two sets $\gS_i$ and $\gS_j$ satisfying $\gS_i\cap\chi_G^{-1}(\chi_G(u))\neq \emptyset$, $\gS_j\cap\chi_G^{-1}(\chi_G(u))= \emptyset$. Since $\gS_j\neq \emptyset$, we can pick $w\in \gS_j$ with color $\chi_G(w)\neq\chi_G(u)$. Pick $u'\in\gS_i\cap\chi_G^{-1}(\chi_G(u))$. Since $\chi_G(u)=\chi_G(u')$, there exists a node $w'\in\chi_G^{-1}(\chi_G(w))$ such that $\disR_G(u,w)=\disR_G(
u',w')$. Then we have
\begin{align}
    \label{eq:proof_rdwl_part1_2_0}
    \ldblbrace\disR_G(w,u''):u''\in\chi_G^{-1}(\chi_G(u))\rdblbrace
    &=\ldblbrace\disR_G(w,u)+\disR_G(u,u''):u''\in\chi_G^{-1}(\chi_G(u))\rdblbrace\\
    \label{eq:proof_rdwl_part1_2_1}
    &=\ldblbrace\disR_G(w',u')+\disR_G(u',u''):u''\in\chi_G^{-1}(\chi_G(u))\rdblbrace
\end{align}
where (\ref{eq:proof_rdwl_part1_2_0}) holds because $u$ is a cut vertex and all $u''\neq u$ are in the set $\gS_i$ but $w\in\gS_j$ (\cref{thm:proof_rdwl_part1_1}), and (\ref{eq:proof_rdwl_part1_2_1}) holds because $\chi_G(u)=\chi_G(u')$. On the other hands, 
$$\ldblbrace\disR_G(w,u''):u''\in\chi_G^{-1}(\chi_G(u))\rdblbrace=\ldblbrace\disR_G(w',u''):u''\in\chi_G^{-1}(\chi_G(u))\rdblbrace.$$
Therefore, $\disR_G(w',u'')=\disR_G(w',u')+\disR_G(u',u'')$ for all $u''\in\chi_G^{-1}(\chi_G(u))$. Pick $u''=u$, then clearly $u''\neq u'$ and $u''\neq w$. \cref{thm:proof_rdwl_part1_0} shows that $u'$ is a cut vertex, which concludes the proof. See \cref{fig:proof_rdwl_1} for an illustration of the above proof.
\end{proof}

\begin{wrapfigure}{r}{0.32\textwidth}
  \vspace{-10pt}
  \begin{center}
   \includegraphics[width=0.3\textwidth]{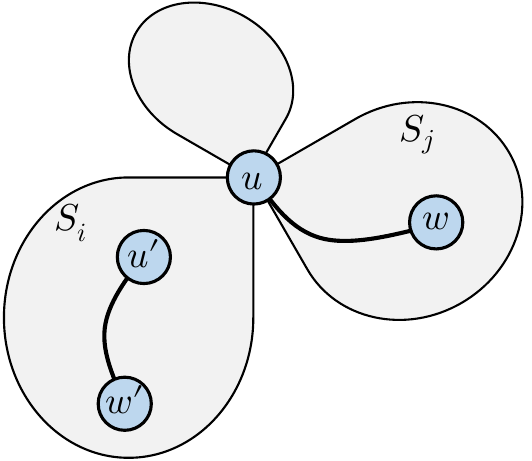}
  \end{center}
  \vspace{-10pt}
  \caption{Illustration of the proof of \cref{thm:proof_rdwl_part1_2}.}
  \label{fig:proof_rdwl_1}
  \vspace{-20pt}
\end{wrapfigure}

Using a similar proof technique as the one in \cref{thm:proof_rdwl_part1_2}, we can prove the first bullet of \cref{thm:rdwl}. Note that we have assumed $\ldblbrace \chi_G(w):w\in\gV\rdblbrace=\ldblbrace \chi_H(w):w\in\gV\rdblbrace$. First consider the case when $|\chi_G^{-1}(\chi_G(u))|>1$. Pick $w_H\in\chi_H^{-1}(\chi_G(w))$ where $w$ is defined in the proof of \cref{thm:proof_rdwl_part1_2}. Then, there exists $u_H\in\chi_H^{-1}(\chi_G(u))$ satisfying $\disR_H(w_H,u_H)=\disR_G(w_,u)$. Pick another node $u_H'\in\chi_H^{-1}(\chi_G(u))$, $u_H'\neq u_H$ (this is feasible as $|\chi_H^{-1}(\chi_G(u))|>1$). Following the procedure of the above proof, we can obtain that
$\disR_H(w_H,u'')=\disR_H(w_H,u_H)+\disR_H(u_H,u'')$ for all $u''\in\chi_H^{-1}(\chi_G(u))$. Therefore, $\disR_H(w_H,u_H')=\disR_H(w_H,u_H)+\disR_H(u_H,u_H')$, implying $u_H$ is a cut vertex of $H$ by \cref{thm:proof_rdwl_part1_0}.

Now consider the case when $|\chi_G^{-1}(\chi_G(u))|=1$. Then $|\chi_H^{-1}(\chi_G(u))|=1$ and we can denote the node in $\chi_H^{-1}(\chi_G(u))$ as $u$ without abuse of notation. Choose arbitrary two nodes $w_1\in\gS_1$ and $w_2\in\gS_2$, then $\disR_G(w_1,u)+\disR_G(u,w_2)=\disR_G(w_1,w_2)$ (\cref{thm:proof_rdwl_part1_0}). Pick any $w_1'\in\chi_H^{-1}(\chi_G(w_1))$ in $H$, then there exists a node $w_2'\in\chi_H^{-1}(\chi_G(w_2))$ satisfying $\disR_G(w_1,w_2)=\disR_H(w_1',w_2')$. We also have $\disR_H(w_1',u)=\disR_G(w_1,u)$ and $\disR_H(w_2',u)=\disR_G(w_2,u)$ because $u$ is the unique node with color $\chi_G(u)$ in $H$. Therefore, $\disR_H(w_1',u)+\disR_H(u,w_2')=\disR_H(w_1',w_2')$ and $u$ is a cut vertex in $H$ (\cref{thm:proof_rdwl_part1_0}).

\subsubsection{Proof of the second part}
\label{sec:proof_rdwl_part2}
We first introduce some notations. As before, we assume $G$ and $H$ are connected and $\ldblbrace \chi_G(w):w\in\gV\rdblbrace=\ldblbrace \chi_H(w):w\in\gV\rdblbrace$. As we will consider multiple cut vertices in the following proof, we adopt the notation $\{\gS_{G,i}(u)\}_{i=1}^{m_G(u)}$, which represents the set of connected components of graph $G$ after removing node $u$. Here, $m_G(u)$ is the number of connected components after removing node $u$, which is greater than 1 if $u$ is a cut vertex. It follows that $\bigcup_{i=1}^{m_G(u)} \gS_{G,i}(u)=\gV\backslash\{u\}$. We further define the index set $\gM_{G}(u):=\{i\in [m_G(u)]:\gS_{G,i}(u)\cap\chi_G^{-1}(\chi_G(u))=\emptyset\}$. By \cref{thm:proof_rdwl_part1_1}, either $|\gM_G(u)|=m_G(u)-1$ or $|\gM_G(u)|=m_G(u)$.

\begin{lemma}
\label{thm:proof_rdwl_part2_0}
Let $u\in\gV$ be a cut vertex of $G$. Let $u'\in\chi_H^{-1}(\chi_G(u))$, then $u'$ is also a cut vertex of $H$. Let $i\in [m_G(u)]$ and $j\in [m_H(u')]$ be two indices and pick nodes $w\in\gS_{G,i}(u)$ and $w'\in\gS_{H,j}(u')$. Assume $w$ and $w'$ have the same color, i.e. $\chi_G(w)=\chi_H(w')$. Then the following holds:
\begin{itemize}[topsep=0pt,leftmargin=30pt]
\setlength{\itemsep}{0pt}
    \item If $i\in\gM_G(u)$ and $j\in\gM_H(u')$, then $\disR_G(w,u)=\disR_H(w',u')$.
    \item If $i\in\gM_G(u)$ and $j\notin\gM_H(u')$, then $\disR_G(w,u)<\disR_H(w',u')$.
\end{itemize}
\end{lemma}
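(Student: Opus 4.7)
The key idea is to convert $\chi_G(w)=\chi_H(w')$ into a multiset identity over resistance distances and then separate out $\disR_G(w,u)$ versus $\disR_H(w',u')$ through a global summation argument that exploits the strict-vs-equal dichotomy in \cref{thm:proof_rdwl_part1_0}.

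I would start by introducing
$M_G := \ldblbrace \disR_G(w, x) : x \in \chi_G^{-1}(\chi_G(u)) \rdblbrace$ and $M_H$ defined analogously with $w'$, $u'$, $H$. The RD-WL update rule together with $\chi_G(w)=\chi_H(w')$ forces $M_G=M_H$, and likewise $\chi_G(u)=\chi_H(u')$ forces $|\chi_G^{-1}(\chi_G(u))|=|\chi_H^{-1}(\chi_G(u))|=:N$ and $\sum_{x\in\chi_G^{-1}(\chi_G(u))}\disR_G(u,x)=\sum_{x'\in\chi_H^{-1}(\chi_G(u))}\disR_H(u',x')=:S$. Note that $u'$ is already known to be a cut vertex of $H$ by the first bullet of \cref{thm:rdwl} proved in \cref{sec:proof_rdwl_part1}. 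From $i\in\gM_G(u)$ and \cref{thm:proof_rdwl_part1_1}, every $x\in\chi_G^{-1}(\chi_G(u))\setminus\{u\}$ lives in a single component of $G\setminus\{u\}$ different from $\gS_{G,i}(u)$, so $u$ separates $w$ from each such $x$; \cref{thm:proof_rdwl_part1_0} then yields the additive identity $\disR_G(w,x)=\disR_G(w,u)+\disR_G(u,x)$, hence $\sum M_G = N\cdot\disR_G(w,u)+S$.

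For the first bullet ($j\in\gM_H(u')$) the symmetric argument on the $H$-side gives $\sum M_H = N\cdot\disR_H(w',u')+S$, and $\sum M_G=\sum M_H$ immediately yields $\disR_G(w,u)=\disR_H(w',u')$. (Equivalently, one can note that $\disR_G(w,u)$ is the strict minimum of $M_G$ and $\disR_H(w',u')$ is the strict minimum of $M_H$, and match minimums.) For the second bullet ($j\notin\gM_H(u')$), \cref{thm:proof_rdwl_part1_1} instead forces the nonempty set $\chi_H^{-1}(\chi_G(u))\setminus\{u'\}$ to lie entirely inside $\gS_{H,j}(u')$, i.e., in the same component of $H\setminus\{u'\}$ as $w'$. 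Hence $u'$ fails to separate $w'$ from any such $x'$, and the non-cut-vertex direction of \cref{thm:proof_rdwl_part1_0} (which only uses that $u'$ does not separate the specific pair $w',x'$) yields the \emph{strict} inequality $\disR_H(w',x')<\disR_H(w',u')+\disR_H(u',x')$ for every such $x'$. Summing (over a nonempty set plus the term $x'=u'$) produces $\sum M_H < N\cdot\disR_H(w',u')+S$, and combining with $\sum M_G=\sum M_H$ gives $\disR_G(w,u)<\disR_H(w',u')$.

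The main obstacle I anticipate is the second bullet: on a per-element basis the inequalities $\disR_H(w',x')<\disR_H(w',u')+\disR_H(u',x')$ do not directly compare $\disR_G(w,u)$ with $\disR_H(w',u')$, since a priori the minimum of $M_H$ could be attained at some $x'\neq u'$ whose distance to $w'$ bears no clean relation to $\disR_H(w',u')$. Aggregating via the sum converts the per-element strict inequalities into a single global bound, and the matched identities on $N$ and $S$ cancel the nuisance $\disR_G(u,x)$ and $\disR_H(u',x')$ terms, isolating precisely the desired comparison. A minor technical point is making sure \cref{thm:proof_rdwl_part1_0} is genuinely invoked in its pairwise form (non-separating pairs yield strict inequality), but this is exactly what its proof establishes.
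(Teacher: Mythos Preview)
Your argument is correct and rests on the same three ingredients as the paper's proof: the multiset identity $M_G=M_H$ coming from $\chi_G(w)=\chi_H(w')$, the additive relation $\disR_G(w,x)=\disR_G(w,u)+\disR_G(u,x)$ on the $G$-side (from $i\in\gM_G(u)$ and \cref{thm:proof_rdwl_part1_1}), and the strict-vs-equal triangle inequality from \cref{thm:proof_rdwl_part1_0} on the $H$-side. Where you differ is in how these are combined. The paper argues via the \emph{minimum} of the multisets: for the first bullet it observes that $\disR_G(w,u)=\min M_G$ and $\disR_H(w',u')=\min M_H$; for the second bullet it first gets $\disR_H(w',u')\ge\min M_H=\min M_G=\disR_G(w,u)$, then assumes equality and derives a contradiction by showing the assumed equality would force $\disR_H(w',u_H)=\disR_H(w',u')+\disR_H(u',u_H)$ for \emph{every} $u_H\in\chi_H^{-1}(\chi_G(u))$, which fails for any $u''\in\gS_{H,j}(u')$. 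Your summation packaging is a cleaner route to the second bullet, since it turns the per-element strict inequalities directly into the desired strict comparison without a contradiction step; the paper's minimum argument is perhaps more transparent for the first bullet. Both are equally valid, and your invocation of \cref{thm:proof_rdwl_part1_0} in its pairwise form (``$u'$ does not separate $w'$ from $x'$ implies strict inequality'') is exactly how the paper uses it too.
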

\begin{proof}
Proof of the first bullet: since $i\in\gM_G(u)$, any path from $w$ to a node $u_G\in\chi_G^{-1}(\chi_G(u))$ goes through the cut vertex $u$, implying $\min_{u_G\in\chi_G^{-1}(\chi_G(u))}\disR_G(w,u_G)=\disR_G(w,u)$. Similarly, since $j\in\gM_H(u')$,  $\min_{u_H\in\chi_H^{-1}(\chi_H(u'))}\disR_H(w',u_H)=\disR_H(w',u')$. Since the color of nodes $w$ and $w'$ are the same under RD-WL, we have 
$$\min_{u_H\in\chi_H^{-1}(\chi_H(u'))}\disR_H(w',u_H)=\min_{u_G\in\chi_G^{-1}(\chi_G(u))}\disR_G(w,u_G)$$ and thus $\disR_H(w',u')=\disR_G(w,u)$.

Proof of the second bullet: first note that $\disR_H(w',u')\ge \disR_G(w,u)$ because $$\disR_H(w',u')\ge \min_{u_H\in\chi_H^{-1}(\chi_H(u'))}\disR_H(w',u_H)=\min_{u_G\in\chi_G^{-1}(\chi_G(u))}\disR_G(w,u_G)=\disR_G(w,u).$$
If the lemma does not hold, then $\disR_H(w',u')= \disR_G(w,u)$. Consequently,
\begin{align*}
    \ldblbrace \disR_G(w,u_G):u_G\in\chi_G^{-1}(\chi_G(u))\rdblbrace 
    &=\ldblbrace \disR_G(w,u)+\disR_G(u,u_G):u_G\in\chi_G^{-1}(\chi_G(u))\rdblbrace \\
    &=\ldblbrace \disR_H(w',u')+\disR_H(u',u_H):u_H\in\chi_H^{-1}(\chi_H(u'))\rdblbrace.
\end{align*}
On the other hands, 
$$\ldblbrace\disR_G(w,u_G):u_G\in\chi_G^{-1}(\chi_G(u))\rdblbrace=\ldblbrace\disR_H(w',u_H):u_H\in\chi_H^{-1}(\chi_H(u'))\rdblbrace.$$
Therefore, $\disR_H(w',u_H)=\disR_H(w',u')+\disR_H(u',u_H)$ for all $u_H\in\chi_H^{-1}(\chi_H(u'))$. However, we can choose $u''\in\chi_H^{-1}(\chi_H(u'))\cap\gS_{H,j}(u')$ by definition of $j$, and clearly $\disR_H(w',u'')<\disR_H(w',u')+\disR_H(u',u'')$ because $w'$ and $u''$ are in the same connected component (\cref{thm:proof_rdwl_part1_0}). This yields a contradiction and concludes the proof.
\end{proof}

\begin{corollary}
\label{thm:proof_rdwl_part2_1}
Let $u\in\gV$ be a cut vertex of $G$. Let $u'\in\chi_H^{-1}(\chi_G(u))$, then $u'$ is also a cut vertex of $H$. Pick any $\gS_{G,i}(u)$ and $\gS_{H,j}(u')$ with indices $i\in\gM_G(u)$ and $j\in\gM_H(u')$. Then either of the following holds:
\begin{itemize}[topsep=0pt,leftmargin=30pt]
\setlength{\itemsep}{0pt}
    \item $\ldblbrace\chi_G(w):w\in\gS_{G,i}(u)\rdblbrace=\ldblbrace\chi_H(w):w\in\gS_{H,j}(u')\rdblbrace$.
    \item $\ldblbrace\chi_G(w):w\in\gS_{G,i}(u)\rdblbrace\cap\ldblbrace\chi_H(w):w\in\gS_{H,j}(u')\rdblbrace=\emptyset$.
\end{itemize}
\end{corollary}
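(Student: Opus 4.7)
The plan is to reduce the dichotomy to a single pivotal claim: once the two color multisets share any color, they must coincide entirely; if they share no color, the second alternative holds trivially. So I will focus on the case where there exist $w\in\gS_{G,i}(u)$ and $w'\in\gS_{H,j}(u')$ with $\chi_G(w)=\chi_H(w')$. Setting $d:=\disR_G(w,u)=\disR_H(w',u')$ via \cref{thm:proof_rdwl_part2_0} (first bullet), the stability of RD-WL at equal colors yields the global multiset identity $\ldblbrace(\disR_G(w,y),\chi_G(y)):y\in\gV\rdblbrace=\ldblbrace(\disR_H(w',y'),\chi_H(y')):y'\in\gV\rdblbrace$, which I will use as the backbone of a counting argument.

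The central step is the following component-matching claim: for every $x\in\gS_{G,i}(u)$ and every $x'\in\gV$ with $\chi_H(x')=\chi_G(x)$ and $\disR_H(w',x')=\disR_G(w,x)$, necessarily $x'\in\gS_{H,j}(u')$. I will prove this by contradiction: suppose $x'\notin\gS_{H,j}(u')\cup\{u'\}$. Then every path in $H$ from $w'$ to $x'$ goes through the cut vertex $u'$, so the equality case of \cref{thm:proof_rdwl_part1_0} gives $\disR_H(w',x')=d+\disR_H(u',x')$. On the $G$-side, since $w$ and $x$ share the component $\gS_{G,i}(u)$, the strict inequality in \cref{thm:proof_rdwl_part1_0} gives $\disR_G(w,x)<d+\disR_G(u,x)$. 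Combined with the matching distance $\disR_G(w,x)=\disR_H(w',x')$, this yields $\disR_H(u',x')<\disR_G(u,x)$. Now \cref{thm:proof_rdwl_part2_0} applied to the pair $(x,x')$ delivers a contradiction in both sub-cases: if $x'\in\gS_{H,k}(u')$ with $k\in\gM_H(u')$, the first bullet forces $\disR_H(u',x')=\disR_G(u,x)$; otherwise the second bullet forces $\disR_H(u',x')>\disR_G(u,x)$. Finally $\chi_H(x')=\chi_G(x)\neq\chi_G(u)$ (because $i\in\gM_G(u)$) rules out $x'=u'$, so $x'\in\gS_{H,j}(u')$.

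From the component-matching claim I will extract the equality of multisets by a short bookkeeping argument. For each pair $(d_y,c_y)$ with $c_y\neq\chi_G(u)$, let $a,b$ count the nodes in $\gS_{G,i}(u)$ respectively $\gS_{H,j}(u')$ realising $(\disR_G(w,\cdot)=d_y,\chi_G(\cdot)=c_y)$, and let $a',b'$ be the complementary counts in the rest of $\gV\setminus\{u\}$ and $\gV\setminus\{u'\}$. The global multiset identity gives $a+a'=b+b'$. The claim implies $a\geq 1\Rightarrow b'=0$, and applying the symmetric version of \cref{thm:proof_rdwl_part2_0} (i.e., swapping the roles of $G$ and $H$) gives $b\geq 1\Rightarrow a'=0$; a two-case analysis on whether $a$ is zero then forces $a=b$ in all cases. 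The case $c_y=\chi_G(u)$ is trivial because $i\in\gM_G(u)$ and $j\in\gM_H(u')$ keep this color out of both components. Summing over $d_y$ yields $|\chi_G^{-1}(c_y)\cap\gS_{G,i}(u)|=|\chi_H^{-1}(c_y)\cap\gS_{H,j}(u')|$ for every color, which is the first alternative.

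The main obstacle is the component-matching claim itself: it is the one step where the cut-vertex geometry (via \cref{thm:proof_rdwl_part1_0}), the cross-graph distance calibration (\cref{thm:proof_rdwl_part2_0}), and the RD-WL stability identity must be braided together, and where the case split on the $\gM_H(u')$-membership of the auxiliary component containing $x'$ must be handled uniformly. Once the claim is in hand, the remainder is essentially a counting exercise driven by the WL-condition.
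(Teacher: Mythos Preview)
Your proof is correct, but it takes a genuinely different route from the paper's. Both begin identically by assuming the two color multisets share a witness pair $w,w'$ and aiming to show that $|\chi_G^{-1}(c)\cap\gS_{G,i}(u)|=|\chi_H^{-1}(c)\cap\gS_{H,j}(u')|$ for every color $c$. From there the arguments diverge. The paper writes down a closed-form counting identity
\[
|\chi_G^{-1}(c)\cap\gS_{G,i}(u)|=|\chi_G^{-1}(c)|-\bigl|\gD_G(w,c)\cap\bigl(\gD_G(u,c)+\disR_G(w,u)\bigr)\bigr|,
\]
where $\gD_G(z,c)=\ldblbrace\disR_G(z,x):x\in\chi_G^{-1}(c)\rdblbrace$, justified by the fact that $x\notin\gS_{G,i}(u)$ iff $\disR_G(w,x)=\disR_G(w,u)+\disR_G(u,x)$; then the RHS is manifestly invariant under the matchings $\chi_G(w)=\chi_H(w')$, $\chi_G(u)=\chi_H(u')$, and $\disR_G(w,u)=\disR_H(w',u')$, so the proof ends in one line. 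Your approach instead proves an \emph{elementwise} component-matching claim (any $x'$ in $H$ matching $(d_y,c_y)$ of some $x\in\gS_{G,i}(u)$ must land in $\gS_{H,j}(u')$), by playing the equality case of \cref{thm:proof_rdwl_part1_0} on the $H$-side against the strict case on the $G$-side and closing with the two bullets of \cref{thm:proof_rdwl_part2_0}; you then extract $a=b$ for each $(d_y,c_y)$ from $a+a'=b+b'$ and the implications $a\ge 1\Rightarrow b'=0$, $b\ge 1\Rightarrow a'=0$. The paper's formula is slicker; your elementwise argument is more explicit about where each hypothesis ($i\in\gM_G(u)$, $j\in\gM_H(u')$, cut-vertex geometry, RD-WL stability) is consumed, and sidesteps any delicacy in interpreting the multiset-intersection count.
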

\begin{proof}
Assume $\ldblbrace\chi_G(w):w\in\gS_{G,i}(u)\rdblbrace\cap\ldblbrace\chi_H(w):w\in\gS_{H,j}(u')\rdblbrace\neq\emptyset$. Then there exists nodes $w\in\gS_{G,i}(u)$ in $G$ and $w'\in\gS_{H,j}(u')$ in $H$, satisfying $\chi_G(w)=\chi_H(w')$. Our goal is to prove that $\ldblbrace\chi_G(w):w\in\gS_{G,i}(u)\rdblbrace=\ldblbrace\chi_H(w):w\in\gS_{H,j}(u')\rdblbrace$. It thus suffices to prove that for any color $c\in \gC$, $|\chi_G^{-1}(c)\cap\gS_{G,i}(u)|=|\chi_H^{-1}(c)\cap\gS_{H,j}(u')|$.

Define $\gD_G(w,c)=\ldblbrace\disR_G(w,x):x\in\chi_G^{-1}(c)\rdblbrace$ and define $\gD_G(w,c)+d:=\ldblbrace d+d':d'\in\gD_G(w,c)\rdblbrace$. We next claim that $$|\chi_G^{-1}(c)\cap\gS_{G,i}(u)|=|\chi_G^{-1}(c)|-|\gD_G(w,c)\cap(\gD_G(u,c)+\disR_G(w,u))|.$$
This is simply because for any $x\in\chi_G^{-1}(c)$, either $x\in\gS_{G,i}(u)$ or $x\notin\gS_{G,i}(u)$. If $x\notin\gS_{G,i}(u)$, then $\disR_G(w,x)=\disR_G(w,u)+\disR_G(u,x)$ (\cref{thm:proof_rdwl_part1_0}); otherwise, $\disR_G(w,x)\neq\disR_G(w,u)+\disR_G(u,x)$. Similarly, 
$$|\chi_H^{-1}(c)\cap\gS_{H,j}(u')|=|\chi_H^{-1}(c)|-|\gD_H(w',c)\cap(\gD_H(u',c)+\disR_H(w',u'))|.$$
Noting that $|\chi_G^{-1}(c)|=|\chi_H^{-1}(c)|$, $\gD_G(w,c)=\gD_H(w',c)$, and $\disR_G(w,u)=\disR_H(w',u')$ (\cref{thm:proof_rdwl_part2_0}), we obtain $|\chi_G^{-1}(c)\cap\gS_{G,i}(u)|=|\chi_H^{-1}(c)\cap\gS_{H,j}(u')|$ and conclude the proof.
\end{proof}

\begin{remark}
As a special case, \cref{thm:proof_rdwl_part2_0,thm:proof_rdwl_part2_1} also hold when $G=H$. For example, \cref{thm:proof_rdwl_part2_1} implies that for any $\gS_{G,i}(u)$ and $\gS_{G,j}(u)$ such that $\gS_{G,i}(u)\cap\chi_G^{-1}(\chi_G(u))=\gS_{G,j}(u)\cap\chi_G^{-1}(\chi_G(u))=\emptyset$, either of the two items in \cref{thm:proof_rdwl_part2_1} holds.
\end{remark}

\cref{thm:proof_rdwl_part2_0,thm:proof_rdwl_part2_1} leads to the following key corollary:
\begin{corollary}
\label{thm:proof_rdwl_part2_2}
Let $u\in\gV$ be a vertex in $G$ and $u'\in\gV$ be a vertex in $H$. If $\chi_G(u)=\chi_H(u')$, then $m_G(u)=m_H(u')$ and
$$\ldblbrace \ldblbrace\chi_G(w):w\in\gS_{G,i}(u)\rdblbrace \rdblbrace_{i=1}^{m_G(u)}=\ldblbrace \ldblbrace\chi_H(w):w\in\gS_{H,i}(u')\rdblbrace \rdblbrace_{i=1}^{m_H(u')}.$$
\end{corollary}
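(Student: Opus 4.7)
The plan is to reduce the statement to a purely combinatorial matching problem about color multisets and solve it via a counting argument that exploits the distance constraints from the RD-WL coloring. If $u$ is not a cut vertex, then $m_G(u)=1$ and by the first part of Theorem~\ref{thm:rdwl} (already proved in \cref{sec:proof_rdwl_part1}) $u'$ is not a cut vertex either, so $m_H(u')=1$; the unique components are $\gV\setminus\{u\}$ and $\gV\setminus\{u'\}$, whose color multisets agree since the graph representations agree. So I would restrict attention to the case where $u$ (hence $u'$) is a cut vertex. Abbreviate $A_i:=\ldblbrace\chi_G(w):w\in\gS_{G,i}(u)\rdblbrace$ and $B_j:=\ldblbrace\chi_H(w):w\in\gS_{H,j}(u')\rdblbrace$; by \cref{thm:proof_rdwl_part1_1} at most one index lies outside $\gM_G(u)$ (call that component $\gS_G^*$ if it exists, and similarly $\gS_H^*$ on the $H$-side).

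Next I would use \cref{thm:proof_rdwl_part2_1} (together with the remark that it also applies within a single graph) to show that for all $i\in\gM_G(u)$ and $j\in\gM_H(u')$, the multisets $A_i$ and $B_j$ are either equal or disjoint. Grouping $\gM$-indexed components into equivalence classes by their color-multiset value, let $\alpha_C:=|\{i\in\gM_G(u):A_i=C\}|$ and $\beta_C:=|\{j\in\gM_H(u'):B_j=C\}|$. The goal becomes $\alpha_C=\beta_C$ for every value $C$, after which cardinality gives $|\gM_G(u)|=|\gM_H(u')|$ and hence $m_G(u)=m_H(u')$.

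The main obstacle, and the step requiring \cref{thm:proof_rdwl_part2_0}, is showing that for each color $c\neq\chi_G(u)$ the number of color-$c$ vertices lying in $\gM$-components matches between $G$ and $H$: equivalently, $|V_c^{G,\gM}|=|V_c^{H,\gM}|$ where $V_c^{G,\gM}:=\bigcup_{i\in\gM_G(u)}\gS_{G,i}(u)\cap\chi_G^{-1}(c)$ and similarly for $H$. The key idea is that $\chi_G(u)=\chi_H(u')$ yields the multiset identity $\ldblbrace\disR_G(u,x):x\in\chi_G^{-1}(c)\rdblbrace=\ldblbrace\disR_H(u',x):x\in\chi_H^{-1}(c)\rdblbrace$. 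The second bullet of \cref{thm:proof_rdwl_part2_0} then forces any distance coming from $V_c^{G,\gM}$ to be strictly smaller than any distance coming from $V_c^{H,*}$, and symmetrically after swapping $G\leftrightarrow H$. This strict separation implies no single distance value can be simultaneously of $\gM$-type on one side and $*$-type on the other (else it would satisfy $v<v$), so the multiset of distances decomposes consistently into a $\gM$-part and a $*$-part on both sides, giving $|V_c^{G,\gM}|=|V_c^{H,\gM}|$ for every $c$.

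With these counts in hand, the remainder is routine. Fix $C$ and any color $c$ in its support with multiplicity $m_c^C>0$; by the equal-or-disjoint property the $\gM$-contribution to color $c$ on the $G$-side is exactly $\alpha_C\cdot m_c^C$ (other equivalence classes contribute $0$), and similarly $\beta_C\cdot m_c^C$ on the $H$-side. Both equal $|V_c^{G,\gM}|=|V_c^{H,\gM}|$, hence $\alpha_C=\beta_C$, which proves $\ldblbrace A_i\rdblbrace_{i\in\gM_G(u)}=\ldblbrace B_j\rdblbrace_{j\in\gM_H(u')}$. Finally, to handle the special components, I would subtract this matched sub-multiset from the total color multiset $\ldblbrace\chi_G(w):w\in\gV\setminus\{u\}\rdblbrace=\ldblbrace\chi_H(w):w\in\gV\setminus\{u'\}\rdblbrace$; the residues are precisely the color multisets of $\gS_G^*$ and $\gS_H^*$, and their agreement (using also that both special components contain exactly $|\chi_G^{-1}(\chi_G(u))|-1$ vertices of color $\chi_G(u)$) completes the proof.
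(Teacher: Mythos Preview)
Your proposal is correct and follows essentially the same strategy as the paper: first establish $|V_c^{G,\gM}|=|V_c^{H,\gM}|$ via the distance-separation of \cref{thm:proof_rdwl_part2_0}, then combine with the equal-or-disjoint dichotomy of \cref{thm:proof_rdwl_part2_1} to match the $\gM$-indexed component multisets, and finally recover the special components by subtraction from the total. The only real difference is that you spell out the passage from the color counts $|V_c^{G,\gM}|=|V_c^{H,\gM}|$ to the multiset identity $\ldblbrace A_i\rdblbrace_{i\in\gM_G(u)}=\ldblbrace B_j\rdblbrace_{j\in\gM_H(u')}$ via the explicit $\alpha_C=\beta_C$ counting argument, whereas the paper simply asserts this step; your version is a clean way to fill in that detail (and your cross-graph use of the second bullet of \cref{thm:proof_rdwl_part2_0} is arguably more direct than the paper's within-graph empty-intersection phrasing).
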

\begin{proof}
If both $u$ and $u'$ are not cut vertices, \cref{thm:proof_rdwl_part2_2} trivially holds since $m_G(u)=m_H(u')=1$ and $\gS_{G,1}(u)=\gV\backslash\{u\}$, $\gS_{H,1}(u')=\gV\backslash\{u'\}$. Now assume $u$ and $u'$ are both cut vertices. We first claim that
\begin{equation}
\label{eq:proof_rdwl_part2_2_0}
    \textstyle\ldblbrace\chi_G(w):w\in\bigcup_{i\in\gM_G(u)}\gS_{G,i}(u)\rdblbrace=\ldblbrace\chi_H(w):w\in\bigcup_{i\in\gM_H(u')}\gS_{H,i}(u')\rdblbrace.
\end{equation}
To prove the claim, it suffices to prove that for each color $c\in\gC$, 
\begin{equation}
\label{eq:proof_rdwl_part2_2_1}
    \left|\bigcup_{i\in\gM_G(u)}\gS_{G,i}(u)\cap\chi_G^{-1}(c)\right|=\left|\bigcup_{i\in\gM_H(u')}\gS_{H,i}(u')\cap\chi_H^{-1}(c)\right|.
\end{equation}
Note that $|\chi_G^{-1}(c)|=|\chi_H^{-1}(c)|$. Also note that by \cref{thm:proof_rdwl_part2_0}, for any two nodes $w_1\in\bigcup_{i\in\gM_G(u)}\gS_{G,i}(u)\cap\chi_G^{-1}(c)$ and  $w_2\in\bigcup_{i\notin\gM_G(u)}\gS_{G,i}(u)\cap\chi_G^{-1}(c)$, we have $\disR_G(u,w_1)<\disR_G(u,w_2)$. In other words, the following two sets does not intersect:
\begin{align*}
    \textstyle\gD_G(u,c):=\ldblbrace\disR_G(w,u):w\in\bigcup_{i\in\gM_G(u)}\gS_{G,i}(u)\cap\chi_G^{-1}(c)\rdblbrace, \\
    \textstyle\widetilde\gD_G(u,c):=\ldblbrace\disR_G(w,u):w\in\bigcup_{i\notin\gM_G(u)}\gS_{G,i}(u)\cap\chi_G^{-1}(c)\rdblbrace.
\end{align*}
Since $\chi_G(u)=\chi_H(u')$, we have $\gD_G(u,c)\cup\widetilde\gD_G(u,c)=\gD_H(u',c)\cup\widetilde\gD_H(u',c)$. Then $\gD_G(u,c)\cap\widetilde\gD_G(u,c)=\gD_H(u',c)\cap\widetilde\gD_H(u',c)=\emptyset$ implies that $\gD_G(u,c)=\gD_H(u',c)$ and $\widetilde\gD_G(u,c)=\widetilde\gD_H(u',c)$. This proves (\ref{eq:proof_rdwl_part2_2_1}) and thus (\ref{eq:proof_rdwl_part2_2_0}) holds.

We next claim that 
\begin{equation}
\label{eq:proof_rdwl_part2_2_2}
    \ldblbrace \ldblbrace\chi_G(w):w\in\gS_{G,i}(u)\rdblbrace: i\in\gM_G(u) \rdblbrace=\ldblbrace \ldblbrace\chi_H(w):w\in\gS_{H,i}(u')\rdblbrace:   i\in\gM_H(u')\rdblbrace.
\end{equation}
This simply follows by using (\ref{eq:proof_rdwl_part2_2_0}) and \cref{thm:proof_rdwl_part2_1}. Finally, (\ref{eq:proof_rdwl_part2_2_2}) already yields the desired conclusion because:
\begin{itemize}[topsep=0pt,leftmargin=30pt]
\setlength{\itemsep}{0pt}
    \item If $|\gM_G(u)|=m_G(u)$, then (\ref{eq:proof_rdwl_part2_2_0}) implies that $$\textstyle\left|\bigcup_{i\in\gM_H(u')}\gS_{H,i}(u')\right|=\left|\bigcup_{i\in\gM_G(u)}\gS_{G,i}(u)\right|=|\gV|-1$$ and thus $|\gM_H(u')|=m_H(u')$.
    \item If $|\gM_G(u)|=m_G(u)-1$, then analogously $|\gM_H(u')|=m_H(u)-1$. Furthermore,
    \begin{align*}
    \ldblbrace\ldblbrace\chi_G(w):w\in\gS_{G,i}(u)\rdblbrace: i\notin\gM_G(u) \rdblbrace
    =\ldblbrace\ldblbrace\chi_G(w):w\in\gS_{H,i}(u')\rdblbrace: i\notin\gM_H(u') \rdblbrace
    \end{align*}
    because $\ldblbrace\chi_G(w):w\in\gV\backslash\{u\}\rdblbrace=\ldblbrace\chi_H(w):w\in\gV\backslash\{u'\}\rdblbrace$.
\end{itemize}
In both cases, \cref{thm:proof_rdwl_part2_2} holds.
\end{proof}

We are now ready to prove that $\ldblbrace \chi_G(w):w\in\gV\rdblbrace=\ldblbrace \chi_H(w):w\in\gV\rdblbrace$ implies  $\operatorname{BCVTree}(G)\simeq\operatorname{BCVTree}(H)$. Recall that in a block cut-vertex tree $\operatorname{BCVTree}(G)$, there are two types of nodes: all cut vertices of $G$, and all biconnected components of $G$. Each edge in $\operatorname{BCVTree}(G)$ is connected between a cut vertex $u\in \gV$ and a biconnected component $\gB\subset\gV$ such that $u\in\gB$.

Given a fixed RD-WL graph representation $\gR$, consider any graph $G=(\gV,\gE_G)$ satisfying $\ldblbrace\chi_G(w):w\in\gV\rdblbrace=\gR$. First, all cut vertices of $G$ can be determined purely from $\gR$ using the node colors. We denote the cut vertex color multiset as $\gC^\mathrm{V}:=\ldblbrace\chi_G(u):u\text{ is a cut vertex of }G\rdblbrace$. Next, the number $m_G(u)$ for each cut vertex $u$ can be determined only by its color $\chi_G(u)$ (by \cref{thm:proof_rdwl_part2_2}), which is equal to the degree of node $u$ in $\operatorname{BCVTree}(G)$. We now give a procedure to construct $\operatorname{BCVTree}(G)$, which purely depends on $\gR$ rather than the specific graph $G$.

\begin{figure}[t]
    \centering
    \small
    \begin{tabular}{cc}
        \includegraphics[width=0.43\textwidth]{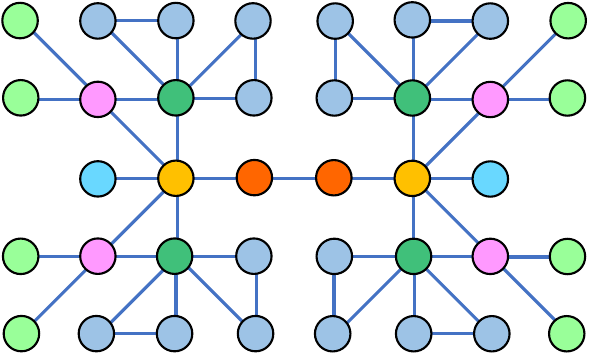} & \includegraphics[width=0.5\textwidth]{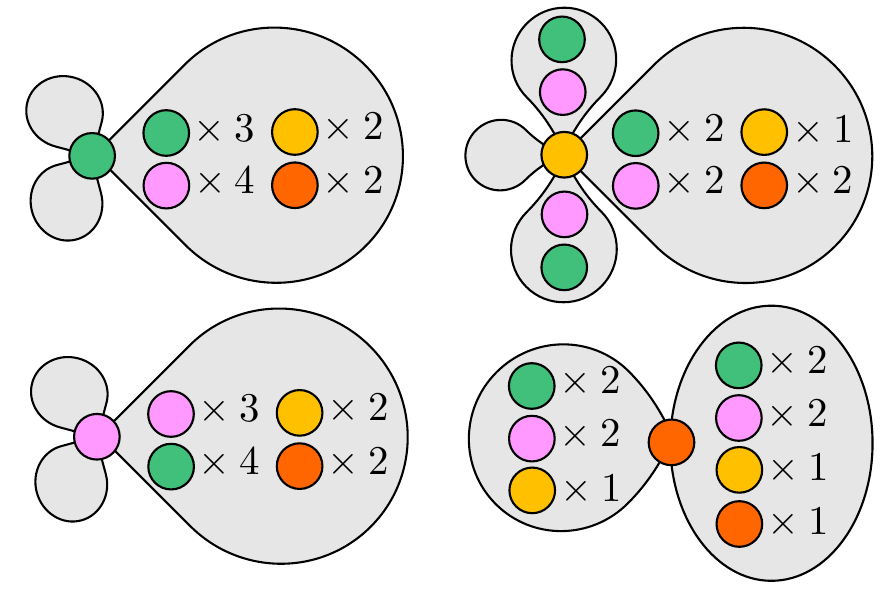} \\
        (a) The original graph & (b) Illustration of the multisets $\gT(u)$ for each cut vertex $u$.
    \end{tabular}
    \begin{tabular}{cccc}
        \includegraphics[width=0.225\textwidth]{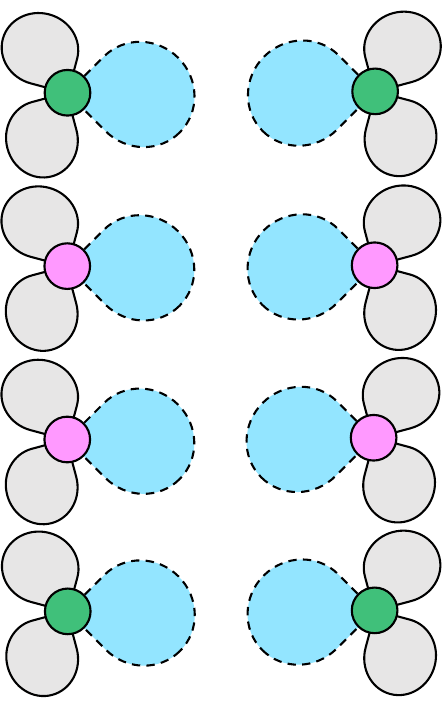} & \includegraphics[width=0.225\textwidth]{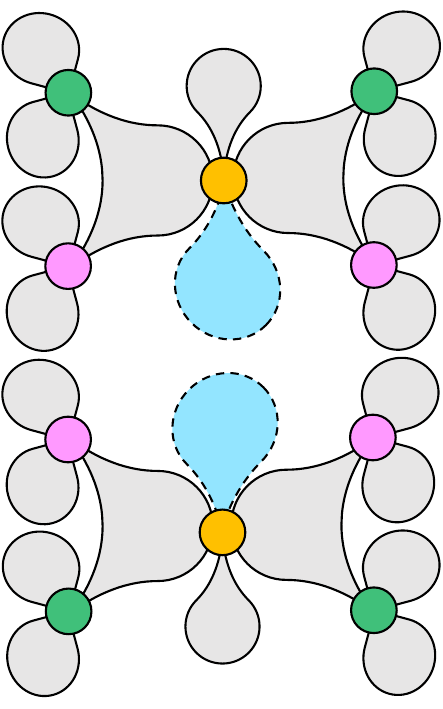} & \includegraphics[width=0.225\textwidth]{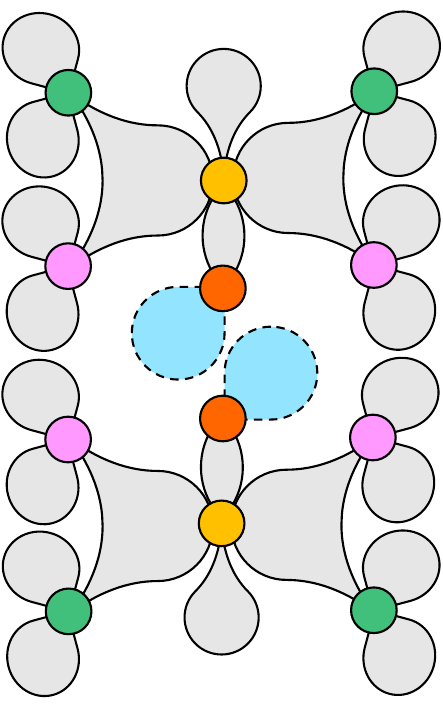} & \includegraphics[width=0.225\textwidth]{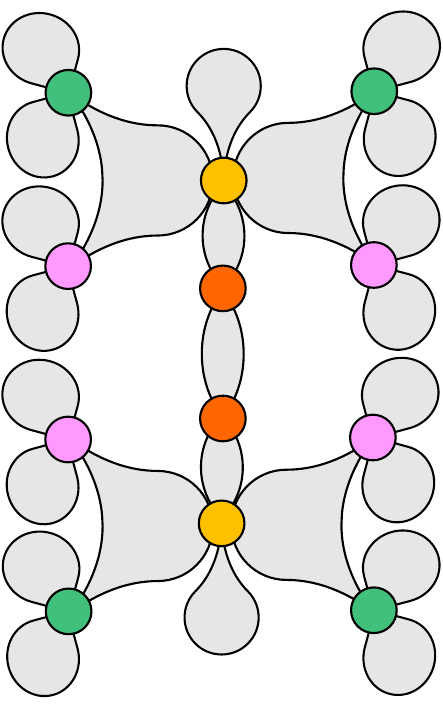} \\
        (c) The first step & (d) The second step & (e) The third step & (f) The final step 
    \end{tabular}
    \caption{Illustrations for constructing the BCVTree given the graph representation $\gR$.}
    \label{fig:proof_rdwl_bcvtree}
    \vspace{-5pt}
\end{figure}

We examine the multisets $\gT(u):=\ldblbrace \ldblbrace\chi_G(w):w\in\gS_{G,i}(u)\rdblbrace \rdblbrace_{i=1}^{m_G(u)}$ for all cut vertices $u$, which only depends on $\gR$ and $\chi_G(u)$ rather than the specific graph $G$ or node $u$ by \cref{thm:proof_rdwl_part2_2}. See \cref{fig:proof_rdwl_bcvtree}(b) for an illustration of $\gT(u)$ for four types of cut vertices $u$. In the first step, we find all cut vertices $u$ such that $\sum_{\gS\in\gT(u)}\mathbf 1[\gC^\mathrm{V}\cap\gS\neq\emptyset]\le 1$ where $\mathbf 1[\cdot]$ is the indicator function. In other words, we find cut vertices $u$ such that there is at most one connected component $\gS_{G,i}(u)$ that contains cut vertices. These cut vertices $u$ will serve as ``leaf (cut vertex) nodes'' in $\operatorname{BCVTree}(G)$, in the sense that it connects to at most one internal node in $\operatorname{BCVTree}(G)$. The number of BCVTree leaf nodes that connect to $u$ are also determined by \cref{thm:proof_rdwl_part2_2}. See \cref{fig:proof_rdwl_bcvtree}(c) for an illustration. After finding all the ``leaf (cut vertex) nodes'', we can then find cut vertex nodes $v$ such that when removing all ``leaf (cut vertex) nodes'' in the BCVTree, $v$ will serve as a ``leaf (cut vertex) node''. To do this, we compute for each cut vertex $v$ and each biconnected component $\gB_v$ associated with $v$, whether $\gB_v$ has no cut vertex or all cut vertices in $\gB_v$ correspond to the ``leaf (cut vertex) nodes'' in $\operatorname{BCVTree}(G)$. Then, we check whether a cut vertex $v$ satisfies $\sum_{\gS\in\gT(v)}\mathbf 1[(\gC^\mathrm{V}\cap\gS)\backslash\gC^\mathrm{V}_u\neq\emptyset]\le 1$, where the set $\gC^\mathrm{V}_u$ contains all colors corresponding to ``leaf (cut vertex) nodes''. These vertices $v$ will serve as new ``leaf (cut vertex) nodes'' when removing all ``leaf (cut vertex) nodes'' in the BCVTree, and the connection between such vertices $v$ and ``leaf (cut vertex) nodes'' can also be determined (see \cref{fig:proof_rdwl_bcvtree}(d) for an illustration). The procedure can be recursively executed until the full BCVTree is constructed (see \cref{fig:proof_rdwl_bcvtree}(f)), and the whole procedure does not depend on the specific graph $G$ and only depends on $\gR$, which completes the proof.

\subsection{Proof of \cref{thm:2fwl_powerful_than_gdwl}}
\label{sec:proof_2fwl_powerful}
Given a graph $G=(\gV,\gE)$, let $\chi_G^t$ be the 2-FWL color mapping after the $t$-th iteration (see \cref{alg:kfwl} for details), and let $\chi_G$ be the stable 2-FWL color mapping. The following result is useful for the subsequent proof:
\begin{lemma}
\label{thm:2fwl_path_base}
Let $u_1,u_2,v_1,v_2\in\gV$ be nodes in graph $G$ and $t$ be an integer. The following holds:
\begin{itemize}
    \item If $\chi_G^t(u_1,v_1)=\chi_G^t(u_2,v_2)$, then $u_1=v_1$ if and only if $u_2=v_2$;
    \item If $\chi_G^t(u_1,v_1)=\chi_G^t(u_2,v_2)$, then $\{u_1,v_1\}\in\gE$ if and only if $\{u_2,v_2\}\in\gE$;
    \item If $\chi_G^t(u_1,v_1)=\chi_G^t(u_2,v_2)$ and $t\ge 1$, then $\deg_G(u_1)=\deg_G(u_2)$ and $\deg_G(v_1)=\deg_G(v_2)$.
\end{itemize}
\end{lemma}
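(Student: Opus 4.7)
The plan is to derive all three items from the properties of the 2-FWL initialization (Equation (\ref{eq:kfwl_init})) together with the standard fact that every iteration refines the coloring of the previous iteration. Formally, it suffices to observe that $\chi_G^t(u_1,v_1)=\chi_G^t(u_2,v_2)$ always implies $\chi_G^0(u_1,v_1)=\chi_G^0(u_2,v_2)$, which can be proven by a trivial induction on $t$ using the injectivity of $\operatorname{hash}$ and the fact that $\chi_G^t(\vv)$ always contains $\chi_G^{t-1}(\vv)$ as a sub-argument.

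First I would handle items (1) and (2) simultaneously. By the initialization rule (\ref{eq:kfwl_init}), $\chi_G^0(u,v)=\operatorname{hash}(\operatorname{vec}(\mathbf A^{(u,v)}))$ where the entry $A^{(u,v)}_{12}$ takes exactly one of the three distinct values $c_\mathrm{node},c_0,c_1$ according to whether $u=v$, $u\neq v$ with $\{u,v\}\notin\gE$, or $u\neq v$ with $\{u,v\}\in\gE$. By injectivity of $\operatorname{hash}$, the equation $\chi_G^0(u_1,v_1)=\chi_G^0(u_2,v_2)$ forces $A^{(u_1,v_1)}_{12}=A^{(u_2,v_2)}_{12}$, which immediately yields both items.

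For item (3), the key observation is that for $t\ge 1$, the update rule (in the case $k=2$) gives
\begin{equation*}
    \chi_G^t(u,v)=\operatorname{hash}\Bigl(\chi_G^{t-1}(u,v),\ldblbrace (\chi_G^{t-1}(w,v),\chi_G^{t-1}(u,w)):w\in\gV\rdblbrace\Bigr).
\end{equation*}
Hence $\chi_G^t(u_1,v_1)=\chi_G^t(u_2,v_2)$ implies the equality of the corresponding multisets of pairs. I would then use items (1) and (2) applied to the \emph{previous} iteration $t-1$ (or directly to $\chi_G^0$ via the refinement observation) to argue that the set $\{w\in\gV:\{u_1,w\}\in\gE\}$ is exactly the set of $w$ for which the second coordinate $\chi_G^{t-1}(u_1,w)$ lies in the class of ``adjacent'' colors, i.e.\ colors finer than $c_1$; this class is disjoint from the ``non-adjacent'' and ``diagonal'' classes thanks to item (2). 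Counting these elements in the multiset and matching with the $(u_2,v_2)$ side gives $\deg_G(u_1)=\deg_G(u_2)$, and by the same argument on the first coordinate $\chi_G^{t-1}(w,v_1)$ we obtain $\deg_G(v_1)=\deg_G(v_2)$.

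I do not expect a genuine obstacle: the entire lemma is essentially a bookkeeping exercise on the initialization, once one recalls that the 2-FWL color mapping refines $\chi_G^0$ at every iteration. The only point that requires a sentence of care is the direction of the neighborhood in the update rule for $k=2$ (i.e.\ $N_1(\vv,w)=(w,v_2)$ and $N_2(\vv,w)=(v_1,w)$), which is what couples the first coordinate of the multiset's pairs to $\deg_G(v_1)$ and the second coordinate to $\deg_G(u_1)$.
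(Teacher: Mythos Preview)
Your proposal is correct and follows essentially the same approach as the paper: establish items (1) and (2) from the three-valued initialization $\chi_G^0$ together with the refinement property, and obtain item (3) by projecting the multiset in the update rule onto one coordinate and counting the ``adjacent'' colors. The only cosmetic difference is that the paper first proves item (3) at $t=1$ and then extends to all $t\ge 1$ via refinement, whereas you argue directly at a generic $t\ge 1$ using items (1) and (2) at level $t-1$; your remark about which coordinate of the pair controls $\deg_G(u_1)$ versus $\deg_G(v_1)$ is exactly the right bookkeeping.
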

\begin{proof}
By the initial coloring (\ref{eq:kfwl_init}) of 2-FWL, $\chi_G^0(u,v)$ can have the following three types of values:
\begin{equation*}
    \chi_G^0(u_1,v_1)=\left\{\begin{array}{ll}
        c_\mathrm{same} & \text{if }u=v \\
        c_\mathrm{edge} & \text{if }u\neq v\text{ and }\{u,v\}\in\gE\\
        c_\mathrm{other} & \text{if }u\neq v\text{ and }\{u,v\}\notin\gE
    \end{array}\right.
\end{equation*}
where $c_\mathrm{same}, c_\mathrm{edge},c_\mathrm{other}$ are three different colors. Therefore, if $\chi_G^0(u_1,v_1)=\chi_G^0(u_2,v_2)$, then $u_1=v_1$ if and only if $u_2=v_2$, and $\{u_1,v_1\}\in\gE$ if and only if $\{u_2,v_2\}\in\gE$. For the update step,
\begin{equation}
\label{eq:proof_2fwl_base_0}
    \chi_G^t(u,v)=\operatorname{hash}\left(\chi_G^{t-1}(u,v), \ldblbrace (\chi_G^{t-1}(u,w), \chi_G^{t-1}(w,v)):w\in \gV \rdblbrace\right).
\end{equation}
If $\chi_G^1(u_1,v_1)=\chi_G^1(u_2,v_2)$, then (\ref{eq:proof_2fwl_base_0}) implies that $\ldblbrace \chi_G^0(u_1,w):w\in \gV \rdblbrace=\ldblbrace \chi_G^0(u_2,w):w\in \gV \rdblbrace$ and thus $|\ldblbrace w\in\gV:\{u_1,w\}\in\gE\rdblbrace|=|\ldblbrace w\in\gV:\{u_2,w\}\in\gE\rdblbrace|$, namely $\deg_G(u_1)=\deg_G(u_2)$. We can similarly prove that $\deg_G(v_1)=\deg_G(v_2)$.

Finally, note that $\chi_G^t(u_1,v_1)=\chi_G^t(u_2,v_2)$ implies $\chi_G^{t-1}(u_1,v_1)=\chi_G^{t-1}(u_2,v_2)$ using (\ref{eq:proof_2fwl_base_0}). This concludes the proof of the case $t\ge 1$ by a simple induction.
\end{proof}

For a path $P=(x_0,\cdots,x_d)$ (not necessarily simple) in graph $G$ of length $d\ge 1$, define $\omega(P):=(\deg_G(x_1),\cdots,\deg_G(x_{d-1}))$ which is a tuple of length $d-1$. We have the following key lemma:
\begin{lemma}
\label{thm:2fwl_path_key}
Let $t\in\mathbb N$ be a non-negative integer. Given nodes $u_1,u_2,v_1,v_2\in\gV$, if $\chi_G^t(u_1,v_1)=\chi_G^t(u_2,v_2)$, then the following holds:
\begin{itemize}[topsep=0pt,leftmargin=30pt]
\setlength{\itemsep}{0pt}
    \item Denote $\gP_d(u,v)$ be the set of all paths (not necessarily simple) from node $u$ to node $v$ of length $d$. Then $|\gP_{t+1}(u_1,v_1)|=|\gP_{t+1}(u_2,v_2)|$.
    \item Denote $\gQ_d(u,v)$ be the set of all hitting paths (not necessarily simple) from node $u$ to node $v$ of length $d$. Then, $\ldblbrace \omega(Q):Q\in\gQ_{t+1}(u_1,v_1)\rdblbrace=\ldblbrace \omega(Q):Q\in\gQ_{t+1}(u_2,v_2)\rdblbrace$, and $\ldblbrace \omega(Q):Q\in\gQ_{t+1}(v_1,u_1)\rdblbrace=\ldblbrace \omega(Q):Q\in\gQ_{t+1}(v_2,u_2)\rdblbrace$.
\end{itemize}
\end{lemma}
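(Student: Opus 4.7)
The plan is to prove both bullets simultaneously by induction on $t$. The base case $t = 0$ is immediate from \cref{thm:2fwl_path_base}: $\chi_G^0(u_1,v_1) = \chi_G^0(u_2,v_2)$ forces $\{u_1,v_1\}\in\gE \Leftrightarrow \{u_2,v_2\}\in\gE$, hence $|\gP_1(u_1,v_1)| = |\gP_1(u_2,v_2)|$, while every length-$1$ hitting walk has empty $\omega$-tuple, so the two multisets in bullet~2 either both equal $\ldblbrace()\rdblbrace$ or are both empty.

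For the inductive step from $t-1$ to $t$, I would first unpack the $2$-FWL update rule: $\chi_G^t(u_1,v_1) = \chi_G^t(u_2,v_2)$ implies $\ldblbrace(\chi_G^{t-1}(u_1,w),\chi_G^{t-1}(w,v_1)):w\in\gV\rdblbrace = \ldblbrace(\chi_G^{t-1}(u_2,w),\chi_G^{t-1}(w,v_2)):w\in\gV\rdblbrace$, which can be realized by a bijection $\phi:\gV\to\gV$ satisfying $\chi_G^{t-1}(u_1,w) = \chi_G^{t-1}(u_2,\phi(w))$ and $\chi_G^{t-1}(w,v_1) = \chi_G^{t-1}(\phi(w),v_2)$ for every $w$. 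The structural identity I would leverage is the first-step decomposition of walks: every length-$(t+1)$ walk $(u,x_1,\ldots,x_{t+1})$ factors uniquely as an edge $\{u,x_1\}$ together with a length-$t$ walk $(x_1,\ldots,x_{t+1})$, with the extra constraint $x_1\neq v$ in the hitting case and the recursion $\omega((u,x_1,\ldots,x_{t+1})) = (\deg(x_1)) \frown \omega((x_1,\ldots,x_{t+1}))$. Thus $|\gP_{t+1}(u,v)| = \sum_{w\in\gN(u)}|\gP_t(w,v)|$ and $\gQ_{t+1}(u,v) = \bigsqcup_{w\in\gN(u),\,w\neq v} \{u\}\times \gQ_t(w,v)$.

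Applying the IH to the color equality $\chi_G^{t-1}(w,v_1) = \chi_G^{t-1}(\phi(w),v_2)$ delivers $|\gP_t(w,v_1)| = |\gP_t(\phi(w),v_2)|$ and equality of the length-$t$ hitting-walk $\omega$-multisets from $w$ to $v_1$ and from $\phi(w)$ to $v_2$. Using \cref{thm:2fwl_path_base}(2) on $\chi_G^{t-1}(u_1,w) = \chi_G^{t-1}(u_2,\phi(w))$ matches the edge predicate $\{u_1,w\}\in\gE \Leftrightarrow \{u_2,\phi(w)\}\in\gE$, and \cref{thm:2fwl_path_base}(1) matches $w\neq v_1 \Leftrightarrow \phi(w)\neq v_2$. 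Reindexing the sums and disjoint unions by $\phi$ then gives bullet~1 directly, and bullet~2 modulo the identification $\deg(w) = \deg(\phi(w))$. The symmetric second equality in bullet~2 is obtained by the analogous last-step decomposition, equivalently by applying the same argument to $(v_i,u_i)$.

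The main obstacle is this final degree match. For $t\ge 2$, \cref{thm:2fwl_path_base}(3) applied to $\chi_G^{t-1}(w,v_1) = \chi_G^{t-1}(\phi(w),v_2)$ at level $t-1\ge 1$ yields $\deg(w) = \deg(\phi(w))$ immediately, closing the induction. The delicate regime is small $t$, where $\chi_G^{t-1}$ does not yet encode intermediate-node degrees; to handle this I would either strengthen the inductive hypothesis to carry an auxiliary invariant that already pairs intermediate vertices by degree along the bijection $\phi$, or install $t = 1$ as an explicit second base case via a direct analysis of the iteration-$1$ multiset together with \cref{thm:2fwl_path_base}(3) applied at the endpoints $u_i,v_i$, so that the degree contributions line up before the inductive machinery takes over.
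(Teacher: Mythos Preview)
Your overall approach---induction on $t$ with a first-step decomposition of (hitting) walks, realizing the $2$-FWL multiset equality by a bijection $\phi$, and using \cref{thm:2fwl_path_base} to match adjacency, equality, and degree---is exactly the route the paper takes. You also correctly isolate a point the paper glosses over: prepending $\deg_G(w)$ to the $\omega$-tuple in the inductive step requires applying \cref{thm:2fwl_path_base}(3) to colors at level $t-1$, which is only available when $t-1\ge 1$.

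The genuine gap is in your proposed remedy for $t=1$. A ``direct analysis of the iteration-$1$ multiset together with \cref{thm:2fwl_path_base}(3) applied at the endpoints'' cannot succeed, because $\chi_G^1(u,v)$ simply does not determine the degrees of the common neighbors of $u$ and $v$; knowing $\deg_G(u_i),\deg_G(v_i)$ is irrelevant to $\omega$ of a length-$2$ hitting walk. Concretely, take $G$ on $\{1,\dots,8\}$ with edge set $\{\{1,3\},\{2,3\},\{3,8\},\{4,6\},\{5,6\},\{6,7\},\{6,8\}\}$. Both $(1,2)$ and $(4,5)$ are non-adjacent pairs with exactly one common neighbor and identical multisets $\ldblbrace(\chi_G^0(u,w),\chi_G^0(w,v)):w\in\gV\rdblbrace$, so $\chi_G^1(1,2)=\chi_G^1(4,5)$; yet $\gQ_2(1,2)=\{(1,3,2)\}$ has $\omega$-multiset $\ldblbrace(\deg_G 3)\rdblbrace=\ldblbrace(3)\rdblbrace$ while $\gQ_2(4,5)=\{(4,6,5)\}$ has $\ldblbrace(\deg_G 6)\rdblbrace=\ldblbrace(4)\rdblbrace$. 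Hence bullet~2 at $t=1$ is actually \emph{false}, so neither a second base case nor any strengthened invariant along $\phi$ can rescue it; the paper's proof shares this defect, silently invoking \cref{thm:2fwl_path_base}(3) at level $0$. (Your argument for bullet~1 at $t=1$ is unaffected and goes through.) What is true---and what \cref{thm:2fwl_path_key_corollary} actually needs---is the shifted statement that $\chi_G^{t}$-equality determines the $\omega$-multisets of $\gQ_{t}$ for $t\ge 1$: your induction establishes this cleanly once the base is placed at $t=1$ and the step runs from $t\ge 1$ to $t+1$, where \cref{thm:2fwl_path_base}(3) is legitimately available at level $t$.
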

\begin{proof}
We prove the lemma by induction over iteration $t$. We first prove the base case $t=0$. 
\begin{itemize}[topsep=0pt,leftmargin=30pt]
\setlength{\itemsep}{0pt}
    \item If $u_1= v_1$, then by \cref{thm:2fwl_path_base} $u_2=v_2$. Note that obviously $|\gP_1(u,u)|=0$ and $|\gQ_1(u,u)|=0$ for any node $u$, namely $|\gP_1(u_1,u_1)|=|\gP_1(u_2,u_2)|$ and $\gQ_1(u_1,u_1)=\gQ_1(u_2,u_2)=\emptyset$.
    \item Similarly, if $u_1\neq v_1$ and $\{u_1, v_1\}\notin\gE$, then by \cref{thm:2fwl_path_base} $u_2\neq v_2$ and $\{u_2, v_2\}\notin\gE$. We also have $|\gP_1(u_1,v_1)|=|\gP_1(u_2,v_2)|=0$ and $\gQ_1(u_1,v_1)=\gQ_1(u_2,v_2)=\emptyset$.
    \item If $u_1\neq v_1$ and $\{u_1, v_1\}\in\gE$, then by \cref{thm:2fwl_path_base} $u_2\neq v_2$ and $\{u_2, v_2\}\in\gE$. Then $|\gP_1(u_1,v_1)|=|\gP_1(u_2,v_2)|=1$ and $\gQ_1(u_1,v_1)=\gQ_1(u_2,v_2)$ where both sets have a single element that is an empty tuple (0-dimension).
\end{itemize}

Now suppose that the conclusion of \cref{thm:2fwl_path_key} holds in iteration $t$, we will prove that it also holds in iteration $t+1$. First note that for any two nodes $u,v$, $|\gP_{t+1}(u,v)|=\sum_{w\in\gN_G(v)}|\gP_{t+1}(u,w)|$. If $\chi_G^{t+1}(u_1,v_1)=\chi_G^{t+1}(u_2,v_2)$, then by definition of 2-FWL update formula (\ref{eq:proof_2fwl_base_0})
$$\ldblbrace (\chi_G^{t}(u_1,w), \chi_G^t(w,v_1)):w\in \gV \rdblbrace=\ldblbrace (\chi_G^{t}(u_2,w), \chi_G^t(w,v_2)):w\in \gV \rdblbrace.$$
which implies that $\ldblbrace \chi_G^{t}(u_1,w):w\in \gN_G(v_1)\rdblbrace=\ldblbrace \chi_G^{t}(u_2,w):w\in \gN_G(v_2) \rdblbrace$ due to \cref{thm:2fwl_path_base}. Therefore,
\begin{itemize}[topsep=0pt,leftmargin=30pt]
\setlength{\itemsep}{0pt}
    \item By induction, $\ldblbrace |\gP_{t+1}(u_1,w)|:w\in \gN_G(v_1)\rdblbrace=\ldblbrace |\gP_{t+1}(u_2,w)|:w\in \gN_G(v_2) \rdblbrace$. It follows that $\sum_{w\in\gN_G(v_1)}|\gP_{t+1}(u_1,w)|=\sum_{w\in\gN_G(v_2)}|\gP_{t+1}(u_2,w)|$ and thus we have $|\gP_{t+2}(u_1,v_1)|=|\gP_{t+2}(u_2,v_2)|$.
    \item By induction, $\ldblbrace(\chi_G^t(u_1,w),\chi_G^t(w,v_1),\ldblbrace \omega(Q):Q\in\gQ_{t+1}(w,v_1)\rdblbrace):w\in \gN_G(u_1)\rdblbrace=\ldblbrace(\chi_G^t(u_2,w),\chi_G^t(w,v_2),\ldblbrace \omega(Q):Q\in\gQ_{t+1}(w,v_2)\rdblbrace):w\in \gN_G(u_2)\rdblbrace$. Since \cref{thm:2fwl_path_base} says that $\chi_G^t(w,v)\neq \chi_G^t(v,v)$ if $w\neq v$, we have
    \begin{align*}
        &\ldblbrace(\chi_G^t(u_1,w),\ldblbrace \omega(Q):Q\in\gQ_{t+1}(w,v_1)\rdblbrace):w\in \gN_G(u_1)\backslash\{v_1\}\rdblbrace\\
        =&\ldblbrace(\chi_G^t(u_2,w),\ldblbrace \omega(Q):Q\in\gQ_{t+1}(w,v_2)\rdblbrace):w\in \gN_G(u_2)\backslash\{v_2\}\rdblbrace
    \end{align*}
    Further using the third bullet of \cref{thm:2fwl_path_base} and rearranging the two multisets yields
    \begin{align*}
        &\ldblbrace (\deg_G(w),\omega(Q)):w\in \gN_G(u_1)\backslash\{v_1\},Q\in\gQ_{t+1}(w,v_1)\rdblbrace\\
        =& \ldblbrace (\deg_G(w),\omega(Q)):w\in \gN_G(u_2)\backslash\{v_2\},Q\in\gQ_{t+1}(w,v_2)\rdblbrace.
    \end{align*}
    Equivalently, $\ldblbrace \omega(Q):Q\in\gQ_{t+2}(u_1,v_1)\rdblbrace=\ldblbrace \omega(Q):Q\in\gQ_{t+2}(u_2,v_2)\rdblbrace$. We can similarly prove that $\ldblbrace \omega(Q):Q\in\gQ_{t+2}(v_1,u_1)\rdblbrace=\ldblbrace \omega(Q):Q\in\gQ_{t+2}(v_2,u_2)\rdblbrace$.
\end{itemize}
This concludes the proof of the induction step.
\end{proof}

The above lemma directly yields the following corollary:
\begin{corollary}
\label{thm:2fwl_path_key_corollary}
Given nodes $u_1,u_2,v_1,v_2\in\gV$, if $\chi_G(u_1,v_1)=\chi_G(u_2,v_2)$, then $\dis_G(u_1,v_1)=\dis_G(u_2,v_2)$ and $\disR_G(u_1,v_1)=\disR_G(u_2,v_2)$.
\end{corollary}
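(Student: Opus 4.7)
The plan is to derive both distance equalities as essentially immediate consequences of \cref{thm:2fwl_path_key}. First observe that since the stable 2-FWL mapping $\chi_G$ refines every $\chi_G^t$, the hypothesis $\chi_G(u_1,v_1) = \chi_G(u_2,v_2)$ implies $\chi_G^t(u_1,v_1) = \chi_G^t(u_2,v_2)$ for all $t \ge 0$. Thus by \cref{thm:2fwl_path_key}, for every $d \ge 1$ we have $|\gP_d(u_1,v_1)| = |\gP_d(u_2,v_2)|$ and $\ldblbrace \omega(Q) : Q \in \gQ_d(u_1,v_1) \rdblbrace = \ldblbrace \omega(Q) : Q \in \gQ_d(u_2,v_2) \rdblbrace$, with the symmetric statement for $\gQ_d(v_i,u_i)$. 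Moreover, by \cref{thm:2fwl_path_base} we know $\deg_G(u_1) = \deg_G(u_2)$, $\deg_G(v_1) = \deg_G(v_2)$, and $u_1 = v_1 \iff u_2 = v_2$.

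For the shortest path distance, note that $\dis_G(u,v)$ equals the smallest $d \ge 1$ with $|\gP_d(u,v)| \ge 1$ (or $0$ if $u=v$, or $\infty$ if no such $d$ exists). Since $|\gP_d(u_1,v_1)| = |\gP_d(u_2,v_2)|$ for all $d \ge 1$ and the coincidence $u_1 = v_1 \iff u_2 = v_2$ handles the $d=0$ case, the two minima coincide (with the $\infty$ cases corresponding as well), so $\dis_G(u_1,v_1) = \dis_G(u_2,v_2)$.

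For the resistance distance, I would invoke the commute-time identity $\disR_G(u,v) = \disC_G(u,v)/(2|\gE|)$, where $\disC_G(u,v) = h_G(u,v) + h_G(v,u)$ and the hitting time admits the series expansion
\begin{equation*}
h_G(u,v) = \sum_{d \ge 1} \sum_{Q \in \gQ_d(u,v)} d \cdot q(Q), \qquad q(Q) = \prod_{i=0}^{d-1} \frac{1}{\deg_G(x_i)},
\end{equation*}
for $Q = (x_0,\ldots,x_d)$ with $x_0 = u$. The weight $q(Q)$ is determined by $\deg_G(u)$ together with $\omega(Q) = (\deg_G(x_1),\ldots,\deg_G(x_{d-1}))$. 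Since $\deg_G(u_1) = \deg_G(u_2)$ and the multisets $\ldblbrace \omega(Q) : Q \in \gQ_d(u_1,v_1) \rdblbrace$ and $\ldblbrace \omega(Q) : Q \in \gQ_d(u_2,v_2) \rdblbrace$ agree for every $d$, a term-by-term comparison yields $h_G(u_1,v_1) = h_G(u_2,v_2)$. The symmetric statement for $\gQ_d(v_i,u_i)$ gives $h_G(v_1,u_1) = h_G(v_2,u_2)$, hence $\disC_G(u_1,v_1) = \disC_G(u_2,v_2)$ and thus $\disR_G(u_1,v_1) = \disR_G(u_2,v_2)$.

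There is no real obstacle here beyond bookkeeping; the main subtlety is simply recognizing that the hitting-time probability $q(Q)$ factors through the data $(\deg_G(u), \omega(Q))$ that \cref{thm:2fwl_path_key} already controls, and that the $d=0$ case for shortest path distance requires separately appealing to the first item of \cref{thm:2fwl_path_base}.
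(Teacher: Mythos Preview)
Your proposal is correct and follows essentially the same approach as the paper: both arguments pass from the stable coloring to all finite iterations, characterize $\dis_G$ via the smallest $d$ with $|\gP_d|>0$, and handle $\disR_G$ through the commute-time identity by observing that the hitting-path weight $q(Q)$ is determined by $\deg_G(u)$ together with $\omega(Q)$. Your treatment is in fact slightly more explicit in invoking \cref{thm:2fwl_path_base} for the degree equality and the $u_1=v_1\iff u_2=v_2$ case.
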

\begin{proof}
If $\chi_G(u_1,v_1)=\chi_G(u_2,v_2)$, then $\chi_G^t(u_1,v_1)=\chi_G^t(u_2,v_2)$ holds for all $t\ge 0$. By \cref{thm:2fwl_path_key} $|\gP_{t}(u_1,v_1)|=|\gP_{t}(u_2,v_2)|$ holds for all $t\ge 0$ (the case $t=0$ trivially holds). Since $\dis_G(u,v)=\min\{t:|\gP_{t}(u_1,v_1)|>0\}$, we conclude that $\dis_G(u_1,v_1)=\dis_G(u_2,v_2)$. As for the Resistance Distance $\disR_G$, it is equivalent to the Commute Time Distance multiplied by a constant \citep[see also \cref{sec:detail_of_rdwl}]{chandra1996electrical}, i.e. $\disC_G(u,w)=2|\gE|\disR_G(u,w)$. Since $\disC_G(u,v)=\sum_{i=0}^\infty  i\cdot(\sum_{P\in\gQ_i(u,v)} q(P)+\sum_{P\in\gQ_i(v,u)} q(P))$ where $\gQ_i(u,v)$ is the set containing all hitting paths of length $i$ from $u$ to $v$, and $q(P)=1/\left(\deg_G(u)\prod_{i=1}^{d-1} \deg(x_i)\right)$ for a path $P=(x_0,\cdots,x_d)$. By \cref{thm:2fwl_path_key}, we have $\sum_{P\in\gQ_i(u_1,v_1)} q(P)=\sum_{P\in\gQ_i(u_2,v_2)} q(P)$ and $\sum_{P\in\gQ_i(v_1,u_1)} q(P)=\sum_{P\in\gQ_i(v_2,u_2)} q(P)$ for all $i\ge 0$ (the case $i=0$ trivially holds) and thus $\disC_G(u_1,v_1)=\disC_G(u_2,v_2)$, namely $\disR_G(u_1,v_1)=\disR_G(u_2,v_2)$.
\end{proof}

We are now ready to prove \cref{thm:2fwl_powerful_than_gdwl}. 
\begin{theorem}
The 2-FWL algorithm is more powerful than both SPD-WL and RD-WL. Formally, given a graph $G$, let $\chi_G^\mathrm{2FWL}$, $\chi_G^\mathrm{SPDWL}$ and $\chi_G^\mathrm{RDWL}$ be the vertex color mappings for these algorithms, respectively. Then the partition induced by $\chi_G^\mathrm{2FWL}$ is finer than both $\chi_G^\mathrm{SPDWL}$ and $\chi_G^\mathrm{RDWL}$.
\end{theorem}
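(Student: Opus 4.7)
The goal is to show that whenever two vertices $u,v$ receive the same 2-FWL color (meaning $\chi_G^{\mathrm{2FWL}}(u,u)=\chi_G^{\mathrm{2FWL}}(v,v)$), they also receive the same SPD-WL color and the same RD-WL color. I will argue this separately for each of the two distance metrics by induction on the iteration $t$ of the GD-WL update rule, using \cref{thm:2fwl_path_key_corollary} and \cref{thm:kfwl_lemma} as the only nontrivial inputs.

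\textbf{Setup and base case.} Write $\chi_G^{d,t}$ for the GD-WL color mapping after iteration $t$ with distance $d\in\{\dis_G,\disR_G\}$, and write $\chi_G^{\mathrm{2FWL}}$ for the stable 2-FWL coloring on vertex pairs. Define the 2-FWL vertex coloring by $\psi_G(w):=\chi_G^{\mathrm{2FWL}}(w,w)$. The claim I want to prove by induction on $t\ge 0$ is the following refinement statement: for any $w,w'\in\gV$, $\psi_G(w)=\psi_G(w')$ implies $\chi_G^{d,t}(w)=\chi_G^{d,t}(w')$. The base case $t=0$ is immediate since GD-WL initializes every vertex to the same constant color $c_0$.

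\textbf{Inductive step.} Assume the statement at iteration $t$ and consider iteration $t{+}1$. Suppose $\psi_G(u)=\psi_G(v)$. Applying one 2-FWL refinement step to the pair $(u,u)$ and $(v,v)$ and using the injectivity of $\operatorname{hash}$, we obtain a bijection $\sigma:\gV\to\gV$ such that $\chi_G^{\mathrm{2FWL}}(u,w)=\chi_G^{\mathrm{2FWL}}(v,\sigma(w))$ for every $w\in\gV$. By \cref{thm:kfwl_lemma} (applied with $i=k-1=1$), this in turn implies $\psi_G(w)=\psi_G(\sigma(w))$, so the induction hypothesis yields $\chi_G^{d,t}(w)=\chi_G^{d,t}(\sigma(w))$. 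Meanwhile, \cref{thm:2fwl_path_key_corollary} gives $d(u,w)=d(v,\sigma(w))$ for both $d=\dis_G$ and $d=\disR_G$. Combining these equalities yields
\begin{equation*}
\ldblbrace(d(u,w),\chi_G^{d,t}(w)):w\in\gV\rdblbrace=\ldblbrace(d(v,\sigma(w)),\chi_G^{d,t}(\sigma(w))):w\in\gV\rdblbrace,
\end{equation*}
and since $\sigma$ is a bijection the right-hand side equals $\ldblbrace(d(v,w'),\chi_G^{d,t}(w')):w'\in\gV\rdblbrace$. By the GD-WL update rule (\cref{alg:gdwl_update} in \cref{alg:gdwl}), this gives $\chi_G^{d,t+1}(u)=\chi_G^{d,t+1}(v)$, completing the induction.

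\textbf{Where the work really happens.} The induction itself is a short bookkeeping exercise; the substance is entirely packaged in \cref{thm:2fwl_path_key_corollary} (which certifies that 2-FWL knows pairwise shortest-path and resistance distances) and in \cref{thm:kfwl_lemma} (which lets us pass from equality of a pair color to equality of the corresponding diagonal color). The one subtlety I want to be careful about is that 2-FWL is a stable coloring while GD-WL is proved refined iteration-by-iteration; this is fine because the refinement statement at every finite $t$ persists to the stable coloring. Once established, the refinement for both $\dis_G$ and $\disR_G$ gives the theorem, and \cref{thm:2fwl_biconnectivity} then follows immediately by combining with \cref{thm:gdwl}.
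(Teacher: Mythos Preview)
Your proof is correct and follows essentially the same approach as the paper: both rely on \cref{thm:kfwl_lemma} to pass from pair colors to diagonal (vertex) colors and on \cref{thm:2fwl_path_key_corollary} to certify that equal 2-FWL pair colors force equal shortest-path and resistance distances. The only difference is cosmetic: you spell out the induction over GD-WL iterations explicitly, whereas the paper establishes the single stability condition $\ldblbrace(\chi_G(w),d_G(w,v_1)):w\in\gV\rdblbrace=\ldblbrace(\chi_G(w),d_G(w,v_2)):w\in\gV\rdblbrace$ for the 2-FWL vertex coloring and then appeals implicitly to the standard fact that any partition stable under a refinement rule is finer than the coarsest stable partition.
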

\begin{proof}
Note that by definition (see \cref{sec:kfwl}), we have $\chi_G(v):=\chi_G(v,v)$ for any node $v\in\gV$. If $\chi_G(v_1)=\chi_G(v_2)$, then by definition of 2-FWL aggregation formula,
$$\ldblbrace (\chi_G(v_1,w),\chi_G(w,v_1)):w\in\gV\rdblbrace=\ldblbrace (\chi_G(v_2,w),\chi_G(w,v_2)):w\in\gV\rdblbrace.$$
Using \cref{thm:kfwl_lemma}, if $\chi_G(v_1,w_1)=\chi_G(v_2,w_2)$ for some nodes $w_1$ and $w_2$, then $\chi_G(w_1)=\chi_G(w_2)$. Therefore, by using \cref{thm:2fwl_path_key_corollary} we obtain that if $\chi_G(v_1)=\chi_G(v_2)$, then
\begin{align*}
    \ldblbrace (\chi_G(w),\dis_G(w,v_1)):w\in\gV\rdblbrace&=\ldblbrace (\chi_G(w),\dis_G(w,v_2)):w\in\gV\rdblbrace,\\
    \ldblbrace (\chi_G(w),\disR_G(w,v_1)):w\in\gV\rdblbrace&=\ldblbrace (\chi_G(w),\disR_G(w,v_2)):w\in\gV\rdblbrace.
\end{align*}
The above equantions show that the partition induced by $\chi_G^\mathrm{2FWL}$ is finer than both $\chi_G^\mathrm{SPDWL}$ and $\chi_G^\mathrm{RDWL}$ and conclude the proof.
\end{proof}

Finally, the following proposition trivially holds and will be used to prove \cref{thm:2fwl_biconnectivity}.
\begin{proposition}
\label{thm:finer_partition}
Given a graph $G=(\gV,\gE_G)$, let $\chi_G$ and $\tilde \chi_G$ be two color mappings induced by two different (general) color refinement algorithms, respectively. If the vertex partition induced by the mapping $\chi_G$ is finer than that of $\tilde \chi_G$, then:
\begin{itemize}[topsep=0pt,leftmargin=30pt]
\setlength{\itemsep}{0pt}
    \item The mapping $\chi_G$ can distinguish cut vertices/edges if $\tilde \chi_G$ can distinguish cut vertices/edges;
    \item The mapping $\chi_G$ can distinguish the isomorphism type of $\operatorname{BCVTree}(G)$/$\operatorname{BCETree}(G)$ if $\tilde\chi_G$ can distinguish the isomorphism type of $\operatorname{BCVTree}(G)$/$\operatorname{BCETree}(G)$.
\end{itemize}
\end{proposition}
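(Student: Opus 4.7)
\textbf{Proposal for proof of \cref{thm:finer_partition}.}
The plan is to first make precise the cross-graph meaning of ``finer partition,'' and then reduce all four assertions (cut vertex, cut edge, $\operatorname{BCVTree}$, $\operatorname{BCETree}$) to a single elementary lemma: refinement induces a well-defined map between the color sets, which preserves all the equality/inequality statements appearing in the problem setup of \cref{sec:preliminary}. The refinement hypothesis, taken in the natural cross-graph sense used throughout the paper, says that for any graphs $G,H$ and any nodes $u\in\gV_G$, $v\in\gV_H$,
\begin{equation*}
\chi_G(u)=\chi_H(v)\ \Longrightarrow\ \tilde\chi_G(u)=\tilde\chi_H(v).
\end{equation*}
This is the only property I will use. (All statements in the paper about ``finer partition'' for WL-style algorithms implicitly have this cross-graph form, and each of the finer-partition claims actually proved earlier, e.g.\ in \cref{thm:2fwl_powerful_than_gdwl}, establishes precisely this.)

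Step 1 (cut vertex / cut edge). I will prove the contrapositive. Suppose $\chi_G$ fails to distinguish cut vertices, so there exist graphs $G,H$ and nodes $u\in\gV_G$, $v\in\gV_H$ with $\chi_G(u)=\chi_H(v)$ while $u$ is a cut vertex and $v$ is not. By the refinement hypothesis, $\tilde\chi_G(u)=\tilde\chi_H(v)$, so $\tilde\chi_G$ also fails to distinguish cut vertices. For cut edges, the argument is identical after replacing single-node colors by the unordered pair $\ldblbrace\chi_G(u),\chi_G(v)\rdblbrace$: from $\ldblbrace\chi_G(u),\chi_G(v)\rdblbrace=\ldblbrace\chi_H(u'),\chi_H(v')\rdblbrace$ one extracts a pairing of nodes with equal $\chi$-colors, applies the refinement hypothesis to each matched pair, and recovers the corresponding multiset equality for $\tilde\chi$.

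Step 2 (BCVTree / BCETree). The key observation is that the refinement hypothesis yields a well-defined map $f$ from the $\chi$-colors appearing in $G\cup H$ to $\tilde\chi$-colors, with $\tilde\chi_G=f\circ\chi_G$ and $\tilde\chi_H=f\circ\chi_H$: well-definedness is exactly the implication above, now applied within $G$ and within $H$ as well as across them. Suppose $\tilde\chi_G$ distinguishes, say, $\operatorname{BCVTree}$, and assume toward contradiction that there exist graphs $G,H$ with $\operatorname{BCVTree}(G)\not\simeq \operatorname{BCVTree}(H)$ but
\begin{equation*}
\ldblbrace\chi_G(w):w\in\gV_G\rdblbrace=\ldblbrace\chi_H(w):w\in\gV_H\rdblbrace.
\end{equation*}
Applying the map $f$ elementwise to both sides (a multiset operation, so equality is preserved) gives $\ldblbrace\tilde\chi_G(w):w\in\gV_G\rdblbrace=\ldblbrace\tilde\chi_H(w):w\in\gV_H\rdblbrace$, contradicting the assumption on $\tilde\chi_G$. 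The BCETree case is identical.

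The only real subtlety, and the place I would be careful, is the existence of the map $f$ in Step 2: one needs to know that no two $\chi$-colors can collapse into a single $\tilde\chi$-color across different pairs of graphs. This is exactly the cross-graph refinement implication, so no extra work is needed; all remaining content is bookkeeping about multisets, and there is no genuine obstacle beyond fixing notation.
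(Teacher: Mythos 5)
Your proof is correct; note that the paper itself gives no argument for this proposition at all---it is introduced with ``the following proposition trivially holds''---so your write-up is precisely the routine verification the authors left implicit (contrapositive plus the refinement implication for the cut-vertex/cut-edge statements, and transport of multiset equality of graph representations for the tree statements). The one substantive point, which you correctly identify and resolve, is that the hypothesis must be read in the cross-graph sense $\chi_G(u)=\chi_H(v)\Rightarrow\tilde\chi_G(u)=\tilde\chi_H(v)$ rather than as a partition statement on a single graph, and this is indeed the form established by the paper's finer-partition results (e.g., \cref{thm:2fwl_powerful_than_gdwl}) and the form needed for its applications such as \cref{thm:2fwl_biconnectivity}.
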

\cref{thm:2fwl_biconnectivity} is a simple consequence of \cref{thm:2fwl_powerful_than_gdwl,thm:finer_partition}.

\subsection{Proof of \cref{thm:distance_regular_maintext}}
\label{sec:distance_regular}
In this subsection, we give more fine-grained theoretical results on the expressiveness upper bound of GD-WL by considering the special problem of distinguishing \emph{distance-regular graphs}, a class of hard graphs that are highly relevant to the GD-WL framework. We provide a full characterization of what types of distance-regular graphs different GD-WL algorithms can or cannot distinguish, with both proofs and counterexamples.

Given a graph $G=(\gV,\gE)$, let $\gN_G^i(u)=\{w\in\gV:\dis_G(u,w)=i\}$ be the $i$-hop neighbors of $u$ in $G$ and let $D(G):=\max_{u,v\in\gV}\dis_G(u,v)$ be the diameter of $G$. We say $G$ is distance-regular if for all $i,j\in\mathbb [D(G)]$ and all nodes $u,v,w,x\in\gV$ with $\dis_G(u,v)=\dis_G(w,x)$, we have $|\gN_G^i(u)\cap\gN_G^j(v)|=|\gN_G^i(w)\cap\gN_G^j(x)|$. From the definition, it is straightforward to see that for all $u,v\in\gV$ and $i\in[D(G)]$, $|\gN_G^i(u)|=|\gN_G^i(v)|$, i.e., the number of $i$-hop neighbors is the same for all nodes. We thus denote $\kappa(G)=(k_1,\cdots,k_{D(G)})$ as the $k$-hop-neighbor array where $k_i:=|\gN_G^i(u)|$ with $u\in\gV$ chosen arbitrarily. We next define another important array:
\begin{definition}
    \normalfont (\textbf{Intersection array}) The intersection array of a distance-regular graph $G$ is denoted as $\iota(G)=\{b_0,\cdots,b_{D(G)-1};c_1,\cdots,c_{D(G)}\}$ where $b_i=|\gN_G(u)\cap\gN_G^{i+1}(v)|$ and $c_i=|\gN_G(u)\cap\gN_G^{i-1}(v)|$ with $\dis_G(u,v)=i$.
\end{definition}
We now present our main results.
\begin{theorem}
\label{thm:distance_regular}
    Let $G$ and $H$ be two connected distance-regular graphs. Then the following holds:
    \begin{itemize}[topsep=0pt,leftmargin=30pt]
    \setlength{\itemsep}{0pt}
        \item SPD-WL can distinguish the two graphs if and only if their $k$-hop-neighbor arrays differ, i.e. $\kappa(G)\neq \kappa(H)$.
        \item RD-WL can distinguish the two graphs if and only if their intersection arrays differ, i.e. $\iota(G)\neq\iota(H)$.
        \item 2-FWL can distinguish the two graphs if and only if their intersection arrays differ, i.e. $\iota(G)\neq\iota(H)$.
    \end{itemize}
\end{theorem}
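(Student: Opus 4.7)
The starting observation is that in any distance-regular graph $G$, the multiset $\ldblbrace \dis_G(v,w):w\in\gV_G\rdblbrace$ does not depend on the choice of $v$: it equals $\ldblbrace 0,\underbrace{1,\dots,1}_{k_1},\dots,\underbrace{D,\dots,D}_{k_D}\rdblbrace$, encoding exactly $\kappa(G)$. A trivial induction on the SPD-WL iteration $t$ then gives that all vertices of $G$ carry the same color $c_G^t$: if everyone shares a color at step $t$, then at step $t+1$ the update $\operatorname{hash}(\ldblbrace(\dis_G(v,w),\chi^t(w)):w\in\gV_G\rdblbrace)$ is a hash of a multiset that is independent of $v$. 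Hence SPD-WL stabilizes to a single color $c_G$ encoding exactly $\kappa(G)$, and analogously a single $c_H$ encoding $\kappa(H)$. This yields both directions of the SPD-WL claim at once: the graph representations agree if and only if $\kappa(G)=\kappa(H)$.

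\textbf{Plan for the ``same intersection array'' direction of RD-WL and 2-FWL.} I will first establish the following spectral lemma: in a distance-regular graph, $L^+$ lies in the Bose-Mesner algebra, so $L^+_{uv}$ depends only on $\dis_G(u,v)$, and hence $\disR_G(u,v)=L^+_{uu}+L^+_{vv}-2L^+_{uv}$ is a function $r_G(\dis_G(u,v))$ determined purely by the intersection array $\iota(G)$ (via the eigenvalues $\theta_j$ and the dual idempotents that encode $\iota$). Equipped with this, the SPD-WL induction transplants verbatim to RD-WL: all vertices collapse to a common color, encoding the pair $(\kappa(G),r_G)$, which is itself determined by $\iota(G)$. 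For 2-FWL I will prove the stronger inductive statement that for every $t$ there is a map $f^t:\{0,\dots,D\}\to\gC$, determined only by $\iota(G)$, such that $\chi_G^t(u,v)=f^t(\dis_G(u,v))$ for all $u,v\in\gV_G$. The base case follows from the initial coloring, which only depends on whether $\dis_G(u,v)\in\{0,1,{\ge}2\}$. The induction step uses the 2-FWL update together with the distance-regularity identity $|\gN_G^i(u)\cap\gN_G^j(v)|=p_{ij}^{\dis_G(u,v)}$ where the numbers $p_{ij}^d$ are computable from $\iota(G)$; thus the multiset $\ldblbrace(\chi_G^t(u,w),\chi_G^t(w,v)):w\in\gV_G\rdblbrace$ is a function of $\dis_G(u,v)$ and $\iota(G)$ alone. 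This yields the ``$\iota(G)=\iota(H)\Rightarrow$ indistinguishable'' direction for both algorithms.

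\textbf{Plan for the ``different intersection array'' direction.} By \cref{thm:2fwl_powerful_than_gdwl} it suffices to prove the converse for RD-WL: if $\iota(G)\ne\iota(H)$ then the RD-WL graph representations differ. From paragraph 2 the representation collapses to the multiset $\ldblbrace r_G(\dis_G(u,w)):w\in\gV_G\rdblbrace$, which (since $r_G(0)=0$ and $r_G$ is strictly positive on $[1,D]$) recovers both $\kappa(G)$ and the values $r_G(1),\dots,r_G(D)$. So the task reduces to the injectivity claim: the pair $(\kappa,r_G(\cdot))$ uniquely determines $\iota$ among intersection arrays of distance-regular graphs. I will prove this by using the spectral formula $L^+_{uv}=\sum_{j=1}^{D}(k_1-\theta_j)^{-1}(E_j)_{uv}$, where the $E_j$ and $\theta_j$ are all functions of $\iota$. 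By varying $\dis(u,v)$ over $\{0,1,\dots,D\}$ this is a $(D{+}1)\times(D{+}1)$ linear system that, together with $\kappa(G)$, lets one recover the eigenvalues $\theta_j$ and multiplicities $m_j$. One then invokes the classical fact that the spectrum of a distance-regular graph uniquely determines $\iota$, finishing the proof.

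\textbf{Main obstacle.} The genuinely delicate step is the injectivity claim in paragraph 3: passing from the resistance-distance sequence $r_G$ (together with $\kappa$) back to the full intersection array. Inverting the spectral sum requires handling the dual eigenvalues $(E_j)_{uv}$, which themselves depend on $\iota$, so the map $\iota\mapsto r_G$ is nonlinear and not obviously injective. An alternative route, which may be cleaner, is to extract enough intersection numbers $p_{ij}^d$ directly from RD-WL colors via a walk/hitting-time argument, bypassing the spectral detour; the resistance distance is a rational function of return-walk generating series, which encode the intersection numbers, so in principle a sufficient moment of $r_G$ recovers them. Either route, the reduction from ``same RD-WL representation'' to ``same $\iota$'' is where most of the technical weight of the theorem sits.
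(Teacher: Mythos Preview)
Your plan for SPD-WL and for the ``same $\iota$'' direction of 2-FWL is essentially the paper's: a single-color collapse for SPD-WL, and for 2-FWL the induction $\chi_G^t(u,v)=f^t(\dis_G(u,v))$ using the intersection numbers $p_{ij}^d$, with the converse for 2-FWL inherited from RD-WL via \cref{thm:2fwl_powerful_than_gdwl}. No issues there.

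The divergence is in RD-WL, and specifically in the direction you flag as the obstacle. The paper does \emph{not} go through spectra or Bose-Mesner idempotents at all. Instead it proves an explicit closed-form recursion for the resistance distance in a distance-regular graph:
\[
r_d \;=\; r_{d-1} \;+\; \frac{2}{\,n\,k_{d-1}\,b_{d-1}\,}\sum_{i=d}^{D(G)} k_i,\qquad r_0=0,
\]
by directly verifying that the matrix $\widetilde{\mathbf R}=[r_{\dis_G(u,v)}]$ satisfies $\mathbf L\widetilde{\mathbf R}=\tfrac{2}{n}\mathbf 1\mathbf 1^\top-2\mathbf I$ via a three-case entrywise computation (using only $|\gN_G(u)\cap\gN_G^{j\pm1}(v)|=b_j,c_j$ and $k_jc_j=k_{j-1}b_{j-1}$). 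This single formula kills both directions at once: it shows $r_d$ is determined by $\iota$ (giving indistinguishability when $\iota(G)=\iota(H)$), it makes strict monotonicity $r_0<r_1<\cdots<r_D$ immediate (each increment is positive), and it makes injectivity of $\iota\mapsto(\kappa,r)$ a two-line first-index-of-difference argument: if $b_i,c_{i+1}$ agree for $i<d$ but not at $d$, then $k_i,r_i$ agree for $i\le d$ while either $r_{d+1}$ or $k_{d+1}$ differs. No spectral inversion is needed.

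Two concrete comments on your route. First, a small gap: ``$r_G(0)=0$ and $r_G$ strictly positive on $[1,D]$'' is not enough to recover $\kappa$ and the labelled values $r_G(1),\dots,r_G(D)$ from the RD-WL multiset; you need $r_G$ to be strictly increasing (or at least injective with a canonical ordering) to pair each $r$-value with its distance $d$ and multiplicity $k_d$. The paper's recursion delivers this monotonicity for free. Second, your spectral-recovery plan (solve for $\theta_j,m_j$ from the system $r_G(d)=\sum_j(k_1-\theta_j)^{-1}(q_j(0)-q_j(d))$, then invoke ``spectrum determines $\iota$'') is plausible in principle but, as you note, the dual eigenvalues $q_j(d)$ themselves depend on $\iota$, so it is not a genuine linear system in the unknowns you want; making this rigorous would require substantially more work than the elementary recursion above.

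In short: your high-level architecture is right, but the paper's key lemma is the explicit recursive formula for $r_d$, which replaces your entire spectral detour and resolves the obstacle you identified in a few lines.
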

\cref{thm:distance_regular} precisely characterizes the equivalence class of all distance-regular graphs for different types of algorithms. Combined the fact that $\iota(G)=\iota(H)$ implies $\kappa(G)= \kappa(H)$ (see e.g. \citet[page 8]{van2014distance}), we immediately arrive at the following corollary:
\begin{corollary}
\label{thm:distance_regular_corollary}
    RD-WL is strictly more powerful than SPD-WL in distinguishing non-isomorphic distance-regular graphs. Moreover, RD-WL is as powerful as 2-FWL in distinguishing non-isomorphic distance-regular graphs.
\end{corollary}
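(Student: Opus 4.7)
The plan is to derive this statement as a direct consequence of the finer-grained Theorem~\ref{thm:distance_regular}, which precisely characterizes when each of SPD-WL, RD-WL, and 2-FWL separates a pair of distance-regular graphs: the first in terms of the $k$-hop-neighbor array $\kappa$, and the latter two in terms of the intersection array $\iota$. Given that characterization, both halves of the present statement reduce to an elementary comparison of the equivalence relations on distance-regular graphs induced by $\kappa$ versus $\iota$.

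For the ``as powerful as 2-FWL'' half, the RD-WL and 2-FWL clauses of Theorem~\ref{thm:distance_regular} show that both algorithms separate exactly the pairs of distance-regular graphs with $\iota(G)\neq\iota(H)$. Hence they induce identical equivalence relations on this class, so they are equally powerful in distinguishing non-isomorphic distance-regular graphs. For the ``at least as powerful as SPD-WL'' half, I will invoke the standard fact from the theory of distance-regular graphs that $\iota$ determines $\kappa$: one has $k_0=1$ together with the recurrence $k_i = k_{i-1}\,b_{i-1}/c_i$ for $i\ge 1$, so $\iota(G)=\iota(H)$ forces $\kappa(G)=\kappa(H)$. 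Contrapositively, whenever SPD-WL separates $G$ from $H$ (equivalently $\kappa(G)\neq\kappa(H)$), so does RD-WL.

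To upgrade ``at least as powerful'' to \emph{strictly} more powerful, I will exhibit a pair of non-isomorphic distance-regular graphs with matching $\kappa$ but distinct $\iota$. The Dodecahedron and the Desargues graph (Figure~\ref{fig:distance_regular_graphs}(a)) are both $3$-regular on $20$ vertices and share the same $k$-hop-neighbor array, but their intersection arrays differ: the Dodecahedron has $\iota = \{3,2,1,1,1;\,1,1,1,2,3\}$ while the Desargues graph has $\iota = \{3,2,2,1,1;\,1,1,2,2,3\}$. Applying Theorem~\ref{thm:distance_regular}, SPD-WL produces identical multisets of vertex colors on these two graphs whereas RD-WL separates them, certifying strict separation.

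The main intellectual obstacle has already been absorbed into Theorem~\ref{thm:distance_regular}; once that classification is available, the present statement is immediate. The only remaining input is the finite verification that the Dodecahedron/Desargues pair does in fact realize $\kappa(G)=\kappa(H)$ with $\iota(G)\neq\iota(H)$, which is a routine calculation from the intersection arrays above and well documented in the distance-regular graph literature.
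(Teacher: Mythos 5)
Your proposal is correct and takes essentially the same route as the paper: the paper also derives \cref{thm:distance_regular_corollary} immediately from \cref{thm:distance_regular} together with the standard fact that $\iota(G)=\iota(H)$ implies $\kappa(G)=\kappa(H)$ (cf.\ the relation $k_i=k_{i-1}b_{i-1}/c_i$), and it certifies strictness with exactly the same Dodecahedron/Desargues pair, which shares $\kappa=(3,6,6,3,1)$ but has intersection arrays $\{3,2,1,1,1;1,1,1,2,3\}$ versus $\{3,2,2,1,1;1,1,2,2,3\}$. No gaps to report.
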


\textbf{Counterexamples}. We provide representitive counterexamples in \cref{fig:distance_regular_graphs} for both SPD-WL and RD-WL. In \cref{fig:distance_regular_graphs}(a), both the Dodecahedron and the Desargues graph have 20 vertices and the same $k$-hop-neighbor array $(3,6,6,3,1)$, and thus SPD-WL cannot distinguish them. However, they have the different intersection array (i.e., $\{3,2,1,1,1;1,1,1,2,3\}$ for Dodecahedron and $\{3,2,2,1,1;1,1,2,2,3\}$ for the Desargues graph), and thus RD-WL can distinguish them. In \cref{fig:distance_regular_graphs}(b), we make use of the well-known 4x4 rook’s graph and the Shrikhande graph, both of which are strongly regular and thus distance-regular. They have the same intersection array $\{6,3;1,2\}$ and thus both RD-WL and 2-FWL cannot distinguish them although they are non-isomorphic.

\subsubsection{Proof of \cref{thm:distance_regular}}
We first present a lemma that links the definition of distance-regular graph to its intersection array. The proof is based on the Bose-Mesner algebra and its association scheme, and please refer to \citet[Sections 2.5 and 2.6]{van2014distance} for details.

\begin{lemma}
\label{thm:distance_regular_lemma}
    Let $G$ and $H$ be two graphs with the same intersection array, and suppose nodes $u,v,w,x$ satisfy $\dis_G(u,v)=\dis_G(w,x)$. Then $|\gN_G^i(u)\cap\gN_G^j(v)|=|\gN_H^i(w)\cap\gN_H^j(x)|$ for all $i,j\in\mathbb N$.
\end{lemma}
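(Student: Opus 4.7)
The plan is to prove the stronger statement that for any distance-regular graph $G$, the generalized intersection numbers $p^{i}_{h,j}(G) := |\gN_G^h(u) \cap \gN_G^j(v)|$ (well-defined for any $u,v \in \gV_G$ with $\dis_G(u,v) = i$, by distance-regularity) are completely determined by the intersection array $\iota(G)$. Once this is established, applying it to both $G$ and $H$ with $\iota(G) = \iota(H)$ and specializing to $i = \dis_G(u,v) = \dis_H(w,x)$ immediately yields the lemma.

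First I would introduce the distance matrices $A_0 = I, A_1, \ldots, A_D$ defined by $(A_k)_{uv} = 1$ iff $\dis_G(u,v) = k$, so that $(A_h A_j)_{uv} = |\gN_G^h(u) \cap \gN_G^j(v)|$, which by distance-regularity equals $p^{\dis_G(u,v)}_{h,j}(G)$. Entry-wise inspection of $A_1 A_i$, together with the observation that any neighbor of $u$ is at distance $i-1$, $i$, or $i+1$ from $v$ when $\dis_G(u,v) = i$, yields the fundamental three-term recurrence
\begin{equation*}
    A_1 A_i \;=\; b_{i-1}\, A_{i-1} \;+\; a_i\, A_i \;+\; c_{i+1}\, A_{i+1},
\end{equation*}
where $a_i := b_0 - b_i - c_i$ (since the three counts sum to the common degree $k_1 = b_0$), with the boundary conventions $A_{-1} = A_{D+1} = 0$ and $b_D = c_0 = 0$. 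All the coefficients here come directly from $\iota(G)$.

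The next step is to solve this recurrence by induction on $i$: each $A_i$ admits an expression $A_i = v_i(A_1)$ for a polynomial $v_i$ of degree $i$ whose coefficients are determined by $\{a_j, b_j, c_j\}_{j < i}$ (and the nonzero $c_{j+1}$'s appearing as denominators). Consequently $A_h A_j = v_h(A_1)\, v_j(A_1)$ is a polynomial in $A_1$ whose coefficients depend only on $\iota(G)$. Since $\{A_0, \ldots, A_D\}$ is linearly independent (the distance partition is nontrivial in each class) and has $D+1$ elements, it spans the same subspace as $\{I, A_1, \ldots, A_1^D\}$; the change of basis between them is thus invertible and is itself determined by the polynomials $v_0, \ldots, v_D$, hence by $\iota(G)$. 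Expanding $A_h A_j = \sum_i p^{i}_{h,j}(G)\, A_i$ in the distance-matrix basis expresses each $p^{i}_{h,j}(G)$ as a fixed rational function of $\iota(G)$, independent of the graph.

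The main technical obstacle will be justifying closure: that $A_h A_j$ actually lies in $\operatorname{span}\{A_0, \ldots, A_D\}$ (equivalently, that the distance matrices span an algebra). This closure is what makes the structure constants $p^{i}_{h,j}$ well-defined in the first place, and is precisely the content of distance-regularity lifted to the Bose-Mesner algebra. Once closure is in place, the rest is linear algebra: the constants are read off from the polynomial identity $v_h(x)\, v_j(x) = \sum_i p^{i}_{h,j}\, v_i(x)$, whose coefficients are intrinsic to $\iota(G)$. Applying the identical argument to $H$ with the same intersection array gives $p^{i}_{h,j}(G) = p^{i}_{h,j}(H)$, concluding the proof.
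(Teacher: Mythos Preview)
Your proposal is correct and is precisely the Bose--Mesner algebra argument the paper invokes: the paper does not give a self-contained proof but simply refers to \citet[Sections 2.5 and 2.6]{van2014distance} for the association-scheme machinery, and what you have written is exactly the content of that reference unpacked (three-term recurrence, $A_i = v_i(A_1)$, closure, structure constants determined by $\iota(G)$). One small remark: the closure step you flag as the ``main technical obstacle'' is in fact already delivered by your own recurrence at $i=D$, since $A_{D+1}=0$ forces $A_1 A_D \in \operatorname{span}\{A_{D-1},A_D\}$ and hence $A_1^{D+1}$ lies in $\operatorname{span}\{I,A_1,\dots,A_1^D\}$; so the polynomial algebra generated by $A_1$ has dimension exactly $D+1$ and coincides with $\operatorname{span}\{A_0,\dots,A_D\}$, closing the loop without any additional input.
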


\begin{proof}[Proof of the first item of \cref{thm:distance_regular}]
This part is straightforward. Consider the SPD-WL color mapping $\chi^1_G$ of graph $G$ after the first iteration. Then for two graphs $G,H$ with $n$ nodes, $\chi^1_G(u)=\chi^1_H(v)$ if and only if $|\gN_G^{i}(u)|=|\gN_H^{i}(v)|$ for all $i\in[n-1]$. Therefore, if $\kappa(G)\neq \kappa(H)$, then for any node $u$ in $G$ and $v$ in $H$, $|\gN_G^{j}(u)|=|\gN_H^{j}(v)|$ holds for some $j\in [\max(D(G),D(H))]$ and thus $\chi^1_G(u)\neq \chi^1_H(v)$. Namely, $\chi_G(u)\neq \chi_H(v)$ for all nodes $u$ in $G$ and $v$ in $H$, implying that SPD-WL can distinguish the two graphs. On the other hand, if $\kappa(G)= \kappa(H)$, then for any node $u$ in $G$ and $v$ in $H$ we have $\chi_G^1(u)= \chi_H^1(v)$. Similarly, $\chi_G^t(u)= \chi_H^t(v)$ for any iteration $t\in\mathbb N$, and thus SPD-WL cannot distinguish the two graphs.
\end{proof}

\begin{proof}[Proof of the second item of \cref{thm:distance_regular}]
The key insight is that given a distance-regular graph, the resistance distance between a pair of nodes $(u,v)$ only depends on its SPD. Formally, for any nodes $u,v,w,x$ in a distance-regular graph $G$, $\dis_G(u,v)= \dis_G(w,x)$ implies that $\disR_G(u,v)=\disR_G(w,x)$. 
Actually, we have the following stronger result:
\begin{theorem}
\label{thm:rd_distance_regular}
For any two nodes $u,v$ in a connected distance-regular graph $G$, $\disR_G(u,v)=r_{\dis_G(u,v)}$ where the sequence $\{r_d\}_{d=0}^{D(G)}$ is recursively defined as follows:
\begin{equation}
\label{eq:rd_in_distance_regular_graph}
    r_d=\left\{\begin{array}{ll}
        0 & \text{if }d=0, \\
        \displaystyle r_{d-1}+\frac 2 {n k_{d-1}b_{d-1}}\sum_{i=d}^{D(G)}k_i & \text{if }d\in[D(G)],
    \end{array}\right.
\end{equation}
where $\iota(G)=\{b_0,\cdots,b_{D(G)-1};c_1,\cdots,c_{D(G)}\}$ is the intersection array of $G$ and $\kappa(G)=(k_1,\cdots,k_{D(G)})$ is its $k$-hop-neighbor array.
\end{theorem}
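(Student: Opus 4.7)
The plan is to use the electrical-network interpretation of resistance distance, viewing $G$ as a network of unit resistors, and exploit the strong symmetry of the distance-regular structure to compute effective resistances layer by layer around the source vertex $u$.

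First, I would fix $u \in \gV$ and consider the potential $\phi := L^+ \delta_u$, where $L$ is the graph Laplacian and $\delta_u$ the indicator vector of $u$. Since $L^+ \mathbf{1} = 0$, this $\phi$ solves $L\phi = \delta_u - \frac{1}{n}\mathbf{1}$, corresponding physically to injecting net current $(n-1)/n$ at $u$ and withdrawing $1/n$ at each other node. The standard identity gives $\disR_G(u,v) = (L^+)_{uu} + (L^+)_{vv} - 2(L^+)_{uv}$. In a distance-regular graph $L = k_1 I - A$ is a polynomial in the adjacency matrix $A$, and the Bose--Mesner algebra argument implies that any polynomial in $A$ has constant diagonal (each distance matrix $A_i$ for $i \ge 1$ has zero diagonal and $A_0 = I$). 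Hence $(L^+)_{uu} = (L^+)_{vv}$, so $\disR_G(u,v) = 2(\phi(u) - \phi(v))$.

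Next, I would prove that $\phi$ is constant on each distance-layer $\gN_G^i(u)$. For any function of the form $\psi(w) = f(\dis_G(u,w))$, distance-regularity ensures that $(L\psi)(w)$ depends only on $\dis_G(u,w)$, using that a vertex at distance $d$ from $u$ has exactly $c_d$, $a_d$, $b_d$ neighbors in layers $d-1$, $d$, $d+1$ respectively. Thus $L$ preserves the subspace of layer-constant functions; since $\delta_u - \frac{1}{n}\mathbf{1}$ lies in this subspace and the pseudoinverse solution is unique modulo $\mathbf{1}$, I conclude that $\phi(w) = \varphi(\dis_G(u,w))$ for some function $\varphi$. The recursion now emerges from current conservation on the ball $B_{d-1}(u) := \bigcup_{i=0}^{d-1} \gN_G^i(u)$: the total net injection inside is $\frac{1}{n}\sum_{i=d}^{D(G)} k_i$, which must flow out through the $k_{d-1} b_{d-1}$ edges between layers $d-1$ and $d$. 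Because the endpoints of every such edge sit at potentials $\varphi(d-1)$ and $\varphi(d)$, Ohm's law gives an identical current $\varphi(d-1) - \varphi(d)$ per edge, yielding
$$k_{d-1} b_{d-1}\bigl(\varphi(d-1) - \varphi(d)\bigr) = \frac{1}{n}\sum_{i=d}^{D(G)} k_i.$$
Telescoping from $\varphi(0)$ down to $\varphi(d)$ and multiplying by $2$ produces exactly the recursion for $r_d$ in \eqref{eq:rd_in_distance_regular_graph}.

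The main obstacle is the pair of structural invariance claims rather than any computation: (i) the constant-diagonal property of $L^+$ and (ii) the layer-constancy of $\phi$. Both are natural consequences of the Bose--Mesner algebra of a distance-regular graph, but must be invoked carefully (e.g.\ by checking that layer-constant functions form an $L$-invariant subspace complementary to the kernel direction, and that $L^+$ as a polynomial in $A$ is spanned by the distance matrices $A_0, \ldots, A_{D(G)}$). Once these two facts are in hand, the rest — the injection setup, current conservation on nested balls, and the telescoping sum — is routine.
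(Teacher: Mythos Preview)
Your argument is correct and takes a genuinely different route from the paper. The paper proceeds by direct verification: it defines the target matrix $\widetilde{\mathbf R}=[r_{\dis_G(u,v)}]_{u,v}$, reduces the claim to showing $\mathbf L\widetilde{\mathbf R}=\tfrac{2}{n}\mathbf 1\mathbf 1^\top-2\mathbf I$, and then checks this entrywise in three cases ($u=v$, $0<\dis_G(u,v)<D(G)$, $\dis_G(u,v)=D(G)$) using the intersection numbers and the identity $k_j c_j=k_{j-1}b_{j-1}$. In contrast, you \emph{derive} the recursion via the electrical interpretation: the constant-diagonal of $L^+$ (from the Bose--Mesner algebra) collapses $\disR_G(u,v)$ to a potential drop, layer-constancy of $\phi$ reduces the problem to a one-dimensional chain, and Kirchhoff's current law on the nested balls $B_{d-1}(u)$ yields $\varphi(d-1)-\varphi(d)$ directly. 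Your approach explains where the formula comes from and would generalize naturally to other highly symmetric graphs; the paper's approach is mechanical and requires already knowing the answer, but avoids having to justify the two invariance claims you flag (layer-constancy of $\phi$ and the constant diagonal of $L^+$), which you correctly identify as the only nontrivial ingredients.
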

\begin{proof}
Let $\mathbf R\in\mathbb R^{n\times n}$ be the RD matrix. Based on \cref{thm:resiatnce_and_laplacian}, $\mathbf R$ can be expressed as $\mathbf R=\diag(\mathbf M)\mathbf 1\mathbf 1^\top+\mathbf 1\mathbf 1^\top\diag(\mathbf M)-2\mathbf M$, where $\mathbf M=\left(\mathbf L+\frac 1 n \mathbf 1\mathbf 1^\top\right)^{-1}$ and $\mathbf L$ is the graph Laplacian matrix. Now let $\widetilde {\mathbf R}=[r_{\dis_G(u,v)}]_{u,v\in\gV}$ be the matrix with elements defined in (\ref{eq:rd_in_distance_regular_graph}). The key step is to prove that $2\mathbf M=c\mathbf 1\mathbf 1^\top-\widetilde {\mathbf R}$ for some $c\in\mathbb R$. This will yield
$$\mathbf R=\frac 1 2 \left(\diag(c\mathbf 1\mathbf 1^\top-\widetilde {\mathbf R})\mathbf 1\mathbf 1^\top+\mathbf 1\mathbf 1^\top\diag(c\mathbf 1\mathbf 1^\top-\widetilde {\mathbf R})\right)-c\mathbf 1\mathbf 1^\top+\widetilde {\mathbf R}=\widetilde {\mathbf R}$$
(since $\diag(\widetilde {\mathbf R})=\mathbf O$) and finish the proof.

We now prove $2\mathbf M=c\mathbf 1\mathbf 1^\top-\widetilde {\mathbf R}$ for some $c\in\mathbb R$, namely $\left(\mathbf L+\frac 1 n \mathbf 1\mathbf 1^\top\right)\left(c\mathbf 1\mathbf 1^\top-\widetilde {\mathbf R}\right)=2\mathbf I$. Note that $\widetilde{\mathbf R}$ is a symmetric matrix and satisfy $\widetilde{\mathbf R}\mathbf 1=c_1\mathbf 1$ for some $c_1\in\mathbb R$ because $G$ is distance-regular. Combined the fact that $\mathbf L\mathbf 1=\mathbf 0$, we have
$$\left(\mathbf L+\frac 1 n \mathbf 1\mathbf 1^\top\right)\left(c\mathbf 1\mathbf 1^\top-\widetilde {\mathbf R}\right)=\left(c-\frac {c_1} n\right)\mathbf 1\mathbf 1^\top-\mathbf L\widetilde {\mathbf R}.$$
It thus suffices to prove that $\mathbf L\widetilde {\mathbf R}=c\mathbf 1\mathbf 1^\top-2\mathbf I$ for some $c\in\mathbb R$. Let us calculate each element $[\mathbf L\widetilde {\mathbf R}]_{uv}$ ($u,v\in\gV$). We have 
\begin{equation}
\label{eq:proof_distance_regular_graph_0}
    [\mathbf L\widetilde {\mathbf R}]_{uv}=k_1 r_{\dis_G(u,v)}-\sum_{d=0}^{D(G)} r_{d}|\gN_G(u)\cap\gN^d_G(v)|.
\end{equation}
Now consider the following three cases:
\begin{itemize}[topsep=0pt,leftmargin=30pt]
    \setlength{\itemsep}{0pt}
    \item $u=v$. In this case, $r_{\dis_G(u,v)}=0$ and we have $$\sum_{d=1}^{D(G)} r_{d}|\gN_G(u)\cap\gN^d_G(v)|=r_1 k_1=\frac {2(n-1)} n$$ by using $b_0=k_1$ and $k_0=0$. Thus $[\mathbf L\widetilde {\mathbf R}]_{uv}=-\frac {2(n-1)} n$.
    \item $u\neq v$ and $\dis_G(u,v)<D(G)$. Denote $j=\dis_G(u,v)$. In this case, in (\ref{eq:proof_distance_regular_graph_0}) the term $\gN_G(u)\cap\gN^d_G(v)\neq\emptyset$ only when $d\in\{j-1,j,j+1\}$, and by definition of intersection array we have $|\gN_G(u)\cap\gN^{j-1}_G(v)|=c_j$, $|\gN_G(u)\cap\gN^{j+1}_G(v)|=b_j$, and $|\gN_G(u)\cap\gN^{j}_G(v)|=|\gN_G(u)|-c_j-b_j=k_1-c_j-b_j$. Therefore,
    \begin{align*}
        [\mathbf L\widetilde {\mathbf R}]_{uv}&=k_1 r_{j}- r_{j-1}c_j-r_{j}(k_1-b_j-c_j)-r_{j+1}b_j\\
        &=c_j(r_j-r_{j-1})+b_j(r_j-r_{j+1})\\
        &=\frac {2c_j} {n k_{j-1}b_{j-1}}\sum_{i=j}^{D(G)}k_i-\frac {2b_j} {n k_{j}b_{j}}\sum_{i=j+1}^{D(G)}k_i\\
        &=\frac 2 {nk_j}\left(\sum_{i=j}^{D(G)}k_j-\sum_{i=j+1}^{D(G)}k_j\right)=\frac 2 n,
    \end{align*}
    where in the second last step we use the recursive relation of $r_j$, and in the last step we use the fact that $k_j=\frac {k_{j-1}b_{j-1}}{c_j}$ for any $j\in[D(G)]$ (see e.g. \citet[page 8]{van2014distance}).
    \item $u\neq v$ and $\dis_G(u,v)=D(G)$. This case is similar as the previous one. Denote $j=\dis_G(u,v)$, and $\gN_G(u)\cap\gN^d_G(v)\neq\emptyset$ only when $d\in\{j-1,j\}$. We have
    \begin{align*}
        [\mathbf L\widetilde {\mathbf R}]_{uv}&=k_1 r_{j}- r_{j-1}c_j-r_{j}(k_1-c_j)\\
        &=c_j(r_j-r_{j-1})\\
        &=\frac {2c_j} {n k_{j-1}b_{j-1}}k_j=\frac 2 n,
    \end{align*}
    where we again use $k_j=\frac {k_{j-1}b_{j-1}}{c_j}$.
\end{itemize}
Combining the above three cases, we conclude that $\mathbf L\widetilde{\mathbf R}=\frac 2 n \mathbf 1\mathbf 1^\top-2\mathbf I$, which finishes the proof.
\end{proof}

We are now ready to prove the main result. Let $G=(\gV_G,\gE_G)$ and  $H=(\gV_H,\gE_H)$ be two distance-regular graphs. We first prove that if $\iota(G)=\iota(H)$, then RD-WL cannot distinguish the two graphs. This is a simple consequence of \cref{thm:rd_distance_regular}. Combined with the fact that $\kappa(G)=\kappa(H)$, we have $\{\disR_G(u,w):w\in\gV_G\}=\{\disR_H(v,w):w\in\gV_H\}$ for any nodes $u\in\gV_G$ and $v\in\gV_H$. Therefore, after the first iteration, the RD-WL color mappings $\chi^1_G$ and $\chi^1_H$ satisfy $\chi_G^1(u)=\chi_H^1(v)$ for all $u\in\gV_G$ and $v\in\gV_H$. Similarly, after the $t$-th iteration we still have $\chi_G^t(u)=\chi_H^t(v)$ for all $u\in\gV_G$ and $v\in\gV_H$ and thus RD-WL cannot distinguish the two graphs.

It remains to prove that if $\iota(G)\neq\iota(H)$, then RD-WL can distinguish the two graphs. First observe that in \cref{thm:rd_distance_regular}, $r_i<r_j$ holds for any $i<j$. Therefore, for any nodes $u\in\gV_G$ and $v\in\gV_H$, $\{\disR_G(u,w):w\in\gV_G\}=\{\disR_H(v,w):w\in\gV_H\}$ if and only if 
\begin{equation}
\label{eq:proof_rd_distance_regular_2}
    \{\ldblbrace r_i(G)\rdblbrace\times k_i(G):i\in[D(G)]\}=\{\ldblbrace r_i(H)\rdblbrace\times k_i(H):i\in[D(H)]\},
\end{equation}
where $\ldblbrace r\rdblbrace\times k$ is a multiset containing $k$ repeated elements of value $r$. If $\iota(G)\neq\iota(H)$, then there exists a minimal index $d$ such that $b_i(G)=b_i(H)$ and $c_{i+1}(G)=c_{i+1}(H)$ for all $i<d$ but $b_i(G)\neq b_i(H)$ or $c_{i+1}(G)\neq c_{i+1}(H)$. It follows by \cref{thm:rd_distance_regular} that $r_i(G)=r_i(H)$ and $k_i(G)=k_i(H)$ for all $i\le d$ but either $r_{d+1}(G)\neq r_{d+1}(H)$ (if $b_d(G)\neq b_d(H)$) or $k_{d+1}(G)\neq k_{d+1}(H)$ (if $b_d(G)= b_d(H)$ and $c_{d+1}(G)\neq c_{d+1}(H)$). Therefore, (\ref{eq:proof_rd_distance_regular_2}) does not hold and $\chi_G^1(u)\neq \chi_H^1(v)$ for any $u\in\gV_G$ and $v\in\gV_H$, namely, RD-WL can distinguish the two graphs.
\end{proof}

\begin{proof}[Proof of the third item of \cref{thm:distance_regular}]
First, if $\iota(G)\neq\iota(H)$, then 2-FWL can distinguish graphs $G$ and $H$. This is simply due to the fact that 2-FWL is more powerful than RD-WL (\cref{thm:2fwl_powerful_than_gdwl}). It remains to prove that if $\iota(G)=\iota(H)$, then 2-FWL cannot distinguish graphs $G$ and $H$.

Let $\chi_G^t:\gV_G\times\gV_G\to\gC$ be the 2-FWL color mapping of graph $G$ after $t$ iterations. We aim to prove that for any nodes $u,v\in\gV_G$ and $w,x\in\gV_H$, if $\dis_G(u,v)=\dis_G(w,x)$, then $\chi_G^t(u,v)=\chi_H^t(w,x)$ for any $t\in\mathbb N$. We prove it by induction. The base case of $t=0$ trivially holds. Now suppose the case of $t$ holds and let us consider the color mapping after $t+1$ iterations. By the 2-FWL update rule (\ref{alg:kfwl}),
\begin{equation}
    \chi_G^{t+1}(u,v)=\operatorname{hash}\left(\chi_G^{t}(u,v), \ldblbrace (\chi_G^{t}(u,z), \chi_G^{t}(z,v)):z\in \gV_G \rdblbrace\right).
\end{equation}
It thus suffices to prove that
\begin{equation}
\label{eq:proof_2fwl_distance_regular_1}
    \ldblbrace (\chi_G^t(u,z), \chi_G^t(z,v)):z\in \gV_G \rdblbrace=\ldblbrace (\chi_H^t(w,z), \chi_H^t(z,x)):z\in \gV_H \rdblbrace.
\end{equation}
By \cref{thm:distance_regular_lemma}, we have
\begin{equation*}
    \ldblbrace (\dis_G(u,z), \dis_G(z,v)):z\in \gV_G \rdblbrace=\ldblbrace (\dis_H(w,z), \dis_H(z,x)):z\in \gV_H \rdblbrace.
\end{equation*}
This already yields (\ref{eq:proof_2fwl_distance_regular_1}) by the  induction result of iteration $t$. We thus complete the proof.
\end{proof}

\section{Further Discussions with Prior Works}

\subsection{Known metrics for measuring the expressive power of GNNs}
\label{sec:other_metrics}
In this subsection, we review existing metrics used in prior works to measure the expressiveness of GNNs. We will discuss the limitations of these metrics and argue why biconnectivity may serve as a more reasonable and compelling criterion in designing powerful GNN architectures.

\textbf{WL hierarchy}. Since the discovery of the relationship between MPNNs and 1-WL test \citep{xu2019powerful,morris2019weisfeiler}, the WL hierarchy has been considered as the most standard metric to guide designing expressive GNNs. However, achieving an expressive power that matches the 2-FWL test is already highly difficult. Indeed, each iteration of the 2-FWL algorithm already requires a complexity of $\Omega(n^3)$ time and $\Theta(n^2)$ space for a graph with $n$ vertices \citep{immerman1990describing}. Therefore, it is impossible to design expressive GNNs using this metric while maintaining its computational efficiency. Moreover, whether achieving higher-order WL expressiveness is necessary and helpful for real-world tasks has been questioned by recent works \citep{velivckovic2022message}.

\textbf{Structural metrics}. Another line of works thus sought different metrics to measure the expressive power of GNNs. Several popular choices are the ability of counting substructures \citep{arvind2020weisfeiler,chen2020can,bouritsas2022improving}, detecting cycles \citep{loukas2020graph,vignac2020building,huang2023boosting}, calculating the graph diameter \citep{garg2020generalization,loukas2020graph} or other graph-related (combinatorial) problems \citep{sato2019approximation}. Yet, all these metrics have a common drawback: the corresponding problems may be \emph{too hard} for GNNs to solve. Indeed, we show in \cref{tab:complexity} that solving any above task requires a computation complexity that grows super-linear w.r.t. the graph size even using advanced algorithms. Therefore, it is quite natural that standard MPNNs are not expressive for these metrics, since \emph{no GNNs can solve these tasks while being efficient}. Consequently, instead of using GNNs to directly \emph{learn} these metrics, these works had to use a precomputation step which can be costly in the worst case.

\begin{table}[h]
    \vspace{-8pt}
    \centering
    \small
    \caption{The best computational complexity of known algorithms for solving different graph problems. Here $n$ and $m$ are the number of nodes and edges of a given graph, respectively. }
    \label{tab:complexity}
    \vspace{2pt}
    \begin{tabular}{lll}
    \toprule
    Metric & Complexity & Reference\\
    \midrule
    $k$-FWL & $\Omega(n^{k+1})$ &\citep{immerman1990describing}\\
    Counting/detecting triangles & $O(\min(n^{2.376},m^{3/2}))$ &\citep{alon1997finding}\\
    Detecting cycles of an odd length $k\ge 3$ & $O(\min(n^{2.376},m^2))$ &\citep{alon1997finding}\\
    Detecting cycles of an even length $k\ge 4$ & $O(n^2)$ &\citep{yuster1997finding}\\
    Calculating the graph diameter & $O(nm)$ & --\\
    \midrule
    Detecting cut vertices & $\Theta(n+m)$ &\citep{tarjan1972depth}\\
    Detecting cut edges & $\Theta(n+m)$ &\citep{tarjan1972depth}\\
    \bottomrule
    \end{tabular}
\end{table}

Due to the lack of proper metrics, most subsequent works mainly justify the expressive power of their proposed GNNs by focusing on regular graphs \citep[to list a few]{li2020distance,bevilacqua2022equivariant,bodnar2021topological,feng2022powerful,velingker2022affinity}, which hardly appear in practice. In contrast, the biconnectivity metrics proposed in this paper are different from all prior metrics, in that $(\mathrm{i})$ it is a basic graph property and has significant values in both theory and applications; $(\mathrm{i})$ it can be efficiently calculated with a complexity \emph{linear} in the graph size, and thus it is reasonable to expect that these metrics should be learned by expressive GNNs.

\subsection{GNNs with distance encoding}
\label{sec:related_work_distance}
In this subsection, we review prior works that are related to our proposed GD-WL. In the research field of expressive GNNs, the idea of incorporating distance first appeared in \citet{li2020distance}, where the authors mainly considered using distance encoding as \emph{node features} and showed that distance can help distinguish regular graphs. They also considered an approach similar to $k$-hop aggregation by incorporating distance into the message-passing procedure (but without a systematic study). \citet{zhang2021nested} designed a subgraph GNN that also uses (generalized) distance encoding as node features in each subgraph. \citet{ying2021transformers} designed a Transformer architecture that incorporates distance information and \emph{empirically} showed excellent performance. Very recently, \citet{feng2022powerful} formally studied the expressive power of $k$-hop GNNs. Yet, they still restricted the analysis to regular graphs. The concurrent work of \citet{abboud2022shortest} designed the shortest path network which is highly similar to our proposed SPD-WL. They showed the resulting model can alleviate the bottlenecks and over-squashing problems for MPNNs \citep{alon2021bottleneck,topping2022understanding} due to the increased receptive field. 

Compared with prior works, our contribution lies in the following three aspects:
\begin{itemize}[topsep=0pt,leftmargin=30pt]
\setlength{\itemsep}{0pt}
    \item We formalize the principled and more expressive GD-WL framework, which comprises SPD-WL as a special case. Our framework is theoretically clean and generalizes all prior works in a unified manner.
    \item We systematically and theoretically analyze the expressive power of SPD-WL for \emph{general} graphs and highlight a fundamental advantage in distinguishing edge-biconnectivity.
    \item We design a Transformer-based GNN that is \emph{provably as expressive as} GD-WL. Thus, our framework is not only for theoretical analysis, but can also be easily implemented with good empirical performance on real-world tasks.
\end{itemize}

\textbf{Discussions with the concurrent work of \citet{velingker2022affinity}}. After the initial submission, we became aware of a concurrent work \citep{velingker2022affinity} which also explored the use of Resistance Distance to enhance the expressiveness of standard MPNNs. Here, we provide a comprehensive comparison of these two works. Overall, the main difference is that their approach incorporates RD (and several related affinity measures) into \emph{node/edge features} (like \citet{zhang2021nested}), while we combine RD to design a new \emph{WL aggregation} procedure. As for the theoretical analysis, they only give a few toy examples of regular graphs to justify the expressive power beyond the 1-WL test, while we give a systematic analysis of the power of RD-WL for \emph{general} graphs and point out that it is fully expressive for vertex-biconnectivity. In \citet{velingker2022affinity}, the authors also made comparisons to SPD and conjectured that RD may have additional advantages than SPD in terms of expressiveness. In fact, this question is formally answered in our work, by proving that RD-WL is expressive for vertex-biconnectivity while SPD-WL is not. Another important contribution of our work is that we provide an \emph{upper bound} of the expressive power of RD-WL to be 2-FWL (3-WL), which reveals the limit of incorporating RD information. We also provide a precise and complete characterization for the expressiveness of RD-WL in distinguishing distance-regular graphs, which reveals that RD-WL can \emph{match} the power of 2-FWL in distinguishing these hard graphs.

\section{Implementation of Generalized Distance Weisfeiler-Lehman}
\label{sec:detail_of_gdwl}
In this section, we give implementation details of GD-WL and our proposed GNN architecture. We also give detailed analysis of its computation complexity. Below, assume the input graph $G=(\gV,\gE)$ has $n$ vertices and $m$ edges.

\subsection{Preprocessing Shortest Path Distance}
\label{sec:detail_of_spdwl}
Shortest Path Distance can be easily calculated using the Floyd-Warshall algorithm \citep{floyd1962algorithm}, which has a complexity of $\Theta(n^3)$. For sparse graphs typically encountered in practice (i.e. $m=o(n^2)$), a more clever way is to use breadth-first search that computes the distance from a given node to all other nodes in the graph. The time complexity can be improved to $\Theta(nm)$.

\subsection{Preprocessing Resistance Distance}
\label{sec:detail_of_rdwl}
In this subsection, we first describe several important properties of Resistance Distance. Based on these properties, we give a simple yet efficient algorithm to calculate Resistance Distance.

\textbf{Equivalence between Resistance Distance (RD) and Commute Time Distance (CTD)}. \citet{chandra1996electrical} established an important relationship between RD and CTD, by proving that $\disC_G(u,v)=2m\disR_G(u,v)$ holds for any graph $G$ and any nodes $u,v\in\gV$. Here, the Commute Time Distance is defined as $\disC_G(u,v):=h_G(u,v)+h_G(v,u)$ where $h_G(u,v)$ is the average \emph{hitting} time from $u$ to $v$ in a random walk. Concretely, $h_G(u,v)$ is equal to the average number of edges passed in a random walk when starting from $u$ and reaching $v$ for the first time. Mathmatically, it satisfies the following recursive relation:
\begin{equation}
\label{eq:commute_time}
    h_G(u,v)=\left\{\begin{array}{ll}
        0 & \text{if }u=v, \\
        \infty & \text{if }u\text{ and }v \text{ are in different connected components},\\
        1+\frac 1 {\deg_G(u)}\sum_{w\in\gN_G(u)}h_G(u,v) & \text{otherwise}.
    \end{array}\right.
\end{equation}
The above equation can be used to calculate CTD and thus RD, as we will show later.

\textbf{Resistance Distance is a graph metric}. We say a function $d_G:\gV\times\gV\to\mathbb R$ is a graph metric if it is non-negative, positive semidefinite, symmetric, and satisfies triangular inequality. Let $G$ be a connected graph. Then Resistance Distance $\disR_G$ is a valid graph metric because:
\begin{itemize}[topsep=0pt,leftmargin=30pt]
\setlength{\itemsep}{0pt}
    \item (Positive semidefiniteness) $\disR_G(u,v)\ge 0$ holds for any $u,v\in\gV$. Moreover, $\disR_G(u,v)= 0$ iff $u=v$.
    \item (Symmetry) $\disR_G(u,v)=\disR_G(v,u)$ holds for any $u,v\in\gV$.
    \item (Triangular Inequality) For any $u,v,w\in\gV$, $\disR_G(u,v)+\disR_G(v,w)\ge \disR_G(u,w)$. This can be seen from the definition of CTD, since $\disC_G(u,v)+\disC_G(v,w)$ is equal to the average hitting time from $u$ to $w$ under the condition of passing node $v$, which is obviously larger than $\disR_G(u,w)$.
\end{itemize}

\textbf{Comparing RD with SPD}. It is easy to see that RD is always no larger than SPD, i.e. $\disR_G(u,v)\le \dis_G(u,v)$. This is because for any subgraph $G'$ of $G$, we have $\disR_G(u,v)\le \disR_{G'}(u,v)$, and when $G'$ is chosen to contain only the edges that belong to the shortest path between $u$ and $v$, we have $\disR_{G'}(u,v)= \dis_{G}(u,v)$. Therefore, the range of RD is the same as SPD, i.e. $0\le\disR_G(u,v)\le n-1$. However, unlike SPD which is an integer, RD can be a general rational number. RD can thus be seen as a more fine-grained distance metric than SPD. Nevertheless, RD is still discrete and there are only finitely many possible values of $\disR_G(u,v)$ when $n$ is fixed.

\textbf{Relationship to graph Laplacian}. We have the following theorem:
\begin{theorem}
\label{thm:resiatnce_and_laplacian}
Let $G=(\gV,\gE)$ be a connected graph, $\gV=[n]$, and let $\mathbf L\in\mathbb S^{n}$ be the graph Laplacian. Then $$\disR_G(i,j)=M_{i,i}+M_{j,j}-2M_{i,j},$$ where $\mathbf M\in\mathbb S^n$ is a symmetric matrix defined as $$\mathbf M=\left(\mathbf L+\frac 1 n \mathbf 1\mathbf 1^\top\right)^{-1}.$$
\end{theorem}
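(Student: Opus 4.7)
The plan is to connect resistance distance to the Moore--Penrose pseudoinverse $\mathbf L^+$ of the graph Laplacian via the standard electrical-network identity, and then to show that the matrix $\mathbf M$ defined in the theorem differs from $\mathbf L^+$ only by a rank-one term that is annihilated by differences of standard basis vectors. The core identity I would target is
\[
\disR_G(i,j) \;=\; (\ve_i - \ve_j)^\top \mathbf L^+ (\ve_i - \ve_j),
\]
from which the theorem follows by expanding and replacing $\mathbf L^+$ with $\mathbf M - \tfrac{1}{n}\mathbf 1\mathbf 1^\top$.

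First I would establish the above identity. Using the equivalence $\disR_G(i,j) = \disC_G(i,j)/(2m)$ noted in \cref{sec:detail_of_rdwl}, together with the hitting-time recursion (\ref{eq:commute_time}), one shows that the vector of potentials $\vp$ obtained by injecting one unit of current at $i$ and extracting it at $j$ satisfies $\mathbf L \vp = \ve_i - \ve_j$, and that $\disR_G(i,j) = p_i - p_j = (\ve_i - \ve_j)^\top \vp$. Because $G$ is connected, $\mathbf L$ has a one-dimensional kernel spanned by $\mathbf 1$, and since $\ve_i - \ve_j \perp \mathbf 1$ the equation is solvable; the solution that is orthogonal to $\mathbf 1$ is precisely $\vp = \mathbf L^+ (\ve_i - \ve_j)$. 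Substituting yields the target identity.

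Second I would verify that $\mathbf L + \tfrac{1}{n}\mathbf 1 \mathbf 1^\top$ is invertible and that its inverse equals $\mathbf L^+ + \tfrac{1}{n}\mathbf 1\mathbf 1^\top$. Since $\mathbf L$ is symmetric PSD with $\ker(\mathbf L) = \operatorname{span}(\mathbf 1)$, one can decompose $\mathbb R^n$ as $\operatorname{span}(\mathbf 1) \oplus \mathbf 1^\perp$; on $\mathbf 1^\perp$ the operator $\mathbf L + \tfrac{1}{n}\mathbf 1\mathbf 1^\top$ agrees with $\mathbf L$ (which is a bijection there with inverse $\mathbf L^+$), while on $\operatorname{span}(\mathbf 1)$ it acts as multiplication by $1$ (since $\mathbf 1^\top \mathbf 1 = n$ and $\mathbf L \mathbf 1 = 0$). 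The claimed inverse follows by checking it on these two orthogonal pieces, so $\mathbf M = \mathbf L^+ + \tfrac{1}{n}\mathbf 1\mathbf 1^\top$.

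Finally I would combine the two ingredients: expanding
\[
M_{ii} + M_{jj} - 2 M_{ij} \;=\; (\ve_i - \ve_j)^\top \mathbf M (\ve_i - \ve_j) \;=\; (\ve_i - \ve_j)^\top \mathbf L^+ (\ve_i - \ve_j) + \tfrac{1}{n}(\ve_i - \ve_j)^\top \mathbf 1 \mathbf 1^\top (\ve_i - \ve_j),
\]
and noting that the second term vanishes because $\mathbf 1^\top(\ve_i - \ve_j) = 0$, gives exactly $\disR_G(i,j)$ by the first step. I expect the main obstacle to be the first step: carefully justifying the electrical/hitting-time identification $\disR_G(i,j) = (\ve_i - \ve_j)^\top \mathbf L^+ (\ve_i - \ve_j)$ from the random-walk definition used in the paper, rather than simply assuming Kirchhoff's laws. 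A clean alternative, should that be cumbersome, is to work directly from (\ref{eq:commute_time}): write the hitting-time system as $(\mathbf D - \mathbf A)\vh_{\cdot,j} = \deg\mathbf 1 - 2m\, \ve_j\cdot c$ on the complement of $j$, show via the pseudoinverse that $h_G(i,j) + h_G(j,i) = 2m\,(\ve_i - \ve_j)^\top \mathbf L^+ (\ve_i - \ve_j)$, and then divide by $2m$; the algebraic manipulation with $\mathbf L^+$ is routine once the correct system is written down.
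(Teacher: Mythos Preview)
Your proposal is correct and takes a genuinely different route from the paper. The paper works entirely from the hitting-time recursion (\ref{eq:commute_time}): it writes a matrix equation $\tilde{\mathbf H}=\mathbf 1\mathbf 1^\top+\mathbf P\tilde{\mathbf H}-\mathbf P\diag(\tilde{\mathbf H})$ for a surrogate hitting-time matrix, extracts $\diag(\tilde{\mathbf H})$ by left-multiplying with the degree vector, rewrites the system as $\mathbf L\mathbf H=\mathbf D\mathbf 1\mathbf 1^\top-2m\mathbf I$, uses the factorization $\mathbf L=(\mathbf L+\tfrac1n\mathbf 1\mathbf 1^\top)(\mathbf I-\tfrac1n\mathbf 1\mathbf 1^\top)$ to bring in $\mathbf M$, and finally reads off $\mathbf H+\mathbf H^\top=2m(\diag(\mathbf M)\mathbf 1\mathbf 1^\top+\mathbf 1\mathbf 1^\top\diag(\mathbf M)-2\mathbf M)$. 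The pseudoinverse $\mathbf L^+$ never appears. Your argument instead factors the problem into two standard pieces: the classical identity $\disR_G(i,j)=(\ve_i-\ve_j)^\top\mathbf L^+(\ve_i-\ve_j)$, and the spectral observation that $\mathbf M=\mathbf L^+ +\tfrac1n\mathbf 1\mathbf 1^\top$, with the rank-one correction killed by $\mathbf 1^\top(\ve_i-\ve_j)=0$. Your route is shorter and more conceptual, and it makes transparent why the particular shift $\tfrac1n\mathbf 1\mathbf 1^\top$ works; the paper's route is more self-contained, deriving everything from the random-walk definition without appealing to the pseudoinverse formula. The ``alternative'' you sketch at the end (setting up a linear system for hitting times and solving via $\mathbf L^+$ or $\mathbf M$) is essentially what the paper does, so either path you describe lands on solid ground.
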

\begin{proof}
Denote $\vd=(\deg_G(1),\cdots,\deg_G(n))^\top$. Define the probability matrix $\mathbf P$ such that $P_{ij}=0$ if $\{i,j\}\notin\gE$ and $P_{ij}=1/\deg_G(i)$ if $\{i,j\}\in\gE$. Then for any $i\neq j$, (\ref{eq:commute_time}) can be equivalently written as
\begin{equation}
    h(i,j)=1+\sum_{k=1}^n P_{ik}h(k,j)-P_{ij} h(j,j).
\end{equation}
Now define a matrix $\tilde{\mathbf H}$ such that $\tilde H_{ij}=1+\sum_{k=1}^n P_{ik}\tilde H_{kj}-P_{ij} \tilde H_{jj}$, then $\tilde H_{ij}=h(i,j)$ for all $i\neq j$ (although $\tilde H_{ii}\neq 0=h(i,i)$). $\tilde{\mathbf H}$ can be equivalently written as
\begin{equation}
\label{eq:proof_rd_1}
    \tilde{\mathbf H}=\mathbf 1\mathbf 1^\top+\mathbf P\tilde{\mathbf H}-\mathbf P\diag (\tilde{\mathbf H}),
\end{equation}
where $\diag(\tilde{\mathbf H})$ is the diagnal matrix with elements $\tilde H_{ii}$ for $i\in[n]$.

We first calculate $\diag(\tilde{\mathbf H})$. Noting that $\vd^\top\mathbf P=\vd$, we have
\begin{equation*}
    \vd^\top\tilde{\mathbf H}=\vd^\top\mathbf 1\mathbf 1^\top+\vd^\top(\tilde {\mathbf H}-\diag (\tilde{\mathbf H})),
\end{equation*}
and thus $\vd^\top\diag (\tilde{\mathbf H})=\vd^\top\mathbf 1\mathbf 1^\top$, namely
\begin{equation}
\label{eq:proof_rd_2}
    \tilde H_{ii}=\frac 1 {d_i} \vd^\top\mathbf 1=\frac {2m}{d_i}.
\end{equation}
Now define $\mathbf H=\tilde{\mathbf H}-\diag (\tilde{\mathbf H})$, then $H_{ij}=h(i,j)$ for all $i,j\in[n]$. We will calculate $\mathbf H$ in the following proof. We first write (\ref{eq:proof_rd_1}) equivalently as $\mathbf H+\diag (\tilde{\mathbf H})=\mathbf 1\mathbf 1^\top+\mathbf P\mathbf H$. Then by multiplying $\mathbf D$, we have
\begin{equation}
    \mathbf D(\mathbf I-\mathbf P)\mathbf H=\mathbf D\mathbf 1\mathbf 1^\top-\mathbf D\diag (\tilde{\mathbf H}).
\end{equation}
Using the fact that $\mathbf D(\mathbf I-\mathbf P)=\mathbf L$ and (\ref{eq:proof_rd_2}), we obtain
\begin{equation}
\label{eq:proof_rd_3}
    \mathbf L\mathbf H=\mathbf D\mathbf 1\mathbf 1^\top-2m\mathbf I.
\end{equation}
Next, noting that $\mathbf L\mathbf 1=\mathbf 0$, we have
\begin{equation}
\label{eq:proof_rd_4}
    \mathbf L=\left(\mathbf L+\frac 1 n \mathbf 1\mathbf 1^\top\right)\left(\mathbf I-\frac 1 n \mathbf 1\mathbf 1^\top\right).
\end{equation}
One important property is that the matrix $\left(\mathbf L+\frac 1 n \mathbf 1\mathbf 1^\top\right)$ is invertible (see \citet[Theorem 4]{gutman2004generalized} for a proof). Combining (\ref{eq:proof_rd_3}) and (\ref{eq:proof_rd_4}) we have
\begin{equation}
\label{eq:proof_rd_5}
    \left(\mathbf I-\frac 1 n \mathbf 1\mathbf 1^\top\right)\mathbf H=\left(\mathbf L+\frac 1 n \mathbf 1\mathbf 1^\top\right)^{-1}\left(\mathbf D\mathbf 1\mathbf 1^\top-2m\mathbf I\right)=\mathbf M\left(\mathbf D\mathbf 1\mathbf 1^\top-2m\mathbf I\right).
\end{equation}
By taking diagonal elements and noting that $\diag(\mathbf H)=\mathbf O$, we otain
\begin{equation}
    -\frac 1 n\diag\left( \mathbf 1\mathbf 1^\top\mathbf H\right)=\diag\left(\mathbf M\mathbf D\mathbf 1\mathbf 1^\top\right)-2m\diag\left(\mathbf M\right)
\end{equation}
Namely, 
\begin{equation}
\label{eq:proof_rd_6}
    \frac 1 n \mathbf H^\top\mathbf 1=-\mathbf M\mathbf D\mathbf 1+2m\diag\left(\mathbf M\right)\mathbf 1.
\end{equation}
Substituting (\ref{eq:proof_rd_6}) into (\ref{eq:proof_rd_5}) yields
\begin{equation}
    \mathbf H=\mathbf M\left(\mathbf D\mathbf 1\mathbf 1^\top-2m\mathbf I\right)-\mathbf 1\mathbf 1^\top\mathbf D\mathbf M +2m\mathbf 1\mathbf 1^\top\diag\left(\mathbf M\right).
\end{equation}
Therefore,
\begin{equation}
    \mathbf H+\mathbf H^\top=2m(\mathbf 1\mathbf 1^\top\diag\left(\mathbf M\right)+\diag\left(\mathbf M\right)\mathbf 1\mathbf 1^\top-2\mathbf M).
\end{equation}
This finally yields $\disR_G(i,j)=\frac 1 {2m} \disC_G(i,j)=\frac 1 {2m}(\mathbf H+\mathbf H^\top)=M_{i,i}+M_{j,j}-2M_{i,j}$ and concludes the proof.
\end{proof}

\textbf{Computational Complexity}. The graph Laplacian can be calculated in $O(n^2)$ time, and $\mathbf M$ can be calculated by matrix inversion which requires $O(n^3)$ time. Therefore, the overall computational complexity is $O(n^3)$ (or $O(n^{2.376})$ using advanced matrix multiplication algorithms).

For sparse graphs typically encountered in practice (i.e. $m=o(n^2)$), one may similarly ask whether a complexity that depends on $m$ can be achieved. We conjecture that it should be possible. Below, we give another algorithm to calculate $\left(\mathbf L+\frac 1 n \mathbf 1\mathbf 1^\top\right)^{-1}$. Note that the graph Laplacian $\mathbf L$ can be equivalently written as $\mathbf L=\mathbf E\mathbf E^\top$, where $\mathbf E\in\mathbb R^{n\times m}$ is defined as
\begin{equation}
    E_{ij}=\left\{\begin{array}{ll}
        1 &  \text{if }\epsilon_j=\{i,k\} \text{ and } k>i \\
        -1 & \text{if }\epsilon_j=\{i,k\} \text{ and } k<i \\
        0 &  \text{if }i\notin \epsilon_j
    \end{array}\right.
\end{equation}
where we denote $\gE=\{\epsilon_1,\cdots,\epsilon_m\}$. Let $\mathbf E=[\ve_1,\cdots,\ve_m]$ where $\ve_i\in\mathbb R^n$, then $\mathbf M=\left(\frac 1 n\mathbf 1\mathbf 1^\top+\sum_{i=1}^m \ve_i\ve_i^\top\right)^{-1}$. Noting that each $\ve_i$ is highly sparse with only two non-zero elements. We suspect that one can obtain an $O(nm)$ complexity using techniques similar to the Sherman-Morrison-Woodbury update. We leave it as an open problem.

\subsection{Transformer-based implementation}
\label{sec:transformer}
\textbf{Graphormer-GD}. The model is built on the Graphormer~\citep{ying2021transformers} model, which use the Transformer \citep{vaswani2017attention} as the backbone network. A Transformer block consists of two layers: a self-attention layer followed by a feed-forward layer, with both layers having normalization (e.g., LayerNorm~\citep{ba2016layer}) and skip connections~\citep{he2016deep}. Denote $\mathbf X^{(l)}\in\mathbb{R}^{n\times d}$ as the input to the $(l+1)$-th block and define $\mathbf X^{(0)}=\mathbf X$, where $n$ is the number of nodes and $d$ is the feature dimension. For an input $\mathbf X^{(l)}$, the $(l+1)$-th block works as follows:
\begin{eqnarray}
    \label{eqn:attn-mat}
    \mathbf A^h(\mathbf X^{(l)}) &=& \mathrm{softmax}\left(\mathbf X^{(l)} \mathbf W_Q^{l,h}(\mathbf X^{(l)} \mathbf W_K^{l,h})^{\top}\right);\\
    \label{eqn:attn}
    \hat{\mathbf X}^{(l)} &=& \mathbf X^{(l)}+ \sum_{h=1}^H\mathbf A^h(\mathbf X^{(l)}) \mathbf X^{(l)} \mathbf W_V^{l,h}\mathbf W_O^{l,h};\\
    \label{eqn:ffn}
    \mathbf X^{(l+1)}&=&\hat{\mathbf X}^{(l)}+ \mathrm{GELU} (\hat{\mathbf X}^{(l)}\mathbf W_1^l)\mathbf W_2^l,
\end{eqnarray}
where $\mathbf W_O^{l,h} \in \R^{d_H \times d}$, $\mathbf W_Q^{l,h}, \mathbf W_K^{l,h}, \mathbf W_V^{l,h} \in \R^{d \times d_H}$, $\mathbf W_1^l \in \R^{d \times r}, \mathbf W_2^l \in \R^{r \times d}$, $H$ is the number of attention heads, $d_H$ is the dimension of each head, and $r$ is the dimension of the hidden layer. $\mathbf A^h(\mathbf X)$ is usually referred to as the attention matrix.

Note that the self-attention layer and the feed-forward layer introduced in (\ref{eqn:attn}) and (\ref{eqn:ffn}) do not encode any structural information of the input graph. As stated in Section~\ref{sec:gdwl}, we incorporate the distance information into the attention layers of our Graphormer-GD model. The calculation of the attention matrix in (\ref{eqn:attn-mat}) is modified as:
\begin{equation}
    \label{eqn:attn-mat-gd}
    \mathbf A^h(\mathbf X^{(l)}) = \phi_1^{l,h}(\mathbf D)\odot\mathrm{softmax}\left(\mathbf X^{(l)} \mathbf W_Q^{l,h}(\mathbf X^{(l)} \mathbf W_K^{l,h})^{\top}+\mathbf \phi_2^{l,h}(\mathbf D)\right);
\end{equation}
where $\mathbf D\in\mathbb R^{n\times n}$ is the distance matrix such that $D_{uv}=d_G(u,v)$, $\phi_1^h$ and $\phi_2^h$ are element-wise functions applied to $\mathbf D$, and $\odot$ denotes the element-wise multiplication. In this way, the graph structural information can be captured by our Graphormer-GD model.

As stated in Section~\ref{sec:gdwl}, we mainly consider two distance metrics: Shortest Path Distance $\dis_G$ and Resistance Distance $\disR_G$. For SPD, we follow~\citet{ying2021transformers} to use their shortest path distance encoding. Formally, let $\mathbf D^{\text{SPD}}$ be the SPD matrix such that $D^\mathrm{SPD}_{uv}=\dis_G(u,v)$. The function $\phi_1$ and $\phi_2$ can simply be parameterized by two learnable vectors $\vv^1$ and $\vv^2$, so that $\phi_1(D^{\mathrm{SPD}}_{uv})$ is a learnable scalar corresponding to $v^1_{D^{\mathrm{SPD}}_{uv}}$ (and similarly for $\phi_2$). If two nodes $u$ and $v$ are not in the same connected component, i.e., $D^{\mathrm{SPD}}_{uv}=\infty$, a special learnable scalar is assigned. For RD, we use the Gaussian Basis kernels~\citep{scholkopf1997comparing} to encode the value since it may not be an integer. The encoded values from different Gaussian Basis kernels are concatenated and further transformed by a two-layer MLP. We integrate both the SPD encoding and the RD encoding to obtain $\phi_1^{l,h}(\mathbf D)$ and $\phi_2^{l,h}(\mathbf D)$. Note that these two matrices are parameterized by different sets of parameters. Following \citet{ying2021transformers}, we also incorporate the degree of each node in the input layer using a degree embedding.

\textbf{Relationship between Graphormer-GD and GD-WL}.
As stated in Section~\ref{sec:gdwl}, the expressive power of Graphormer-GD is at most as powerful as GD-WL. We will prove that it is actually as powerful as GD-WL under mild assumptions. We first restate the Lemma 5 from~\citet{xu2019powerful}, which shows that sum aggregators can represent injective functions over multisets.
\begin{lemma}
    \label{thm:lemma-injective-multiset}
    \citep[Lemma 5]{xu2019powerful}
    Assume the set $\mathcal{X}$ is countable. Then there exists a function $f:\mathcal{X}\to\mathbb{R}^n$ so that the function $h(\hat\gX):=\sum_{x\in \hat\gX}f(x)$ is unique for each multiset $\hat \gX\subset\mathcal{X}$ of bounded size. Moreover, any multiset function $g$ can be decomposed as $g(\hat\gX)=\phi(\sum_{x\in \hat\gX}f(x))$ for some function $\phi$.
\end{lemma}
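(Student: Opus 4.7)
The plan is to construct $f$ explicitly by encoding the enumeration of $\gX$ into positional digits in a base larger than the allowed multiplicities; this makes $h$ injective by uniqueness of base-$b$ expansions, after which the decomposition $g=\phi\circ h$ is automatic.

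First I would use countability of $\gX$ to fix an injection $\iota:\gX\hookrightarrow\mathbb{N}$, and let $N$ be a uniform upper bound on $|\hat\gX|$ for the multisets under consideration (guaranteed by the ``bounded size'' hypothesis). I would then define $f:\gX\to\mathbb{R}$ by
\[
f(x) := (N+1)^{-\iota(x)},
\]
taking the codomain to be $\mathbb{R}$; embedding into $\mathbb{R}^n$ for $n\ge 1$ is immediate by padding with zeros. For any multiset $\hat\gX$ of size at most $N$, writing $c_i\in\{0,1,\dots,N\}$ for the multiplicity in $\hat\gX$ of the element $\iota^{-1}(i)$ (with $c_i=0$ when $i\notin\iota(\gX)$), one has
\[
h(\hat\gX) = \sum_{x\in\hat\gX} f(x) = \sum_{i\in\mathbb{N}} c_i\,(N+1)^{-i}.
\]
Since every ``digit'' $c_i$ lies strictly below the base $N+1$, this is a legitimate base-$(N+1)$ expansion of the real number $h(\hat\gX)$; uniqueness of such expansions then yields that $h$ is injective on the set of multisets of size at most $N$, proving the first conclusion.

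For the second conclusion, let $\mathcal{M}$ denote the collection of all bounded-size multisets on $\gX$. Because $h:\mathcal{M}\to\mathbb{R}$ is injective, define $\phi$ on $h(\mathcal{M})$ by $\phi(y):=g(h^{-1}(y))$, and extend $\phi$ arbitrarily (e.g.\ by $0$) outside $h(\mathcal{M})$. Then $g(\hat\gX)=\phi(h(\hat\gX))$ for all $\hat\gX\in\mathcal{M}$, which gives the desired decomposition.

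The only delicate point is the choice of base: taking $b>N$ is essential, since otherwise ``carries'' between digits would destroy injectivity. This is also where the bounded-size hypothesis is used in an essential way; with unbounded multisets no single choice of $b$ works and the construction collapses. Once this is in place, the proof is just arithmetic in positional notation, so I do not expect any other substantive obstacle.
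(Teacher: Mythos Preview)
Your proof is correct. Note, however, that the paper does not supply its own proof of this lemma: it is stated as a restatement of \citep[Lemma~5]{xu2019powerful} and invoked as a black box in the proof of \cref{thm:Graphormer-GD-gdwl-restate}. Your construction via $f(x)=(N+1)^{-\iota(x)}$ and the base-$(N+1)$ uniqueness argument is exactly the proof given in the cited reference, so there is nothing further to compare.
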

We are now ready to present the detailed proof of the \cref{thm:Graphormer-GD-gdwl}, which is restated as follows: 
\begin{theorem}
\label{thm:Graphormer-GD-gdwl-restate}
Graphormer-GD is at most as powerful as GD-WL. Moreover, when choosing proper functions $\phi_1^h$ and $\phi_2^h$ and using a sufficiently large number of heads and layers, Graphormer-GD is as powerful as GD-WL.
\end{theorem}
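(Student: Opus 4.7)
The plan is to prove the theorem in two directions, mirroring the standard expressive-power analyses relating GNNs to WL-type algorithms. The first (upper-bound) direction shows that Graphormer-GD cannot distinguish a pair of nodes that GD-WL fails to distinguish; the second (lower-bound) direction exhibits a concrete choice of $\phi_1^h, \phi_2^h$ together with widths/depths that make a single Graphormer-GD layer simulate one GD-WL iteration exactly.

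For the upper-bound direction, I would proceed by induction on the layer index $l$, establishing the invariant that for any two nodes $u\in\gV_G$ and $v\in\gV_H$, if $\chi_G^l(u)=\chi_H^l(v)$ under GD-WL then $\mathbf{X}_u^{(l)}=\mathbf{X}_v^{(l)}$ in Graphormer-GD. The base case reduces to checking that the initial degree/feature embedding depends only on information already available to GD-WL after one iteration (degree is recovered from the distance-1 entries of the aggregation multiset). For the induction step, $\chi_G^l(u)=\chi_H^l(v)$ means
\begin{equation*}
\ldblbrace (d_G(u,w),\chi_G^{l-1}(w)):w\in\gV_G\rdblbrace=\ldblbrace (d_H(v,w),\chi_H^{l-1}(w)):w\in\gV_H\rdblbrace,
\end{equation*}
so by the induction hypothesis the multisets $\ldblbrace (D_{uw},\mathbf{X}_w^{(l-1)}):w\rdblbrace$ coincide across the two graphs. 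Since each entry of $\mathbf{A}^h$ at position $(u,w)$ depends only on $(\mathbf{X}_u^{(l-1)},\mathbf{X}_w^{(l-1)},D_{uw})$, the softmax is normalized over $w$, and the per-row update $\mathbf{Y}_u^h$ is a symmetric (sum-based) function of this multiset, the attention output at $u$ in $G$ equals that at $v$ in $H$. The skip connection and position-wise feed-forward layer preserve this, closing the induction.

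For the lower-bound direction, I would construct a layer that injectively hashes the GD-WL multiset. The key trick is to bypass softmax's convex-combination behavior: set $\mathbf{W}_Q^{l,h}=\mathbf{W}_K^{l,h}=\mathbf{0}$ and $\phi_2^{l,h}\equiv 0$, so that $\mathrm{softmax}(\cdot)$ collapses to the uniform distribution $\tfrac{1}{n}\mathbf{1}\mathbf{1}^\top$; then setting $\phi_1^{l,h}(D_{uw})=n\cdot f_h(D_{uw})$ yields $\mathbf{Y}_u^h=\sum_w f_h(D_{uw})\,\mathbf{X}_w^{(l-1)}\mathbf{W}_V^{l,h}$. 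By dedicating one head per distinct value of $d_G$ with indicator $f_h=\mathbf{1}[\,\cdot = d_h]$ and injective $\mathbf{W}_V^{l,h}$, and invoking \cref{thm:lemma-injective-multiset} of \citet{xu2019powerful}, the concatenated multi-head output is an injective encoding of $\ldblbrace (d_G(u,w),\chi_G^{l-1}(w)):w\in\gV\rdblbrace$. The feed-forward sub-layer then plays the role of the GD-WL $\operatorname{hash}$ function by universal approximation of MLPs.

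The main obstacle is handling the finite but a priori unbounded range of distances and node-feature values: strictly simulating $\operatorname{hash}$ requires the number of heads to scale with the number of distinct distance values, and the MLP width to scale with the color set. A cleaner route is to use the Gaussian-basis-kernel parametrization of $\phi_1^{l,h},\phi_2^{l,h}$ (as in the implementation) so that arbitrary distance encodings are realizable, together with sufficient feed-forward width per layer; the uniform-weight trick via zeroing the $Q,K$ matrices is still needed to overcome the softmax normalization. A second subtlety is the dependence on $n$ in the rescaling factor $n\cdot f_h(D_{uw})$: since $n$ can be absorbed into $\mathbf{W}_V$ up to an overall scale, or canceled by an extra normalization head, it does not obstruct the argument. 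Combining the two directions yields that Graphormer-GD and GD-WL induce identical vertex partitions under the prescribed configuration.
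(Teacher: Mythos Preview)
Your proposal is correct and takes essentially the same approach as the paper: zeroing $\mathbf{W}_Q,\mathbf{W}_K$ to collapse the softmax to $\tfrac{1}{n}\mathbf{1}\mathbf{1}^\top$, using one head per distinct distance value with an indicator $\phi_1^h$, and invoking \cref{thm:lemma-injective-multiset} for injectivity of the sum aggregation, with the feed-forward layer realizing the hash. One minor wrinkle: writing $\phi_1^{l,h}(D_{uw})=n\cdot f_h(D_{uw})$ is not quite legal since $\phi_1$ is an elementwise function of the distance only and $n$ varies across graphs; the paper instead leaves the $\tfrac{1}{|\gV|}$ factor in place and recovers $|\gV|$ via an extra head so that the subsequent MLP can undo the normalization---which is exactly the ``extra normalization head'' fix you yourself propose, so the argument goes through. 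Your explicit induction for the upper bound is more detailed than the paper, which essentially leaves that direction implicit.
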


\begin{proof}
Consider all graphs with no more than $n$ nodes. The total number of possible values of both SPD and RD are thus finite and depends on $n$ (see \cref{sec:detail_of_rdwl}). Let
\begin{equation*}
    \gD_n=\{(\dis_G(u,v),\disR_G(u,v)):G=(\gV,\gE), |\gV|\le n,u,v\in\gV\}
\end{equation*}
denote the set of all possible pairs $(\dis_G(u,v),\disR_G(u,v))$. Since $\gD_n$ is finite, we can list its elements as $\gD_n=\{d_{G,1},\cdots,d_{G,|\gD_n|}\}$. Without abuse of notation, denote $d_G(u,v)=(\dis_G(u,v),\disR_G(u,v))$. Then the GD-WL aggregation in (\ref{eq:gdwl}) can be reformulated as follows:
\begin{equation}
    \label{eq:gdwl-equivalent}
    \begin{aligned}
    \chi_G^t(v)&:= \operatorname{hash}\left((
        \chi_G^{t,1}(v),
        \chi_G^{t,2}(v),
        \cdots,
        \chi_G^{t,|\gD_n|}(v)
    )\right), \\
    \text{where }\chi_G^{t,k}(v)&:=
    \ldblbrace \chi_G^{t-1}(u):u\in \gV, \textcolor{black}{d_G(u,v)=d_{G,k}} \rdblbrace.
    \end{aligned}
\end{equation}
Intuitively, this reformulation indicates that in each iteration, GD-WL updates the color of node $v$ by hashing a tuple of color multisets, where each multiset is obtained by injectively aggregating the colors of all nodes $u\in \gV$ with certain distance configuration to node $v$. Therefore, to express GD-WL, the model suffices to update the representation of each node following the above procedure.

We show Graphormer-GD can achieve this goal. Recall that for the $h$-th head, the attention matrix is defined as $\phi_1^{h}(\mathbf D)\odot\mathrm{softmax}\left(\mathbf X \mathbf W_Q^{h}(\mathbf X \mathbf W_K^{h})^{\top}+\mathbf \phi_2^{h}(\mathbf D)\right)$. For the function $\phi_1^{h}$, we define it to be the indicator function $\phi_1^{h}(d):=\mathbb I(d=d_{G,h})$. For the function $\phi_2^{h}$, we set it to be constant irrespective to the matrix $\mathbf D$. Let $\mathbf W_Q^{h},\mathbf W_K^{h}$ be zero matrices. It can be seen that the term $\mathrm{softmax}\left(\mathbf X \mathbf W_Q^{h}(\mathbf X \mathbf W_K^{h})^{\top}+\mathbf \phi_2^{h}(\mathbf D)\right)$ reduces to $\frac{1}{|\gV|}\vone \vone^{\top}$, and thus for each node $v$, the output in the $h$-th attention head is the sum aggregation of representations of node $u$ satisfying $d_G(u,v)=d_{G,h}$. Formally, 
\begin{equation*}
    \left[\mathbf A^h(\mathbf X^{(l)})\mathbf X^{(l)}\right]_{v}=\frac 1 {|\gV|}\sum_{d_G(u,v)=d_{G,h}} \left[\mathbf X^{(l)}\right]_u.
\end{equation*}
Note that the constant $\frac{1}{|\gV|}$ can be extracted with an additional head and be concatenated to the node representations. Moreover, the node representation $\mathbf X$ is processed via the feed-forward network in the previous layer (see (\ref{eqn:ffn}). Thus, we can invoke \cref{thm:lemma-injective-multiset} and prove that the $h$-th attention head in Graphormer-GD can implement an injective aggregating function for $\ldblbrace \chi_G^{t-1}(u):u\in \gV, {d_G(u,v)=d_{G,h}} \rdblbrace$. Therefore, by using a sufficiently large number of attention heads, the multiset representations $\chi_G^{t,k},k\in[|\gD_n|]$ can be injectively obtained.

Finally, the multi-head attention defined in (\ref{eqn:attn}) is equivalent to first concatenating the output of each attention head and then using a linear mapping to transform the results. Thus, the concatenation is clearly an injective mapping of the tuple of multisets $\left(\chi_G^{t,1},\chi_G^{t,2},...,\chi_G^{t,|\gD_n|}\right)$. When the linear mapping has irrelational weights, the projection will also be injective. Therefore, one attention layer followed by the feed-forward network can implement the aggregation formula (\ref{eq:gdwl-equivalent}). Thus, our Graphormer-GD is able to simulate the GD-WL when using a sufficiant number of layers, which concludes the proof.
\end{proof}

\section{Experimental Details}

\subsection{Synthetic Tasks}
\label{sec:synthetic_detail}
\textbf{Data Generation and Evaluation Metrics.} 
We carefully design several graph generators to examine the expressive power of compared models on graph biconnectivity tasks. First, we include the two families of graphs presented in \cref{example:1,example:2} (\cref{sec:counterexample_proof}). We further introduce a rich family of regular graphs with both cut vertices and cut edges. Each graph in this family is constructed by first randomly generating several connected components and then linking them via cut edges while simultaneously ensuring that each node has the same degree. Combining the above three families of hard graphs, we online generate data instances to train the compared models. For each data instance, the total number of nodes is upper bounded by 120. We use graph-level accuracy as the metric. That is, for each graph, the prediction of the model is considered correct only when all and only the cut vertices/edges are correctly identified. We use different seeds to repeat the experiments 5 times and report the average accuracy.

\textbf{Baselines.} We choose several baselines with their expressive power being at different levels. First, we consider classic MPNNs including GCN~\citep{kipf2017semisupervised}, GAT~\citep{velivckovic2018graph}, and GIN~\citep{bouritsas2022improving}. The expressive power of these GNNs is proven to be at most as powerful as the 1-WL test~\citep{xu2019powerful}. We also compare the Graph Substructure Network~\citep{bouritsas2022improving}, which extracts graph substructures to improve the expressive power of MPNNs. The substructure counts are incorporated into node features or the aggregation procedure. Lastly, we also compare the Graphormer model~\citep{ying2021transformers}, which achieved impressive performance in several world competitions~\citep{ying2021first,shi2022benchmarking,luo2022one}.

\textbf{Settings.} 
We employ a 6-layer Graphormer-GD model. The dimension of hidden layers and feed-forward layers is set to 768. The number of Gaussian Basis kernels is set to 128. The number of attention heads is set to 64. The batch size is set to 32. We use AdamW~\citep{kingma2014adam} as the optimizer and set its hyperparameter $\epsilon$ to 1e-8 and $(\beta_1,\beta_2)$ to (0.9, 0.999). The peak learning rate is set to 9e-5. The model is trained for 100k steps with a 6K-step warm-up stage. After the warm-up stage, the learning rate decays linearly to 0. All models are trained on 1 NVIDIA Tesla V100 GPU.

\subsection{Real-world Tasks}
\label{sec:realworld_detail}

We conduct experiments on the popular benchmark dataset: ZINC from Benchmarking-GNNs \citep{dwivedi2020benchmarking}. It is a real-world dataset that consists of 250K molecular graphs. The task is to predict the constrained solubility of a molecule, which is an important chemical property for drug discovery. We train our models on both the ZINC-Full and ZINC-Subset (12K selected graphs following \citet{dwivedi2020benchmarking}).

\textbf{Baselines.} For a fair comparison, we set the parameter budget of the model to be around 500K following~\cite{dwivedi2020benchmarking}. We compare our Graphormer-GD with several competitive baselines, which mainly fall into five categories: Message Passing Neural Networks (MPNNs), High-order GNNs, Substructure-based GNNs, Subgraph GNNs, and Graph Transformers.

First, we compare several classic MPNNs including Graph Convolution Network (GCN)~\citep{kipf2017semisupervised}, Graph Isomorphism Network (GIN)~\citep{xu2019powerful}, Graph Attention Network (GAT)~\citep{velivckovic2018graph}, GraphSAGE~\citep{hamilton2017inductive} and MPNN(sum)~\citep{gilmer2017neural}. Besides, we also include several popularly used models. Mixture Model Network (MoNet)~\citep{monti2017geometric} introduces a general architecture to
learn on graphs and manifolds using the Bayesian Gaussian Mixture Model. Gated Graph ConvNet (GatedGCN) considers residual connections, batch normalization, and edge gates to design an anisotropic variant of GCN. We compare the GatedGCN with positional encodings. Principal Neighborhood Aggregation (PNA)~\citep{corso2020principal} combines multiple aggregators with degree-scalers.


Second, we compare two higher-order Graph Neural Networks: RingGNN~\citep{chen2019equivalence} and 3WLGNN~\citep{maron2019provably} following~\citet{dwivedi2020benchmarking}. RingGNN extends the family of order-2 Graph $G$-invariant Networks without going into higher order tensors and is able to distinguish between non-isomorphic regular graphs where order-2 $G$-invariant networks provably fail. 3WLGNN uses rank-2 tensors to build the neural network and is proved to be equivalent to the 3-WL test on graph isomorphism problems.

Third, we compare two representative types of substructure-based GNNs. The Graph Substructure Network~\citep{bouritsas2022improving} extracts graph substructures to improve the expressive power of MPNNs. The substructure counts are incorporated into node features or the aggregation procedure. We also compare the Cellular Isomorphism Network~\citep{bodnar2021cellular}, which extends theoretical results on Simplicial Complexes to regular Cell Complexes. Such generalization provides a powerful set of graph “lifting” transformations with a hierarchical message passing procedure.

Moreover, we compare several Subgraph GNNs. Nested Graph Neural Network (NGNN)~\citep{zhang2021nested} represents a graph with rooted subgraphs instead of rooted subtrees. It extracts a local subgraph around each node and applies a base GNN to each subgraph to learn a subgraph representation. The whole-graph representation is then obtained by pooling these subgraph representations. GNN-AK~\citep{zhao2022stars} follows a similar manner to develop Subgraph GNNs with different generation policies. Equivariant Subgraph Aggregation Networks (ESAN)~\citep{bevilacqua2022equivariant} develops a unified framework that includes per-layer aggregation across subgraphs, which are generated using pre-defined policies like edge deletion and ego-networks. Subgraph Union Network (SUN)~\citep{frasca2022Understanding} is developed based on the symmetry analysis of a series of existing Subgraph GNNs and an upper bound on their expressive power, which theoretically unifies previous architectures and performs well across several graph representation learning benchmarks.

Last, we compare several Graph Transformer models. GraphTransformer (GT)~\citep{dwivedi2021generalization} uses the Transformer model on graph tasks, which only aggregates the information from neighbor nodes to ensure graph sparsity, and proposes to use Laplacian eigenvector as positional encoding. Spectral Attention Network (SAN)~\citep{kreuzer2021rethinking} uses a learned positional encoding (LPE) that can take advantage of the full Laplacian spectrum to learn the position of each node in a given graph. Graphormer~\citep{ying2021transformers} develops the centrality encoding, spatial encoding, and edge encoding to incorporate the graph structure information into the Transformer model. Universal RPE (URPE)~\citep{luo2022your} first shows that there exist continuous sequence-to-sequence functions which RPE-based Transformers cannot approximate, and develops a novel and universal attention module called Universal RPE-based Attention. The effectiveness of URPE has been verified across language and graph benchmarks (e.g., the ZINC dataset).

\textbf{Settings.} Our Graphormer-GD consists of 12 layers. The dimension of hidden layers and feed-forward layers are set to 80. The number of Gaussian Basis kernels is set to 128. The number of attention heads is set to 8. The batch size is selected from [128, 256, 512]. We use AdamW~\citep{kingma2014adam} as the optimizer, and set its hyperparameter $\epsilon$ to 1e-8 and $(\beta_1,\beta_2)$ to (0.9, 0.999). The peak learning rate is selected from [4e-4, 5e-4]. The model is trained for 600k and 800k steps with a 60K-step warm-up stage for ZINC-Subset and ZINC-Full respectively. After the warm-up stage, the learning rate decays linearly to zero. The dropout ratio is selected from [0.0, 0.1]. The weight decay is selected from [0.0, 0.01]. All models are trained on 4 NVIDIA Tesla V100 GPUs.

\subsection{More Tasks}
\label{sec:node_task}
\looseness=-1\textbf{Node-level Tasks.} We further conduct experiments on real-world node-level tasks. Following~\citet{li2020distance}, we benchmark our model on two real-world graphs: Brazil-Airports and Europe-Airports, both of which are air traffic networks and are collected by \citet{ackland2005mapping} from the government websites. The nodes in each graph represent airports and each edge represents that there are commercial flights between the connected nodes. The Brazil-Airports graph has 131 nodes, 1038 edges in total and its diameter is 5. The Europe-Airports graph has 399 nodes, 5995 edges in total and its diameter is 5. The airport nodes are divided into 4 different levels according to the annual passenger flow distribution by 3 quantiles: 25\%, 50\%, and 75\%. The task is to predict the level of each airport node. We follow~\citet{li2020distance} to split the nodes of each graph into train/validation/test subsets with the ratio being 0.8/0.1/0.1, respectively. The test accuracy of the best checkpoint on the validation set is reported. We use different seeds to repeat the experiments 20 times and report the average accuracy.

Following~\citet{li2020distance}, we choose several competitive baselines including classical MPNNs (GCN, GraphSAGE, GIN), Struc2vec and Distance-encoding based GNNs (DE-GNN-SPD, DE-GNN-LP, DEA-GNN-SPD). We refer interested readers to~\citet{li2020distance} for detailed descriptions of baselines. For our Graphormer-GD, the dimension of hidden layers and feed-forward layers are set to 80. The number of layers is selected from [3, 6]. The number of Gaussian Basis kernels is set to 128. The number of attention heads is set to 8. The batch size is selected from [4, 8, 16, 32]. We use AdamW~\citep{kingma2014adam} as the optimizer, and set its hyperparameter $\epsilon$ to 1e-8 and $(\beta_1,\beta_2)$ to (0.9, 0.999). The peak learning rate is selected from [2e-4, 7e-5, 4e-5]. The total number of training steps is selected from [500, 1000, 2000]. The ratio of the warm-up stage is set to 10\%.  After the warm-up stage, the learning rate decays linearly to zero. The dropout ratio is selected from [0.0, 0.1, 0.5]. All models are trained on 1 NVIDIA Tesla V100 GPUs.

The results are presented in Table~\ref{tab:node-level-tasks}. We can see that our model outperforms these baselines on both datasets with a slightly larger variance value due to the small scale of the datasets.

\begin{table}[h]
    \vspace{-5pt}
    \small
    \centering
    \caption{Average Accuracy on Brazil-Airports and Europe-Airports datasets. Experiments are repeated for 20 times with different seeds. We use * to indicate the best performance.}
    \label{tab:node-level-tasks}
    \vspace{2pt}
    \begin{tabular}{lll}
    \toprule
    Model & Brazil-Airports & Europe-Airports\\
    \midrule
    GCN~\citep{kipf2017semisupervised} & 64.55$\pm$4.18 & 54.83$\pm$2.69\\
    GraphSAGE~\citep{hamilton2017inductive} & 70.65$\pm$5.33 & 56.29$\pm$3.21\\
    GIN~\citep{xu2019powerful} & 71.89$\pm$3.60 & 57.05$\pm$4.08\\
    Struc2vec~\citep{ribeiro2017struc2vec} & 70.88$\pm$4.26 & 57.94$\pm$4.01 \\
    DE-GNN-SPD~\citep{li2020distance} & 73.28$\pm$2.47 & 56.98$\pm$2.79 \\
        DE-GNN-LP~\citep{li2020distance} & 75.10$\pm$3.80 & 58.41$\pm$3.20 \\
    DEA-GNN-SPD~\citep{li2020distance} & 75.37$\pm$3.25 & 57.99$\pm$2.39 \\
    \midrule
    Graphormer-GD (ours) & 77.69$\pm$6.39* & 59.23$\pm$4.05*\\
    \bottomrule
    \end{tabular}
\end{table}

\subsection{Efficiency Evaluation}
We further conduct experiments to measure the efficiency of our approach by profiling the time cost per training epoch.  We
compare the efficiency of Graphormer-GD with other baselines along with the number of model parameters on the ZINC-subset from \citet{dwivedi2020benchmarking}. The number of layers and the hidden dimension of our Graphormer-GD are set to 12 and 80 respectively. The number of attention heads is set to 8. The batch size is set to 128, which is the same as the settings of all baselines. We run profiling of all models on a 16GB NVIDIA Tesla V100 GPU. For all baselines, we evaluate the time costs based on the publicly available codes of~\citet{dwivedi2020benchmarking} and~\citet{ying2021transformers}. The results are presented in Table~\ref{tab:efficiency-evaluation}.

From Table~\ref{tab:efficiency-evaluation}, we can draw the following conclusions. Firstly, the efficiency of Graphormer-GD is in the same order of magnitude as classic MPNNs despite the fact that the computation complexity of Graphormer-GD is higher than MPNNs (i.e., $\Theta(n^2)$ v.s. $\Theta(n+m)$ for a graph with $n$ nodes and $m$ edges). This may be due to the high parallelizability of the Transformer layers. Secondly, Graphormer-GD is much more efficient than higher-order GNNs as reflected by the computation complexity in \cref{tab:summary_of_results}. Finally, Graphormer-GD is almost as efficient as the original Graphormer, since the newly introduced module to encode the Resistance Distance takes negligible additional time compared to that of the whole architecture.

\begin{table}[h]
    \vspace{-5pt}
    \small
    \centering
    \caption{Efficiency Evaluation of different GNN models. We report the time per training epoch (seconds) as well as the number of model parameters.}
    \label{tab:efficiency-evaluation}
    \vspace{2pt}
    \begin{tabular}{llc}
    \toprule
    Model & \# Params & Time (s)\\
    \midrule
    GCN~\citep{kipf2017semisupervised} & 505,079 & 5.85 \\
    GraphSAGE~\citep{hamilton2017inductive} & 505,341 & 6.02 \\
        MoNet~\citep{monti2017geometric} & 504,013 & 7.19 \\
    GIN~\citep{xu2019powerful} & 509,549 & 8.05 \\
    GAT~\citep{velivckovic2018graph} & 531,345 & 8.28 \\
        GatedGCN-PE~\citep{bresson2017residual} & 505,011 & 10.74 \\
        RingGNN~\citep{chen2019equivalence} & 527,283 & 178.03  \\
        3WLGNN~\citep{maron2019provably} & 507,603 & 179.35 \\
        Graphormer~\citep{ying2021transformers} & 489,321 & 12.26 \\
        \midrule
    Graphormer-GD (ours) & 502,793 & 12.52\\
    \bottomrule
    \end{tabular}
\end{table}

\end{document}